%% file: mcgra_arxiv.tex
\documentclass[twoside,11pt]{article}

\usepackage{amsmath}
\usepackage{amsthm}
\usepackage[preprint]{jmlr2e}
\usepackage[T1]{fontenc}
\usepackage[utf8]{inputenc}
\usepackage{amssymb}
\usepackage{amsfonts}
\usepackage{graphicx}
\usepackage{algorithm}
\usepackage{algorithmic}
\usepackage{array}
\usepackage{booktabs}
\usepackage{caption}
\usepackage{subfig}
\usepackage{bm}
\usepackage[table]{xcolor}
\usepackage{hyperref}
\usepackage{url}
\usepackage{nicefrac}
\usepackage{microtype}
\usepackage{multirow}
\usepackage{enumitem}
\usepackage{mathtools}
\usepackage{float}
\usepackage{lastpage}
\usepackage{placeins}
\usepackage{tcolorbox}
\usepackage{comment}
\usepackage{bbm}
\usepackage{tabularx}
\usepackage{ragged2e}

\usepackage{etoc}
\etocdepthtag.toc{mtchapter}
\etocsettagdepth{mtchapter}{subsection}
\etocsettagdepth{mtappendix}{none}
\usepackage[textsize=tiny]{todonotes}

\newcolumntype{L}[1]{>{\RaggedRight\arraybackslash}p{#1}}
\newcolumntype{Y}{>{\RaggedRight\arraybackslash}X}

\usepackage{etoolbox}
\usepackage{pgf} 
\definecolor{high}{HTML}{32B897}  
\definecolor{low}{HTML}{FA7F6F}  
\newcommand*{\opacity}{90}
\definecolor{lightgrayrule}{gray}{0.5}
\newcommand{\rowgrayrule}{\arrayrulecolor{lightgrayrule}\midrule\arrayrulecolor{black}}
\newcommand*{\minval}{0.0}
\newcommand*{\maxval}{1.0}
\newcommand{\gradient}[1]{
    \ifdimcomp{#1pt}{>}{\maxval pt}{#1}{
        \ifdimcomp{#1pt}{<}{\minval pt}{#1}{
            \pgfmathparse{int(round(100*(#1/(\maxval-\minval))-(\minval*(100/(\maxval-\minval)))))}
            \xdef\tempa{\pgfmathresult}
            \cellcolor{high!\tempa!low!\opacity} #1
    }}
}

\newcommand{\ie}{\textit{i.e.}}
\newcommand{\eg}{\textit{e.g.}}

\newcommand{\wrt}{\textit{w.r.t. }}

\theoremstyle{plain}
\newtheorem{observation}{Observation}

\newcommand{\up}[1]{\textcolor[rgb]{0.8 0 0}{#1}}
\newcommand{\down}[1]{\textcolor[rgb]{0 0.6 0}{#1}}

\definecolor{citeColor}{RGB}{0,20,115}
\hypersetup{colorlinks,linkcolor={citeColor},citecolor={citeColor},urlcolor={citeColor}}

\jmlrheading{26}{2026}{1-\pageref{LastPage}}{5/26}{---}{21-0000}{Zhanke Zhou, Bo Han, Xuan Li, Jiangchao Yao, Sanmi Koyejo, Michael K. Ng}
\ShortHeadings{Beyond Homophily: Towards Generalized Graph Reconstruction Attack and Defense}{Zhanke Zhou, Bo Han, Xuan Li, Jiangchao Yao, Sanmi Koyejo, Michael K. Ng}
\firstpageno{1}

\begin{document}

\title{Beyond Homophily: Towards Generalized Graph Reconstruction Attack and Defense}


\newcommand{\addrHKBUCS}{$^1$~TMLR Group, Department of Computer Science, Hong Kong Baptist University, Kowloon Tong, Hong Kong SAR}
\newcommand{\addrSJTU}{$^2$~Cooperative Medianet Innovation Center, Shanghai Jiao Tong University, Shanghai, China}
\newcommand{\addrStanford}{$^3$~Department of Computer Science, Stanford University, Stanford, CA 94305, USA}
\newcommand{\addrHKBUMathCS}{$^4$~Department of Mathematics and Department of Computer Science, Hong Kong Baptist University, Kowloon Tong, Hong Kong SAR}

\author{
\!\!\name Zhanke Zhou$^{1}$ \email cszkzhou@comp.hkbu.edu.hk
\AND
\name Bo Han$^{1}$ \email bhanml@comp.hkbu.edu.hk
\AND
\name Xuan Li$^{1}$ \email csxuanli@comp.hkbu.edu.hk
\AND
\name Jiangchao Yao$^{2}$ \email sunarker@sjtu.edu.cn
\AND
\name Sanmi Koyejo$^{3}$ \email sanmi@cs.stanford.edu
\AND
\name Michael K.\ Ng$^{4}$ \email michael-ng@hkbu.edu.hk
\AND
\addr \addrHKBUCS \\
\addr \addrSJTU \\
\addr \addrStanford \\
\addr \addrHKBUMathCS
}

\editor{My editor}

\maketitle

\input{sections/1-abstract}
\input{sections/2-introduction}
\input{sections/3-preliminaries}
\input{sections/4-related-work}
\input{sections/5-understanding}
\input{sections/6-method-attack}
\input{sections/7-method-defense}
\input{sections/8-experiments}
\input{sections/9-further-discussion}
\input{sections/10-conclusion}


\newpage
\vskip 0.2in
\bibliography{mcgra}

\newpage
\etocdepthtag.toc{mtappendix}
\etocsettagdepth{mtchapter}{none}
\etocsettagdepth{mtappendix}{subsection}
\renewcommand{\contentsname}{Appendix}
\tableofcontents

\newpage
\appendix
\input{appendix/1-theoretical-justification}
\input{appendix/2-full-empirical-study}
\input{appendix/3-further-related-work}

\end{document}

%% file: sections/1-abstract.tex
\begin{abstract}
Graph neural networks (GNNs) are widely deployed on relational data, yet they can leak sensitive or proprietary information about the training graph adjacency, \eg, social ties, transactions, and interactions.
This work studies \emph{graph reconstruction attacks} (GRA), a form of model inversion that reconstructs the training adjacency from a trained GNN, given different levels of attacker-side information. 
We first provide a systematic characterization of when and why adjacency becomes recoverable through features, labels, embeddings, and predictions, with leakage modulated by graph homophily, heterophily, and the model's inductive bias.
Motivated by these findings, we view GNN inference through a \emph{Markov chain approximation} lens, treating the layered forward computation as a chain of topology-dependent representations. Building on this view, we develop complementary attack and defense methods.
On the attack side, we propose \emph{MC-GRA~(+)}, which reconstructs the adjacency by optimizing a surrogate adjacency whose GNN-induced representations align with those of the target model at each layer.
On the defense side, we propose \emph{MC-GPB~(+)}, which suppresses adjacency-dependent information throughout the representation chain while aiming to preserve classification accuracy under a privacy-utility trade-off.
Experiments across homophilic/heterophilic graph benchmarks and GNNs show that our attacks improve reconstruction fidelity over prior methods, while our defenses reduce reconstruction success with only minor accuracy loss.
\end{abstract}

\begin{keywords}
graph reconstruction attack, model inversion, graph neural networks, privacy, Markov chain, information bottleneck, defense, heterophily
\end{keywords}

%% file: sections/2-introduction.tex
\section{Introduction}
\label{sec: introduction}

\begin{figure}[H]
\centering
\includegraphics[width=0.8\textwidth]{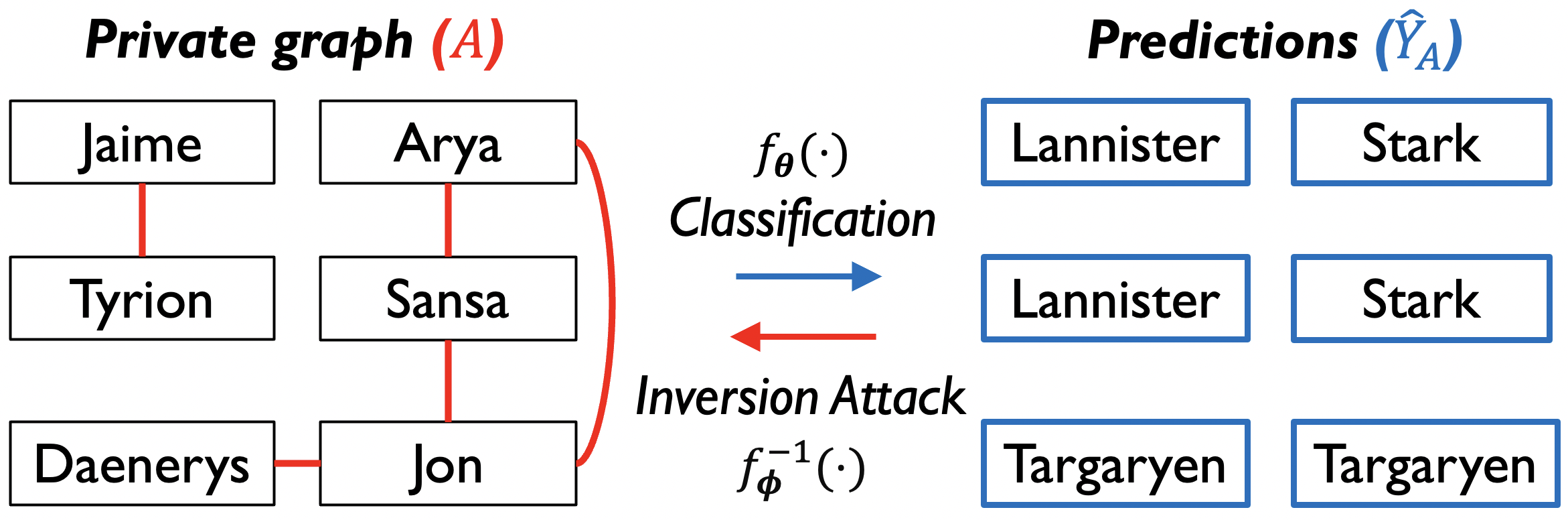}

\caption{Illustration of a graph reconstruction attack using characters from \textit{Game of Thrones}. In the \emph{forward} classification, a trained GNN $f_{\bm{\theta}}(\cdot)$ predicts node categories $\widehat{\bm{Y}}_{A}$ (family names shown in blue boxes) from node attributes and the private training graph $A$ (kinship relations shown as red edges). In the \emph{backward} inversion attack, an adversary reconstructs the original graph adjacency $A$ despite not having access to the ground-truth adjacency.}
\label{fig: motivation}
\end{figure}

Deep learning has expanded beyond Euclidean data (\eg, images and text) to relational, non-Euclidean structures such as graphs. Graph neural networks (GNNs)~\citep{kipf2016semi,gilmer2017neural,zhang2018link} learn over graphs by coupling node/edge attributes with message passing on an adjacency structure. This enables strong performance in applications where relational inductive bias is essential, including social networks~\citep{fan2019graph}, recommender systems~\citep{wu2020graphRecsys}, and molecular property prediction and drug discovery~\citep{ioannidis2020few}. However, the increasing deployment of GNNs in privacy-sensitive settings raises a basic concern: trained models may inadvertently encode and expose confidential information about their training data, especially the \emph{adjacency}.

A prominent threat is the \emph{model inversion attack}~\citep{fredrikson2015model,zhang2020secret,struppek2022ppa}, where an adversary with access to a trained model (and possibly auxiliary, non-sensitive attributes) aims to reconstruct sensitive information correlated with the training set. For GNNs, a natural inversion target is the \emph{training graph adjacency}. The adjacency used during training can encode sensitive relationships (\eg, social ties, transactions, communications) and may also be proprietary (\eg, curated interaction graphs or knowledge graphs). Recovering the training adjacency from a trained GNN can therefore compromise both individual privacy and organizational intellectual property. We refer to this class of attacks as \emph{graph reconstruction attacks} (GRA), with a representative case in Fig.~\ref{fig: motivation}. Existing work on GRA remains limited and often tailored to specific assumptions, \eg, restricted to homophilic graphs, fixed side-information configurations, or particular access models~\citep{he2021stealing,zhang2021graphmi}. As a result, the general mechanisms of topology leakage and the corresponding principles for defenses remain insufficiently understood.

In this work, we address this gap through a unified view of GRA as a \emph{Markov chain approximation}
\footnote{Throughout this paper, ``Markov chain'' is used by analogy to denote the layered conditional-dependence structure of GNN forward computation, following~\citep{zhou2023mcgra}, rather than a stationary stochastic process in the classical sense. Since the forward pass is deterministic and generally non-stationary, \emph{GNN computation chain} is the more precise term; this caveat applies throughout and is not repeated.}
problem (Fig.~\ref{fig: GRA-two-chains}). 
A trained GNN induces a layered sequence of topology-dependent representations; the attacker’s goal is to find a surrogate adjacency whose induced computation behaves similarly to that generated by the true (private) training graph, under the variables available to the adversary. This framing is crucial because topology can leak through \emph{multiple correlated channels}---node features, supervision signals, intermediate embeddings, and prediction scores (formalized in Sec.~\ref{sec: overview})---rather than solely through the final output. Moreover, these channels can interact in ways that depend on both architecture and graph regime---for instance, as we show in Sec.~\ref{sec: overview}, hidden embeddings can carry more adjacency signal than predictions under heterophily-aware architectures.

To ground this analysis, we conduct a systematic empirical study across diverse domains in homophilic and heterophilic settings, comparing canonical and heterophily-aware target models under matched training protocols (Sec.~\ref{sec: overview}). As a preview of findings, three consistent patterns emerge: (i)~leakage depends on the interaction between graph homophily and the model’s inductive bias, not on dataset characteristics alone; (ii)~uniform averaging of multiple leaked variables often provides limited or even negative marginal gains, due to redundancy and scale mismatches among different similarity scores; and (iii)~label-derived signals become unreliable under heterophily, where label agreement is weakly coupled to connectivity.

\begin{figure}[t!]
\centering
\includegraphics[width=0.8\textwidth]{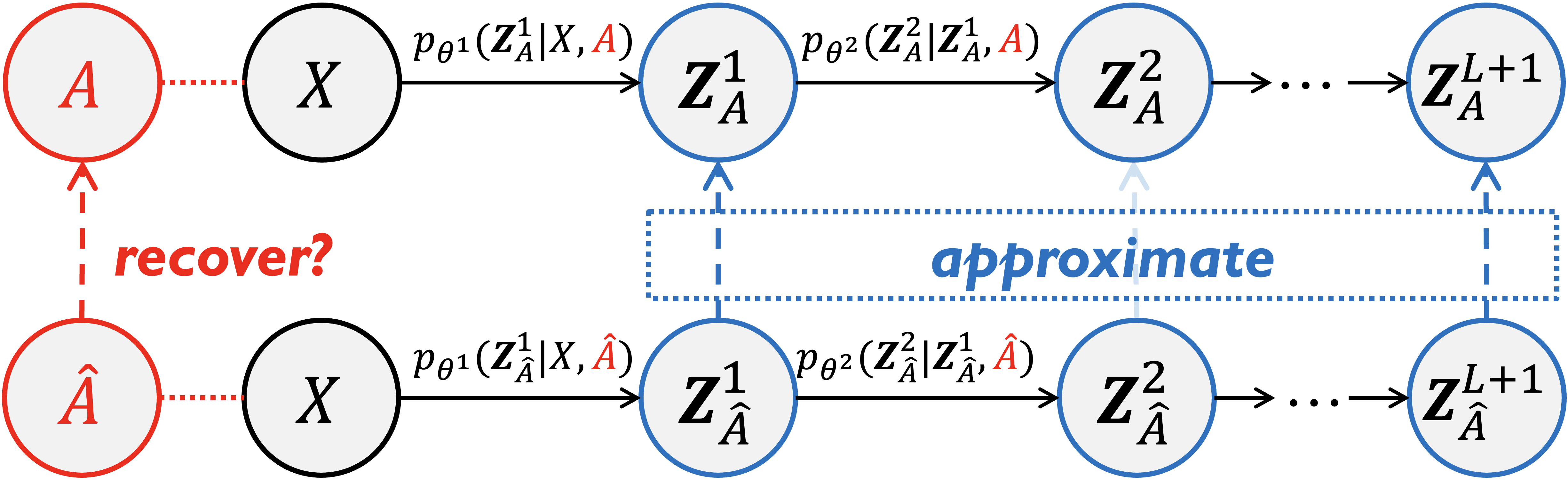}

\caption{
Modeling graph reconstruction as \emph{Markov-chain approximation} (here ``Markov chain'' in the sense of layered conditional independence; see Sec.~\ref{sec: overview}). 
The upper chain is the target GNN’s forward computation induced by the private adjacency $A$ and node feature $X$, while the lower chain is the attacker’s surrogate computation induced by the reconstructed adjacency $\hat{A}$ and node feature $X$ (if available).
To recover the ground truth adjacency $A$, the attack seeks an $\hat{A}$ whose induced chain best matches the original chain under all the available observations, \ie, approximating the latent variables $\bm{Z}_{A}^i$ through $\bm{Z}_{\hat{A}}^i$.
}
\label{fig: GRA-two-chains}
\end{figure}

Building on the chain view, we propose the \emph{Markov Chain-based Graph Reconstruction Attack} (MC-GRA). MC-GRA reconstructs topology by aligning observable variables at corresponding depths of the induced computation chains, turning adjacency recovery into a structured chain-matching problem that naturally supports heterogeneous attacker knowledge. The recovered adjacency is parameterized through a learnable continuous relaxation with injected stochasticity that facilitates gradient-based optimization over the discrete search space, while a complexity-aware regularizer discourages degenerate solutions (\eg, edge probabilities clustered near $0.5$ rather than near $0$ or $1$; details in Sec.~\ref{sec: GRA attack}). For strongly heterophilic graphs, we further introduce MC-GRA+, which incorporates a heterophily-aware prior derived from the model’s predicted labels to bias reconstruction toward cross-label connectivity when homophily-driven criteria become systematically misaligned with the true topology (see Theorems~\ref{thm: sufficiency_label_induced_matrix}--\ref{thm: noisy_labels_contraction_polished} in Sec.~\ref{ssec: attack understanding} for the theoretical justification).

On the defense side, we propose the \emph{Markov Chain-based Graph Privacy Bottleneck} (MC-GPB), which trains GNNs to remain accurate while suppressing adjacency information that is not essential for predicting labels. MC-GPB is motivated by an information bottleneck perspective with the adjacency treated as the sensitive source: it promotes label-predictive representations while penalizing adjacency-dependent information throughout the representation chain, and it regularizes layer-to-layer dependence to limit memorization of graph-specific details. Practically, we combine differentiable dependence estimation with injected stochasticity (\eg, edge dropping) to discourage deterministic encoding of training edges. Finally, MC-GPB+ introduces an explicit term tailored to heterophilic edges that further suppresses adjacency dependence, reducing leakage of high-value cross-label connectivity while preserving task utility under a privacy-utility trade-off (Fig.~\ref{fig: adjacency-demo}).
\footnote{We use MC-GRA~(+) to denote MC-GRA and MC-GRA+ collectively, and likewise for MC-GPB~(+).}

Experiments (in Sec.~\ref{sec: experiments}) across nine datasets spanning citation, political, web, air-traffic, and chemical graphs, and across multiple GNN backbones (GCN, GPR-GNN, GAT, GraphSAGE), confirm the effectiveness of both attack and defense:
\begin{itemize}[leftmargin=*]
\setlength\itemsep{0.1em}
\item \emph{Attack:} MC-GRA~(+) consistently improve edge-recovery AUC over the non-learnable ensemble and over GraphMI, with the largest relative gains on heterophilic graphs and strong absolute performance on homophilic ones.
\item \emph{Defense:} MC-GPB~(+) reduce leakage against both similarity-based and learnable probes while largely preserving classification accuracy. When evaluated directly against our attack, protected models lower reconstruction AUC across prior-knowledge settings.
\item \emph{Ablations:} Hidden-layer alignment, output alignment, and injected stochasticity each contribute to the attack, and the defense transfers across the tested architectures.
\end{itemize}

\begin{figure}[t!]
\centering
\includegraphics[width=0.8\textwidth]{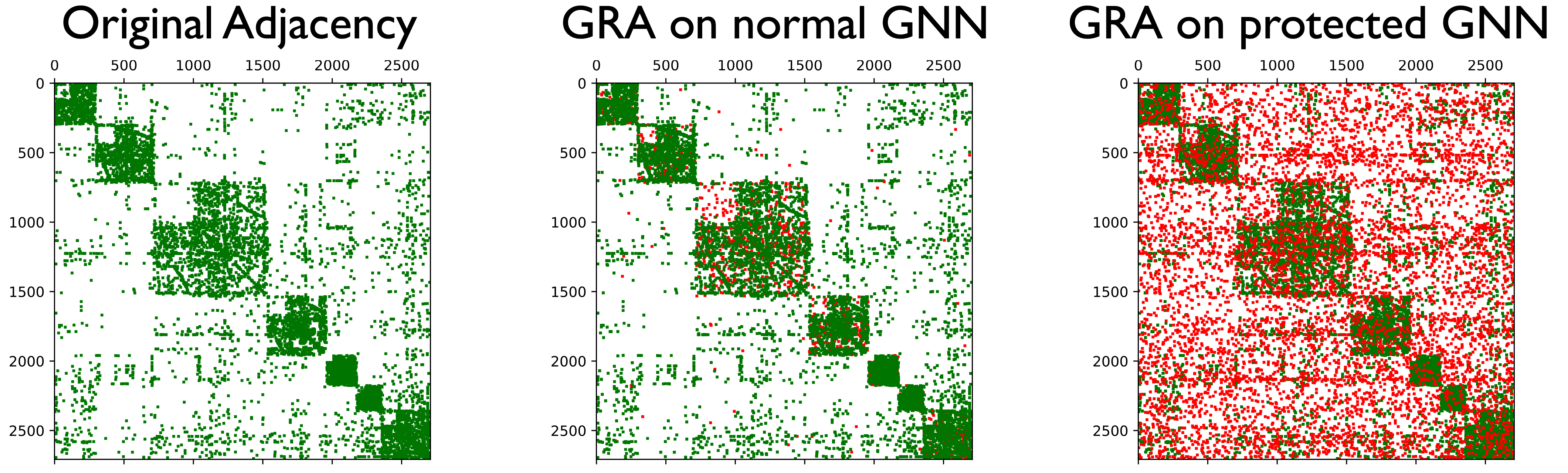}
\caption{
Recovered adjacency on Cora.
Green dots denote correctly recovered edges; red dots denote errors.
An unprotected GNN enables substantial adjacency leakage under our attack (MC-GRA~(+)), whereas training with MC-GPB~(+) markedly degrades reconstruction quality and makes the recovered adjacency less informative.}
\label{fig: adjacency-demo}
\end{figure}

In summary, our main contributions are as follows.
\begin{itemize}[leftmargin=*]
\setlength\itemsep{0.1em}
\item \textbf{A unified Markov-chain approximation formulation.}
We model GRA through a chain-approximation lens induced by GNN computation, enabling principled comparison of leakage channels and threat models and motivating both attack and defense objectives.

\item \textbf{A systematic study of graph reconstruction attacks.}
We provide an empirical characterization of GRA, clarifying when and why adjacency becomes recoverable from node features, intermediate embeddings, and prediction scores across graph regimes.

\item \textbf{Stronger attacks via chain matching.}
We propose MC-GRA, a chain-based reconstruction method that adaptively exploits arbitrary subsets of exposed variables, and MC-GRA+ for heterophilic regimes via a heterophily prior and robust stochastic optimization.

\item \textbf{Principled defenses via a graph privacy bottleneck.}
We introduce MC-GPB, an information-theory-guided defense that reduces adjacency-dependent information in representations while preserving label-relevant semantics, and MC-GPB+ for heterophily.

\item \textbf{Information-theoretic analysis linking reconstruction to adjacency-representation dependence.}
We provide formal results connecting reconstruction fidelity to cross-chain dependence and characterizing irreducible leakage under homophily and heterophily.

\item \textbf{Comprehensive empirical validation.}
We empirically demonstrate substantial improvements over prior attack and defense methods across diverse benchmarks spanning nine datasets and multiple GNN architectures.
\end{itemize}

\paragraph{Relation to the conference version.}
This paper is an extended version of our previous work~\citep{zhou2023mcgra} published in ICML~2023. The main differences are summarized in Sec.~\ref{sec: discussion}, including (1) heterophily-aware attack and defense variants, (2) new information-theoretic analysis, and (3) broader experimental validation.

%% file: sections/3-preliminaries.tex
\section{Preliminaries}
\label{sec: preliminaries}

This section establishes notation, formalizes the GNN forward model and graph reconstruction attacks, and recalls the information-theoretic tools used in later sections.

\subsection{Notation and Setup}
Let $N$ denote the number of nodes and $D$ the feature dimension. We write $[C]:=\{1,\dots,C\}$ for the label set, where $C$ is the number of classes; this notation is used throughout. We consider undirected, unweighted graphs without self-loops. We define the graph by its symmetric adjacency matrix $A\in\{0,1\}^{N\times N}$ with $A_{ij}=A_{ji}$ and $A_{ii}=0$, and node-feature matrix $X\in\mathbb{R}^{N\times D}$. We list each edge once by convention $i<j$; accordingly, the edge set $\mathcal{E}=\{(v_i,v_j):A_{ij}=1,\,i<j\}$ is determined by $A$ (one-to-one with the upper-triangular entries of $A$), and the node set is $\mathcal{V}=\{v_i\}_{i=1}^{N}$. We write $\mathcal{G}=(\mathcal{V},\mathcal{E},A,X)$ for the full graph object (a convenience tuple; $\mathcal{E}$ and $\mathcal{V}$ are uniquely determined by $A$ and $N$ and carry no independent information). Self-loops are introduced only when required by the GNN architecture.
Each node $v_i$ has label $y_i\in[C]$, and we denote the node-label vector by $Y=(y_1,\dots,y_N)\in[C]^{N}$. When used in information-theoretic statements (Secs.~\ref{sec: overview}--\ref{sec: GRA defense} and the appendix), $A$, $X$, and $Y$ denote random variables under the data-generating distribution; when used in optimization or algorithm descriptions, they denote the observed realizations.

The node-classification task is to estimate the label of each node from $(A,X)$ using a parameterized GNN $f_{\bm{\theta}}(\cdot)$.
We write
\begin{equation}
f_{\bm{\theta}}(A,X)=\bm{\hat{Y}}_{A}\in[0,1]^{N\times C},
\end{equation}
where the $i$-th row of $\bm{\hat{Y}}_{A}$ is the predicted class-probability vector for node $v_i$ (each row lies in the probability simplex $\Delta^{C-1}$).
Tab.~\ref{tab: notations} summarizes the notation used most frequently throughout the paper.

\begin{table}[H]
\centering
\small
\setlength{\tabcolsep}{10pt}
\renewcommand{\arraystretch}{1.1}
\fontsize{9}{9}\selectfont
\begin{tabular}{c|c}
\toprule
\textbf{Notation}  & \textbf{Meaning} \\
\midrule
$[C]:=\{1,\dots,C\}$ & label set ($C$ = number of classes) \\ \midrule
$\mathcal{V} = \{v_i\}_{i=1}^{N}$ & node set \\ \midrule
$\mathcal{E}$ & edge set \\ \midrule
$A\in\{0,1\}^{N \times N}$ & binary adjacency matrix \\ \midrule
$X\in\mathbb{R}^{N \times D}$ & node-feature matrix \\ \midrule
$\mathcal{G}  =  (\mathcal{V}, \mathcal{E}, A, X)$ & graph input to a GNN \\ \midrule
$Y=(y_1,\dots,y_N)$ & node-label vector \\ \midrule
$\bm{H}_{A}^{(\ell)}$ & layer-$\ell$ node-representation matrix induced by $A$ \\ \midrule
$\bm{H}_{A}^{\mathrm{all}}$ & full set of hidden layers $\{\bm{H}_A^{(\ell)}\}_{\ell=1}^{L}$ (used in Sec.~\ref{sec: overview} tables) \\ \midrule
$\mathcal{I}_{H}(\mathcal{K})$ & set of hidden-layer indices in $\mathcal{K}$ \\ \midrule
$\mathcal{H}_A^{\bullet}$ & set of available hidden layers $\{\bm{H}_A^{(\ell)}:\ell\in\mathcal{I}_{H}(\mathcal{K})\}$ \\ \midrule
$\bm{\hat{Y}}_{A}$ & node-wise prediction matrix induced by $A$ \\ \midrule
$H(U)$ & entropy of $U$: $H(U)=-\sum_u p(u)\log p(u)$ \\ \midrule
$D_{\mathrm{KL}}(p\|q)$ & Kullback--Leibler divergence from $q$ to $p$ \\ \midrule
$I(U ; V)$ & mutual information between $U$ and $V$ \\ \midrule
$I(U ; V \mid W)$ & conditional mutual information between $U$ and $V$ given $W$ \\ \midrule
$\mathcal{N}(i)$ & neighbor set of node $v_i$ (no self-loop unless stated) \\ \midrule
\texttt{ORI-chain} & chain of representations induced by the true adjacency $A$ (Sec.~\ref{sec: overview}) \\ \midrule
\texttt{GRA-chain} & chain induced by a candidate adjacency $\bm{\hat{A}}$ in a reconstruction attack (Sec.~\ref{sec: overview}) \\ \midrule
$\rho(\cdot)$ & generic componentwise activation (\eg, ReLU) in GNN layer updates \\ \midrule
$\sigma(\cdot)$ & elementwise sigmoid $\sigma(x)=1/(1+e^{-x})$ in Secs.~\ref{sec: GRA attack}--\ref{sec: GRA defense} \\
\bottomrule
\end{tabular}

\caption{Frequently used notation. 
$\mathcal{I}_{H}(\mathcal{K})$ denotes the set of hidden-layer indices $\ell\in\{1,\dots,L\}$ for which $\bm{H}_A^{(\ell)}\in\mathcal{K}$; $\mathcal{H}_A^{\bullet}=\{\bm{H}_A^{(\ell)}:\ell\in\mathcal{I}_{H}(\mathcal{K})\}$. 
Formal definitions are in Sec.~\ref{sec: GRA attack}.}
\label{tab: notations}
\end{table}

\subsection{Graph Neural Networks}
Both our attack and defense exploit how the adjacency influences representations through message passing. We therefore formalize the GNN forward computation in a model-agnostic way.
In experiments, we instantiate this framework with GCN~\citep{kipf2016semi}, GAT~\citep{velickovic2018graph}, GraphSAGE~\citep{hamilton2017inductive}, and GPR-GNN~\citep{chien2021adaptive}. Throughout, $\rho(\cdot)$ denotes a generic component-wise activation function (\eg, ReLU) in GNN layer updates, while $\sigma(\cdot)$ is reserved exclusively for the sigmoid function $\sigma(x)=1/(1+e^{-x})$ used in the attack and defense parameterizations (Secs.~\ref{sec: GRA attack}--\ref{sec: GRA defense}).

An $L$-layer GNN produces hidden representations through $L$ rounds of message passing and feature transformation.
Let $\bm{H}_{A}^{(0)}=X$, and for $\ell=1,\dots,L$, let $\bm{H}_{A}^{(\ell)}\in\mathbb{R}^{N\times d_{\ell}}$ denote the layer-$\ell$ representation matrix, whose $i$th row is the node representation $\bm{h}_i^{(\ell)}$.
For standard message-passing GNNs, a generic layer update can be written as
\begin{align}
\bm{m}_i^{(\ell)}
&=
\mathrm{AGGREGATE}^{(\ell)}
\Bigl(
\bigl\{
\mathrm{MESS}^{(\ell)}(\bm{h}_i^{(\ell-1)},\bm{h}_j^{(\ell-1)})
: j\in\mathcal{N}(i)
\bigr\}
\Bigr), \label{eqn: gnn-update-agg}\\
\bm{h}_i^{(\ell)}
&=
\mathrm{COMBINE}^{(\ell)}\!\bigl(\bm{h}_i^{(\ell-1)},\bm{m}_i^{(\ell)}\bigr),
\end{align}
where $d_{\ell}$ is the layer-$\ell$ output dimension; $\mathrm{MESS}^{(\ell)}:\mathbb{R}^{d_{\ell-1}}\times\mathbb{R}^{d_{\ell-1}}\to\mathbb{R}^{d'_{\ell}}$ returns a message vector, $\mathrm{AGGREGATE}^{(\ell)}$ maps a multiset of such vectors to a single vector in $\mathbb{R}^{d'_{\ell}}$, and $\mathrm{COMBINE}^{(\ell)}:\mathbb{R}^{d_{\ell-1}}\times\mathbb{R}^{d'_{\ell}}\to\mathbb{R}^{d_{\ell}}$ produces the layer-$\ell$ node representation $\bm{h}_i^{(\ell)}\in\mathbb{R}^{d_{\ell}}$. We define $\mathcal{N}(i)$ as the neighbor set of node $v_i$ \emph{excluding} $v_i$; each architecture may then add self-loops or otherwise modify the effective neighborhood. For example, GCN uses a self-loop-augmented adjacency $A^{+}=A+I$ (see Tab.~\ref{tab: GNN-architectures}), so the effective neighborhood includes $v_i$ itself. A unified chain notation $\bm{Z}_A^{(0)},\dots,\bm{Z}_A^{(L+1)}$ that indexes features, hidden states, and predictions as a single sequence is introduced in Sec.~\ref{sec: overview}.
For simplicity, we omit optional edge attributes from the generic update; when present, they can be incorporated into the message function.
After $L$ layers, the final representation of node $v_i$ is $\bm{h}_i^{(L)}$, and stacking these vectors yields $\bm{H}_{A}^{(L)}\in\mathbb{R}^{N\times d_L}$.
The classifier head maps $\bm{H}_{A}^{(L)}$ to the prediction matrix $\bm{\hat{Y}}_{A}\in[0,1]^{N\times C}$, with each row summing to one.

Let $\mathcal{V}_{\mathrm{train}}\subseteq\mathcal{V}$ denote the set of labeled training nodes. For supervised node classification, we optimize the cross-entropy loss:
\begin{equation}
\mathcal L_{\mathrm{CE}}
=
-\sum_{i\in\mathcal V_{\mathrm{train}}}
\log (\bm{\hat{Y}}_{A})_{i,y_i},
\end{equation}
where $(\bm{\hat{Y}}_{A})_{i,y_i}$ denotes the $(i,y_i)$-entry of the prediction matrix (the predicted probability of the true class for node $v_i$).

\begin{table}[tbp]
\centering
\setlength{\tabcolsep}{8pt}
\renewcommand{\arraystretch}{1.8}
\fontsize{9}{9}\selectfont
\begin{tabular}{cc}
\toprule
\textbf{GNN} & \textbf{Schematic update} \\ \hline
GCN & $\bm{H}_A^{(\ell)} = \rho \bigl((D^{+})^{-1/2} A^{+} (D^{+})^{-1/2}\bm{H}_A^{(\ell-1)}\bm{W}^{(\ell)}\bigr)$ \\
GAT & $\bm{h}_i^{(\ell)} = \rho \Bigl(\sum_{j\in\mathcal{N}(i)\cup\{i\}}\alpha_{ij}^{(\ell)} \bm{W}^{(\ell)}\bm{h}_j^{(\ell-1)}\Bigr)$ \\
GraphSAGE & $\bm{h}_i^{(\ell)} = \rho \Bigl(\bm{W}^{(\ell)}\bigl[\bm{h}_i^{(\ell-1)} \,\|\, \mathrm{mean}_{j\in\mathcal{N}(i)}\bm{h}_j^{(\ell-1)}\bigr]\Bigr)$ \\
GPR-GNN & $\bm H_A = \sum_{k=0}^{K}\gamma_k P_A^{k}\bm E$, where $\bm E$ is the pre-propagation base embedding from $X$ \\
\bottomrule
\end{tabular}
\caption{Schematic forms of the GNN architectures. Exact implementations follow the original papers. The GPR-GNN entry summarizes its propagation stage rather than a single layerwise recurrence. Here $A^{+}=A+I$ and $D^{+}_{ii}=\sum_j A^{+}_{ij}$ for the self-loop-augmented normalization used by GCN; $A^{+}$ is used only in this table to avoid confusion with the recovered adjacency $\bm{\hat{A}}$ introduced later. In GAT, $\alpha_{ij}^{(\ell)}$ denotes the learned attention coefficient from node $j$ to node $i$ at layer $\ell$; $\mathcal{N}(i)$ there includes $i$ (self-loop). GPR-GNN uses a single propagation stage rather than layerwise recurrence: $\bm H_A$ is the final representation (no layer index), $P_A$ is the propagation operator induced by the normalized adjacency, and $\bm E$ is the pre-propagation base embedding (distinct from the chain notation $\bm{Z}_A^{(\ell)}$ used in Secs.~\ref{sec: overview}--\ref{sec: GRA defense}).}
\label{tab: GNN-architectures}
\end{table}

\subsection{Graph Reconstruction Attacks}
We study model inversion attacks on GNNs in which the adversary seeks to reconstruct the training adjacency. We refer to this threat as a \emph{graph reconstruction attack} (GRA)---a specialization of the general model inversion attack (MIA) to the graph setting, where the inversion target is the adjacency structure rather than, \eg, images or text. We use GRA as the primary term throughout.

\begin{definition}[Graph Reconstruction Attack]
\label{def: graph reconstruction attack}
Let $f_{\bm{\theta}}(\cdot)$ be a trained target GNN. Let $\mathcal{O}=\{X, Y, \bm{H}_A^{(1)},\ldots,\bm{H}_A^{(L)}, \bm{\hat{Y}}_A\}$ be the catalog of observable quantities. The attacker's available side information is a subset $\mathcal{K}\subseteq\mathcal{O}$; thus $\mathcal{K}$ can contain any subset of the layer-wise representations (e.g., only $\bm{H}_{A}^{(1)}$ and $\bm{H}_{A}^{(3)}$). When the layer index is omitted, we write $\mathcal{H}_A^{\bullet}$ for the set of available hidden representations under the given threat model (a single layer or multiple layers; see Remark~\ref{rem: def1}). In formal statements we always use the explicit layer index $\bm{H}_A^{(\ell)}$.
A graph reconstruction attack aims to infer the training adjacency matrix $A$ of the training graph $\mathcal{G}_{\mathrm{train}}=(\mathcal V,\mathcal E,A,X)$ by producing an estimate
\begin{align}
\bm{\hat{A}}^{*}
\in
\arg\max_{\bm{\hat{A}}\in\mathcal A}
S(\bm{\hat{A}};f_{\bm{\theta}},\mathcal K),
\label{eqn: graph_MI_attack}
\end{align}
where $\mathcal A$ is the \emph{relaxed} feasible set of candidate adjacency matrices: the set of symmetric matrices in $[0,1]^{N\times N}$ with zero diagonal. 
Since the true adjacency $A$ is symmetric with zero diagonal (by the undirected, no-self-loop assumption in Sec.~\ref{sec: preliminaries}), it lies in the binary subset ${0,1}^{N\times N}\cap\mathcal{A}$. We therefore optimize over a continuous relaxation $\mathcal{A}$ only to permit gradient-based updates, whereas each forward evaluation uses a binarized adjacency induced from the relaxed variables (by thresholding or sampling, depending on the implementation), and the final output is the binary adjacency.
$S(\cdot)$ is an attack-specific measurable reconstruction score, designed so that higher values indicate better agreement between $\bm{\hat{A}}$ and the true adjacency $A$ under the available information; uniqueness of the maximizer is not assumed.
\end{definition}

\begin{remark}[Threat model and feasible set]
\label{rem: def1}
The set $\mathcal{I}_{H}(\mathcal{K})\subseteq\{1,\dots,L\}$ specifies which hidden-layer indices are available under a given threat model; it is defined formally in Sec.~\ref{ssec: attack objective}. Thus $\mathcal{H}_A^{\bullet}=\{\bm{H}_A^{(\ell)}:\ell\in\mathcal{I}_{H}(\mathcal{K})\}$. Implementation details for $\mathcal{A}$ and thresholding are in Sec.~\ref{sec: GRA attack}.
\end{remark}

The feasible set $\mathcal{A}$ is used as stated for gradient-based optimization; implementation details are in Sec.~\ref{sec: GRA attack}. GRA is conducted post hoc, \ie, after the target model $f_{\bm{\theta}}(\cdot)$ has been trained. In the definition, $A$ is the private object to be inferred and $\mathcal{K}$ denotes the quantities observable to the attacker.
We now specify the threat model in more detail, distinguishing model access, side information, and the attack target.

\paragraph{Access to the target model.}
Generally, there are black-box and white-box settings \wrt attack and defense in the community.
In the black-box setting, the attacker can only query the model and observe its outputs.
Black-box access is strictly weaker than white-box, so any black-box attack is also feasible in the white-box setting.
In the white-box setting, the attacker additionally observes the trained parameters of $f_{\bm{\theta}}$ and may rerun the model on candidate adjacencies; if intermediate representations are released, the attacker can compare against those released states.
Our methods target the white-box setting; the black-box setting is included for completeness.

\paragraph{Access to side information.}
The attacker may additionally observe side information $\mathcal{K}\subseteq\mathcal{O}$ (with $\mathcal{O}$ as in Definition~\ref{def: graph reconstruction attack}).
Depending on the application, such observations may be exact, partial, or noisy.
For example, a deployment pipeline may expose prediction scores $\bm{\hat{Y}}_{A}$ for downstream decision making, or intermediate embeddings (elements of $\mathcal{H}_A^{\bullet}$) for retrieval, recommendation, or debugging.
Partial-label access or subset-restricted observations can be modeled by restricting the corresponding object in $\mathcal K$.
Prior work has also considered stronger side information such as partial subgraphs or auxiliary datasets~\citep{he2021stealing}.

\paragraph{Attack target.}
Both $A$ and $X$ are inputs to a GNN and could in principle serve as inversion targets.
We focus on recovering $A$ because it directly encodes sensitive or private relationships, such as social ties, communication links, transactions, or curated interaction structures.
The training graph is $\mathcal G_{\mathrm{train}}=(\mathcal V,\mathcal E,A,X)$; labels $Y$ are supervision signals used during training and are not part of the graph object $\mathcal{G}$ itself.
Recovering links is privacy-sensitive whenever edges encode confidential relations (e.g., in social, financial, biomedical, or scientific graphs). In realistic workflows, node-level outputs or embeddings may be shared with downstream services, exposing side information useful for reconstruction. This threat is distinct from adversarial examples: adversarial examples manipulate test-time predictions, whereas GRA aims at reconstructing training data.

Model inversion has been studied extensively in vision and language~\citep{fredrikson2014privacy, hidano2017model, chen2021knowledge, wang2021variational, zhao2021exploiting, kahla2022label, carlini2021extracting, zhang2022text}, and analogous threats exist for graphs~\citep{he2021stealing, zhang2021graphmi}. A formal study of GRA exposes privacy weaknesses in GNN pipelines and motivates defenses before such leakage occurs in deployment.

\subsection{Information-Theoretic Preliminaries}
\label{ssec: information measures}

Having specified the graph-learning setup and the threat model, we next introduce the information-theoretic tools and notational conventions used in the later analysis. We assume familiarity with KL divergence $D_{\mathrm{KL}}(p\|q)$, mutual information $I(U;V)$, and the standard chain rules for entropy and mutual information (please see \citet{cover1999elements} for background). All entropies and mutual information use the natural logarithm (nats).

\paragraph{Population vs.\ empirical quantities.}
Throughout the paper, information-theoretic quantities appear at three distinct levels:
\begin{enumerate}[leftmargin=*,label=(\roman*)]
\setlength\itemsep{0.1em}
\item \textbf{Theory:} mutual information $I(\cdot;\cdot)$ under the data distribution over $(A,X,Y)$.
\item \textbf{Empirical diagnosis (Sec.~\ref{sec: overview}):} the AUC-based recoverability proxy $R_{\mathrm{AUC}}$.
\item \textbf{Optimization (Secs.~\ref{sec: GRA attack}--\ref{sec: GRA defense}):} differentiable surrogates $d(\cdot,\cdot)$ such as HSIC or CKA.
\end{enumerate}
We write $I(\cdot;\cdot)$ for population MI and $d(\cdot,\cdot)$ for the surrogate; uppercase $A$ for the random variable and lowercase $a$ for a realization when the distinction matters.
To aid the reader, we adopt the following signposting convention: statements involving population MI are introduced with phrases such as ``under the data-generating distribution'' or ``at the population level,'' while instance-level surrogate computations are indicated by ``for the observed graph'' or ``in the implemented objective.''

\begin{remark}[Scope of surrogates]
\label{rem: surrogate-scope}
We use RAUC, HSIC, and CKA as dependence and recoverability surrogates rather than as numerical MI estimators. This distinction is appropriate for our goal: the attack and defense require tractable signals that rank adjacency dependence, not exact estimation of mutual information. The justification for HSIC and CKA is not merely empirical correlation: HSIC is a kernel-based dependence measure whose population value is zero if and only if independence holds under characteristic kernels, and CKA is a normalized form of HSIC that retains this dependence-comparison role while improving scale invariance for representation matching. Hence, although neither quantity estimates the numerical value of $I(U;V)$, both are principled proxies for whether and how strongly two variables remain statistically coupled.
\end{remark}

In our setting, this is the relevant property: adjacency leakage is mediated by dependence between adjacency-induced representations and the attacker-visible variables. Thus, larger HSIC/CKA values should be read as stronger residual dependence, not as quantitative estimates of MI. Empirically, on all nine benchmark datasets the ranking of side-information configurations by surrogate value (HSIC or CKA) is consistent with the ranking by edge-recovery AUC (cf.\ Sec.~\ref{sec: experiments}), supporting the practical adequacy of these surrogates for optimization and diagnosis. This caveat applies throughout the paper whenever surrogates are used in place of population MI. We do not repeat it at every occurrence.

\begin{remark}[Terminology: ``Markov chain'' in this paper]
\label{rem: markov-terminology}
The GNN forward pass is deterministic and non-stationary, so it is not a stochastic process in the classical sense. Throughout Secs.~\ref{sec: overview}--\ref{sec: GRA defense}, we use the term \emph{GNN computation chain} (or simply ``chain'') to refer to the layered forward computation, in which each layer's output depends on previous layers only through the immediately preceding representation given the adjacency. When we write ``Markov chain'' by analogy, following the convention of the conference version~\citep{zhou2023mcgra}, this always refers to this layered conditional-independence structure, not to a stationary stochastic process. This caveat applies throughout and is not repeated at each occurrence.
\end{remark}

For reference, a stochastic process $(W_t)_{t\ge 1}$ is \emph{first-order Markov} if $p(w_{t+1}\mid w_t,\dots,w_1)=p(w_{t+1}\mid w_t)$ for all $t$; it is \emph{stationary} if the marginal distribution is time-invariant. A formal definition and the associated entropy lemma are given in Appendix~\ref{ssec: background markov lemma}.

\begin{remark}[Information contraction in GNN chains]
\label{rem: contraction-preview}
For a stationary first-order Markov chain, the conditional entropy $H(W_t\mid W_1)$ is nondecreasing in $t$ (Appendix~\ref{ssec: background markov lemma}). GNN computation chains are deterministic and non-stationary, so this classical result does not apply directly. A related but distinct result---concerning the \emph{cross-chain} mutual information $I(\bm{Z}_A^{(i)};\bm{Z}_{\hat{A}}^{(i)})$ between aligned variables of two chains induced by different adjacencies---is given in Theorem~\ref{theorem: reducing MI with two chains} (Sec.~\ref{sec: GRA attack}). That result uses the data processing inequality for deterministic maps and does not require stationarity.
\end{remark}

%% file: sections/4-related-work.tex
\section{Related Work}
\label{sec: related_work}

This section reviews prior work on graph reconstruction attacks (GRAs) and defenses, situating our chain-based framework relative to existing approaches. We use ``GRA'' as the primary term for model inversion attacks that target the training graph. We organize the attack review by threat-model access level (embedding-release, black-box query, and white-box optimization), then summarize defenses, and finally identify the gaps that motivate Secs.~\ref{sec: overview}--\ref{sec: GRA defense}. Additional related work on image and text MIAs is provided in Appendix~\ref{app: related-work}.

\subsection{Model Inversion Attacks on Graphs}

Graphs introduce a qualitatively different inversion problem from images or text, because the sensitive object is the relational structure itself: the search space of possible adjacency matrices is $2^{\binom{N}{2}}$, \ie, exponential in the number of possible undirected edges (equivalently, exponential in $N^2$). Existing GRAs differ mainly in attacker access (released embeddings, black-box queries, or white-box parameters), side information, and the reconstruction target. We organize the review by access level; Tab.~\ref{tab: related-work-attacks} summarizes the comparison.
The broader GNN literature provides relevant background on spectral/non-Euclidean graph learning, topology stability, and scalable message-passing training~\citep{bruna2014spectral,bronstein2017geometric,gama2020stability,chiang2019cluster,zou2019layer}.

\paragraph{Embedding-release setting.}
Early work assumes that the adversary observes released node embeddings.
\citet{chanpuriya2021deepwalking} and \citet{duddu2020quantifying} show that a decoder trained on auxiliary graphs can recover graph topology from embeddings alone, but these methods rely on released representations and auxiliary training data.
Under related assumptions, \citet{zhang2022inference} uses an auto-encoder framework to infer graph statistics, subgraph membership, and candidate full-graph structure from released graph representations.

\paragraph{Black-box query setting.}
\citet{he2021stealing} proposes \emph{link stealing}, a black-box attack that queries a target GNN under different combinations of side information, including node features, a partial target graph, and a shadow graph.
Its strongest variants require either access to a sensitive partial graph or to an auxiliary graph, which limits applicability in deployments where no auxiliary graph is available.

\paragraph{White-box optimization setting.}
In the white-box setting, GraphMI~\citep{zhang2021graphmi} optimizes a candidate adjacency by matching the target model's predictions under the recovered graph to label information.
GraphMI is closest in spirit to the white-box setting, but it relies on prediction-label matching rather than a chain-level alignment objective.
Related neural graph-matching work studies differentiable solvers for noisy, partial, and multiple-graph correspondence~\citep{wang2022neural,wang2023combinatorial,wang2023partial,wang2024pygmtools}; these methods are not GRAs, but they provide adjacent tools for reasoning about combinatorial graph recovery.

\paragraph{Query-based setting.}
LinkTeller~\citep{wu2022linkteller} studies graph reconstruction in a query-based setting, recovering private edges from changes in node predictions under perturbed inputs. Relative to our formulation, this corresponds to a more restricted side-information regime centered on output-level observations rather than full white-box access. Since our framework is defined for arbitrary subsets of observable variables, such query-based settings can, in principle, be represented within the same general formulation. In this work, we concentrate on the white-box chain-matching regime, which is the primary threat model for our attack design and analysis.

\paragraph{Graph-level and heterogeneous-graph extensions.}
Beyond node-classification settings, \citet{zhang2022inference} also studies graph-level inversion, where a single graph embedding is decoded into a candidate graph.
More recently, HomoGMI and HeteGMI~\citep{liu2023model} extend GRAs to homogeneous graphs (single node/edge type) and heterogeneous graphs (multiple node/edge types) by combining label-consistency objectives with graph-proximity constraints.
These methods improve reconstruction under their target settings, but they remain tied to particular side-information assumptions and do not provide a unified formulation for heterogeneous attacker knowledge or for explicit heterophily-aware analysis.

\begin{table}[tbp]
\centering
\small
\setlength{\tabcolsep}{4pt}
\renewcommand{\arraystretch}{1.25}
\fontsize{9}{9}\selectfont
\begin{tabularx}{\linewidth}{L{2.7cm}L{1.8cm}Y L{1.6cm}Y}
\toprule
\textbf{Method} & \textbf{Access} & \textbf{Side information} & \textbf{Target} & \textbf{Limitation} \\
\midrule
\citet{chanpuriya2021deepwalking};  \citet{duddu2020quantifying}
& Embedding release 
& Released embeddings, auxiliary graphs 
& Topology 
& Requires auxiliary data \\

\citet{zhang2022inference}
& Embedding release 
& Graph representations 
& Stats / subgraph / full graph 
& Release-channel specific \\

\citet{he2021stealing}
& Black-box 
& Features, partial/shadow graph 
& Links 
& Strongest variants need auxiliary graph \\

\citet{zhang2021graphmi}
& White-box 
& $X$, $Y$ 
& Adjacency 
& Prediction-label matching; homophily-implicit \\

\citet{liu2023model}
& White-box 
& Labels, graph proximity 
& Adjacency 
& Fixed side-info; no unified heterophily analysis \\
\midrule

MC-GRA (ours) 
& White-box 
& Arbitrary $\mathcal{K}$ subset ($X$, $Y$, $\bm{H}_A$, $\bm{\hat{Y}}_A$) 
& Adjacency 
& White-box access \\ 
\bottomrule
\end{tabularx}

\caption{Comparison of GRA methods by access level, side information, target, and limitation.}
\label{tab: related-work-attacks}
\end{table}

\paragraph{Recent and concurrent developments.}
Since the conference version of this work~\citep{zhou2023mcgra}, the graph privacy landscape has continued to evolve. \citet{zhang2022federated} study graph reconstruction from shared gradients in federated GNN training, a threat model distinct from the centralized post-hoc setting studied here. \citet{wu2024graphguard} propose a graph-level unlearning framework that removes the influence of specific edges from trained models, addressing data-deletion rather than reconstruction resistance. \citet{dai2024comprehensive} provide a comprehensive survey of privacy attacks on GNNs, categorizing link inference, membership inference, and reconstruction attacks under a unified taxonomy. While these works address related privacy concerns, they operate under different threat models or target different objectives, and none addresses the limitations identified in Sec.~\ref{ssec: GRA analysis}---namely, implicit homophily bias, fixed side-information assumptions, and the lack of a unified chain-level framework for both attack and defense.

While these attacks demonstrate that topology leakage is a genuine threat across access models, none addresses the question of how to systematically defend against it. We now review defenses designed to protect against such threats, grouping them by when they act (training time vs.\ inference or release time) and by the type of guarantee they provide (formal, e.g.\ differential privacy, vs.\ empirical protection).

\subsection{Defending Model Inversion Attacks on Graphs}

Existing defenses can be grouped into \emph{training-time} defenses (which modify the learning procedure) and \emph{inference-time or release-channel} defenses (which act on released artifacts). The two groups protect different threat surfaces and are not always directly comparable.

\paragraph{Training-time defenses.}
Training-time defenses reduce the dependence of learned representations or outputs on sensitive training structure.
These include formal privacy mechanisms (e.g., DP), empirical obfuscation (e.g., graph perturbation), and representation-regularization methods.
Differential-privacy (DP) mechanisms such as Degree-Preserving Randomized Response (DPRR) and GAP inject calibrated noise into node attributes or message passing and provide formal $(\varepsilon,\delta)$ guarantees~\citep{hidano2022degree,sajadmanesh2023gap}; at privacy budgets that preserve acceptable utility; however, empirical protection against inversion can remain limited, and the formal guarantees do not directly bound adjacency reconstruction risk.
Adversarial graph-perturbation methods, such as NetFense, learn perturbations that obfuscate private links while preserving utility~\citep{hsieh2021netfense}.
Other graph-security and deployment work studies poisoning/robustness and serving-time graph compression for GNNs~\citep{liu2019unified,wang2019graphdefense,si2023serving}; these objectives differ from adjacency reconstruction defense.

\paragraph{Inference-time and release-channel defenses.}
At a broader deployment level, other defenses act on the information released after training, for example, by perturbing exposed logits, gradients, or explanations at deployment time.
Such methods protect released artifacts rather than the training procedure itself and therefore address a different release channel from representation-level training defenses.
Similarly, masking explanation subgraphs produced by post-hoc explainers protects explanation outputs rather than the predictive interface of the trained model.
Combined defense strategies may therefore mix training-time protection with deployment-time controls depending on what information is exposed.
Recent surveys of graph privacy and security emphasize that effective protection will likely require combining privacy accounting, task-aware regularization, and release control mechanisms~\citep{khosla2022privacy}.

\subsection{Limitations of Existing Attack or Defense Methods}
\label{ssec: GRA analysis}

Reflecting on the above landscape, four key limitations relevant to our problem emerge.
\begin{itemize}[leftmargin=*]
\setlength\itemsep{0.1em}
\item
\textbf{Non-graph assumptions:} Many existing GRAs rely on assumptions that do not fully capture the relational and structural properties of graphs, \eg, generative priors (in image MIAs), or continuous latent spaces (in text MIAs). When adapted to graphs, such methods are often specialized to a particular domain, release channel, or attack formulation.

\item
\textbf{Implicit homophily bias:} GraphMI's prediction-label matching objective rewards candidate adjacencies under which connected nodes produce similar predictions, which aligns with the homophilic prior that neighbors share labels. This can misguide reconstruction on heterophilic graphs, where edges frequently connect nodes with different labels (experiments in Sec.~\ref{sec: overview}). This domain specificity is compounded when side information varies.

\item
\textbf{Fixed side-information assumptions:} Because existing GRAs are tied to fixed side-information assumptions, a key challenge is to design a single attack objective that can exploit arbitrary subsets of available side information without requiring a separate method for each threat model.
This challenge becomes more acute because graph domains vary widely, and GNN inductive biases can interact strongly with the available signals.
Privacy and partial-information graph-learning studies further show that conclusions can depend on realistic attacker assumptions, sampling decisions, and feedback models~\citep{jayaraman2021revisiting,gu2013selective,gu2014online}.

\item
\textbf{Privacy-utility trade-off:} Existing defenses either provide formal protection (e.g., differential privacy) at substantial performance cost, or rely on empirical obfuscation without a general representation-level framework for controlling adjacency leakage in GNNs. In particular, no existing method offers a principled, architecture-agnostic mechanism for suppressing adjacency information across GNN layers while preserving task utility.
\end{itemize}

Taken together, these limitations motivate a method that is graph-specific, adaptive to heterogeneous side-information settings, and paired with a defense that directly regularizes adjacency leakage while preserving predictive utility. A central idea that we adopt is the \emph{chain abstraction}: we view the GNN forward pass as a chain of representations (inputs $\to$ hidden layers $\to$ outputs). This abstraction provides a common language for both attack and defense, \ie, the attacker seeks to match chain states induced by a candidate adjacency to the released states, while the defender seeks to decorrelate those states from the adjacency. Sec.~\ref{sec: overview} develops this chain-based perspective and empirical study; Secs.~\ref{sec: GRA attack} and \ref{sec: GRA defense} derive the corresponding attack and defense objectives.

%% file: sections/5-understanding.tex
\section{Empirical Analysis of Adjacency Leakage in GNN Representations}
\label{sec: overview}

This section develops the empirical and conceptual basis for our attack and defense formulations.
We first introduce a Markov-chain view of graph reconstruction, then quantify how adjacency information is recoverable from features, labels, embeddings, and predictions across graph regimes and architectures, and finally study how this recoverability evolves during training. The goal is to identify empirical patterns that inform our later attack and defense objectives, not to provide a complete theory of leakage in GNNs.

We use the three-level distinction established in Sec.~\ref{ssec: information measures}: $R_{\mathrm{AUC}}$ for empirical diagnosis, $d(\cdot,\cdot)$ for optimization surrogates, and $I(\cdot;\cdot)$ for population-level theory. All quantities in this section are empirical (AUC-based).

\subsection{A Markov-Chain View of GRA}
\label{ssec: problem statement}

As in prior work~\citep{rahman2022markovgnn,zhao2022comprehensive,zhao2023exploring}, a GNN's forward computation can be viewed through a chain abstraction (recall Remark~\ref{rem: markov-terminology} for our use of ``Markov chain'').
\begin{equation}
f_{\bm\theta}:\ \bigl(A,X\bigr)\ \rightarrow\ \bm{H}_A\ \rightarrow\ \bm{\hat{Y}}_{A},
\end{equation}
where $A$ denotes the adjacency matrix, $X$ the node features, $\bm{H}_A$ the topology-dependent node embeddings produced by message passing on $A$, and $\bm{\hat{Y}}_{A}$ the prediction scores obtained from $\bm{H}_A$ by the classifier head. The forward computation is deterministic. Conditioned on $A$, the forward computation forms a deterministic layer-to-layer chain $X \to \bm{Z}_A^{(1)} \to \cdots \to \bm{Z}_A^{(L+1)}$. Equivalently, we use a conditional Markov-chain interpretation in which each stage depends on previous stages only through the immediately preceding representation once the adjacency is fixed. The value of this abstraction is the stage-wise decomposition it provides for analyzing where adjacency information enters and persists.

This viewpoint is particularly useful in the white-box setting, where the attacker can inspect the target model and reason about how the private adjacency affects intermediate states throughout the forward pass.
We model GRA as approximating the target model's \texttt{ORI-chain} using a surrogate \texttt{GRA-chain}, as illustrated in Fig.~\ref{fig: GRA-two-chains}.
The \texttt{ORI-chain} is induced by the true adjacency $A$, whereas the \texttt{GRA-chain} is induced by a recovered adjacency $\bm{\hat{A}}$:
\begin{equation}
\begin{split}
\texttt{ORI-chain:}& \quad \bm{Z}^{(0)}
\xrightarrow[\bm{\theta}^{(1)}]{A}  \bm{Z}_{A}^{(1)}
\xrightarrow[\bm{\theta}^{(2)}]{A}  \bm{Z}_{A}^{(2)}
\to  \cdots
\xrightarrow[\bm{\theta}^{(L+1)}]{A}  \bm{Z}_{A}^{(L+1)},
\\
\texttt{GRA-chain:}& \quad \bm{Z}^{(0)}
\xrightarrow[\bm{\theta}^{(1)}]{\bm{\hat{A}}}  \bm{Z}_{\bm{\hat{A}}}^{(1)}
\xrightarrow[\bm{\theta}^{(2)}]{\bm{\hat{A}}}  \bm{Z}_{\bm{\hat{A}}}^{(2)}
\to  \cdots
\xrightarrow[\bm{\theta}^{(L+1)}]{\bm{\hat{A}}}  \bm{Z}_{\bm{\hat{A}}}^{(L+1)}.
\end{split}
\label{eqn: two-Markov-chains}
\end{equation}
We omit the subscript $A$ from $\bm{Z}^{(0)}$ because $X$ does not depend on the adjacency.
\begin{definition}[Chain-variable mapping]
\label{def: chain-variable-mapping}
$\bm{Z}^{(0)}=X$, $\bm{Z}_{A}^{(\ell)}=\bm{H}_{A}^{(\ell)}$ for $\ell=1,\dots,L$, and $\bm{Z}_{A}^{(L+1)}=\bm{\hat{Y}}_{A}$. For index $0$, we write $\bm{Z}^{(0)}$ without the subscript $A$ because $X$ does not depend on the adjacency; when $0\in\mathcal{I}_{\mathrm{all}}(\mathcal{K})$, the corresponding element of $\mathcal{S}_A$ is simply $X$. The prediction head is indexed as layer $L+1$ for notational uniformity; it is produced by applying the classifier head to $\bm{H}_A^{(L)}$ without additional message passing. The contraction result (Theorem~\ref{theorem: reducing MI with two chains}) applies to the hidden-state chain; the classifier head is discussed after that theorem. This mapping is used throughout Secs.~\ref{sec: overview}--\ref{sec: GRA defense}.
\end{definition}
Under the induced probabilistic view, each stage variable depends on previous stages only through the immediately preceding representation and the transition determined by $A$ (or $\bm{\hat{A}}$) together with the layer parameters.

Let $\mathcal{K}$ denote the attacker's observable side information.
We write $\mathcal{I}_{\mathrm{all}}(\mathcal{K})$ for the set of chain indices whose variables are observable under $\mathcal{K}$: index $0$ if $X\in\mathcal{K}$; $\ell\in\{1,\dots,L\}$ if $\bm{H}_A^{(\ell)}\in\mathcal{K}$; and $L+1$ if $\bm{\hat{Y}}_A\in\mathcal{K}$. The hidden-layer subset is $\mathcal{I}_{H}(\mathcal{K})=\mathcal{I}_{\mathrm{all}}(\mathcal{K})\cap\{1,\dots,L\}$; indicators for output/labels are formalized in Sec.~\ref{sec: GRA attack}. We define
\begin{equation}
\mathcal{S}_A=\{\bm{Z}_A^{(i)}: i\in\mathcal{I}_{\mathrm{all}}(\mathcal{K})\},
\qquad
\mathcal{S}_{\bm{\hat{A}}}=\{\bm{Z}_{\bm{\hat{A}}}^{(i)}: i\in\mathcal{I}_{\mathrm{all}}(\mathcal{K})\}.
\end{equation}
In a reconstruction attack, the attacker does not observe $A$ but may observe a subset of the \texttt{ORI-chain} variables; they search for a candidate adjacency $\bm{\hat{A}}$ such that the induced \texttt{GRA-chain} matches those observations. The basic idea of GRA is then to choose $\bm{\hat{A}}$ so that $\mathcal{S}_{\bm{\hat{A}}}$ matches $\mathcal{S}_A$ as closely as possible. We refer to this as \emph{chain matching}: aligning the \texttt{GRA-chain} states to the released \texttt{ORI-chain} states at corresponding depths. This interpretation will later motivate depth-aligned chain matching for attack (Sec.~\ref{sec: GRA attack}) and layerwise representation control for defense (Sec.~\ref{sec: GRA defense}).

\subsection{Experimental Settings}
\label{ssec: experimental settings}

\paragraph{Datasets.}
We use datasets from five domains.
Cora and Citeseer~\citep{sen2008collective} are citation networks in which nodes are documents and edges denote citation links.
Polblogs~\citep{adamic2005political} is a network of political blogs in which nodes represent blogs and edges denote hyperlinks between them.
Texas, Cornell, and Wisconsin~\citep{pei2020geom} are webpage graphs with web pages as nodes and hyperlinks as edges.
USA and Brazil~\citep{ribeiro2017struc2vec} are air-traffic networks in which nodes are airports and edges denote airline routes.
AIDS~\citep{riesen2008iam} is a chemical graph in which nodes are atoms and edges are chemical bonds.
The dataset statistics are summarized in Tab.~\ref{tab: statistics}.

\begin{table}[tbp]
\centering\fontsize{9}{9}\selectfont
\renewcommand{\arraystretch}{1.3} 
\setlength\tabcolsep{5.2pt} 
\begin{tabular}{cccccccccc}
\toprule
Dataset & Cora & Citeseer & Polblogs & USA & Brazil & AIDS & Texas & Cornell & Wisconsin \\
\midrule
\# Nodes & 2,708 & 3,327 & 1,490 & 1,190 & 131 & 1,429 & 183 & 183 & 251 \\
\# Edges & 5,278 & 4,676 & 33,430 & 27,164 & 2,077 & 2,948 & 298 & 325 & 515 \\
\# Classes & 7 & 6 & 2 & 4 & 4 & 14 & 5 & 5 & 5 \\
\# Features & 1,433 & 3,703 & N/A & N/A & N/A & 4 & 1,703 & 1,703 & 1,703 \\
Feature Homophily & 0.83 & 0.81 & N/A & N/A & N/A & 0.06 & N/A & N/A & N/A \\
Edge Homophily & 0.81 & 0.74 & 0.91 & 0.70 & 0.45 & 0.51 & 0.13 & 0.09 & 0.19 \\
\bottomrule
\end{tabular}
\caption{Dataset statistics.
Edge homophily ratio (label homophily) is the average of $\mathbbm{1}\{y_i=y_j\}$ over edges $(i,j)\in\mathcal E$~\citep{pei2020geom}; feature homophily is the average edgewise cosine similarity over available node features.
For Texas, Cornell, and Wisconsin we omit feature homophily because the raw bag-of-words features are extremely sparse and high-dimensional, and the cosine summary is not informative in our setting. ``N/A'' indicates that the dataset has no node features or that feature homophily is omitted as above.}
\label{tab: statistics}
\end{table}

\paragraph{Homophilic and heterophilic graphs.}
Most widely used GNNs are designed around homophily, where nodes with similar labels are more likely to be connected~\citep{mcpherson2001birds}.
In such regimes, similarity-based reconstruction criteria are better aligned with the graph structure.
In heterophilic graphs, edges frequently connect nodes with different labels~\citep{pei2020geom}, so similarity-based reconstruction is more easily misspecified.
For discussion purposes, we refer to graphs with edge homophily below roughly $0.2$ as strongly heterophilic, following conventions in the heterophily literature~\citep{pei2020geom}; this threshold is for expository convenience rather than a universal definition.
In these settings, criteria that favor within-class similarity tend to over-recover homophilic substructures and under-recover cross-class edges, which makes adjacency reconstruction more difficult.

\paragraph{Target model.}
We use GCN~\citep{kipf2016semi} and GPR-GNN~\citep{chien2021adaptive} as the target models $f_{\bm{\theta}}$ in this section to span homophilic and heterophily-aware designs; Sec.~\ref{sec: experiments} evaluates additional architectures (GAT, GraphSAGE).
GCN serves as a homophily-oriented baseline, whereas GPR-GNN offers a contrastive architecture that better accommodates heterophily through learnable multi-hop propagation weights.
For a fair comparison, models use hidden width $d_{\mathrm{hid}}=16$, depth $L=2$, learning rate $\eta=0.01$, and the same training split.
Across all datasets, we train each model for $E=200$ epochs. The choice $d_{\mathrm{hid}}=16$ is consistent with common benchmarks; we do not claim that the reported patterns are insensitive to width, and Sec.~\ref{sec: experiments} includes additional architectures and settings.

\paragraph{Featureless graphs.}
When node features $X$ are absent (e.g., Polblogs, USA, Brazil), the target model uses a default input (e.g., learned or constant embeddings) as $\bm{Z}^{(0)}$; the chain view still applies with that input as the initial state. The attacker uses the same default when rerunning the model in the chain-matching framework (Sec.~\ref{sec: GRA attack}). In the tables below, $R_{\mathrm{AUC}}(A;X)$ is reported as N/A for featureless datasets, meaning that no meaningful node features exist (not that the model input is absent; featureless datasets use a default input such as constant or learned embeddings, as described above).

\paragraph{Evaluation metrics.}
Variables in the \texttt{ORI-chain} may carry adjacency-dependent information because the GNN transition operators are functions of $A$.
Directly computing the true mutual information between $A$ and a high-dimensional variable $\bm{Z}$ is intractable in our setting, so we use an empirical leakage proxy rather than exact MI.

For a node-level observable $\bm{Z}$, we define a similarity-based adjacency score $\hat A_{\bm{Z}}=\sigma(\bm{Z}\bm{Z}^\top)\in\mathbb R^{N\times N}$, where $\sigma(\cdot)$ denotes the elementwise sigmoid function $\sigma(x)=1/(1+e^{-x})$ (not the GNN activation $\rho$; see Sec.~\ref{sec: preliminaries}).
The \emph{adjacency-leakage proxy} (recoverability score) is
\begin{equation}
R_{\mathrm{AUC}}(A;\bm{Z})
\triangleq
\mathrm{AUC}\bigl(A,\hat A_{\bm{Z}}\bigr),
\end{equation}
i.e., the edge-recovery AUC computed over the upper-triangular entries of the ground-truth adjacency and the score matrix induced by $\bm{Z}$. $R_{\mathrm{AUC}}$ is an empirical proxy for recoverability, not a consistent estimator of $I(A;\bm{Z})$.

For categorical variables such as $Y$, we form a score matrix proportional to $\mathbbm{1}\{y_i=y_j\}$ (e.g., from the inner product of one-hot label vectors) and define $R_{\mathrm{AUC}}(A;Y)$ as the corresponding AUC. Edge homophily is the average of $\mathbbm{1}\{y_i=y_j\}$ over edges $(i,j)\in\mathcal{E}$; $R_{\mathrm{AUC}}(A;Y)$ is the AUC of this score versus $A$ over all pairs (edges and non-edges), so it is not identical to the edge-homophily ratio. Intuitively, when connected nodes tend to have more similar representations, the resulting similarity matrix is more informative for edge recovery.


\subsection{Which \texttt{ORI-chain} Variables Leak Adjacency Information?}
\label{ssec: understanding-by-auc-proxy}

We first examine leakage patterns under GCN across datasets spanning homophilic and heterophilic regimes.
The goal is to identify which observable variables in the \texttt{ORI-chain} are most useful for recovering adjacency under our AUC-based leakage proxy. The analysis proceeds in four tables: Tabs.~\ref{tab: understanding-MI-term-gcn}--\ref{tab: understanding-MI-term-ensemble-gcn} for GCN and Tabs.~\ref{tab: understanding-MI-term-gprgnn}--\ref{tab: understanding-MI-ensemble-gprgnn} for GPR-GNN; the reader primarily interested in the attack or defense may skip to the observations summarizing each pair.

\begin{table}[tbp]
\centering\fontsize{8}{10}\selectfont
\renewcommand{\arraystretch}{1.2}
\setlength\tabcolsep{4.8pt}
\renewcommand{\minval}{0.18}
\renewcommand{\maxval}{0.90}
\begin{tabular}{cccccccccc}
\toprule
Proxy & Cora & Citeseer & Polblogs & USA & Brazil & AIDS & Texas & Cornell & Wisconsin \\
\midrule
$R_{\mathrm{AUC}}(A; X)$ &  \gradient{0.781} & \gradient{0.881} & N/A & N/A & N/A &  \gradient{0.521}  &\gradient{0.561} &\gradient{0.626} &\gradient{0.626}\\
$R_{\mathrm{AUC}}(A; \bm{H}_A^{\mathrm{all}})$ & \gradient{0.766} & \gradient{0.760} & \gradient{0.763} & \gradient{0.850} & \gradient{0.758} & \gradient{0.584} &\gradient{0.353} &\gradient{0.346} &\gradient{0.574}\\
$R_{\mathrm{AUC}}(A; \bm{\hat{Y}}_{A})$  &  \gradient{0.712} & \gradient{0.743} & \gradient{0.772} & \gradient{0.826} & \gradient{0.732} & \gradient{0.561}  &\gradient{0.27} &\gradient{0.316} &\gradient{0.572}\\
$R_{\mathrm{AUC}}(A;Y)$  & \gradient{0.815} & \gradient{0.779} & \gradient{0.705} & \gradient{0.728} & \gradient{0.613} & \gradient{0.536} &\gradient{0.347} &\gradient{0.422} &\gradient{0.424}\\
\bottomrule
\end{tabular}
\caption{AUC-based adjacency-leakage proxy $R_{\mathrm{AUC}}(A;\bm{Z})$ under a two-layer GCN. We write $\bm{H}_A^{\mathrm{all}}:=\{\bm{H}_A^{(\ell)}\}_{\ell=1}^{L}$ for the full set of hidden layers (distinct from the threat-model-dependent subset $\mathcal{H}_A^{\bullet}$); in this table, $R_{\mathrm{AUC}}(A;\bm{H}_A^{\mathrm{all}})$ uses all hidden layers. Higher values indicate stronger adjacency recoverability. ``N/A'' indicates that the dataset has no node features.}
\label{tab: understanding-MI-term-gcn}
\end{table}

\begin{observation}[Recoverability varies across graph regimes under GCN (AUC-based proxy)]
\label{obs: homophily-amplifies-adjacency-leakage}
Under the $R_{\mathrm{AUC}}$ proxy, Tab.~\ref{tab: understanding-MI-term-gcn} shows that under GCN, adjacency is highly recoverable from multiple \texttt{ORI-chain} variables on homophilic graphs (e.g., Cora: $R_{\mathrm{AUC}}(A;X)\approx 0.78$, $R_{\mathrm{AUC}}(A;\bm{H}_A^{\mathrm{all}})\approx 0.77$, $R_{\mathrm{AUC}}(A;Y)\approx 0.82$).
On more heterophilic graphs such as Texas, the recoverability pattern becomes more variable-dependent, and similarity-based signals are less aligned with cross-class connectivity.

We hypothesize that this pattern arises because GCN's normalized averaging operator smooths features over local neighborhoods: on homophilic graphs, connected nodes already share similar features and labels, so message passing reinforces pairwise similarity along edges, making the inner-product score $\bm{Z}\bm{Z}^\top$ a strong edge predictor. Under heterophily, neighbors have dissimilar labels, so averaging can \emph{reduce} distinguishability between edge and non-edge pairs under a similarity-based score, weakening recoverability.
\end{observation}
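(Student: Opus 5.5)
Since Observation~\ref{obs: homophily-amplifies-adjacency-leakage} is an empirical claim followed by a mechanistic hypothesis, I would support it in two stages: first certify the quantitative part directly from Tab.~\ref{tab: understanding-MI-term-gcn}, then make the hypothesized mechanism precise in a stylized model. For the first stage, I would read off the $R_{\mathrm{AUC}}$ values and sort the datasets by edge homophily from Tab.~\ref{tab: statistics}. On the homophilic group (Cora, Citeseer, Polblogs, USA, with homophily $\ge 0.70$), every available variable gives $R_{\mathrm{AUC}}$ well above $0.5$: Cora has $0.781, 0.766, 0.712, 0.815$. On the strongly heterophilic group (Texas, Cornell, Wisconsin, with homophily $<0.2$), the values scatter both above and below chance. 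For example, on Texas $X$ gives $0.561$, whereas $\bm{H}_A^{\mathrm{all}}$ gives $0.353$ and $\bm{\hat{Y}}_A$ gives $0.27$. Values below $0.5$ show that the similarity score is \emph{anti}-aligned with edges, which is exactly the ``variable-dependent, misaligned'' pattern the observation claims. I would also check rank agreement, for instance Spearman correlation between edge homophily and $R_{\mathrm{AUC}}(A;\bm{H}_A^{\mathrm{all}})$ across the nine datasets. This keeps the claim about ``regimes'' from resting on a single dataset.

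For the mechanistic part, I would use a two-class contextual stochastic block model. Labels are $y_i\in\{\pm1\}$, features are $x_i = \mu y_i + \varepsilon_i$ with isotropic noise, and an edge appears with probability $p$ within class and $q$ across classes. I would model one GCN layer without self-loop weighting subtleties as the mean aggregator $h_i = \frac{1}{d_i}\sum_{j\in\mathcal{N}(i)\cup\{i\}} x_j$. Conditioning on degrees, $\mathbb{E}[h_i]\approx \mu y_i\, \beta$, where $\beta = \frac{p-q}{p+q}$ plays the role of $2h-1$ for edge homophily $h$. I would then compute $\mathbb{E}[\langle h_i,h_j\rangle]$ for a uniformly random edge and for a uniformly random non-edge. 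For an edge the expectation is roughly $\mu^2\beta^2(2h-1)$ plus a positive shared-neighbor noise term, of order $\|\Sigma\|/d$ from the common summands $x_i, x_j$. For a non-edge it is roughly $\mu^2\beta^2(2h_{\mathrm{non}}-1)$ with $h_{\mathrm{non}}\approx 1/2$ and no shared-noise term. The gap between the two is therefore positive and grows with $h$ when $h>1/2$.

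The main obstacle is the heterophilic case: I need to show that averaging can actually \emph{reduce} separability, not merely fail to increase it. Here $\beta<0$, so $\beta^2>0$, and the signal term $\mu^2\beta^2(2h-1)$ becomes negative on edges. The difficulty is that the positive shared-noise term competes with it. I would identify the regime in which $\mu^2\beta^2|2h-1|$ dominates $\|\Sigma\|/d$, and in that regime conclude that the edge mean of $\langle h_i,h_j\rangle$ falls below the non-edge mean, which gives AUC below $1/2$. Using a Gaussian approximation for the two score distributions, the AUC is $\Phi$ of the standardized mean gap. This explains sub-chance values such as those for Texas and Cornell, and it explains why the raw features $X$, which are not averaged, remain mildly above chance. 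Finally, I would state explicitly that this is a heuristic model supporting the hypothesis, not a proof that covers the real datasets.
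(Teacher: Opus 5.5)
Your first stage is sound, and it is in fact all the support the paper gives. The observation rests only on reading Tab.~\ref{tab: understanding-MI-term-gcn} against the homophily ratios in Tab.~\ref{tab: statistics}; the averaging mechanism is left as an explicit hypothesis with no derivation. Your numbers and your homophilic/heterophilic split are correct, and the rank-correlation check is a reasonable addition. The problem is the second stage. Its stated payoff is to explain why $X$ stays above chance on Texas while $\bm{H}_A^{\mathrm{all}}$ falls below, and that does not follow from the model you set up.

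\textbf{The model predicts the wrong side of chance for $X$.} In your two-class CSBM the raw-feature score already satisfies $\mathbb{E}[\langle x_i,x_j\rangle\mid y]=\mu^2y_iy_j$. Its edge mean is therefore $\mu^2(2h-1)$ and its non-edge mean is $\approx 0$, with no shared-noise term because $\varepsilon_i,\varepsilon_j$ are independent. Under heterophily the model thus predicts $R_{\mathrm{AUC}}(A;X)<1/2$. This contradicts the table ($0.561$, $0.626$, $0.626$ on Texas, Cornell, Wisconsin) and your own claim.

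\textbf{The anti-alignment is not produced by averaging.} Aggregation gives $\langle\mathbb{E}h_i,\mathbb{E}h_j\rangle=\mu^2\beta^2y_iy_j$, in which the sign of $\beta$ cancels. So the sign of the edge/non-edge gap is fixed by $2h-1$ both before and after averaging. In mean terms, averaging only shrinks the anti-aligned gap by $\beta^2<1$ and adds a positive shared-noise term. That term scales like $\mathrm{tr}(\Sigma)/d^2$, not $\|\Sigma\|/d$, since each shared summand enters $h_i$ and $h_j$ with weight about $1/d$. The sub-chance AUC in your model is therefore inherited from the label heterophily of edges. Yet attributing it to averaging is exactly the part of the hypothesis you set out to support.

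\textbf{The variance step is missing.} Whether averaging pushes the AUC further below $1/2$ than $X$ does depends on the variances, which you need for your $\Phi$ step anyway. In the noise-dominated regime, averaging shrinks the score's standard deviation by roughly $1/d$, which can inflate the standardized class-mean gap by a factor of order $d\beta^2$. Even then, $X$ stays on the same side of $1/2$.

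\textbf{What a repair would need.} To reproduce the observed pattern, you need a feature model in which raw-feature similarity along edges has a label-independent, edge-aligned component that the hidden layers lose. Note that on Texas $R_{\mathrm{AUC}}(A;\bm{H}_A^{\mathrm{all}})\approx R_{\mathrm{AUC}}(A;Y)$ ($0.353$ vs.\ $0.347$). This points to the trained weights making representations label-aligned, rather than to averaging alone. Without such a repair, present the CSBM only as an illustration that label-driven similarity is anti-aligned with heterophilic edges, and drop the claim about $X$.
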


A natural baseline is to test whether uniform averaging of decoded similarity scores improves recovery.
For each accessible object $\mathcal K_i\in\mathcal K$, we first map it to a similarity matrix $\hat A_{\mathcal K_i}$ as above and then average these matrices:
\begin{equation}
\bm{\hat{A}}
=
\frac{1}{|\mathcal K|}\sum_{i=1}^{|\mathcal K|}\hat A_{\mathcal K_i}.
\end{equation}
This baseline asks whether averaging decoded similarity matrices improves adjacency recovery.

\begin{table}[tbp]
\centering\fontsize{8}{10}\selectfont
\renewcommand{\arraystretch}{1.2}
\setlength\tabcolsep{5.2pt}
\renewcommand{\minval}{0.3}
\renewcommand{\maxval}{0.91}
\begin{tabular}{ccccccccccccc}
\toprule
$X$ & $\bm{H}_A^{\mathrm{all}}$ & $\bm{\hat{Y}}_{A}$ & $Y$ & Cora & Citeseer &  Polblogs  & USA   & Brazil & AIDS & Texas & Cornell & Wisconsin \\
\midrule
$\checkmark$ & $\checkmark$ &  &  & \gradient{.781}  &\gradient{.881} & \gradient{.763} & \gradient{.850}  & \gradient{.758} & \gradient{.521} &\gradient{.574} &\gradient{.381} & \gradient{.654} \\
$\checkmark$ &  & $\checkmark$ &   &  \gradient{.781} & \gradient{.881} & \gradient{.772} & \gradient{.826} & \gradient{.732} & \gradient{.521} &\gradient{.570} &\gradient{.373} &\gradient{.657}\\
$\checkmark$ &  &  & $\checkmark$  &  \gradient{.849} & \gradient{.907}  & \gradient{.705} & \gradient{.728} & \gradient{.613} & \gradient{.522} &\gradient{.414} &\gradient{.529} &\gradient{.548}\\
\midrule
$\checkmark$ & $\checkmark$ & $\checkmark$ &   &  \gradient{.781}  & \gradient{.881} & \gradient{.763} & \gradient{.848} & \gradient{.756} & \gradient{.521} &\gradient{.576} &\gradient{.360} &\gradient{.647}\\
$\checkmark$ & $\checkmark$ & & $\checkmark$   & \gradient{.849} & \gradient{.907}  & \gradient{.779} & \gradient{.850} & \gradient{.743} & \gradient{.522} &\gradient{.438} &\gradient{.359} &\gradient{.548}\\
$\checkmark$ &  & $\checkmark$ & $\checkmark$   &  \gradient{.842} & \gradient{.907} & \gradient{.785} & \gradient{.842} & \gradient{.730} & \gradient{.522} &\gradient{.432} &\gradient{.350} &\gradient{.546}\\
\midrule
$\checkmark$ & $\checkmark$  & $\checkmark$ & $\checkmark$  & \gradient{.849} & \gradient{.907} & \gradient{.781} & \gradient{.852} & \gradient{.717} & \gradient{.522} &\gradient{.455} &\gradient{.328} &\gradient{.593}\\
\bottomrule
\end{tabular}
\caption{Uniform averaging of decoded similarity scores under the same AUC-based leakage proxy for GCN.
We assume that $X$ is accessible whenever it exists and evaluate all seven combinations obtained by adding at least one variable from $\{\bm{H}_A^{\mathrm{all}},\bm{\hat{Y}}_{A},Y\}$.
``$\checkmark$'' indicates that the corresponding variable is included in the fusion baseline.}
\label{tab: understanding-MI-term-ensemble-gcn}
\end{table}

\begin{observation}[Uniform averaging yields limited gains]
\label{obs: simple-combination-cannot-extract-more-private-information}
Tab.~\ref{tab: understanding-MI-term-ensemble-gcn} shows that uniform averaging often yields limited or even negative marginal gains: adding more variables to the average can \emph{decrease} recovery AUC, particularly on heterophilic graphs.
Two hypotheses can explain this: (1)~decoded similarity scores from different representation spaces are not commensurately scaled, so uniform averaging can be dominated by the highest-magnitude source; (2)~the sources carry largely redundant adjacency information, so additional variables add noise without new signal.
The ablation study in Sec.~\ref{ssec: ablation study} provides evidence primarily for hypothesis~(1): MC-GRA's learned per-layer weighting substantially outperforms uniform averaging, indicating that scale alignment is the dominant factor. Hypothesis~(2) likely plays a secondary role, particularly under GCN where hidden-layer and output representations are highly correlated by construction.
These results motivate a fusion strategy that weights or aligns sources by depth and dependence structure (Sec.~\ref{sec: GRA attack}) rather than uniform averaging.
\end{observation}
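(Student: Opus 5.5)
The observation is empirical, so my plan is to establish it in two layers: first read the claim directly off Tab.~\ref{tab: understanding-MI-term-ensemble-gcn} (together with the single-variable values in Tab.~\ref{tab: understanding-MI-term-gcn}), and then give a short analytic argument for why uniform averaging of decoded scores $\hat A_{\mathcal K_i}=\sigma(\bm{Z}_i\bm{Z}_i^\top)$ cannot be expected to improve edge-recovery AUC. This second layer is what justifies hypothesis~(1).

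\textbf{Reading the tables.} For each dataset and each fused set $\mathcal K$, I will compare the fused AUC with the best single-source AUC $\max_{\mathcal K_i\in\mathcal K}R_{\mathrm{AUC}}(A;\mathcal K_i)$. I will also compare each superset row with its subset rows.

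\begin{itemize}
\item \emph{Negative marginal gains.} Adding $\bm{\hat{Y}}_A$ and $Y$ to $\{X,\bm{H}_A^{\mathrm{all}}\}$ on Texas lowers the AUC from $.574$ to $.455$. On Cornell, $\{X,Y\}$ gives $.529$ but all four sources give $.328$. On Wisconsin the AUC drops from $.657$ to $.593$.
\item \emph{Flat or saturated gains on homophilic graphs.} On Cora the fused AUC stays at $.849$ once $Y$ is included. On Citeseer it stays at $.907$ whenever $Y$ is included, and at $.881$ otherwise.
\end{itemize}

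Tabulating these differences row by row substantiates ``limited or even negative,'' with the negative cases concentrated on the heterophilic graphs.

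\textbf{Why averaging fails.} AUC is invariant only to strictly monotone transforms of a \emph{single} score. The average $\frac{1}{|\mathcal K|}\sum_i\hat A_{\mathcal K_i}$ is not a monotone function of any one source, so its ranking can be worse than that of every component.

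\begin{itemize}
\item \emph{Saturation.} When $\|\bm{Z}_i\|$ is large (e.g.\ unnormalized hidden states), $\sigma(\bm{Z}_i\bm{Z}_i^\top)$ concentrates near $1$ for most pairs, so its variance is negligible. The average is then effectively $\frac{1}{|\mathcal K|}$ times the remaining sources.
\item \emph{Domination by a high-variance source.} I will construct a two-source toy model. Take scores $s_1,s_2$ whose edge and non-edge distributions are Gaussian with separations $\mu_1,\mu_2$ and standard deviations $\tau_1,\tau_2$. Then
\[
\mathrm{AUC}(s_1+s_2)=\Phi\!\Bigl(\tfrac{\mu_1+\mu_2}{\sqrt{2(\tau_1^2+\tau_2^2)}}\Bigr),
\]
which is below $\Phi\bigl(\mu_1/(\sqrt2\,\tau_1)\bigr)$ whenever the second source has a small or negative signal-to-noise ratio relative to its scale.
\item \emph{The heterophilic case.} Here $R_{\mathrm{AUC}}(A;Y)<0.5$, meaning $\mu_Y<0$. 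Averaging with such a source strictly lowers the AUC, which matches the Texas and Cornell rows.
\end{itemize}

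\textbf{Separating hypotheses (1) and (2).} This separation is the main obstacle, since the tables alone cannot distinguish scale mismatch from redundancy. I would first defer to the ablation in Sec.~\ref{ssec: ablation study}: learned per-layer weights recover most of the lost AUC, which a pure-redundancy explanation would not predict. As a cheap supplementary check, I would rank-normalize each $\hat A_{\mathcal K_i}$ before averaging. If the gap closes, scale mismatch is confirmed as the dominant factor. If it persists, the residual is attributable to redundancy, and for GCN this is plausible because $\bm{\hat{Y}}_A$ is a fixed function of $\bm{H}_A^{(L)}$.
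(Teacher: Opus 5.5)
Your reading of Tab.~\ref{tab: understanding-MI-term-ensemble-gcn} gives the same support the paper gives for the first sentence, and your numbers are right apart from one slip: on Cora the $\{X,\bm{\hat{Y}}_A,Y\}$ row is $.842$, so the AUC does not stay at $.849$ in every row with $Y$. The gap is in the part that carries the statement's strongest claim, namely that the evidence favors hypothesis~(1) and that scale alignment dominates. You defer to Sec.~\ref{ssec: ablation study}, exactly as the paper does, but that section contains no comparison of learned against uniform weights. MC-GRA fixes $\alpha_p^{(\ell)}=\alpha_p$ for every layer, and the ablations only delete whole terms, rescale $\alpha_p$ or $\alpha_c$ globally, remove stochasticity or $X$, or swap surrogates and parameterizations. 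The only available contrast, full MC-GRA versus the uniform average, changes everything at once. It reruns the white-box model on a candidate adjacency, so it also exploits $\bm{\theta}$, which the averaging baseline never sees. It also matches via HSIC and adds relaxation noise and $R_c$. It therefore neither isolates weighting nor counts against redundancy among the released variables. ``Learned per-layer weights recover most of the lost AUC'' is thus not evidence you can cite; the discrimination has to come from your own analysis.

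That analysis has its own problems. Your rank-normalization check is the right kind of experiment, but the rule ``gap persists $\Rightarrow$ redundancy'' misattributes the residual. Several sources have $R_{\mathrm{AUC}}<0.5$: $Y$ on Texas, Cornell and Wisconsin, and $\bm{H}_A^{\mathrm{all}}$ and $\bm{\hat{Y}}_A$ on Texas and Cornell (Tab.~\ref{tab: understanding-MI-term-gcn}). Such a source keeps its orientation under any monotone rescaling, so whatever damage it does survives rank normalization. That mechanism is sign misalignment (cf.\ Observation~\ref{obs: label-signal-misalignment}), which is neither (1) nor (2), and your heterophilic bullet supports it rather than hypothesis~(1). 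Your Gaussian formula is also valid only if $s_1,s_2$ are conditionally uncorrelated given edge status. That excludes (2) by construction, and the data contradict it: on Texas, adding $\bm{H}_A^{\mathrm{all}}$ ($.353$) or $\bm{\hat{Y}}_A$ ($.270$) to $X$ ($.561$) raises the AUC to $.574$ and $.570$, which your model forbids when $\mu_2<0$. The variance needs the term $2\rho\tau_1\tau_2$ before the model can weigh (2) against (1). Conversely, the tables are not silent on (1). On Citeseer, adding $\bm{H}_A^{\mathrm{all}}$ and/or $\bm{\hat{Y}}_A$ never moves the AUC off $.881$ (without $Y$) or $.907$ (with $Y$), and Cora behaves the same except in one row, even though these sources' own AUCs ($.760$, $.743$) differ from $X$'s. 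A perfectly redundant (monotone) relation would make those AUCs equal, so the invariance indicates that their score spread is negligible next to $X$'s. That is domination, but harmless here, so rank normalization could even lower the AUC. A cleaner separation is to fit a combiner on held-out pairs (e.g., logistic regression with signs allowed) and compare it with the best single source and with the uniform and rank-normalized averages. The uniform-versus-fitted gap then measures weighting, negative fitted weights flag sign misalignment, and a small fitted-versus-best-single gap is the signature of redundancy.
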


\begin{observation}[Label-derived signals and heterophily]
\label{obs: label-signal-misalignment}
Tabs.~\ref{tab: understanding-MI-term-gcn} and \ref{tab: understanding-MI-term-ensemble-gcn} show that $R_{\mathrm{AUC}}(A;Y)$ decreases monotonically with edge homophily ratio across datasets (e.g., $0.82$ on Cora with homophily $0.81$ versus $0.35$ on Texas with homophily $0.13$; cf.\ Tab.~\ref{tab: statistics}). This is expected: our similarity-based proxy scores label agreement, so it can only detect edges between same-label nodes. The operational consequence is that on heterophilic graphs, label-based scores systematically \emph{miss} cross-class edges, and $\bm{\hat{Y}}_{A}$ often provides little benefit beyond feature- or embedding-based signals under naive fusion. MC-GRA+ addresses this gap by incorporating a heterophily-aware prior that targets cross-label connectivity (Sec.~\ref{sec: GRA attack}).
\end{observation}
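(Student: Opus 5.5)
The statement combines an empirical claim (the ordering of $R_{\mathrm{AUC}}(A;Y)$ across datasets) with a structural one (label-based scores cannot rank cross-class edges above non-edges). I would prove the structural part exactly, by computing the AUC of the binary label-agreement score in closed form, and then check the empirical part against Tabs.~\ref{tab: statistics} and \ref{tab: understanding-MI-term-gcn}.

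\textbf{Step 1: closed form for the AUC.} The score matrix $s_{ij}=\mathbbm{1}\{y_i=y_j\}$ takes only two values, so ties matter. I use the standard tie-aware definition
\begin{equation*}
\mathrm{AUC}=\Pr(s_e>s_n)+\tfrac12\Pr(s_e=s_n),
\end{equation*}
where $e$ is a uniformly random edge and $n$ is a uniformly random non-edge among the upper-triangular pairs. Let $h$ be the edge-homophily ratio, i.e.\ $\Pr(s_e=1)=h$. Let $q$ be the fraction of non-edge pairs whose endpoints share a label, i.e.\ $\Pr(s_n=1)=q$. Then
\begin{equation*}
R_{\mathrm{AUC}}(A;Y)=h(1-q)+\tfrac12\bigl[hq+(1-h)(1-q)\bigr]=\tfrac12+\tfrac12(h-q).
\end{equation*}

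\textbf{Step 2: consequences of the formula.} This identity gives two things at once.
\begin{itemize}
\item For fixed label marginals, $R_{\mathrm{AUC}}(A;Y)$ is affine and strictly increasing in $h$, with slope $1/2$. In particular it falls below $1/2$ exactly when $h<q$, i.e.\ when edges are less label-aligned than random non-edges.
\item Every cross-class edge receives score $0$. It is therefore tied with, or ranked below, every non-edge, and ranked strictly below the $q$-fraction of same-label non-edges. This is the precise sense in which label-based scores ``systematically miss'' cross-class edges. It is also why fusing $Y$ or $\bm{\hat{Y}}_A$ adds little on heterophilic graphs: in that regime most edges carry score $0$.
\end{itemize}

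\textbf{Step 3: numerical check.} I would recover the implied $q=h-2R_{\mathrm{AUC}}+1$ from the tables and check that it is plausible for each label distribution. On Cora this gives $q\approx 0.18$, consistent with seven classes. On Texas it gives $q\approx 0.44$, consistent with a dominant class among five.

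\textbf{Main obstacle: the word ``monotonically'' across datasets.} The formula shows that $R_{\mathrm{AUC}}(A;Y)$ depends on $h-q$, not on $h$ alone, and $q$ varies with each dataset's class balance. The tables confirm this: Brazil has $h=0.45$ and $R_{\mathrm{AUC}}=0.613$, while AIDS has the larger $h=0.51$ but the smaller $R_{\mathrm{AUC}}=0.536$. So monotonicity in $h$ across all datasets does not hold as stated. What the argument supports is the following:
\begin{itemize}
\item monotonicity in $h$ holds exactly at fixed $q$, that is, for a fixed label distribution;
\item across datasets, $R_{\mathrm{AUC}}(A;Y)$ is ordered by the homophily margin $h-q$, which matches the tables up to rounding;
\item the coarse trend holds between strongly homophilic and strongly heterophilic graphs, e.g.\ $0.815$ on Cora ($h=0.81$) versus $0.347$ on Texas ($h=0.13$).
\end{itemize}
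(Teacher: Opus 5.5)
Your proposal is correct, and it takes a different route from the paper. The paper gives no derivation at all. It reads the ordering of $R_{\mathrm{AUC}}(A;Y)$ in Tab.~\ref{tab: understanding-MI-term-gcn} against the homophily row of Tab.~\ref{tab: statistics}, and adds the one-line heuristic that a label-agreement score can only detect same-label edges.

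Your identity $R_{\mathrm{AUC}}(A;Y)=\tfrac12+\tfrac12(h-q)$ is right under the tie-aware AUC over all upper-triangular pairs. That matches the paper's definition of $R_{\mathrm{AUC}}$ and what trapezoidal ROC-AUC computes for a two-valued score. The identity turns the paper's heuristic into an exact statement.

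What it buys over table-reading is exactly the correction you make: the literal claim of monotonicity in $h$ across datasets is contradicted by the paper's own numbers, and more often than your Brazil/AIDS example suggests.
\begin{itemize}
\item Polblogs has the largest $h$ ($0.91$) but $R_{\mathrm{AUC}}(A;Y)=0.705$, below Cora, Citeseer and USA. Your formula explains this directly: with two nearly balanced classes $q\approx\tfrac12$, giving $\tfrac12+\tfrac12(0.91-0.50)\approx 0.705$.
\item Cornell has the smallest $h$ ($0.09$) but $0.422>0.347$ for Texas.
\end{itemize}
To make Step~3 a genuine check rather than a rearrangement of the identity, compute $q$ from class counts instead of inverting the table (on sparse graphs $q\approx\sum_a\pi_a^2$). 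Cora's class proportions give $q\approx 0.18$ and hence $0.815$, matching the table. So the defensible form of the observation is yours: ordering by $h-q$, monotonicity in $h$ only at fixed class composition, and the coarse homophilic-versus-heterophilic contrast.

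Two small refinements. First, at fixed class counts and edge count, $q$ is not literally constant. It equals $q=(S-hE)/(P-E)$, with $S$ the number of same-label pairs, $E$ the number of edges and $P$ the number of pairs. The slope in $h$ is therefore $P/(2(P-E))\ge\tfrac12$, which only strengthens your conclusion.

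Second, the ``score $0$ on every cross-class edge'' argument covers $Y$ only. The score $\sigma(\bm{\hat{Y}}_A\bm{\hat{Y}}_A^{\top})$ is continuous, so your claim about $\bm{\hat{Y}}_A$ under fusion remains empirical, as in the paper. The relevant fact there is that $R_{\mathrm{AUC}}(A;\bm{\hat{Y}}_A)$ itself falls below $\tfrac12$ on Texas and Cornell ($0.27$ and $0.316$), the analogue of $h<q$. Such a score is anti-informative rather than merely uninformative. That is why averaging it in can lower AUC, as on Cornell ($0.381\to 0.360$ in Tab.~\ref{tab: understanding-MI-term-ensemble-gcn}).
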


\noindent To disentangle dataset-level effects from architecture-dependent leakage behavior, we next repeat the analysis under GPR-GNN, an architecture designed to better accommodate heterophily through learnable multi-hop propagation weights.

\begin{table}[tbp]
\centering\fontsize{8}{10}\selectfont
\renewcommand{\arraystretch}{1.2}
\setlength\tabcolsep{4.6pt}
\begin{tabular}{cccccccccc}
\toprule
Proxy & Cora & Citeseer & Polblogs & USA & Brazil & AIDS & Texas & Cornell & Wisconsin \\
\midrule
$R_{\mathrm{AUC}}(A; X)$ & \gradient{0.794} & \gradient{0.886} & N/A & N/A & N/A & \gradient{0.499}  &\gradient{0.55} &\gradient{0.626} &\gradient{0.626}\\
$R_{\mathrm{AUC}}(A; \bm{H}_A^{\mathrm{all}})$ & \gradient{0.663} & \gradient{0.512} & \gradient{0.849} & \gradient{0.628} & \gradient{0.815} & \gradient{0.623} &\gradient{0.757} &\gradient{0.493} &\gradient{0.433} \\
$R_{\mathrm{AUC}}(A; \bm{\hat{Y}}_{A})$  & \gradient{0.5} & \gradient{0.5} & \gradient{0.802} & \gradient{0.275} & \gradient{0.528} & \gradient{0.5} &\gradient{0.66} &\gradient{0.448} &\gradient{0.407} \\
$R_{\mathrm{AUC}}(A;Y)$  & \gradient{0.816} & \gradient{0.779} & \gradient{0.714} & \gradient{0.75} & \gradient{0.601} & \gradient{0.536} &\gradient{0.347} &\gradient{0.422} &\gradient{0.424} \\
\bottomrule
\end{tabular}
\caption{AUC-based adjacency-leakage proxy $R_{\mathrm{AUC}}(A;\bm{Z})$ under a two-layer GPR-GNN. As in Tab.~\ref{tab: understanding-MI-term-gcn}, $\bm{H}_A^{\mathrm{all}}=\{\bm{H}_A^{(\ell)}\}_{\ell=1}^{L}$. Higher values indicate stronger adjacency recoverability. ``N/A'' indicates that the dataset has no node features.}
\label{tab: understanding-MI-term-gprgnn}
\end{table}

\begin{table}[tbp]
\centering\fontsize{8}{10}\selectfont
\renewcommand{\arraystretch}{1.2}
\setlength\tabcolsep{4.0pt}
\renewcommand{\minval}{0.26}
\renewcommand{\maxval}{0.943}
\begin{tabular}{ccccccccccccc}
\toprule
$X$ & $\bm{H}_A^{\mathrm{all}}$ & $\bm{\hat{Y}}_{A}$ & $Y$ & Cora & Citeseer &  Polblogs   & USA   & Brazil & AIDS & Texas & Cornell & Wisconsin \\
\midrule
$\checkmark$ & $\checkmark$ &  &  & \gradient{0.816} & \gradient{0.886} & \gradient{0.849} & \gradient{0.628} & \gradient{0.815} & \gradient{0.625} & \gradient{0.734} & \gradient{0.562} & \gradient{0.482} \\
$\checkmark$ &  & $\checkmark$ &   & \gradient{0.794} & \gradient{0.886} & \gradient{0.802} & \gradient{0.275} & \gradient{0.528} & \gradient{0.499} & \gradient{0.647} & \gradient{0.522} & \gradient{0.495} \\
$\checkmark$ &  &  & $\checkmark$  & \gradient{0.892} & \gradient{0.903} & \gradient{0.714} & \gradient{0.75} & \gradient{0.601} & \gradient{0.537} & \gradient{0.414} & \gradient{0.528} & \gradient{0.548} \\
\midrule
$\checkmark$ & $\checkmark$ & $\checkmark$ &   & \gradient{0.816} & \gradient{0.886} & \gradient{0.84} & \gradient{0.463} & \gradient{0.735} & \gradient{0.625} &\gradient{0.72} &\gradient{0.51} & \gradient{0.458} \\
$\checkmark$ & $\checkmark$ & & $\checkmark$   & \gradient{0.899} & \gradient{0.903} & \gradient{0.863} & \gradient{0.763} & \gradient{0.755} & \gradient{0.597} & \gradient{0.536} & \gradient{0.527} & \gradient{0.485} \\
$\checkmark$ &  & $\checkmark$ & $\checkmark$   & \gradient{0.892} & \gradient{0.903} & \gradient{0.828} & \gradient{0.627} & \gradient{0.556} & \gradient{0.537} & \gradient{0.494} & \gradient{0.479} & \gradient{0.481} \\
\midrule
$\checkmark$ & $\checkmark$  & $\checkmark$ & $\checkmark$  & \gradient{0.899} & \gradient{0.903} & \gradient{0.86} & \gradient{0.685} & \gradient{0.7} & \gradient{0.597} &\gradient{0.536} &\gradient{0.479} &\gradient{0.456} \\
\bottomrule
\end{tabular}
\caption{Uniform averaging of decoded similarity scores under the same AUC-based leakage proxy for GPR-GNN.
We assume that $X$ is accessible whenever it exists and evaluate all seven combinations obtained by adding at least one variable from $\{\bm{H}_A^{\mathrm{all}},\bm{\hat{Y}}_{A},Y\}$.
``$\checkmark$'' indicates that the corresponding variable is included in the fusion baseline.}
\label{tab: understanding-MI-ensemble-gprgnn}
\vspace{-15pt}
\end{table}

\begin{observation}[GPR-GNN vs.\ GCN: heterophily and ensembling]
\label{obs: homophily-amplifies-adjacency-leakage-gprgnn}
Under GPR-GNN (Tabs.~\ref{tab: understanding-MI-term-gprgnn} and \ref{tab: understanding-MI-ensemble-gprgnn}), output-derived recoverability is generally weaker on several heterophilic graphs than under GCN; $\bm{\hat{Y}}_{A}$ tends to carry less adjacency information on heterophilic datasets, while hidden-state patterns remain architecture- and dataset-dependent. Including $\bm{H}_A^{\mathrm{all}}$ or $\bm{\hat{Y}}_{A}$ in the averaging baseline often fails to improve recovery on heterophilic graphs; the strongest baselines tend to rely on $X$ and $Y$, contrasting with GCN where multiple chain variables contribute on homophilic graphs.
\end{observation}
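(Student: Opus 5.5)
Since this is an empirical observation, I will establish it by reading the relevant entries of Tabs.~\ref{tab: understanding-MI-term-gcn}, \ref{tab: understanding-MI-term-gprgnn} and \ref{tab: understanding-MI-ensemble-gprgnn} directly, rather than by a deductive argument. The observation makes three claims, and I will check each against the tables, using Tab.~\ref{tab: statistics} to decide which datasets count as heterophilic. The plan is to make every hedged quantifier (``generally'', ``often'', ``tend to'') concrete by listing the datasets that support it and those that do not.

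\emph{Claim 1 (output-derived recoverability).} I will compare the $R_{\mathrm{AUC}}(A;\bm{\hat{Y}}_{A})$ rows of Tabs.~\ref{tab: understanding-MI-term-gcn} and \ref{tab: understanding-MI-term-gprgnn} dataset by dataset. The value drops under GPR-GNN on the lower-homophily datasets Wisconsin ($0.572\to0.407$), Brazil ($0.732\to0.528$) and AIDS ($0.561\to0.5$). It rises on Texas ($0.27\to0.66$) and Cornell ($0.316\to0.448$). The drops therefore support ``several'' but not ``all''. I will also record that several GPR-GNN output scores sit at exactly $0.5$ (Cora, Citeseer, AIDS), which indicates chance-level, uninformative rankings. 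This supports the weaker statement that $\bm{\hat{Y}}_{A}$ often carries little adjacency signal. For hidden states I will contrast Texas ($0.353\to0.757$) with Wisconsin ($0.574\to0.433$). This contrast is what justifies calling the hidden-state pattern ``architecture- and dataset-dependent''.

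\emph{Claims 2 and 3 (fusion baseline).} In Tab.~\ref{tab: understanding-MI-ensemble-gprgnn}, I will take the row $\{X,Y\}$ as the reference and compare it with the rows that additionally include $\bm{H}_A^{\mathrm{all}}$ and/or $\bm{\hat{Y}}_{A}$. For Wisconsin, the reference value $0.548$ is the column maximum, and every row containing $\bm{H}_A^{\mathrm{all}}$ or $\bm{\hat{Y}}_{A}$ is lower. For Cornell, adding $\bm{\hat{Y}}_{A}$ lowers the reference from $0.528$ to $0.479$. On the homophilic side, Cora and Citeseer peak at rows built mainly on $X$ and $Y$ ($0.899$ and $0.903$), with $\bm{H}_A^{\mathrm{all}}$ adding at most $0.007$. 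This contrasts with GCN on Polblogs and USA in Tab.~\ref{tab: understanding-MI-term-ensemble-gcn}, where $\bm{H}_A^{\mathrm{all}}$ and $\bm{\hat{Y}}_{A}$ raise the AUC noticeably.

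The main obstacle is that Texas and Cornell partially contradict Claims 2 and 3: their best GPR-GNN baselines are $\{X,\bm{H}_A^{\mathrm{all}}\}$ ($0.734$ and $0.562$). I will handle this by stating the observation in its hedged form. The strongest baselines \emph{tend} to rely on $X$ and $Y$, and including chain variables \emph{often} fails to help, with Texas and Cornell flagged as explicit exceptions driven by the strong GPR-GNN hidden-state signal. If a stricter reading is required, I would restrict Claims 2 and 3 to Wisconsin, AIDS, Brazil and the homophilic datasets, and report the Texas and Cornell exceptions alongside them.
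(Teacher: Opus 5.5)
Your method, checking each hedged clause against the tables, is the same as the paper's: it gives no argument beyond citing Tabs.~\ref{tab: understanding-MI-term-gprgnn} and \ref{tab: understanding-MI-ensemble-gprgnn}. Your Claim~1 entries are read correctly. The gap is in Claims~2--3, where your exception list is incomplete and your fallback set is contradicted by Tab.~\ref{tab: understanding-MI-ensemble-gprgnn}.

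Texas and Cornell are not the only lower-homophily graphs whose best GPR-GNN baseline contains $\bm{H}_A^{\mathrm{all}}$. On Brazil the $\{X,\bm{H}_A^{\mathrm{all}}\}$ row reaches $0.815$ against $0.601$ for $\{X,Y\}$. On AIDS it reaches $0.625$ against $0.537$. Adding $\bm{H}_A^{\mathrm{all}}$ to $\{X,Y\}$ also raises the AUC on Brazil ($0.601\to0.755$), AIDS ($0.537\to0.597$) and Texas ($0.414\to0.536$).

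So restricting Claims~2--3 to ``Wisconsin, AIDS, Brazil and the homophilic datasets'' would assert something false. Polblogs fails as well: $\{X,\bm{H}_A^{\mathrm{all}},Y\}=0.863$ against $\{X,Y\}=0.714$. Among the five lower-homophily graphs, $\{X,Y\}$ is the top row only on Wisconsin. Flagging two exceptions to ``tend to rely on $X$ and $Y$'' therefore misdescribes the data, because the exceptions are the majority.

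What the table does support is the $\bm{\hat{Y}}_A$ half of the claim. Adding $\bm{\hat{Y}}_A$ to $\{X,Y\}$ lowers or leaves unchanged the AUC on Brazil ($0.556$), AIDS ($0.537$), Cornell ($0.479$) and Wisconsin ($0.481$), and helps only on Texas ($0.494$). A faithful verification must state that the $\bm{H}_A^{\mathrm{all}}$ part holds essentially only on Wisconsin. On Cornell it holds only in the weak sense that $0.528\to0.527$. You should also note that on Cornell and Wisconsin, $X$ alone ($0.626$) beats every fused row.

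Your GCN contrast is also confounded by dataset, because you set GPR-GNN on Cora/Citeseer against GCN on Polblogs/USA. On matched datasets, GCN behaves the same way on Cora and Citeseer. In Tab.~\ref{tab: understanding-MI-term-ensemble-gcn}, adding $\bm{H}_A^{\mathrm{all}}$ or $\bm{\hat{Y}}_A$ to $\{X,Y\}$ leaves Cora at $0.849$ (or lowers it to $0.842$) and Citeseer at $0.907$ throughout. On Polblogs the chain variables contribute \emph{more} under GPR-GNN ($+0.149$ and $+0.114$) than under GCN ($+0.074$ and $+0.080$). Only USA shows the intended architecture contrast: $\bm{\hat{Y}}_A$ gives $+0.114$ under GCN against $-0.123$ under GPR-GNN, and $\bm{H}_A^{\mathrm{all}}$ gives $+0.122$ against $+0.013$.

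A smaller point on Claim~1: Texas and Cornell are exceptions only under the raw-AUC reading. GCN's $0.27$ and $0.316$ lie far below chance. Measured by $|R_{\mathrm{AUC}}-0.5|$, the GPR-GNN outputs carry less signal there too ($0.16$ against $0.23$, and $0.052$ against $0.184$), and Wisconsin ($0.093$ against $0.072$) becomes the exception instead. This is worth stating, since the observation's second clause speaks of ``information'' rather than recoverability.
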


\subsection{Training Dynamics of Adjacency Leakage in the \texttt{ORI-chain}}
\label{ssec: tracking by graph information plane}

We next ask how leakage and utility evolve during training, rather than at a single trained state. To that end, we track both adjacency recoverability and task performance over time.
Specifically, for $\bm{Z}\in\{\bm{H}_A^{(1)},\bm{H}_A^{(2)},\bm{\hat{Y}}_{A}\}$, we monitor the adjacency-leakage proxy $R_{\mathrm{AUC}}(A;\bm{Z})$ together with the corresponding prediction accuracy.

Inspired by information-plane analyses~\citep{tishby2015deep, shwartz2017opening}, we define the \emph{graph information plane} (GIP) as a two-dimensional plot that maps each representation $\bm{Z}$ computed from $(A,X)$ to $\bigl(R_{\mathrm{AUC}}(A;\bm{Z}),\mathrm{Acc}(Y;\bm{Z})\bigr)$. Here, $R_{\mathrm{AUC}}(A;\bm{Z})$ measures adjacency recoverability and $\mathrm{Acc}(Y;\bm{Z})$ measures task utility. 

The quantity $\mathrm{Acc}(Y;\bm{Z})$ is obtained by applying the trained downstream subnetwork from that layer onward. Specifically, for $\bm{H}_A^{(1)}$, we apply the trained second message-passing layer and the final classifier head; for $\bm{H}_A^{(2)}$, we apply the trained classifier head; and for $\bm{\hat{Y}}_A$, we use the model output directly. Here, the vertical axis is not the ideal information-theoretic quantity $I(Y;\bm{Z})$, but its empirical proxy $\mathrm{Acc}(Y;\bm{Z})$. The GIP is therefore better interpreted as an accuracy--leakage plane than as a strict information plane. For consistency with the information-plane literature, some figure labels may still denote the vertical axis as $I(Y;\bm{Z})$; throughout this paper, these labels should be understood as referring to the empirical proxy $\mathrm{Acc}(Y;\bm{Z})$. This proxy has limitations: accuracy is threshold-dependent and may saturate near the top of the plane, so trajectories in that regime should be interpreted with caution. A softer metric, such as cross-entropy loss, could in principle reveal finer-grained dynamics. We nevertheless retain accuracy for interpretability and provide cross-entropy based--GIP plots for Cora in Appendix~\ref{sec: full qualitative results} as supplementary evidence.


\begin{figure}[t!]
\centering
\includegraphics[width=0.48\linewidth]{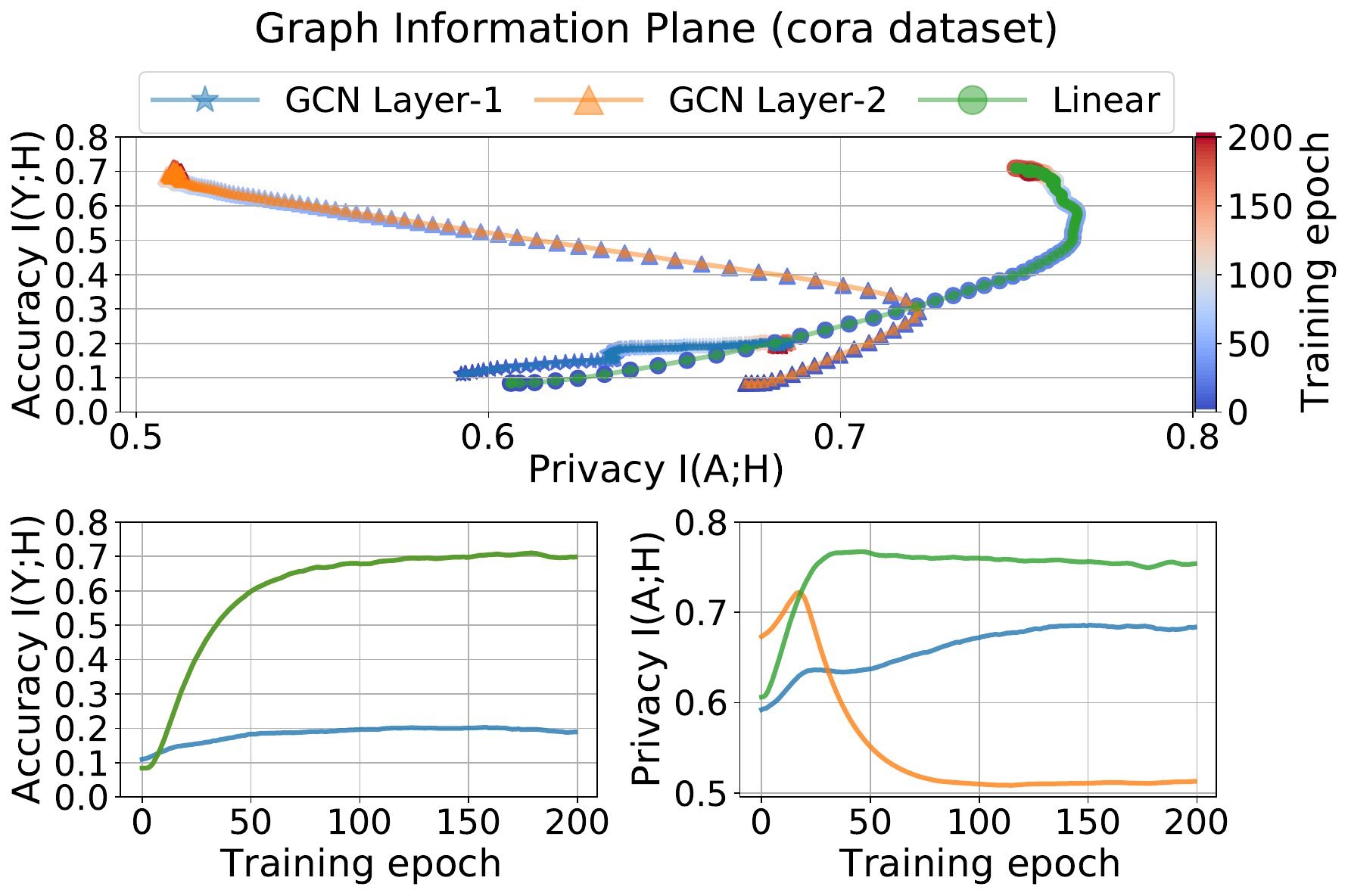}
\includegraphics[width=0.48\linewidth]{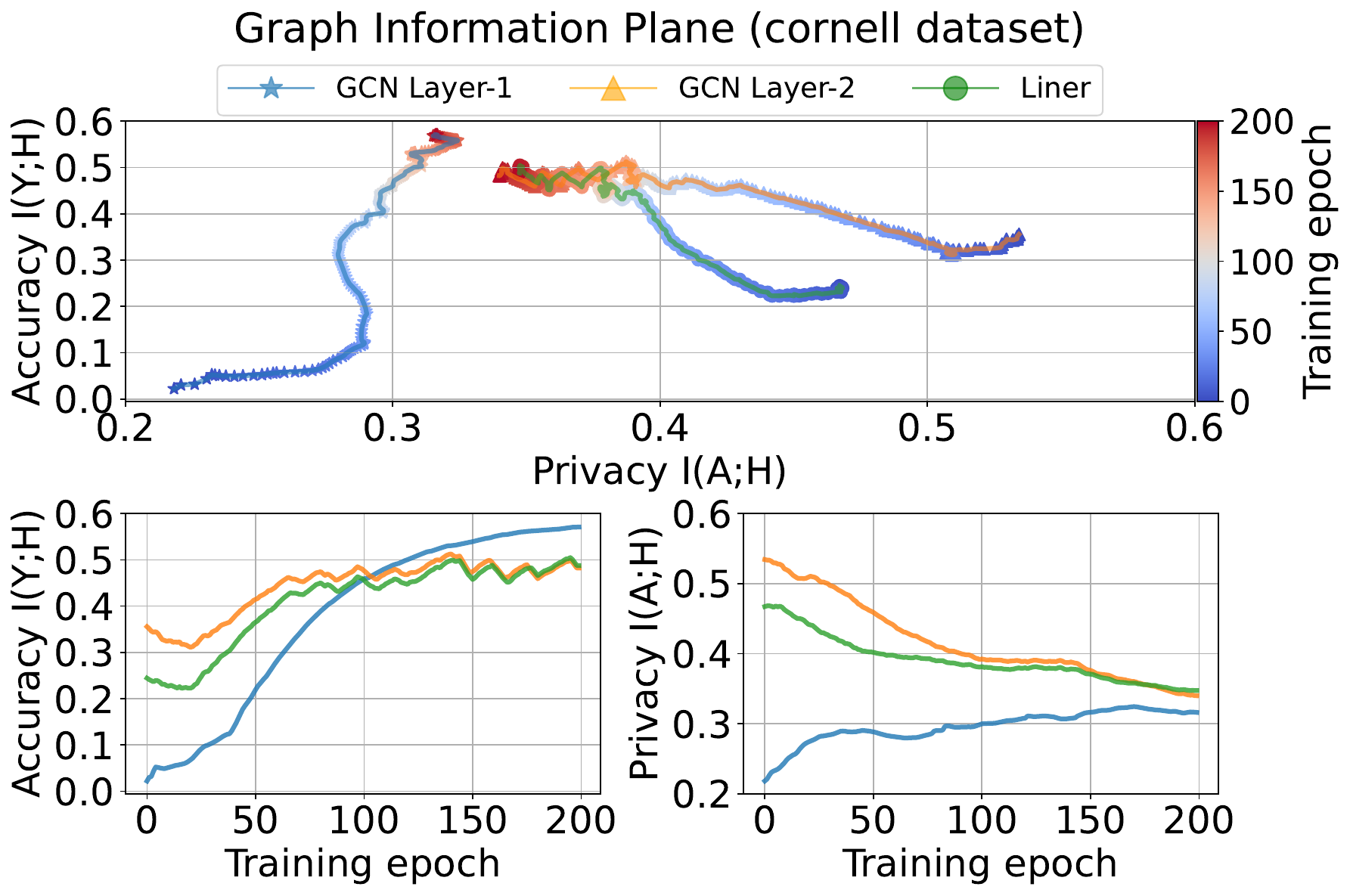}
\caption{Graph information plane for a two-layer GCN on the homophilic graph Cora and the heterophilic graph Cornell.
The curves for layer 2 and the linear head overlap in the vertical (accuracy) coordinate because $\bm{\hat{Y}}_{A}=\mathrm{Linear}(\bm{H}_A^{(2)})$ yields the same accuracy; $R_{\mathrm{AUC}}(A;\bm{Z})$ can differ between $\bm{H}_A^{(2)}$ and $\bm{\hat{Y}}_{A}$. Different line styles (dashed vs.\ solid) and markers distinguish the overlapping curves.}
\label{fig: graph-information-plane-gcn}
\end{figure}

\begin{figure}[t!]
\centering
\includegraphics[width=0.48\linewidth]{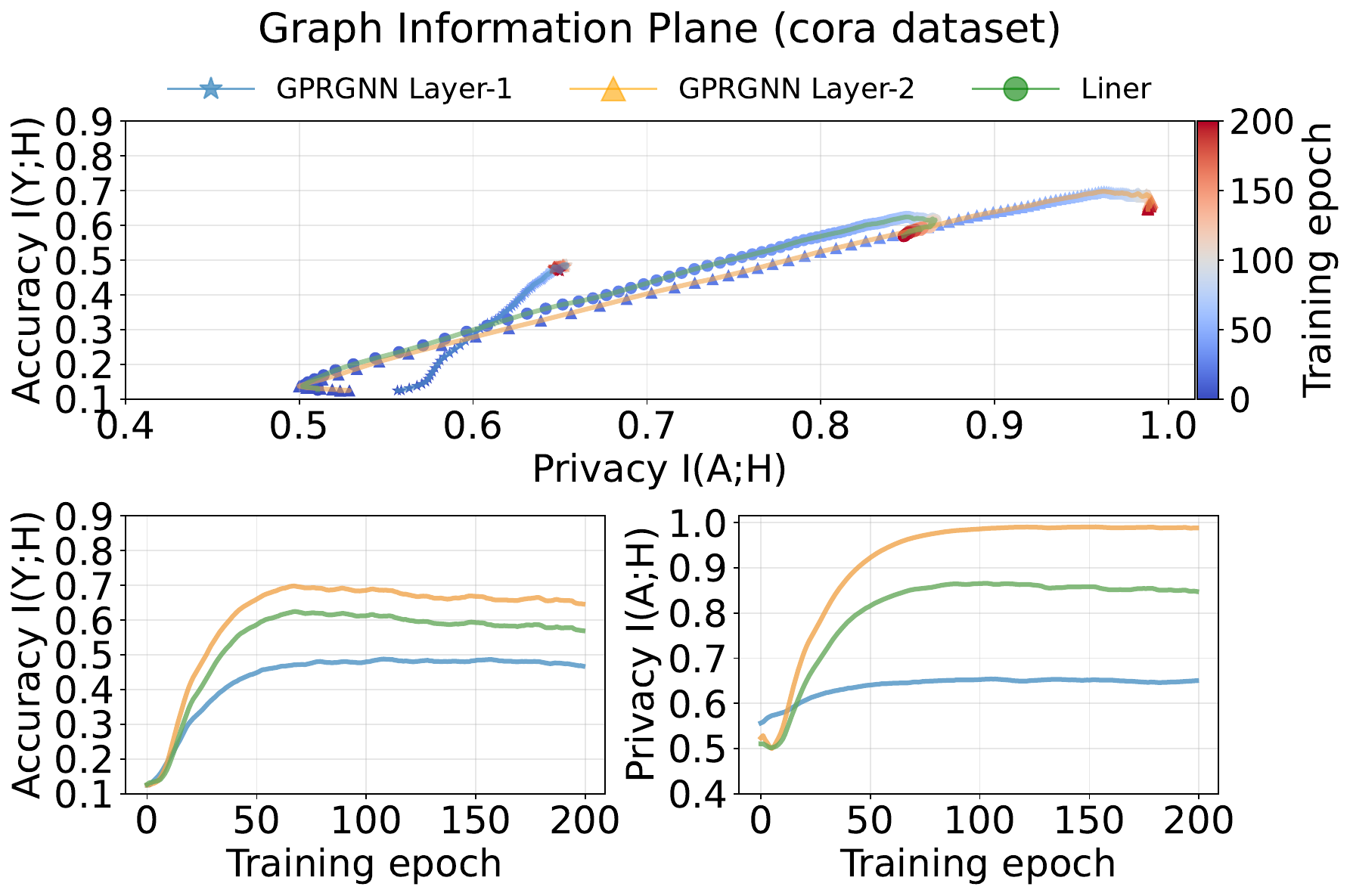}
\includegraphics[width=0.48\linewidth]{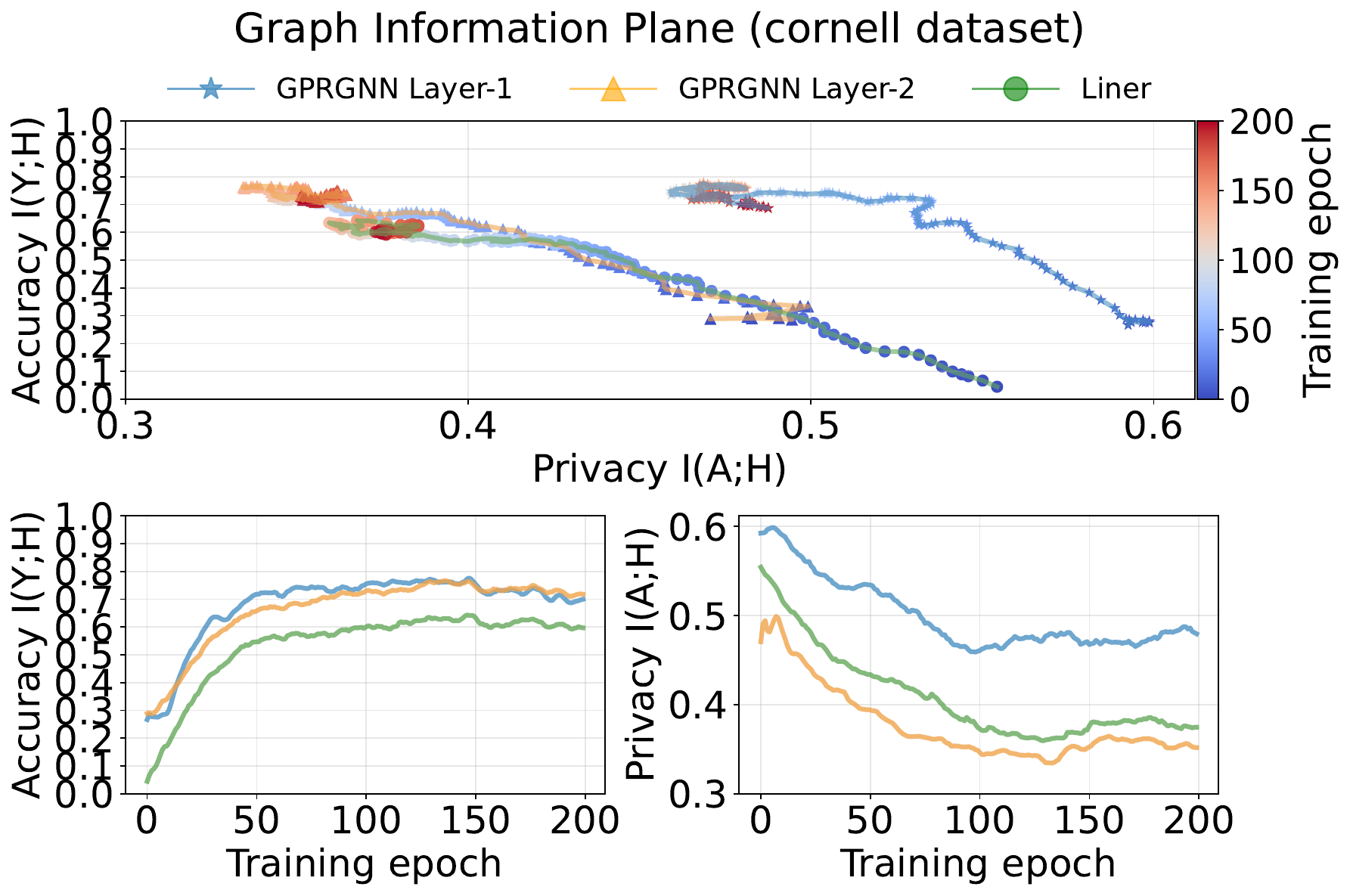}
\caption{Graph information plane for a two-layer GPR-GNN on the homophilic graph Cora and the heterophilic graph Cornell.}
\label{fig: graph-information-plane-gpr-gnn}
\end{figure}

At each training epoch, we compute these proxy coordinates for $\bm{Z}\in\{\bm{H}_A^{(1)},\bm{H}_A^{(2)},\bm{\hat{Y}}_{A}\}$ and visualize the resulting trajectories.

\begin{observation}[Training dynamics on the graph information plane]
\label{obs: training-dynamics-gip}
\textbf{GCN} (Fig.~\ref{fig: graph-information-plane-gcn}): the trajectories exhibit a two-stage pattern on both homophilic and heterophilic graphs. On the homophilic graph, adjacency recoverability first increases, then decreases while accuracy continues to improve (a V-shaped trajectory). The heterophilic example shows a qualitatively similar pattern with different magnitude and timing. This pattern is reminiscent of, though not necessarily explained by, the fit-then-compress behavior discussed in information-bottleneck analyses~\citep{shwartz2017opening}; the underlying mechanisms in GNNs may well differ~\citep{saxe2019information}.
\textbf{GPR-GNN} (Fig.~\ref{fig: graph-information-plane-gpr-gnn}): the trajectories are more topology-dependent. On the heterophilic example, $R_{\mathrm{AUC}}(A;\bm{Z})$ decreases rapidly while accuracy increases; on the homophilic example, both quantities increase more consistently. This is consistent with the weaker output-derived leakage observed on several heterophilic datasets in Tabs.~\ref{tab: understanding-MI-term-gprgnn} and \ref{tab: understanding-MI-ensemble-gprgnn}.
\end{observation}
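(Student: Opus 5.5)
Since this observation is an empirical claim about training trajectories rather than a mathematical theorem, I would establish it by a controlled measurement protocol: make the two qualitative patterns (the V-shape for GCN, the topology-dependent monotone trends for GPR-GNN) operational, then check them against the recorded trajectories. First I fix the setup of Sec.~\ref{ssec: experimental settings}: a two-layer model with $d_{\mathrm{hid}}=16$, $\eta=0.01$, $E=200$ epochs and the same training split, on Cora (edge homophily $0.81$) and Cornell (edge homophily $0.09$). At every epoch $t$ and for each $\bm{Z}\in\{\bm{H}_A^{(1)},\bm{H}_A^{(2)},\bm{\hat{Y}}_A\}$, I record the pair $\bigl(R_{\mathrm{AUC}}(A;\bm{Z}_t),\mathrm{Acc}(Y;\bm{Z}_t)\bigr)$. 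The first coordinate is computed from $\sigma(\bm{Z}_t\bm{Z}_t^\top)$ over upper-triangular pairs. The second is obtained by passing $\bm{Z}_t$ through the trained downstream subnetwork at epoch $t$. For the GPR-GNN runs, I take $\bm{H}_A^{(1)},\bm{H}_A^{(2)}$ to be the stagewise propagated representations, using the same convention as Tab.~\ref{tab: understanding-MI-term-gprgnn}.

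Second, I turn the verbal claims into checkable criteria on the curves $t\mapsto R_t$ and $t\mapsto \mathrm{Acc}_t$.
\begin{itemize}[leftmargin=*]
\item \emph{Two-stage (V-shaped) pattern.} There is an interior epoch $t^\ast$ such that $R_t$ is increasing on $[0,t^\ast]$ and decreasing on $[t^\ast,E]$. Over the second phase, $\mathrm{Acc}_t$ remains nondecreasing up to a small tolerance. Monotonicity is tested on a moving-average smoothed curve.
\item \emph{Heterophilic GPR-GNN pattern.} $R_t$ has a negative early slope while $\mathrm{Acc}_t$ has a positive slope.
\item \emph{Homophilic GPR-GNN pattern.} Both coordinates trend upward, measured for example by Spearman correlation with $t$.
\end{itemize}
For the GCN heterophilic case, the claim is only that the pattern is qualitatively similar, so I verify the existence of an interior peak in $R_t$. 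I do not require the peak height or the epoch $t^\ast$ to match the homophilic case.

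Third, I check that the observed shapes are not artifacts of the figure.
\begin{itemize}[leftmargin=*]
\item Repeat over several random seeds and report the fraction of runs satisfying each criterion.
\item Confirm that the layer-2 and head curves share their accuracy coordinate by construction, since $\bm{\hat{Y}}_A=\mathrm{Linear}(\bm{H}_A^{(2)})$.
\item Cross-check the plateau region with the cross-entropy version of the graph information plane in Appendix~\ref{sec: full qualitative results}.
\end{itemize}
Finally, I relate the GPR-GNN result to the static endpoints: the low heterophilic output leakage at the end of training should agree with Tabs.~\ref{tab: understanding-MI-term-gprgnn} and~\ref{tab: understanding-MI-ensemble-gprgnn}.

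The main obstacle will be the second phase of the V-shape. Accuracy saturates near the top of the plane, and $R_{\mathrm{AUC}}$ fluctuates from epoch to epoch. As a result, the statement that leakage falls ``while accuracy continues to improve'' may be ill-posed once accuracy has flattened. My first attempt is to replace accuracy with validation cross-entropy in that phase and to declare a decrease only when it exceeds the across-seed standard deviation. If the pattern still fails to hold robustly, I would weaken the claim to hold for $R_{\mathrm{AUC}}$ alone, in line with the observation's explicit caveat that the link to information-bottleneck compression is suggestive rather than explanatory.
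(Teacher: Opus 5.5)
Your protocol matches how the paper supports this observation. The paper records $\bigl(R_{\mathrm{AUC}}(A;\bm{Z}),\mathrm{Acc}(Y;\bm{Z})\bigr)$ at every epoch for $\bm{Z}\in\{\bm{H}_A^{(1)},\bm{H}_A^{(2)},\bm{\hat{Y}}_A\}$, using two-layer GCN and GPR-GNN models on Cora and Cornell under the settings of Sec.~\ref{ssec: experimental settings}, and reads both the GCN and the GPR-GNN patterns off Figs.~\ref{fig: graph-information-plane-gcn} and~\ref{fig: graph-information-plane-gpr-gnn}. Your operational criteria, seed repetition and cross-entropy check go beyond the paper, which states the patterns qualitatively from the plotted curves and gives a cross-entropy plane only for Cora; your fallback of weakening the claim to $R_{\mathrm{AUC}}$ alone would, however, no longer establish the statement as written.
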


Overall, GCN fits local structural information early and then shifts toward representations more useful for prediction than for similarity-based edge recovery. GPR-GNN tends to suppress adjacency-aligned signals when local topology is less reliable for prediction, while preserving them on homophilic graphs.

%% file: sections/6-method-attack.tex
\section{Markov Chain-based Graph Reconstruction Attack (MC-GRA)}
\label{sec: GRA attack}

\begin{figure*}[t!]
\centering
\includegraphics[width=0.8\textwidth]{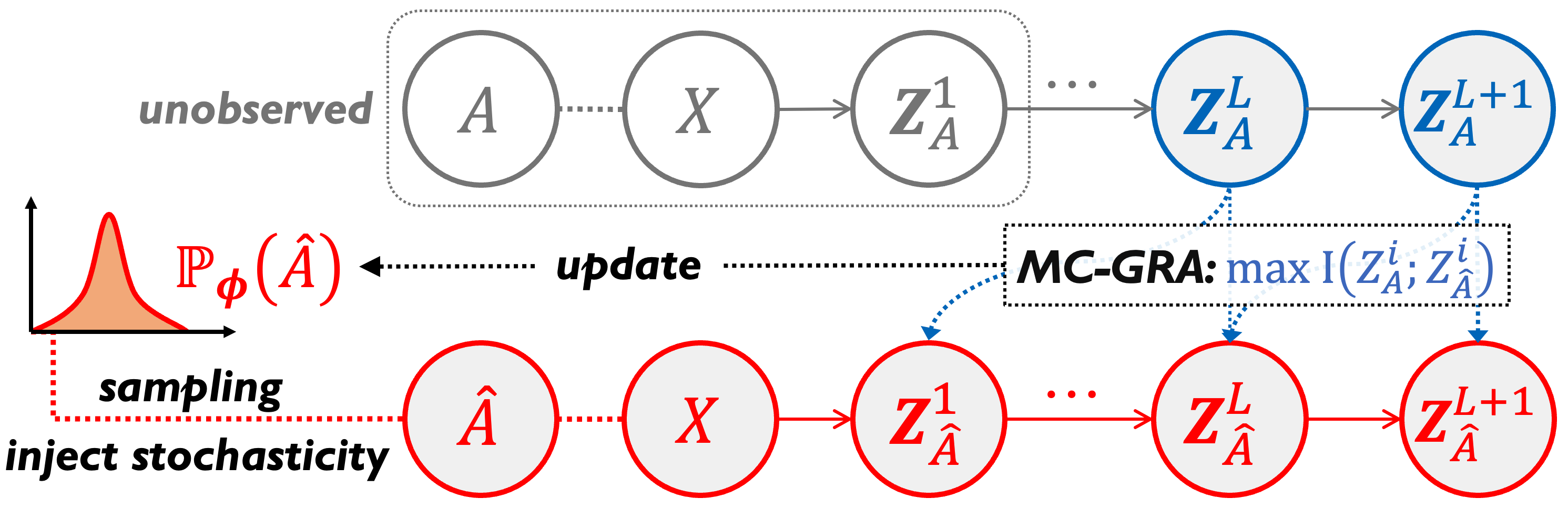}
\caption{
\textbf{The MC-GRA attack framework.} 
\emph{Forward (in red):} an estimated adjacency $\bm{\hat{A}}$ is sampled from a parameterized distribution $\mathbb{P}_{\bm{\phi}}(\bm{\hat{A}})$ and perturbed via injected stochasticity to enable differentiable optimization and exploration. The perturbed $\bm{\hat{A}}$ is then propagated through the target model to generate the corresponding \texttt{GRA-chain} variables. 
\emph{Backward (in blue):} the sampling parameters $\bm{\phi}$ are updated by maximizing the MC-GRA objective in Eq.~\eqref{eqn: MC-GRA}, which provides supervision by matching aligned \texttt{ORI-chain}/\texttt{GRA-chain} representations while regularizing the decisiveness of $\bm{\hat{A}}$.
}
\label{fig: markov-attack}
\end{figure*}

Building on the chain-level analysis in Sec.~\ref{sec: overview}, we now formalize the attack. The key insight from that analysis is that deeper layers can carry less adjacency signal under the proxy, motivating depth-aligned matching: comparing earlier layers can provide stronger alignment signals than matching only outputs. MC-GRA applies to threat models in which at least one of $Y$, $\bm{H}_A^{(\ell)}$, or $\bm{\hat{Y}}_A$ is released. The feature-only case $\mathcal{K}=\{X\}$ is not covered by the chain-matching objective, since no released variable from the GNN computation is available for alignment. Observations~\ref{obs: homophily-amplifies-adjacency-leakage}-\ref{obs: homophily-amplifies-adjacency-leakage-gprgnn} show that adjacency leakage varies across chain variables and graph regimes, while naive fusion fails to exploit complementary signals.
This motivates an attack that flexibly combines multiple alignment signals at matching chain depths.

The attacker's goal is to reconstruct the training adjacency $A$ from the observable side information $\mathcal{K}$ and the known target model. Side information $\mathcal{K}$ can be heterogeneous (inputs $X$, labels $Y$, intermediate representations $\bm{H}_{A}^{(\ell)}$, predictions $\bm{\hat{Y}}_{A}$, or subsets thereof).
The target reconstructed adjacency is binary, but the optimization is carried out over a continuous relaxation to permit gradient-based updates; in the forward computation, this relaxed parameterization is converted into a binary adjacency via thresholding or stochastic sampling.
We address these challenges by developing MC-GRA as a chain-matching framework: the attacker searches for a candidate $\bm{\hat{A}}$ whose induced \texttt{GRA-chain} aligns with the released \texttt{ORI-chain} states at corresponding depths. We first define the attack objective (Sec.~\ref{ssec: attack objective}), then describe practical parameterizations and optimization (Sec.~\ref{ssec: attack implementation}), and finally present information-theoretic interpretations (Sec.~\ref{ssec: attack understanding}).

\subsection{The Optimization Objectives for Attack}
\label{ssec: attack objective}

\paragraph{A motivating unstructured objective.}
Before introducing the chain-matching objective, we first consider a simple \emph{motivating heuristic} that highlights why the structure of the GNN forward computation matters. Let $\mathcal{K}\subseteq\{X,Y,\bm{H}_{A},\bm{\hat{Y}}_{A}\}$ denote the attacker’s available side information.%
\footnote{The candidate sources yield $2^{4}-1=15$ nonempty combinations when $\bm{H}_{A}$ is treated as a single object; the count grows further when individual layers are distinguished.}
Here $\bm{H}_{A}$ denotes the set (or concatenation) of hidden layers; in Eq.~\eqref{eqn: MC-GRA} we use layer-specific $\bm{H}_{A}^{(\ell)}$. In the population view, $\bm{\hat{A}}$ is a random variable induced by the attack given $\mathcal{K}$, so $I(\bm{\hat{A}};\mathcal{K}_i)$ is well defined under the joint distribution over graphs, labels, and attack randomness.
A naive attack objective is to seek an adjacency estimate $\bm{\hat{A}}$ that has strong dependence on each released object:
\begin{equation}
\bm{\hat{A}}^{*}
\in
\arg\max_{\bm{\hat{A}}}
\sum_{\mathcal{K}_i\in\mathcal{K}} \alpha_i\, I(\bm{\hat{A}};\mathcal{K}_i).
\label{eqn: Basic-GRA}
\end{equation}
This objective is intended only as an \emph{unstructured heuristic}, not as a strong attacker baseline. When the elements of $\mathcal{K}$ are highly collinear, as with multiple hidden layers, summing marginal dependence terms can overcount redundant information and ignore higher-order interactions among the released variables. A stronger attacker could instead use a learned fusion module over concatenated observations. Our point here is more specific: without leveraging the layered computation induced by the GNN, such flat objectives cannot distinguish observations generated at the same stage of message passing from correlations inherited indirectly from earlier layers.
A similar caveat applies to including $X$: maximizing $I(\bm{\hat{A}};X)$ does not directly enforce fidelity to the private adjacency $A$ and may instead reward spurious coupling between the recovered graph and the observed features when $A$ and $X$ are only weakly related.

The limitation of the unstructured objective is not merely that it is weak, but that it treats $\mathcal{K}$ as a flat collection of released variables. By contrast, MC-GRA performs \emph{depth-aligned cross-chain matching}, comparing variables at corresponding depths of the \texttt{ORI-chain} and \texttt{GRA-chain}. In this way, it exploits the compositional structure of the GNN forward computation rather than aggregating dependence terms without regard to their origin.

\paragraph{The Markov chain-based attack objective.}
The target model $f_{\bm{\theta}}(\cdot)$ induces an \texttt{ORI-chain} under the private adjacency $A$ and a corresponding \texttt{GRA-chain} under a candidate adjacency $\bm{\hat{A}}$, with the same chain-variable mapping as in Sec.~\ref{sec: overview} ($\bm{Z}_{A}^{(0)}=X$, $\bm{Z}_{A}^{(\ell)}=\bm{H}_{A}^{(\ell)}$ for $\ell=1,\dots,L$, $\bm{Z}_{A}^{(L+1)}=\bm{\hat{Y}}_{A}$). We define $\bm{Z}_{\bm{\hat{A}}}^{(0)},\dots,\bm{Z}_{\bm{\hat{A}}}^{(L+1)}$ analogously by rerunning the same model on $\bm{\hat{A}}$. In practice, only a subset of these variables is observed; the rest are latent. The MI terms in the objective are defined with respect to the joint data-generating distribution over $(A,X,Y)$ and the quantities induced by the target model and the attack procedure; for a single observed graph they are estimated via the surrogates in Sec.~\ref{ssec: attack implementation}.

We introduce the following notation for the attacker's observable variables. Let $\mathcal{I}_{H}(\mathcal{K})\subseteq\{1,\dots,L\}$ denote the set of hidden-layer indices whose representations are available under the threat model; this set can be empty, a singleton, or contain multiple layers. Let $\mathbbm{1}_{\hat Y}(\mathcal{K})\in\{0,1\}$ indicate whether the prediction scores $\bm{\hat{Y}}_{A}$ are available, and $\mathbbm{1}_{Y}(\mathcal{K})\in\{0,1\}$ indicate whether the true labels $Y$ are available. The attack requires at least one released supervision object from $\{Y,\bm{H}_{A}^{(\ell)},\bm{\hat{Y}}_{A}\}$; otherwise the objective below contains no positive alignment term. The population-level MC-GRA objective is
\begin{equation}
\begin{split}
\mathcal{L}_{\mathrm{MC\text{-}GRA}}(\bm{\hat{A}};\mathcal{K})
:={}&
\sum_{\ell\in\mathcal{I}_{H}(\mathcal{K})}
\alpha_{p}^{(\ell)} I\!\left(\bm{H}_{A}^{(\ell)};\bm{H}_{\bm{\hat{A}}}^{(\ell)}\right)\\
&+\mathbbm{1}_{\hat Y}(\mathcal{K})\,\alpha_{o}\, I\!\left(\bm{\hat{Y}}_{A};\bm{\hat{Y}}_{\bm{\hat{A}}}\right)
+ \mathbbm{1}_{Y}(\mathcal{K})\,\alpha_{s}\, I\!\left(Y;\bm{\hat{Y}}_{\bm{\hat{A}}}\right)
- \alpha_{c} R_{c}(\bm{\hat{A}}).
\end{split}
\label{eqn: MC-GRA}
\end{equation}
\paragraph{Level convention.} The alignment terms in Eq.~\eqref{eqn: MC-GRA} are stated at the \emph{population} level (under the data-generating distribution over $(A,X,Y)$). The sharpening term $R_c(\bm{\hat{A}})$ is an \emph{instance-level} regularizer that does not have a direct population-level MI interpretation; it is included in Eq.~\eqref{eqn: MC-GRA} for notational compactness. In the implemented attack (Sec.~\ref{ssec: attack implementation}), every MI term is replaced by the differentiable dependence surrogate $d(\cdot,\cdot)=\texttt{HSIC}$ with normalized inputs (chosen for computational efficiency; the defense uses \texttt{CKA} for its scale invariance---see Sec.~\ref{ssec: attack implementation} for details and alternatives), evaluated on the observed graph instance, while $R_c$ is computed directly from the current relaxed edge probabilities.

\begin{remark}[Surrogate vs.\ mutual information]
Maximizing the cross-chain surrogates is \emph{not} equivalent to maximizing $I(A;\bm{\hat{A}})$; the caveats established in Remark~\ref{rem: surrogate-scope} apply throughout this section and are not repeated.
\end{remark}

The generic weights $\alpha_i$ in Eq.~\eqref{eqn: Basic-GRA} are here split into layer-specific hidden-layer weights $\alpha_p^{(\ell)}$, output weight $\alpha_o$, supervision weight $\alpha_s$, and sharpening coefficient $\alpha_c$. In all experiments, we set $\alpha_p^{(\ell)}=\alpha_p$ for all $\ell$ (equal across layers), so only four scalar coefficients $\{\alpha_p,\alpha_o,\alpha_s,\alpha_c\}$ are tuned; the layer-specific notation is retained for generality. Specifically:
\begin{itemize}[itemsep=0pt]
    \item $\alpha_p$: hidden-layer alignment weight;
    \item $\alpha_o$: output alignment weight;
    \item $\alpha_s$: supervision (label) weight;
    \item $\alpha_c$: sharpening coefficient;
    \item $\alpha_h$: heterophily prior weight (MC-GRA+ only; scalar, not layer-indexed).
\end{itemize}

We optimize over $\bm{\hat{A}}$ and take $\bm{\hat{A}}^*\in\arg\max_{\bm{\hat{A}}}\mathcal{L}_{\mathrm{MC\text{-}GRA}}(\bm{\hat{A}};\mathcal{K})$ (any element of the argmax in case of ties). Unavailable supervision terms are omitted automatically through the index set and indicators. Note that the label term $\alpha_s I(Y;\bm{\hat{Y}}_{\bm{\hat{A}}})$ is structurally different from the other alignment terms: it pairs the true labels $Y$ (a fixed supervision target, not a chain state) with the \texttt{GRA-chain} predictions, rather than matching two chain variables at the same depth. This is appropriate because $Y$ is not produced by the GNN forward pass but serves as an external reference that the reconstructed adjacency should reproduce through the model.
The remaining alignment terms compare variables at the same depth of the two chains, while $R_c(\bm{\hat{A}})$ denotes a \emph{sharpening} (decisiveness) regularizer. In implementation, $R_c(\bm{\hat{A}})$ is the edgewise Bernoulli entropy $\sum_{i<j} h_b(\bm{\hat{A}}_{ij})$, where $h_b(p)=-p\log p-(1-p)\log(1-p)$; minimizing $-R_c$ pushes edge probabilities toward $\{0,1\}$, encouraging decisive rather than diffuse edge scores.
As in Sec.~\ref{sec: overview}, we summarize the released chain variables by
\begin{equation}
\mathcal{S}_{A}
=
\{\bm{H}_{A}^{(\ell)}:\ell\in\mathcal{I}_{H}(\mathcal{K})\}
\cup
\bigl\{\bm{\hat{Y}}_{A}: \mathbbm{1}_{\hat Y}(\mathcal{K})=1\bigr\},
\end{equation}
with
\begin{equation}
\mathcal{S}_{\bm{\hat{A}}}
=
\{\bm{H}_{\bm{\hat{A}}}^{(\ell)}:\ell\in\mathcal{I}_{H}(\mathcal{K})\}
\cup
\bigl\{\bm{\hat{Y}}_{\bm{\hat{A}}}: \mathbbm{1}_{\hat Y}(\mathcal{K})=1\bigr\}.
\end{equation}
Here, if $Y$ is observed, it acts as a fixed supervision target paired with $\bm{\hat{Y}}_{\bm{\hat{A}}}$. 

\paragraph{Heterophily-aware attack.}
Most graph reconstruction attacks implicitly benefit from homophily, namely that adjacent nodes tend to share the same label. In heterophilic graphs, edges often connect nodes with different labels, so similarity-based reconstruction criteria can become systematically misaligned with the true adjacency. MC-GRA+ augments the standard objective with a heterophily-aware prior constructed from the released prediction scores $\bm{\hat{Y}}_{A}$, using predicted label disagreement as a proxy for cross-label connectivity.

Let $\Pi_{\mathrm{sym},0}(\cdot)$ denote symmetrization with the diagonal set to zero. Since the rows of $\bm{\hat{Y}}_{A}$ are probability vectors, the entries of $\bm{\hat{Y}}_{A}\bm{\hat{Y}}_{A}^{\top}$ already lie in $[0,1]$. We form the label-similarity score matrix $\bm{S}_{Y}=\Pi_{\mathrm{sym},0}(\bm{\hat{Y}}_{A}\bm{\hat{Y}}_{A}^{\top})$ without an additional sigmoid, so as to preserve dynamic range, and define
\begin{equation}
\bm{P}_{\text{hetero}}
\;=\;
\Pi_{\mathrm{sym},0}\!\bigl(\bm{1}\bm{1}^{\top}-\bm{S}_{Y}\bigr),
\end{equation}
so that node pairs with more dissimilar predicted labels receive larger scores. This matrix is fixed for a given threat-model instance, as it is computed entirely from the released $\bm{\hat{Y}}_{A}$ and does not depend on the candidate $\bm{\hat{A}}$.

This construction is heuristic rather than calibrated. It is useful only to the extent that the released predictions are informative about the underlying class structure. In particular, when the target GNN has poor predictive accuracy, which is a realistic possibility on strongly heterophilic graphs without specialized architectures, predicted disagreement may fail to reflect true cross-label connectivity and can even mislead the attack. 
Accordingly, MC-GRA+ is best viewed as a helpful prior when $\bm{\hat{Y}}_{A}$ is reasonably reliable, rather than as a universally valid correction for heterophily. The later theorems in Sec.~\ref{ssec: attack understanding} clarify when label-derived quantities of this kind are informative and when they lose information.

We incorporate this score by augmenting Eq.~\eqref{eqn: MC-GRA} with an additional term. Under the induced population view where released predictions are random, one may analogously consider $I(\bm{P}_{\text{hetero}};\bm{\hat{A}})$; 
in practice we only optimize the instance-level surrogate $d(\bm{P}_{\text{hetero}},\bm{\hat{A}})$, a pointwise alignment score between the fixed observed $\bm{P}_{\text{hetero}}$ and the candidate $\bm{\hat{A}}$:
\begin{equation}
\mathcal{L}_{\mathrm{MC\text{-}GRA+}}(\bm{\hat{A}};\mathcal{K})
\;=\;
\mathcal{L}_{\mathrm{MC\text{-}GRA}}(\bm{\hat{A}};\mathcal{K})
\;+\;
\mathbbm{1}_{\hat Y}(\mathcal{K})\,\alpha_{h}\,d(\bm{P}_{\text{hetero}},\bm{\hat{A}}).
\label{eqn: MC-GRA-hetero}
\end{equation}

MC-GRA+ targets heterophilic edges by biasing reconstruction toward cross-label connectivity, which homophily-based criteria systematically underweight. 
We maximize $d(\bm{P}_{\text{hetero}},\bm{\hat{A}})$ as the differentiable alignment signal. 
When predictions are not released ($\bm{\hat{Y}}_A \notin \mathcal{K}$), the heterophily prior cannot be constructed, and MC-GRA+ reduces to MC-GRA.

\subsection{Implementation Details of MC-GRA (+)}
\label{ssec: attack implementation}

Translating the population-level objective (Eq.~\eqref{eqn: MC-GRA}) into a practical algorithm requires three approximation steps: (i)~replacing the intractable MI terms with differentiable dependence surrogates (HSIC for the attack; CKA for the defense), (ii)~parameterizing the discrete candidate adjacency $\bm{\hat{A}}$ through a continuous relaxation to enable gradient-based optimization, and (iii)~injecting stochasticity into the relaxed adjacency to facilitate exploration and prevent premature convergence to local optima. We describe each step below.

\paragraph{Parameterizing Eq.~\eqref{eqn: MC-GRA} and Eq.~\eqref{eqn: MC-GRA-hetero}.}
To optimize the attack objectives with gradient-based methods, we model the recovered adjacency $\bm{\hat{A}}$ as a random variable drawn from a learnable distribution $\mathbb{P}_{\bm{\phi}}(\bm{\hat{A}})$, parameterized by $\bm{\phi}$. The present section focuses on the white-box rerun setting, which assumes an attack-side input $X_{\mathrm{atk}}$ compatible with forwarding the target model. In the standard white-box setting this input is the true feature matrix $X$; on featureless graphs it is the same default input used by the target architecture.

Because the paper assumes simple undirected graphs, every parameterization enforces symmetry and a zero diagonal. Throughout this section, $\sigma(\cdot)$ denotes the elementwise sigmoid function $\sigma(x)=1/(1+e^{-x})$; the generic activation in GNN layer updates is denoted $\rho$ (see Theorem~\ref{theorem: reducing MI with two chains}). For any matrix $M\in\mathbb{R}^{N\times N}$, define
\begin{equation}
\Pi_{\mathrm{sym},0}(M)
\;=\;
\frac{1}{2}(M+M^\top)-\operatorname{diag}\!\left(\frac{1}{2}(M+M^\top)\right).
\end{equation}
At each iteration, we sample
\begin{equation}
\bm{\hat{A}} \sim \mathbb{P}_{\bm{\phi}}(\bm{\hat{A}}),
\end{equation}
propagate the sampled (and possibly perturbed) adjacency through the \texttt{GRA-chain}, and update $\bm{\phi}$ by stochastic gradient ascent on the chosen surrogate objective. In implementation, we optimize the expected surrogate objective under $\mathbb{P}_{\bm{\phi}}(\bm{\hat{A}})$ using a single-sample Monte Carlo estimate per iteration (reparameterization trick). For the attack method description, we take HSIC as the default dependence surrogate; Tab.~\ref{tab: ablation-similarity-metrics} ablates DP, HSIC, CKA, and KDE, and shows that HSIC and CKA generally perform best. The defense (Sec.~\ref{sec: GRA defense}) uses CKA instead, for its scale invariance under representation scaling during training; the choice of default surrogate therefore differs between attack and defense.

The complexity term in Eq.~\eqref{eqn: MC-GRA} is instantiated in practice by the edgewise Bernoulli entropy of the relaxed edge-probability matrix:
\begin{equation}
R_c(\bm{\hat{A}})
\;=\;
\sum_{1\le i<j\le N} h_b(\bm{\hat{A}}_{ij}),
\end{equation}
where $h_b(p)=-p\log p-(1-p)\log(1-p)$. Thus, every MI term is replaced by the chosen dependence surrogate $d(\cdot,\cdot)$, while the complexity term is computed directly from the current relaxed edge probabilities. This regularizer penalizes high-entropy edge-probability assignments, encouraging decisive edge scores rather than diffuse uncertain solutions, and is well defined for all three parameterizations.

We consider three instantiations of $\mathbb{P}_{\bm{\phi}}(\bm{\hat{A}})$, listed in increasing order of expressiveness:
\begin{itemize}[leftmargin=*]
\setlength\itemsep{0.1em}

\item \textbf{Direct optimization (degenerate distribution).}
We optimize an unconstrained score matrix $\bm{\Omega}\in\mathbb{R}^{N\times N}$ and set
\begin{equation}
\bm{\phi}\equiv \bm{\Omega},
\qquad
\bm{\hat{A}}=\Pi_{\mathrm{sym},0}\!\bigl(\sigma(\bm{\Omega})\bigr)\in[0,1]^{N\times N}.
\end{equation}
Equivalently, $\mathbb{P}_{\bm{\phi}}$ is a point mass (Dirac delta) at $\Pi_{\mathrm{sym},0}\!\bigl(\sigma(\bm{\Omega})\bigr)$.

\item \textbf{Gaussian parameterization (factorized noise model).}
We use a fully factorized Gaussian over an unconstrained score matrix. For each $i<j$, we sample independent Gaussian logits
\begin{equation}
G_{ij}\sim\mathcal{N}(\mu_{ij},\varsigma_{ij}^{2}),
\qquad
\bm{\phi}=\{(\mu_{ij},\varsigma_{ij}) : 1\le i<j\le N\},
\end{equation}
where $\varsigma_{ij}>0$ denotes the standard deviation (distinct from the dependence surrogate $d(\cdot,\cdot)$ and the sigmoid $\sigma(\cdot)$), and set $G_{ji}=G_{ij}$ and $G_{ii}=0$. Using the reparameterization trick,
\begin{equation}
\bm{G}
=
\bm{\mu}+\bm{\epsilon}\odot\bm{\varsigma},
\qquad
\epsilon_{ij}\stackrel{\mathrm{i.i.d.}}{\sim}\mathcal{N}(0,1)\ \ \text{for }i<j,
\end{equation}
and define
\begin{equation}
\bm{\hat{A}}
=
\Pi_{\mathrm{sym},0}\!\bigl(\sigma(\bm{G})\bigr).
\end{equation}
This factorized parameterization facilitates exploration around a mean score matrix while keeping the sampling step differentiable with respect to $(\bm{\mu},\bm{\varsigma})$.

\item \textbf{Generator-based parameterization (implicit model).}
We parameterize $\bm{\hat{A}}$ implicitly through a learnable generator $f_{\bm{\phi}}(\cdot)$. The generator may share the target architecture; initialization from the target weights is used only in the white-box setting, while otherwise the generator is initialized independently. As one practical canonical choice, we use the identity matrix as a seed adjacency and compute
\begin{equation}
\bm{H}_{I}=f_{\bm{\phi}}(\bm{I},X_{\mathrm{atk}}),
\end{equation}
then induce an adjacency via
\begin{equation}
\bm{\hat{A}}
=
\Pi_{\mathrm{sym},0}\!\bigl(\sigma(\bm{H}_{I}\bm{H}_{I}^{\top})\bigr)\in[0,1]^{N\times N}.
\end{equation}
The identity graph is used only as a neutral seed from which the generator produces node embeddings; other seed graphs could also be used. Sharing the target architecture is a heuristic inductive bias: it biases the recovered adjacency toward structures capable of reproducing the target model's own representation geometry. This construction ties edge probabilities to learned node representations and captures structural regularities beyond independent entrywise parameterizations.
\end{itemize}
The direct parameterization is simplest but lacks exploration; the Gaussian parameterization adds controlled stochasticity that facilitates escaping local optima; the generator-based parameterization is the most expressive and can capture structural regularities, but is harder to optimize. Unless otherwise specified, we use the Gaussian parameterization in all experiments (Sec.~\ref{sec: experiments}), as it provides a good trade-off between expressiveness and optimization stability. As a practical guideline: the Gaussian parameterization is preferred on featureless or sparse-feature graphs, where the generator-based variant lacks a meaningful input signal; the generator-based parameterization is preferred when informative node features are available, as it can exploit the target architecture's inductive bias (see the ablation in Tab.~\ref{tab: ablation-parameterization}).

\paragraph{Optimizing Eq.~\eqref{eqn: MC-GRA} and Eq.~\eqref{eqn: MC-GRA-hetero} with injected stochasticity.}
Along the forward computation, both $\bm{\hat{A}}$ and $X_{\mathrm{atk}}$ influence variables such as $\bm{H}_{\bm{\hat{A}}}^{(\ell)}$ and $\bm{\hat{Y}}_{\bm{\hat{A}}}$. Without perturbation, the optimization can drift toward degenerate solutions in which the candidate adjacency plays only a limited role, because highly informative features may already suffice to mimic released hidden states or outputs without faithfully recovering the underlying edges. Injected stochasticity mitigates this effect by reducing brittle co-adaptation between features and adjacency, thereby encouraging solutions whose agreement with the released variables is more robust to perturbation. Empirically, this improves reconstruction quality.

Concretely, we perturb the attack inputs as
\begin{equation}
\tilde{X}=\Phi_{\mathrm{atk}}(X_{\mathrm{atk}}, X_{\epsilon}),
\qquad
\tilde{\bm{A}}=\Phi_{\mathrm{atk}}(\bm{\hat{A}}, \bm{A}_{\epsilon}),
\end{equation}
where $X_{\epsilon}$ and $\bm{A}_{\epsilon}$ denote injected noise. Here $\Phi_{\mathrm{atk}}$ denotes the \emph{attack-side} perturbation operator, implemented through a binary Concrete relaxation for the adjacency (see below). This is distinct from the defense-side operator in Sec.~\ref{sec: GRA defense}: there, $\Phi_{\mathrm{def}}$ denotes \emph{edge-drop noise} (DropEdge) used as a lightweight stochastic regularizer during training, while the main additional cost of the defense comes from the layerwise dependence penalties introduced later. We keep this distinction explicit throughout to avoid conflating attack-side relaxation noise with defense-side regularization. In experiments, adjacency relaxation noise is always used, while feature perturbation is applied only when features are available.

We implement adjacency stochasticity by sampling (or relaxing) binary edges. For each unordered node pair $i<j$ with probability $p_{ij}\triangleq \bm{\hat{A}}_{ij}\in[0,1]$, we sample one relaxed edge variable and then mirror it to enforce symmetry:
\begin{equation}
a_{ij}\sim \mathrm{Bern}(p_{ij}),
\qquad
a_{ij}=a_{ji}\in\{0,1\}.
\end{equation}
Since Bernoulli sampling is non-differentiable, we adopt a binary Concrete relaxation~\citep{kool2019stochastic,xie2019reparameterizable}. Writing $\operatorname{logit}(p_{ij})=\log p_{ij}-\log(1-p_{ij})$, we sample $u_{ij}\sim\mathrm{Uniform}(0,1)$ and set
\begin{equation}
\tilde{a}_{ij}
=
\sigma\!\left(
\frac{\operatorname{logit}(p_{ij})+\log u_{ij}-\log(1-u_{ij})}{\tau}
\right),
\end{equation}
where $\tau>0$ is the temperature hyperparameter controlling the sharpness of the relaxation: as $\tau\to 0$, the relaxation converges to an exact Bernoulli sample (but gradients degenerate), while large $\tau$ yields a smooth but biased approximation~\citep{maddison2017concrete,jang2017categorical}. In all experiments we use a fixed $\tau=0.5$ without annealing; a decreasing schedule $\tau_t\to 0$ could sharpen the final solution but was not necessary for competitive AUC in our setting. The resulting relaxed adjacency is mirrored across the diagonal and then zeroed on the diagonal before being passed through the target model.

\paragraph{Computational cost.}
Each attack iteration requires one forward pass through the target GNN on the candidate (perturbed) adjacency and one evaluation of the dependence surrogate and sharpening term. With a dense relaxed candidate adjacency, the forward pass is typically $O(L(N^2 d + N d^2))$ for $L$ layers and hidden dimension $d$; if a sparse parameterization or thresholded candidate is used, the cost can be reduced accordingly. The surrogate is computed over node-wise representations and costs $O(N^2)$ for the Gram-style terms. The overall cost per iteration is dominated by the GNN forward pass and the $N\times N$ dependence evaluation.

\paragraph{The algorithm.}
Equipped with the parameterizations and perturbations described above, we summarize the resulting optimization procedure in Algorithm~\ref{alg: GRA}. The two methods share the same optimization loop; MC-GRA+ differs only by adding the heterophily-score term when released predictions are available.

\begin{algorithm}[ht]
\caption{Generalized graph reconstruction attack (MC-GRA / MC-GRA+).}
\begin{algorithmic}[1]
\REQUIRE Target model $f_{\bm{\theta}}$, attack-side feature input $X_{\mathrm{atk}}$ used to rerun the model, released side information $\mathcal{K}$, differentiable dependence estimator $d(\cdot,\cdot)$, coefficients $\{\alpha_p^{(\ell)}\},\alpha_o,\alpha_s,\alpha_c$, and (optional) $\alpha_h$
\STATE Initialize $\mathbb{P}_{\bm{\phi}}(\bm{\hat{A}})$; collect released variables from $\mathcal{K}$
\STATE If MC-GRA+ ($\alpha_h>0$) and $\bm{\hat{Y}}_{A}\in\mathcal{K}$, precompute $\bm{P}_{\text{hetero}}=\Pi_{\mathrm{sym},0}(\bm{1}\bm{1}^{\top}-\Pi_{\mathrm{sym},0}(\bm{\hat{Y}}_{A}\bm{\hat{Y}}_{A}^{\top}))$; \textbf{otherwise} set $\alpha_h=0$ (MC-GRA+ reduces to MC-GRA when $\bm{\hat{Y}}_{A}\notin\mathcal{K}$)
\FOR{$t=1,\dots,n$}
    \STATE Sample $\bm{\hat{A}}\sim \mathbb{P}_{\bm{\phi}}(\bm{\hat{A}})$; set $\tilde{X}=\Phi_{\mathrm{atk}}(X_{\mathrm{atk}}, X_{\epsilon})$, $\tilde{\bm{A}}=\Phi_{\mathrm{atk}}(\bm{\hat{A}}, \bm{A}_{\epsilon})$
    \STATE Run \texttt{GRA-chain} forward on $f_{\bm{\theta}}(\tilde{\bm{A}},\tilde{X})$ to get terms induced by $\tilde{\bm{A}}$
    \STATE Compute surrogate from Eq.~\eqref{eqn: MC-GRA-hetero} (or Eq.~\eqref{eqn: MC-GRA} if $\alpha_h=0$), with MI terms replaced by $d(\cdot,\cdot)$
    \STATE Update $\bm{\phi}$ by gradient ascent on the surrogate
\ENDFOR
\STATE \textbf{return} the mean score matrix $\mathbb{E}_{\bm{\phi}^{*}}[\bm{\hat{A}}]$ (thresholded if a binary graph is required). Concretely: for the direct parameterization, this is $\Pi_{\mathrm{sym},0}(\sigma(\bm{\Omega}^*))$; for the Gaussian parameterization, $\Pi_{\mathrm{sym},0}(\sigma(\bm{\mu}^*))$; for the generator-based parameterization, a single forward pass through the converged generator.
\end{algorithmic}
\label{alg: GRA}
\end{algorithm}

\begin{figure*}[t!]
\centering
\subfloat[Standard training.]
{\includegraphics[width=0.31\textwidth]{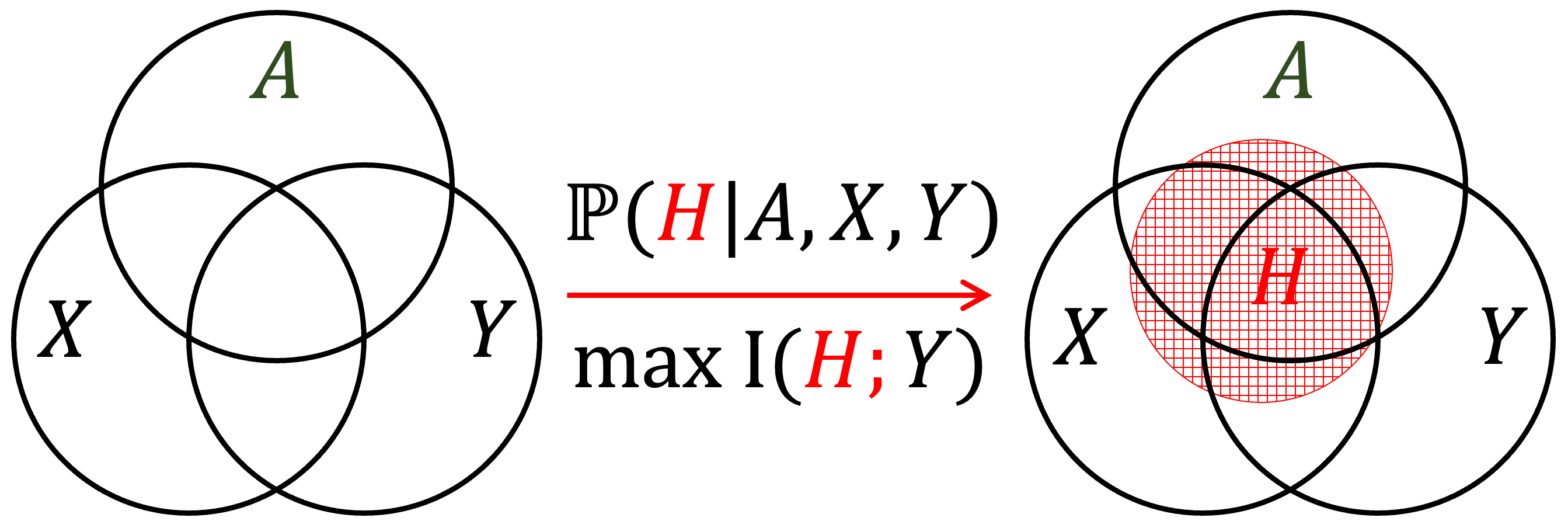}
\label{fig: three-ball-standard-training}}
\hfill
\subfloat[MC-GRA attacking.]
{\includegraphics[width=0.31\textwidth]{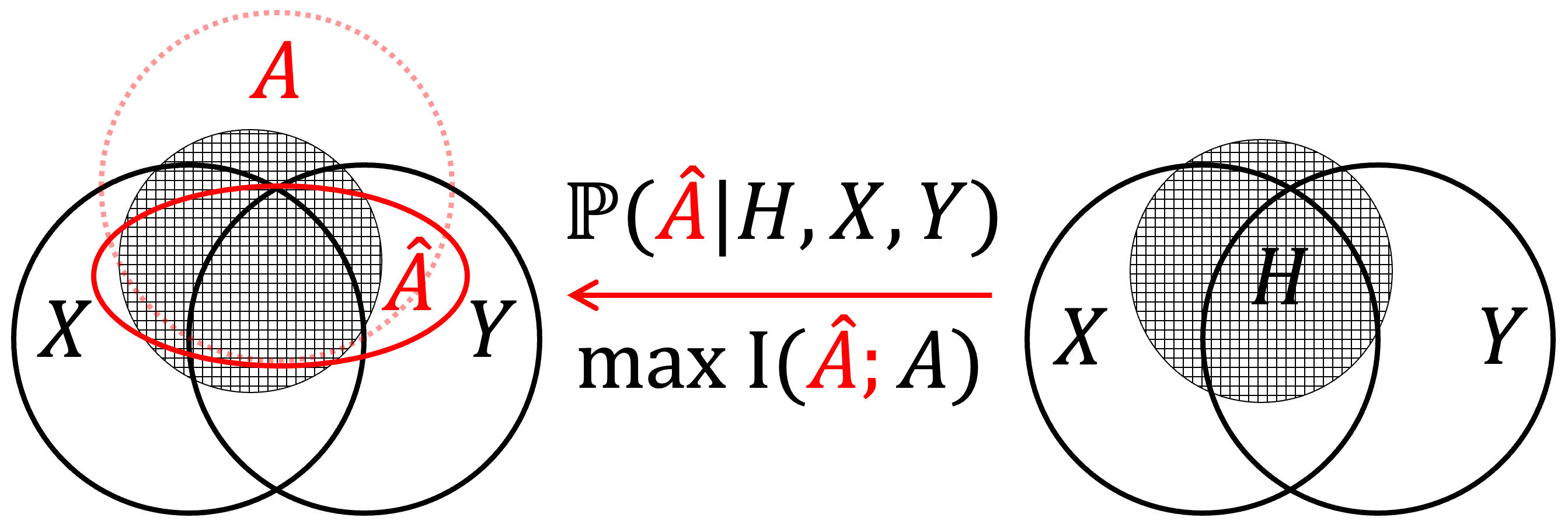}
\label{fig: three-ball-GRA}}
\hfill
\subfloat[MC-GPB defensive training.]
{\includegraphics[width=0.31\textwidth]{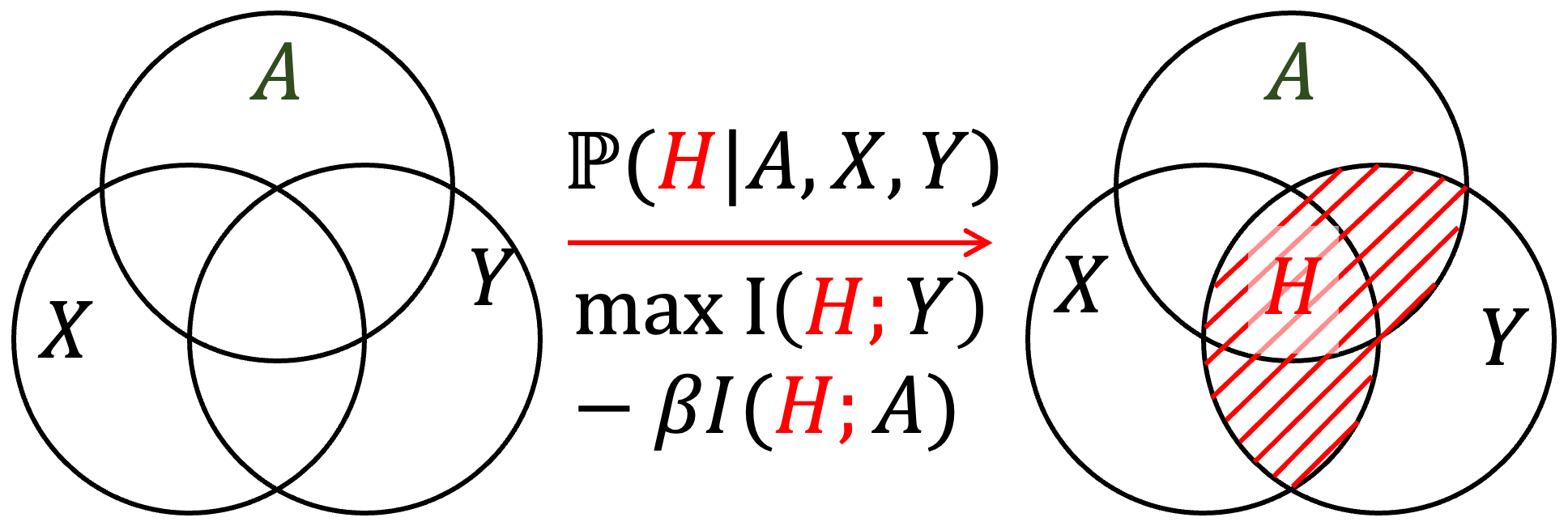}
\label{fig: three-ball-GPB}}
\caption{
\textbf{An information-theoretic view of training, attack, and defense.} Illustration of how information flows and is (progressively) compressed along the GNN-induced Markov chains during standard training, how MC-GRA exploits dependence between aligned \texttt{ORI-chain}/\texttt{GRA-chain} variables for reconstruction, and how MC-GPB reduces adjacency leakage by discouraging dependence between hidden representations and the adjacency.
}
\label{fig: information properties of training and attack}
\end{figure*}

\subsection{Theoretical Understanding}
\label{ssec: attack understanding}

This subsection presents a \emph{stylized population analysis}. It separates three components: a cross-chain dependence result, a conditional fidelity bound based on quantized representations, and an upper bound determined by the attacker’s knowledge set. These results explain why matching released chain variables can help reconstruction, but they do not imply that the practical surrogate objectives optimized in Sec.~\ref{ssec: attack implementation} are exact mutual informations.

\paragraph{Marginal value beyond features.}
When the attacker reruns the target model with a feature input, the incremental explanatory value of a candidate adjacency $\bm{\hat{A}}$ beyond features for an induced variable $\bm{Z}$ can be expressed through
\begin{equation}
I(\bm{Z};\bm{\hat{A}}\mid X)
=
I(\bm{Z};X,\bm{\hat{A}}) - I(\bm{Z};X).
\label{eqn: cmi-marginal}
\end{equation}
The identity shows that the marginal value of $\bm{\hat{A}}$ beyond $X$ is exactly this conditional mutual information. Without perturbations, the optimizer can find degenerate solutions where $\bm{\hat{A}}$ is unnecessary because $X$ alone suffices to reproduce $\bm{Z}$ (e.g., when representations are overdetermined by features). Injected stochasticity is intended to reduce such feature-only shortcuts so that the adjacency-dependent component of $\bm{Z}$ relevant to reconstruction is preserved and the optimizer is encouraged to recover structure rather than exploit coincidental feature-adjacency correlations.

\paragraph{Information contraction across aligned chain variables.}
We first consider a stylized population setting with deterministic layer maps and well-defined mutual information. For clarity, the theorem below is stated for the hidden-state chain $\{\bm{Z}_{A}^{(i)}\}_{i=0}^{L}$; the same argument applies if an output layer is appended as state $L+1$. The key condition enabling the data processing inequality is that both chains share the same input $X=\bm{Z}_{A}^{(0)}=\bm{Z}_{\bm{\hat{A}}}^{(0)}$, so at layer $0$ the two chains coincide exactly and their cross-chain dependence is maximal relative to later layers; each deterministic layer map can only contract this dependence.

\begin{theorem}[Layerwise non-increase of MI across two chains (GCN-type layer maps)]
\label{theorem: reducing MI with two chains}
We state the theorem for the following simple algebraic template (which includes GCN). Analogous extensions to other deterministic architectures (e.g., GAT, GraphSAGE) are plausible---the proof uses only the data processing inequality for deterministic maps---but we omit them here. Let $A$ and $\bm{\hat{A}}$ be fixed (deterministic) adjacency matrices. Let $\{(\bm{Z}_{A}^{(i)},\bm{Z}_{\bm{\hat{A}}}^{(i)})\}_{i=0}^{L}$ be random vectors defined on a common probability space, with the randomness arising from the shared input features $\bm{Z}_{A}^{(0)}=\bm{Z}_{\bm{\hat{A}}}^{(0)}=X$. For each layer index $i=0,1,\dots,L-1$, define the deterministic layer maps
\begin{equation}
T_{A}^{(i)}(\bm{z}) \;\triangleq\; \rho\bigl(\psi(A)\,\bm{z}\,\bm{\theta}^{(i)}\bigr),
\qquad
T_{\bm{\hat{A}}}^{(i)}(\bm{z}) \;\triangleq\; \rho\bigl(\psi(\bm{\hat{A}})\,\bm{z}\,\bm{\theta}^{(i)}\bigr),
\end{equation}
and assume the layerwise updates satisfy
\begin{equation}
\bm{Z}_{A}^{(i+1)}=T_{A}^{(i)}(\bm{Z}_{A}^{(i)}),
\qquad
\bm{Z}_{\bm{\hat{A}}}^{(i+1)}=T_{\bm{\hat{A}}}^{(i)}(\bm{Z}_{\bm{\hat{A}}}^{(i)}).
\end{equation}
Here $\bm{\theta}^{(i)}$ is a fixed weight matrix, $\psi(\cdot)$ is a fixed graph-dependent linear operator (for GCN, $\psi(A)$ is the normalized adjacency, optionally with self-loop as in Tab.~\ref{tab: GNN-architectures}), and $\rho$ is a fixed componentwise activation (we use $\rho$ to distinguish it from the sigmoid $\sigma$). Assume $H(X)<\infty$ (so that $I(\bm{Z}_{A}^{(0)};\bm{Z}_{\bm{\hat{A}}}^{(0)})=H(X)$ is finite; this is satisfied, \eg, when $X$ is discrete or has a bounded density). Then the mutual informations $I(\bm{Z}_{A}^{(i)};\bm{Z}_{\bm{\hat{A}}}^{(i)})$ are finite for all $i=0,\dots,L$, and for every $i=0,\dots,L-1$,
\begin{equation}
I\!\left(\bm{Z}_{A}^{(i+1)}\,;\,\bm{Z}_{\bm{\hat{A}}}^{(i+1)}\right)
\;\le\;
I\!\left(\bm{Z}_{A}^{(i)}\,;\,\bm{Z}_{\bm{\hat{A}}}^{(i)}\right).
\label{eqn: MI-nonincrease-two-chains}
\end{equation}

\noindent
\textbf{Proof (sketch).} Eq.~\eqref{eqn: MI-nonincrease-two-chains} follows from the data processing inequality applied to the
deterministic mappings $\bm{Z}_{A}^{(i)}\mapsto \bm{Z}_{A}^{(i+1)}$ and $\bm{Z}_{\bm{\hat{A}}}^{(i)}\mapsto \bm{Z}_{\bm{\hat{A}}}^{(i+1)}$.
A detailed proof is in Appendix~\ref{ssec: proof of reducing MI with two chains}.
\end{theorem}

\paragraph{Equality and scope.}
Equality in Eq.~\eqref{eqn: MI-nonincrease-two-chains} holds if both $T_{A}^{(i)}$ and $T_{\bm{\hat{A}}}^{(i)}$ are bijective a.e., or generally if the joint mapping $(\bm{Z}_{A}^{(i)},\bm{Z}_{\bm{\hat{A}}}^{(i)})\mapsto(\bm{Z}_{A}^{(i+1)},\bm{Z}_{\bm{\hat{A}}}^{(i+1)})$ is invertible on the support up to null sets. In settings with non-invertible nonlinearities (e.g., ReLU) and non-degenerate inputs, the inequality is strict. We state and prove the theorem for the GCN-type template. The conclusion extends to any architecture whose layer map is a deterministic measurable function of the representation (given fixed adjacency and parameters), since the proof relies only on the data processing inequality for deterministic maps. In particular, for GAT, the layer map $T_A^{(i)}(\bm{z})$ includes data-dependent attention coefficients $\alpha(\bm{z},A)$, but remains a deterministic function of $\bm{z}$ given fixed $A$ and $\bm{\theta}$; likewise for GraphSAGE, whose aggregation is deterministic once the adjacency and features are fixed. However, we have not verified the regularity conditions (finiteness of MI, well-defined densities) for these architectures, so we state the extension as a conjecture rather than a proven result; the experimental evaluation in Sec.~\ref{sec: experiments} empirically confirms the non-increase pattern for both architectures.

When $X$ is continuous, $I(\bm{Z}_A^{(0)};\bm{Z}_{\bm{\hat{A}}}^{(0)})=H(X)$ may be infinite under differential entropy. The non-increase result remains valid ($+\infty \ge \cdots$), but the theorem is most informative for characterizing the \emph{rate} of decrease between deeper layers where MI is finite.

The contraction in Eq.~\eqref{eqn: MI-nonincrease-two-chains} is stated for the hidden-state chain $\{\bm{Z}_A^{(i)}\}_{i=0}^L$. If a nonlinear classifier head (\eg, softmax) is appended as layer $L+1$, the data processing inequality still ensures non-increase, but the specific GCN-type template above does not directly cover it.

\paragraph{A conditional fidelity bound from quantized representation matching.}
The next result is also stylized. It treats the feature input as fixed side information. Because exact equality of high-dimensional continuous representations occurs with probability zero, we introduce a measurable quantizer $Q$ (e.g., a uniform partition of the representation space into $|\mathcal{H}|$ cells, or $k$-means-style quantization) that discretizes the representation space into finitely many bins; quantization makes the matching event $P_e(x)$ well-defined.

\begin{theorem}[Auxiliary conditional fidelity bound via quantized representations]
\label{theorem: GRA attack lower bound}
Treat the feature input $X$ as fixed side information, so the statement is pointwise in the realized feature value. Let $\bar{\bm{H}}_{A}=Q(\bm{H}_{A})$ and $\bar{\bm{H}}_{\bm{\hat{A}}}=Q(\bm{H}_{\bm{\hat{A}}})$ for a measurable quantizer $Q$ with finite support $\mathcal{H}$ satisfying $2\le |\mathcal{H}|<\infty$. For almost every $X=x$, define
\begin{equation}
P_e(x)\triangleq \mathbb{P}\!\left(\bar{\bm{H}}_{A}\neq \bar{\bm{H}}_{\bm{\hat{A}}}\mid X=x\right),
\end{equation}
where $\neq$ denotes matrix-wise inequality. Then
\begin{equation}
I(A;\bm{\hat{A}}\mid X=x)
\ge
I(\bar{\bm{H}}_{A};\bar{\bm{H}}_{\bm{\hat{A}}}\mid X=x)
\ge
H(\bar{\bm{H}}_{A}\mid X=x)-H_b\!\bigl(P_e(x)\bigr)-P_e(x)\log\!\bigl(|\mathcal{H}|-1\bigr).
\end{equation}
Replacing $\log(|\mathcal{H}|-1)$ by $\log|\mathcal{H}|$ yields a weaker but simpler bound.

\noindent
\textbf{Interpretation.}
The theorem should be read as a technical auxiliary result for a discretized surrogate problem, not as a literal analysis of the continuous optimization used by MC-GRA. Its role is to show that, after quantization, low mismatch between hidden representations implies a nontrivial lower bound on conditional reconstruction fidelity.

\noindent
\textbf{Proof (sketch).}
Given $X$, the quantized representations are deterministic functions of $A$ and $\bm{\hat{A}}$, so the first inequality is an application of data processing. The second is obtained by applying Fano's inequality to the problem of recovering $\bar{\bm{H}}_{A}$ from $\bar{\bm{H}}_{\bm{\hat{A}}}$. Full details of the proof are given in Appendix~\ref{ssec: proof of GRA attack lower bound}.
\end{theorem}

Combined with Theorem~\ref{theorem: reducing MI with two chains}, this result provides a concrete justification for matching hidden layers rather than only outputs: earlier layers admit a tighter bound because representation entropy $H(\bar{\bm{H}}_A \mid X)$ is higher and mismatch probability $P_e(x)$ can be lower before contraction reduces the signal. The bound is informative only when the quantization preserves meaningful structure and the attacker achieves low mismatch probability $P_e(x)$; with coarse quantization or high $P_e(x)$, the right-hand side can be negative or vacuous. The quantizer $Q$ is a theoretical device, not part of the attack procedure; the bound can in principle be optimized over $Q$ to yield the tightest result, but we do not pursue this here. To give a concrete sense of scale: for a $d_{\mathrm{hid}}$-dimensional hidden space ($d_{\mathrm{hid}}=16$ in our experiments) with $k$-bit uniform quantization per dimension, $|\mathcal{H}|=2^{d_{\mathrm{hid}} k}$ and $\log|\mathcal{H}|=16k$ nats. With $k=1$ (binary sign quantization), $|\mathcal{H}|=2^{16}\approx 65{,}000$; the bound is informative only when $P_e(x)\lesssim H(\bar{\bm{H}}_A\mid x)/\log|\mathcal{H}|$. For our attack on Cora (where the reconstruction AUC exceeds 0.9), the empirical mismatch rate between quantized ORI-chain and GRA-chain representations is small at the first hidden layer, consistent with the bound predicting a nontrivial lower bound on fidelity at that layer. We do not compute the bound numerically because the quantizer $Q$ is not part of the attack procedure, and optimizing over $Q$ is itself intractable. The assumption $2\le|\mathcal{H}|<\infty$ excludes the trivial single-point quantizer. Theorem~\ref{theorem: GRA attack lower bound} is a theoretical motivation rather than a constructive recipe: it formalizes the intuition that stronger representation matching can increase a lower bound on conditional reconstruction fidelity. The theorem is tractable only in threat models where $\bm{H}_{A}^{(\ell)}$ (or its quantized version) is observable; $\bm{H}_{A}^{(\ell)}$ and $\bm{H}_{\bm{\hat{A}}}^{(\ell)}$ denote the same layer(s) under $\mathcal{I}_{H}(\mathcal{K})$. Thus, the bound supports the design choice of matching hidden layers when they are in $\mathcal{K}$, in addition to matching outputs.

\paragraph{An information-theoretic ceiling on reconstruction.}
We next characterize the fundamental upper bound on recoverable information under a given knowledge set. The result is a direct application of the data processing inequality to the Markov chain $A \to \mathcal{K} \to \bm{\hat{A}}$; we state it as a proposition to emphasize its application to the GRA setting.

\begin{proposition}[Optimal fidelity under a given knowledge set]
\label{prop: optimal fidelity in GRA}
Let $\mathcal{K}$ denote all information available to the attacker, including any released side information and any fixed model information such as the trained parameters $\bm{\theta}$ in the white-box setting. View $\mathcal{K}$ as a random object jointly distributed with the unknown training adjacency $A$. Let the attacker output an estimate $\bm{\hat{A}}$ via an arbitrary possibly randomized procedure
\begin{equation}
\bm{\hat{A}} = g(\mathcal{K},U),
\qquad
U \perp\!\!\!\perp (A,\mathcal{K}),
\end{equation}
where $g$ is measurable. Then
\begin{equation}
I(A;\bm{\hat{A}})\;\le\; I(A;\mathcal{K}),
\label{eqn: optimal-fidelity-gap}
\end{equation}
\ie, the fidelity achievable by any attack using only $\mathcal{K}$ is upper-bounded by the information about $A$ contained in
$\mathcal{K}$. Define the optimal fidelity (under knowledge $\mathcal{K}$) as
\begin{equation}
\mathcal{F}^{*}(\mathcal{K}) \;\triangleq\; \sup_{g} I\bigl(A;g(\mathcal{K},U)\bigr),
\end{equation}
where the supremum is over measurable $g$ and auxiliary randomness $U\perp\!\!\!\perp(A,\mathcal{K})$.
Then $\mathcal{F}^{*}(\mathcal{K}) \le I(A;\mathcal{K})$. The substantive content is the upper bound in Eq.~\eqref{eqn: optimal-fidelity-gap}: it provides an information-theoretic ceiling on reconstruction under the available knowledge set. Equality holds iff $\bm{\hat{A}}^{*}$ is a sufficient statistic of $\mathcal{K}$ for $A$ (\ie, $I(A;\mathcal{K}\mid \bm{\hat{A}}^{*})=0$); this is idealized and rarely achieved in practice.

\noindent
\textbf{Proof (sketch).}
Eq.~\eqref{eqn: optimal-fidelity-gap} follows from the data processing inequality applied to the Markov chain
$A \to \mathcal{K} \to \bm{\hat{A}}$. The equality condition holds iff $\bm{\hat{A}}^{*}$ is a sufficient statistic of $\mathcal{K}$ for $A$. Details are in Appendix~\ref{ssec: proof of optimal fidelity in GRA}.
\end{proposition}

Proposition~\ref{prop: optimal fidelity in GRA} identifies an information-theoretic ceiling: even an optimal attack cannot recover more about $A$ than what is in $\mathcal{K}$, so $H(A\mid \bm{\hat{A}}^{*}) = H(A\mid \mathcal{K})$ at optimality (Fig.~\ref{fig: three-ball-GRA}).

\paragraph{Heterophily-aware side information and label-derived priors.}
The next results use a label-dependent random-graph model to understand the information content of label-derived priors; they are not assumptions of the attack algorithm. The sequence moves from the ideal affinity matrix $\bm{P}(Y)$, to the coarser disagreement indicator $\bm{D}$, and finally to noisy predicted labels $\hat Y$, mirroring the practical approximation underlying $\bm{P}_{\text{hetero}}$ in MC-GRA+.

\begin{theorem}[Label-induced edge-probability matrix is sufficient for $A$ under a known block model]
\label{thm: sufficiency_label_induced_matrix}
This is a stylized population result under idealized assumptions. The theorem assumes the block matrix $B$ is \emph{known} in the generative model; in practice the attack does not know $B$ and cannot compute $\bm{P}(Y)$ directly. The result motivates the use of coarser label-derived scores such as $\bm{P}_{\mathrm{hetero}}$ in MC-GRA+. Let $Y_i\in[C]$ and assume an undirected conditional edge-independence block model with known symmetric $B\in[0,1]^{C\times C}$:
for all $i<j$,
\begin{equation}
A_{ij}\mid (Y_i=a,Y_j=b)\sim \mathrm{Bern}(B_{ab}),
\qquad
\{A_{ij}\}_{i<j}\ \text{independent given }Y,
\end{equation}
where $A_{ji}=A_{ij}$ and $A_{ii}=0$.
Define $\bm{P}(Y)\in[0,1]^{N\times N}$ by $\bm{P}(Y)_{ij}=B_{Y_iY_j}$ for $i\neq j$ (and $0$ on the diagonal).
Then
\begin{equation}
A \perp\!\!\!\perp Y \mid \bm{P}(Y),
\qquad\text{and hence}\qquad
I(A;\bm{P}(Y))=I(A;Y).
\end{equation}

\noindent
\textbf{Proof (sketch).}
For any $y$, the conditional law factorizes as
\begin{equation}
\mathbb{P}(A\mid Y=y)=\prod_{i<j}\mathrm{Bern}\!\left(A_{ij};B_{y_iy_j}\right).
\end{equation}
If $y,y'$ satisfy $\bm{P}(y)=\bm{P}(y')$, then the Bernoulli parameters coincide edgewise, hence
$\mathbb{P}(A\mid Y=y)=\mathbb{P}(A\mid Y=y')$.
Therefore conditioning on $\bm{P}(Y)$ yields the same conditional law as conditioning on $Y$,
\ie, $\mathbb{P}(A\mid Y)=\mathbb{P}(A\mid \bm{P}(Y))$ a.s., which is equivalent to $A\perp\!\!\!\perp Y\mid \bm{P}(Y)$.
Since $\bm{P}(Y)$ is a deterministic function of $Y$, the mutual-information identity follows from the chain rule and
$I(A;Y\mid \bm{P}(Y))=0$.
Full proof in Appendix~\ref{app: proof_sufficiency_posterior_matrix}.
\end{theorem}

Theorem~\ref{thm: sufficiency_label_induced_matrix} shows that, under the block model, the adjacency-relevant content of labels is entirely captured by the induced pairwise affinity matrix $\bm{P}(Y)$. SBM-like structure arises approximately when graphs exhibit community structure with roughly homogeneous within- vs.\ between-community connectivities (e.g., some social or biological networks); when such structure is weak or absent, the model is only a crude approximation. In MC-GRA+, the practical score $\bm{P}_{\text{hetero}}$ is a prediction-based analogue of this idea, but with the richer affinity matrix replaced by a coarser disagreement-style construction. The next theorem quantifies the resulting information loss and gives a sharp condition under which this collapse is lossless.

\begin{theorem}[Information loss from collapsing labels to a heterophily indicator]
\label{thm: collapse_to_disagreement_info_loss_polished}
Assume the model of Theorem~\ref{thm: sufficiency_label_induced_matrix} and that each class has positive probability,
\ie, $\mathbb{P}(Y_i=a)>0$ for all $a\in[C]$.
Define $\bm{D}\in\{0,1\}^{N\times N}$ by
$\bm{D}_{ij}\triangleq \mathbbm{1}\{Y_i\neq Y_j\}$ (and $\bm{D}_{ii}=0$). Then
\begin{equation}
I(A;Y)=I(A;\bm{D})+I(A;Y\mid \bm{D}).
\end{equation}
Moreover, $I(A;Y\mid \bm{D})=0$ (equivalently $I(A;\bm{D})=I(A;Y)$) if and only if there exist
$p_{\mathrm{in}},p_{\mathrm{out}}\in[0,1]$ such that
\begin{equation}
B_{ab}=
\begin{cases}
p_{\mathrm{in}}, & a=b,\\
p_{\mathrm{out}}, & a\neq b,
\end{cases}
\end{equation}
\ie, the edge law depends on $(Y_i,Y_j)$ only through the indicator $\mathbbm{1}\{Y_i=Y_j\}$ (equivalently, through $\bm{D}_{ij}$).

\noindent
\textbf{Proof (sketch).}
Since $\bm{D}$ is a deterministic function of $Y$, the chain rule gives the decomposition.
If $B$ has the two-parameter form above, then $\mathbb{P}(A\mid Y)$ depends on $Y$ only through $\bm{D}$, yielding
$A\perp\!\!\!\perp Y\mid \bm{D}$ and hence $I(A;Y\mid \bm{D})=0$.
Conversely, if $A\perp\!\!\!\perp Y\mid \bm{D}$, then the conditional edge probability can only depend on whether labels match.
Under the positive-support assumption on $Y$, this implies all diagonal (resp.\ off-diagonal) entries of $B$ coincide, yielding the
two-parameter form above.
Full proof in Appendix~\ref{app: proof_collapse_to_disagreement}.
\end{theorem}

The two-parameter form (same $p_{\mathrm{in}}$ for all same-class pairs and $p_{\mathrm{out}}$ for all cross-class pairs) is a very specific structural property; most real-world graphs have more complex community structure (e.g., multiple blocks with different affinities), so information loss from collapsing to the heterophily indicator $\bm{D}$ is the typical case rather than the exception.

Finally, MC-GRA+ typically constructs heterophily-aware priors from \emph{predicted} labels $\hat Y$, rather than the true $Y$. The key qualitative fact is that prediction noise can only reduce the information available from label-derived priors: for any $V$ such that $V\to Y\to \hat Y$ forms a Markov chain, the data processing inequality gives $I(V;\hat Y)\le I(V;Y)$, and any side information $S=h(\hat Y)$ satisfies $I(V;S)\le I(V;\hat Y)$. More precisely, under a stylized symmetric-noise model with uniform class marginals, one can derive an explicit contraction coefficient $\eta(\varepsilon,C)=\bigl(1-\frac{C}{C-1}\varepsilon\bigr)^2$ such that $I(V;\hat Y)\le \eta(\varepsilon,C)\,I(V;Y)$; the formal statement and proof are given in Theorem~\ref{thm: noisy_labels_contraction_polished} below. The uniform-marginal assumption is restrictive (e.g., Cora has 7 classes with ratios ranging from roughly 3\% to 27\%), so the closed-form coefficient should be read as a qualitative scaling law. Without the uniform assumption, a contraction coefficient still exists but takes a different form; the qualitative conclusion---that noisier predictions reduce the usable information from label-derived priors---holds in general.

\begin{theorem}[Noisy predicted labels: an SDPI contraction bound]
\label{thm: noisy_labels_contraction_polished}
Let $Y\in[C]^N$ and let $\hat Y$ be obtained from $Y$ by independent $C$-ary symmetric noise with error
$\varepsilon\in\bigl[0,\frac{C-1}{C}\bigr]$:
\begin{equation}
\mathbb{P}(\hat Y_i=Y_i\mid Y_i)=1-\varepsilon,
\qquad
\mathbb{P}(\hat Y_i=b\neq Y_i\mid Y_i)=\frac{\varepsilon}{C-1},
\quad\text{independently over }i.
\end{equation}
Assume additionally that each $Y_i$ is marginally uniform on $[C]$ (a restrictive idealization; see the discussion above).\footnote{Under non-uniform class marginals, the contraction coefficient generally differs and the bound does not take this simple form; qualitatively, larger class imbalance tends to reduce the effective information in $\hat Y$ about $Y$, so the contraction intuition still applies.}
For any $V$ such that $V\to Y\to \hat Y$,
\begin{equation}
I(V;\hat Y)\le \eta(\varepsilon,C)\, I(V;Y),
\qquad
\eta(\varepsilon,C)\triangleq\left(1-\frac{C}{C-1}\varepsilon\right)^2.
\end{equation}
Moreover, for any heterophily prior $S=h(\hat Y)$ (so that $V\to \hat Y\to S$),
\begin{equation}
I(V;S)\le I(V;\hat Y)\le \eta(\varepsilon,C)\,I(V;Y).
\end{equation}

\noindent
\textbf{Proof (sketch).}
Under the uniform stationary distribution, the $C$-ary symmetric channel is reversible and satisfies an SDPI for mutual information
with coefficient equal to the square of its second-largest singular value,
$1-\frac{C}{C-1}\varepsilon$, yielding $\eta(\varepsilon,C)$.
Because the channel acts independently across coordinates, the same contraction coefficient applies to the product channel on $[C]^N$ by tensorization.
The inequality $I(V;S)\le I(V;\hat Y)$ follows by data processing along $V\to \hat Y\to S$.
Full proof in Appendix~\ref{app: proof_noisy_labels_contraction}.
\end{theorem}

%% file: sections/7-method-defense.tex
\section{Markov Chain-based Graph Privacy Bottleneck (MC-GPB)}
\label{sec: GRA defense}

As shown in Sec.~\ref{sec: overview}, standard GNN training retains substantial adjacency-related information in intermediate states, which motivates the question: \emph{how can we train GNNs to be more resistant to graph reconstruction attacks?} A principled defense should limit the information about $A$ that is recoverable from the learned representations while preserving task utility.

When $X$ is public, the relevant quantity is the \emph{additional} leakage $I(A;\bm{H}_{A}^{(\ell)}\mid X)$; we use the unconditional $I(A;\bm{H}_{A}^{(\ell)})$ as a tractable population proxy (see Eq.~\eqref{eqn: tighter-defense-MI} and Sec.~\ref{ssec: GPB understanding}). The relationship between these quantities follows from the chain rule: $I(A;\bm{H}_{A}^{(\ell)},X)=I(A;X)+I(A;\bm{H}_{A}^{(\ell)}\mid X)=I(A;\bm{H}_{A}^{(\ell)})+I(A;X\mid \bm{H}_{A}^{(\ell)})$. Rearranging gives
\begin{equation}
\label{eqn: defense-MI-decomposition}
I(A;\bm{H}_{A}^{(\ell)}) \;=\; I(A;\bm{H}_{A}^{(\ell)}\mid X) \;+\; I(A;X) \;-\; I(A;X\mid \bm{H}_{A}^{(\ell)}).
\end{equation}
When $A$ and $X$ are independent, $I(A;X)=0$, so $I(A;\bm{H}_{A}^{(\ell)})=I(A;\bm{H}_{A}^{(\ell)}\mid X)-I(A;X\mid \bm{H}_{A}^{(\ell)})\le I(A;\bm{H}_{A}^{(\ell)}\mid X)$; the unconditional MI is then at most as large as the conditional. When $A$ and $X$ are dependent, either direction is possible: for example, if $X$ encodes structural information correlated with $A$, then $I(A;X)>0$ can make $I(A;\bm{H}_{A}^{(\ell)})>I(A;\bm{H}_{A}^{(\ell)}\mid X)$. In short, when $A$ and $X$ are independent, the unconditional MI upper-bounds the additional leakage; otherwise, either direction is possible. The unconditional proxy is easier to estimate; when $A$ and $X$ are independent, it serves as an upper bound on the additional leakage $I(A;\bm{H}_{A}^{(\ell)}\mid X)$, and when they are dependent, it provides a combined measure of both feature-mediated and representation-mediated leakage. We use it for the implemented objective. For featureless graphs, the two quantities coincide. We formalize the goal as reducing this leakage across layers while preserving the information needed to predict $Y$, yielding a controlled balance between utility and privacy.

We address this challenge with the \emph{Markov Chain-based Graph Privacy Bottleneck} framework (MC-GPB). We first formulate population-level privacy objectives that treat adjacency as the sensitive source and couple task relevance with adjacency suppression (Sec.~\ref{ssec: GPB objective}). We then derive a surrogate training loss based on perturbed graphs and differentiable dependence scores (Sec.~\ref{ssec: GPB implementation}). Finally, we give stylized information-theoretic interpretations that clarify irreducible leakage, excess leakage, and the role of homophily/heterophily; these results are population analyses rather than direct guarantees for the implemented defense (Sec.~\ref{ssec: GPB understanding}).

\begin{figure*}[t!]
\centering
\includegraphics[width=0.8\textwidth]{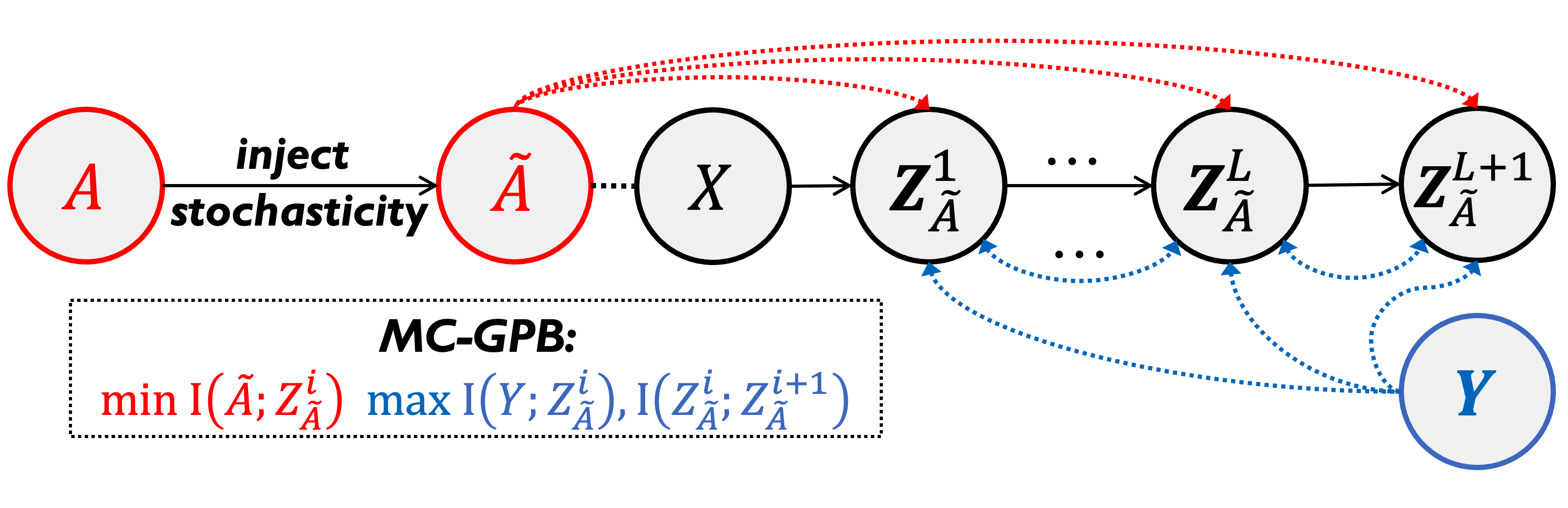}
\caption{\textbf{The MC-GPB defense framework.} 
MC-GPB addresses the accuracy-privacy trade-off through the population objective in Eq.~\eqref{eqn: tighter-defense-MI} and its practical surrogate in Sec.~\ref{ssec: GPB implementation}: it regularizes layerwise graph representations to reduce adjacency leakage while maintaining label-predictive information. During training, injected stochasticity (\eg, edge dropping) further promotes suppression of adjacency dependence by limiting deterministic memorization of training edges, thereby reducing reconstruction risk when releasing the trained model.}
\label{fig: markov-defense}
\end{figure*}

\subsection{The Optimization Objectives for Defense}
\label{ssec: GPB objective}

To address the privacy-utility trade-off under graph reconstruction attacks, we propose the \textit{Markov Chain-based Graph Privacy Bottleneck} (MC-GPB) framework (Fig.~\ref{fig: markov-defense}). The goal is to learn model parameters $\bm{\theta}$ from the training data $(A,X,Y)$ such that the resulting hidden representations remain predictive of labels while containing as little recoverable information about the training adjacency as possible. Our formulation is inspired by the information bottleneck principle, which promotes representations that preserve task-relevant information while compressing nuisance information~\citep{tishby2000information,shwartz2017opening,wu2020graph}. In our setting the natural graph-aware analogue is not the classical chain $X\to Z\to Y$, but rather
\begin{equation}
(A,X)\;\to\; Z\;\to\; Y,
\end{equation}
with $Z$ instantiated by the GNN's hidden states. Here $A$ plays the role of the variable to compress for privacy, replacing the generic input in the classical IB; the goal is to retain $Y$-relevant information while suppressing residual dependence on $A$.

\paragraph{The Markov chain-based defense objective.}
Let $\bm{H}_A^{(\ell)}$ denote the layer-$\ell$ hidden representation produced by the GNN when trained or evaluated on adjacency $A$, and let $L$ be the number of layers. Intuitively, the defense balances three competing goals at every layer: (i)~\emph{utility}---each layer should retain enough information to predict labels accurately; (ii)~\emph{privacy}---each layer should reveal as little as possible about the private adjacency $A$; and (iii)~\emph{complexity}---consecutive layers should not faithfully transmit fine-grained adjacency details that are unnecessary for the task. At the population level, we define MC-GPB by penalizing lack of label informativeness, leakage of adjacency information, and overly strong layer-to-layer information transmission:
\begin{equation}
\begin{split}
\bm{\theta}
&=\arg\min_{\bm{\theta}}\;
\sum_{\ell=1}^{L}
\Bigl(
\underbrace{-\,I(Y;\bm{H}_{A}^{(\ell)})}_{\text{utility}}
+
\underbrace{\beta_{p}^{(\ell)}\,I(A;\bm{H}_{A}^{(\ell)})}_{\text{privacy}}
\Bigr)
\;+\;
\sum_{\ell=1}^{L-1}
\underbrace{\beta_{c}^{(\ell)}\,I(\bm{H}_{A}^{(\ell)};\bm{H}_{A}^{(\ell+1)})}_{\text{complexity}}.
\end{split}
\label{eqn: tighter-defense-MI}
\end{equation}
Here $\beta_{p}^{(\ell)}\ge 0$ and $\beta_{c}^{(\ell)}\ge 0$ control the balance between utility and privacy and the inter-layer regularization. (We use $\beta$ for defense coefficients to distinguish them from the attack coefficients $\alpha$ introduced in Sec.~\ref{sec: GRA attack}.) The utility term $-I(Y;\bm{H}_{A}^{(\ell)})$ is summed over \emph{all} layers $\ell=1,\dots,L$, departing from the information bottleneck (where utility is measured at the output only). This ensures that every layer retains label-relevant information; without it, the trivial constant representation $\bm{H}_{A}^{(\ell)}=c$ achieves $I(A;\bm{H}_{A}^{(\ell)})=0$ (perfect privacy) but also $I(Y;\bm{H}_{A}^{(\ell)})=0$ (zero utility), collapsing all intermediate representations. The privacy term penalizes adjacency leakage per layer. The third term regularizes layer-to-layer information flow. Its purpose is to discourage the GNN from propagating adjacency-specific detail beyond what is needed for the task: if a representation at layer $\ell$ encodes idiosyncratic edges, penalizing $I(\bm{H}_A^{(\ell)};\bm{H}_A^{(\ell+1)})$ discourages the next layer from transmitting that detail. By itself this is a compression term, not a privacy guarantee; its privacy role is indirect, through limiting the propagation of adjacency-dependent information along the chain. When $A$ is fixed and the layer map is deterministic, $I(\bm{H}_{A}^{(\ell)};\bm{H}_{A}^{(\ell+1)})=H(\bm{H}_{A}^{(\ell+1)})$; minimizing it alone would encourage degenerate (constant) representations. The utility term prevents collapse by requiring label-relevant information at each layer.

\noindent\textbf{Probability model for Eq.~\eqref{eqn: tighter-defense-MI}.}
Because we observe a single graph rather than a distribution over graphs, the population MI terms in Eq.~\eqref{eqn: tighter-defense-MI} cannot be optimized; we instead minimize surrogate dependence measures under stochastic edge perturbation (Sec.~\ref{ssec: GPB implementation}). In detail: all terms are defined under the \emph{same} joint distribution over $(A,X,Y)$: both $A$ and $Y$ are treated as random variables drawn from a data-generating distribution. In the practical single-graph setting, however, $A$ is observed and fixed. The population objective is a conceptual target that motivates the surrogate loss. When $A$ is fixed (non-random), the population MI $I(A;\bm{H}_{A}^{(\ell)})$ is technically zero, so the implemented objective does not optimize MI directly. Instead, randomness is reintroduced through DropEdge perturbation $\tilde{A}=\Phi_{\mathrm{def}}(A,A_\epsilon)$, and the practical loss minimizes the \emph{surrogate dependence measure} $d(\hat{A}_{\tilde{H}}^{(\ell)}, A)$ between the decoded adjacency score (computed from the stochastic representation $\bm{H}_{\tilde{A}}^{(\ell)}$) and the fixed true adjacency $A$. This surrogate is well-defined for fixed $A$ and serves as a tractable proxy for the population privacy term.
We note that the per-layer utility term $-I(Y;\bm{H}_{A}^{(\ell)})$ is a population-level idealization. In practice, the cross-entropy surrogate (Eq.~\eqref{eqn: practical-defense-loss}) measures utility only at the final output, which effectively sidesteps the difficulty of optimizing MI at intermediate layers while all parameters are updated jointly. The population objective thus motivates the design; the practical loss avoids the associated optimization instabilities.

In the practical single-graph setting (fixed $A$), the surrogate $d(\bm{H}_{\tilde{A}}^{(\ell)},\bm{H}_{\tilde{A}}^{(\ell+1)})$ operates on representations computed from the \emph{perturbed} adjacency $\tilde{A}$, which introduces randomness and prevents the degeneracy above. The surrogate thus measures representation similarity across layers under stochastic perturbation rather than entropy. The coefficients are set by validation. Eq.~\eqref{eqn: tighter-defense-MI} is a population-level idealization; in practice, each term is replaced by a differentiable surrogate (Sec.~\ref{ssec: GPB implementation}).

\paragraph{Conditional interpretation of the MC-GPB objective.}
Eq.~\eqref{eqn: tighter-defense-MI} can also be interpreted through conditional mutual information. Since
\begin{equation}
I(A;\bm{H}_{A}^{(\ell)}\mid Y)=I(A;\bm{H}_{A}^{(\ell)})-I(Y;\bm{H}_{A}^{(\ell)})+I(Y;\bm{H}_{A}^{(\ell)}\mid A),
\end{equation}
simultaneously reducing $I(A;\bm{H}_{A}^{(\ell)})$ and maintaining large $I(Y;\bm{H}_{A}^{(\ell)})$ encourages representations with lower adjacency dependence at fixed task relevance. This interpretation is only partial: the residual term $I(Y;\bm{H}_{A}^{(\ell)}\mid A)$ measures how much the representation reveals about $Y$ beyond what $A$ already determines; it is model-dependent and relates to the feature contribution (since the GNN takes $(A,X)$ as input).

\begin{remark}[Nature of the MC-GPB guarantee]
MC-GPB encourages, rather than guarantees, representations that are label-informative yet less adjacency-informative. The defense provides no formal privacy guarantee analogous to differential privacy; its effectiveness is empirical and depends on the surrogate quality and hyperparameter tuning.
\end{remark}

\paragraph{Heterophily-aware defense.}
In heterophilic graphs, edges predominantly connect nodes with different labels, so cross-label links can be particularly informative for reconstruction attacks. To explicitly bias the defense toward such links, we define the label-agreement mask $M\in\{0,1\}^{N\times N}$ by
\begin{equation}
\begin{aligned}
M_{ij}&=\mathbbm{1}\{y_i=y_j\} \quad (i\neq j),
\\
M_{ii}&=1 \text{(ensuring $(\bm{1}-M)_{ii}=0$, so that $A_{\text{hetero}}$ has zero diagonal)},
\end{aligned}
\end{equation}
and then define the heterophilic adjacency mask
\begin{equation}
A_{\text{hetero}} = A \odot (\bm{1}-M),
\end{equation}

where $\odot$ denotes elementwise multiplication. Since the graph has zero diagonal, $A_{\text{hetero}}$ also has zero diagonal. MC-GPB+ requires access to ground-truth labels at training time to construct $M$; in our experiments we use the benchmark training labels. Note the asymmetry with the attack: MC-GRA+ constructs its heterophily prior from \emph{predicted} labels $\bm{\hat{Y}}_A$ (since the attacker may not have ground-truth labels), while MC-GPB+ uses \emph{ground-truth} labels $Y$ (which are available to the defender during training). In a white-box setting, the attacker could in principle construct a stronger prior; however, our experiments show that MC-GPB+ remains effective even against this stronger attacker (Sec.~\ref{sec: experiments}).

We extend MC-GPB into MC-GPB+ by adding an additional penalty that reduces the mutual information between hidden representations and $A_{\text{hetero}}$:
\begin{equation}
\bm{\theta}\!=\!\arg\min_{\bm{\theta}}
\sum_{\ell=1}^{L}
\Bigl(
-I(Y;\bm{H}_{A}^{(\ell)})
+
\beta_{p}^{(\ell)}\,I(A;\bm{H}_{A}^{(\ell)})
+
\beta_{h}^{(\ell)}\,I(A_{\text{hetero}};\bm{H}_{A}^{(\ell)})
\Bigr)
+\sum_{\ell=1}^{L-1}\beta_{c}^{(\ell)}\,I(\bm{H}_{A}^{(\ell)};\bm{H}_{A}^{(\ell+1)}).
\label{eqn: defense-hetero-MI}
\end{equation}
Here $\beta_{h}^{(\ell)}\ge 0$ controls the strength of the heterophily-aware penalty (layer-indexed, unlike the scalar $\alpha_h$ in the attack, because the defense operates on all layers during training). Conditioned on the label mask $M$ (or when labels are treated as fixed information), $A_{\text{hetero}}$ is a deterministic function of $A$, so $I(A_{\text{hetero}};\bm{H}_{A}^{(\ell)}\mid M)\le I(A;\bm{H}_{A}^{(\ell)}\mid M)$ by data processing; the heterophily term reweights the defense toward cross-label edges rather than adding new information beyond the global penalty. Its role is to \emph{reweight} the defense toward cross-label edges, which are often more informative for reconstruction on heterophilic graphs. We keep the global penalty as the primary regularizer and add the heterophily term to emphasize suppression on those edges without separately tuning homo/hetero weights. As with Eq.~\eqref{eqn: tighter-defense-MI}, Eq.~\eqref{eqn: defense-hetero-MI} is a population objective; practical training uses surrogate penalties described next.

\subsection{Implementation Details of MC-GPB (+)}
\label{ssec: GPB implementation}

\textbf{Promoting suppression of adjacency dependence in Eq.~\eqref{eqn: tighter-defense-MI} via injected stochasticity.}
A direct way to reduce privacy risk is to prevent the trained GNN from deterministically encoding fine-grained details of the training adjacency. We inject stochasticity by adopting DropEdge~\citep{rong2020dropedge}, which independently removes each observed edge of $A$ with probability $p\in(0,1)$. DropEdge was originally proposed as a regularizer against overfitting~\citep{rong2020dropedge}; we repurpose it here as a heuristic privacy-promoting mechanism. \emph{DropEdge provides no formal privacy guarantee} (unlike differential privacy); training with it does not guarantee lower leakage after optimization, because the optimizer may still encode adjacency through other channels. Denote the perturbed adjacency by
\begin{equation}
\tilde{A} \;=\; \Phi_{\mathrm{def}}(A, A_{\epsilon}),
\end{equation}
where $A_{\epsilon}$ represents the edge-deletion noise and $\Phi_{\mathrm{def}}$ denotes the defense-side perturbation operator (DropEdge). Note that $\Phi_{\mathrm{def}}$ (random edge deletion) and $\Phi_{\mathrm{atk}}$ (binary Concrete relaxation, Sec.~\ref{sec: GRA attack}) are formally different perturbation operators; they are distinguished by their subscripts throughout. The noise $A_{\epsilon}$ is sampled independently of $(X,Y)$ and independently across edges. We emphasize, however, that a representation $\bm{Z}$ computed from $(\tilde{A},X)$ is generally \emph{not} independent of $A_{\epsilon}$, because it depends on the perturbed adjacency.

At the channel level, $\Phi_{\mathrm{def}}$ yields $A\to \tilde{A}$, so any $U$ computed from $\tilde{A}$ satisfies $A \to \tilde{A} \to U$ and data processing bounds the information in $U$ about $A$ by what remains in $\tilde{A}$. The defense implements a tractable regularization objective that encourages reliance on graph patterns stable under stochastic edge perturbations, reducing brittle memorization of idiosyncratic training edges~\citep{zhao2022learning}.

\paragraph{Promoting feasibility via differentiable dependence measures.}
Directly optimizing Eq.~\eqref{eqn: tighter-defense-MI} is infeasible, so we train MC-GPB with a concrete surrogate loss. The utility term is instantiated by the standard cross-entropy on the perturbed-graph predictions over the labeled training nodes, while the privacy terms are instantiated by differentiable dependence scores. We use the symmetrization operator $\Pi_{\mathrm{sym},0}(\cdot)$ defined in Sec.~\ref{ssec: attack implementation}.

For each layer $\ell$, define the decoded adjacency score $\hat A_{\tilde H}^{(\ell)}=\Pi_{\mathrm{sym},0}\!\Bigl(\sigma\bigl(\bm H_{\tilde A}^{(\ell)}{\bm H_{\tilde A}^{(\ell)}}^{\top}\bigr)\Bigr)$, i.e., the same inner-product--sigmoid construction used for the leakage proxy in Sec.~\ref{ssec: understanding-by-auc-proxy}. This maps the perturbed representation $\bm H_{\tilde A}^{(\ell)}$ to an $N\times N$ symmetric score matrix comparable with the (sensitive) adjacency $A$. The implemented loss penalizes adjacency recoverability via this decoded score rather than estimating $I(A;\bm{H}_A^{(\ell)})$ directly.

\paragraph{Target of the privacy penalty.}
The penalty compares the decoded score to the \emph{original} adjacency $A$, not the perturbed $\tilde A$: representations are computed from $\tilde A$ during training, but the goal is to suppress leakage of the true $A$. Thus we compare $\hat A_{\tilde H}^{(\ell)}$ (or $\hat A^{(\ell)}(\tilde A)$; the subscript $\tilde H$ indicates dependence on $\bm{H}_{\tilde A}^{(\ell)}$) to $A$. This penalizes \emph{similarity-based} recoverability. The penalty targets similarity-based adjacency recoverability, matching the leakage proxy and reconstruction objectives used in our evaluation.
Because $\hat A_{\tilde H}^{(\ell)}$ is decoded from representations computed on the perturbed $\tilde{A}$, the penalty partially reflects noise injected by DropEdge rather than pure adjacency leakage. This coupling is intentional---the defense seeks to ensure that even noisy representations do not reveal $A$---but the penalty may over-penalize when DropEdge distortion is large.

This surrogate preserves the privacy and inter-layer penalties layerwise, but approximates the multi-layer utility term using only the final supervised loss. This simplification is standard in deep information bottleneck implementations, where intermediate-layer MI is replaced by end-to-end supervised loss~\citep{shwartz2017opening}. We use the practical surrogate objective, where $d(\cdot,\cdot)$ denotes CKA (chosen for scale invariance; see below):
\begin{equation}
\begin{split}
\widehat{\mathcal L}_{\mathrm{MC\text{-}GPB}}
:={}&
\mathcal L_{\mathrm{CE}}(\hat{\bm Y}_{\tilde A},Y)
\;+\;
\sum_{\ell=1}^{L}\beta_p^{(\ell)}\,d\!\left(\hat A_{\tilde H}^{(\ell)},A\right)
\;+\;
\sum_{\ell=1}^{L-1}\beta_c^{(\ell)}\,d\!\left(\bm H_{\tilde A}^{(\ell)},\bm H_{\tilde A}^{(\ell+1)}\right),
\end{split}
\label{eqn: practical-defense-loss}
\end{equation}
and, for the heterophily-aware variant,
\begin{equation}
\widehat{\mathcal L}_{\mathrm{MC\text{-}GPB+}}
\;=\;
\widehat{\mathcal L}_{\mathrm{MC\text{-}GPB}}
\;+\;
\sum_{\ell=1}^{L}\beta_h^{(\ell)}\,d\!\left(\hat A_{\tilde H}^{(\ell)},A_{\text{hetero}}\right).
\label{eqn: practical-defense-loss-hetero}
\end{equation}
The sensitive target of the penalty is always the original $A$ (or $A_{\text{hetero}}$); $\tilde A$ is used only to produce the stochastic representation $\bm{H}_{\tilde A}^{(\ell)}$ during training. The ideal objectives in Eqs.~\eqref{eqn: tighter-defense-MI} and \eqref{eqn: defense-hetero-MI} are written in terms of $\bm{H}_A^{(\ell)}$; the implemented loss is evaluated on $\bm{H}_{\tilde A}^{(\ell)}$ as a stochastic approximation.

\paragraph{Choice of dependence surrogate.}
We use the same dependence surrogate notation $d(\cdot,\cdot)$ and the same family of candidates (DP, HSIC, CKA, KDE) as in Sec.~\ref{ssec: attack implementation}; the definitions are not repeated here.
For MC-GPB/MC-GPB+, $d(\cdot,\cdot)$ is instantiated by \texttt{CKA}, while the attack uses \texttt{HSIC} (with normalized inputs). The rationale for this difference is that scale invariance matters more for the defense: representation norms can change significantly across epochs and layers during training, and a scale-dependent penalty could conflate norm growth with adjacency leakage; CKA provides this invariance. For the attack, the candidate adjacency is optimized directly (not trained over epochs), so scale sensitivity is less problematic. Tab.~\ref{tab: ablation-similarity-metrics} compares all four choices for both attack and defense.

The specific surrogate options are:

\begin{itemize}[leftmargin=*]
\setlength\itemsep{0.1em}

\item \textbf{Dot product-based similarity (DP).}
We measure dependence through Gram matrices:
\begin{equation}
\texttt{DP}(\bm{U},\bm{V})
\;=\;
\mathrm{tr}\bigl(\bm{U}\bm{U}^{\top}\bm{V}\bm{V}^{\top}\bigr)
\;=\;
\|\bm{V}^{\top}\bm{U}\|_{F}^{2}.
\end{equation}
Here $(\bm{U}\bm{U}^{\top})_{ij}=\langle \bm{u}_i,\bm{u}_j\rangle$ is the dot product between $\bm{u}_i$ and $\bm{u}_j$; analogously for $\bm{V}\bm{V}^{\top}$.

\item \textbf{Hilbert-Schmidt independence criterion (HSIC).}
HSIC measures dependence via cross-covariance in a reproducing kernel Hilbert space. Given kernels $k(\cdot,\cdot)$ and $l(\cdot,\cdot)$ with Gram matrices $K_{ij}=k(\bm{u}_i,\bm{u}_j)$ and $L_{ij}=l(\bm{v}_i,\bm{v}_j)$, the empirical HSIC \citep{gretton2005measuring} is estimated as
\begin{equation}
\texttt{HSIC}(K,L)
\;=\;
\frac{1}{(N-1)^2}\,\mathrm{tr}(KHLH),
\end{equation}
where $H=I-\frac{1}{N}\bm{1}\bm{1}^{\top}$ is the $N\times N$ centering matrix.

\item \textbf{Centered kernel alignment (CKA).}
CKA normalizes HSIC to be invariant to isotropic scaling, \ie, $d(\bm{U},\bm{V})=d(\alpha \bm{U},\beta \bm{V})$ for all $\alpha,\beta>0$ when $d$ is CKA~\citep{cortes2012algorithms}. It is computed as
\begin{equation}
\texttt{CKA}(K,L)
\;=\;
\frac{\texttt{HSIC}(K,L)}{\sqrt{\texttt{HSIC}(K,K)\,\texttt{HSIC}(L,L)}}.
\end{equation}

\item \textbf{Kernel density estimation (KDE).}
KDE estimates the marginal densities $p_{\bm{U}}$, $p_{\bm{V}}$ and the joint density $p_{\bm{U}\bm{V}}$ using a kernel (\eg, a Gaussian kernel with bandwidth or covariance matrix $\mathbf{H}$), then forms a differentiable estimate of $I(\bm{U};\bm{V})$ via the usual identity $I(\bm{U};\bm{V})=H(\bm{U})+H(\bm{V})-H(\bm{U},\bm{V})$ with entropy terms replaced by plug-in estimates from the fitted densities (e.g., $\widehat{H}(\bm{U})=-\frac{1}{N}\sum_i \log \hat p_{\bm{U}}(\bm{u}_i)$). The resulting estimate is sensitive to bandwidth choice and dimension; when used in ablations we follow standard rules (e.g., Silverman or Scott) for bandwidth selection.
\end{itemize}

\noindent
\textbf{Remark.} Different choices of $d(\cdot,\cdot)$ capture different notions of dependence and operate at different levels of approximation. Throughout, we use them as tractable empirical surrogates rather than claiming equality with $I(\cdot;\cdot)$.

\paragraph{Computational cost.}
Each training iteration incurs one forward pass on the perturbed graph, plus evaluation of the dependence surrogates and the decoded adjacency scores at each layer; the cost is that of the base GNN plus $O(L\cdot N^2)$ for the layerwise surrogate terms.

\paragraph{The algorithm.}
Equipped with the stochastic perturbation and surrogate losses above, we summarize the resulting defensive training procedure in Algorithm~\ref{alg: GPB}. MC-GPB+ differs only by adding the heterophily-aware penalty in Eq.~\eqref{eqn: practical-defense-loss-hetero}.

\begin{algorithm}[ht]
\caption{Generalized defensive training (MC-GPB / MC-GPB+).}
\begin{algorithmic}[1]
\REQUIRE Training graph $(A,X,Y)$, GNN $f_{\bm{\theta}}$ with $L$ layers, dependence surrogate $d(\cdot,\cdot)$, coefficients $\{\beta_p^{(\ell)}\}_{\ell=1}^{L}$, $\{\beta_c^{(\ell)}\}_{\ell=1}^{L-1}$, and (optional) $\{\beta_h^{(\ell)}\}_{\ell=1}^{L}$, edge-drop probability $p$, number of iterations $n$
\STATE Initialize model parameters $\bm{\theta}$
\IF{heterophily-aware defense is enabled}
    \STATE Construct the heterophilic adjacency mask
    $A_{\text{hetero}} = A \odot (\bm{1}-M)$ with $M_{ij}=\mathbbm{1}\{y_i=y_j\}$
\ENDIF
\FOR{$t=1,\dots,n$}
    \STATE Inject stochasticity into the training graph (\eg, DropEdge):
    $\tilde{A}=\Phi_{\mathrm{def}}(A, A_{\epsilon})$,
    where each observed edge is independently dropped with probability $p$
    \STATE Forward propagate to obtain representations and outputs:
    $\{\bm{H}_{\tilde{A}}^{(\ell)}\}_{\ell=1}^{L},\; \bm{\hat{Y}}_{\tilde{A}}
    \leftarrow f_{\bm{\theta}}(\tilde{A},X)$
    \STATE For each layer $\ell$, form
    $\hat A_{\tilde H}^{(\ell)}=\Pi_{\mathrm{sym},0}\!\bigl(\sigma(\bm H_{\tilde A}^{(\ell)}{\bm H_{\tilde A}^{(\ell)}}^\top)\bigr)$
    \IF{heterophily-aware defense is enabled}
        \STATE Compute $\widehat{\mathcal L}_{\mathrm{MC\text{-}GPB+}}$ from Eq.~\eqref{eqn: practical-defense-loss-hetero}
    \ELSE
        \STATE Compute $\widehat{\mathcal L}_{\mathrm{MC\text{-}GPB}}$ from Eq.~\eqref{eqn: practical-defense-loss}
    \ENDIF
    \STATE Update $\bm{\theta}$ by minimizing the resulting objective (gradient descent on $\bm{\theta}$)
\ENDFOR
\STATE \textbf{return} the trained model $f_{\bm{\theta}}$
\end{algorithmic}
\label{alg: GPB}
\end{algorithm}

\subsection{Theoretical Understanding}
\label{ssec: GPB understanding}

A graph privacy defense must balance two competing goals: preserving task utility while limiting leakage of the private training adjacency. In our setting, leakage is mediated through learned representations. When node features are already public, reducing $I(A;\bm{H}_{A}\mid X)$ lowers the \emph{additional} adjacency information leaked through the representations beyond what is already exposed by $X$. By Proposition~\ref{prop: optimal fidelity in GRA}, smaller representation leakage can in turn reduce the attack ceiling associated with knowledge sets that include those released representations. The results below formalize this idea in a stylized way using sufficiency and information-bottleneck arguments.

\paragraph{Sufficiency as a utility requirement.}
We begin with the classical sufficiency notion in information-bottleneck form. This proposition is a warm-up concept; the graph-specific formulation used later replaces $X$ by the joint input $(A,X)$.

\begin{proposition}[Sufficient statistic of $X$ for $Y$ (information-theoretic form)]
\label{prop: sufficient graph representation}
Let $X$ and $Y$ be random variables, and let $\bm{Z}=f(X)$ for some measurable function $f$.
We say that $\bm{Z}$ is \emph{sufficient (as a statistic of $X$) for predicting $Y$} if it preserves all information in $X$
that is relevant for $Y$, \ie,
\begin{equation}
I(\bm{Z};Y) = I(X;Y).
\label{eqn: suff-stat-mi}
\end{equation}
Since $\bm{Z}$ is a deterministic function of $X$, the chain rule implies
$I(X;Y)=I(\bm{Z};Y)+I(X;Y\mid \bm{Z})$, hence Eq.~\eqref{eqn: suff-stat-mi} holds if and only if $I(X;Y\mid \bm{Z})=0$,
equivalently $Y \perp\!\!\!\perp X \mid \bm{Z}$.
This is the information-theoretic analogue of the classical sufficient-statistic concept: $\bm{Z}$ is sufficient iff no label-relevant information is lost.
In our graph setting, the input is the joint $(A,X)$ and the representation is $\bm{Z}=f(A,X)$ for some measurable $f$. We say $\bm{Z}$ is \emph{sufficient (as a statistic of $(A,X)$) for predicting $Y$} if $I(\bm{Z};Y)=I(A,X;Y)$, equivalently $Y \perp\!\!\!\perp (A,X) \mid \bm{Z}$. The propositions and theorems below use this graph-specific form.
\end{proposition}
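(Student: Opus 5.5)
The claim to establish is the equivalence in Proposition~\ref{prop: sufficient graph representation}: $I(\bm{Z};Y)=I(X;Y)$ if and only if $I(X;Y\mid \bm{Z})=0$, if and only if $Y\perp\!\!\!\perp X\mid \bm{Z}$. The graph version is the same statement with $X$ replaced by the pair $(A,X)$. The plan is to derive everything from the chain rule for mutual information together with the fact that $\bm{Z}=f(X)$ is a deterministic function of $X$.

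\textbf{Step 1.} I expand $I(X,\bm{Z};Y)$ in two ways:
\begin{equation*}
I(X,\bm{Z};Y)=I(X;Y)+I(\bm{Z};Y\mid X)=I(\bm{Z};Y)+I(X;Y\mid \bm{Z}).
\end{equation*}

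\textbf{Step 2.} I show that $I(\bm{Z};Y\mid X)=0$. Since $\bm{Z}=f(X)$, conditional on $X=x$ the variable $\bm{Z}$ is the point mass at $f(x)$. A degenerate variable carries no information, so the conditional dependence vanishes for every $x$. Substituting into Step~1 gives $I(X;Y)=I(\bm{Z};Y)+I(X;Y\mid \bm{Z})$.

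\textbf{Step 3.} The first equivalence now follows directly. Because conditional mutual information is nonnegative, $I(\bm{Z};Y)\le I(X;Y)$ always holds, and equality holds exactly when $I(X;Y\mid \bm{Z})=0$.

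\textbf{Step 4.} For the second equivalence, I write $I(X;Y\mid \bm{Z})$ as $\mathbb{E}_{\bm{Z}}\bigl[D_{\mathrm{KL}}(P_{XY\mid \bm{Z}}\,\|\,P_{X\mid \bm{Z}}\otimes P_{Y\mid \bm{Z}})\bigr]$. This expectation is zero if and only if the integrand is zero $P_{\bm{Z}}$-a.s. The integrand is zero exactly when the joint conditional law equals the product of the conditional marginals, which is the definition of $Y\perp\!\!\!\perp X\mid\bm{Z}$.

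\textbf{Step 5.} The graph-specific form needs no new argument. I apply Steps 1--4 verbatim with the input random variable taken to be $W=(A,X)$ and $\bm{Z}=f(W)$.

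\textbf{Main obstacle.} The only delicate point is measure-theoretic generality, that is, continuous $X$ with possibly infinite mutual information. The chain rule is only valid as an identity when the relevant quantities are finite. I would therefore assume $I(X;Y)<\infty$; this holds automatically because $Y$ is discrete with $C$ classes, so $I(X;Y)\le H(Y)\le\log C$. I would define conditional mutual information through regular conditional distributions, which makes Steps 2 and 4 rigorous.
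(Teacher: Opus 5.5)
Your proposal is correct and is essentially the paper's argument: the paper asserts that, because $\bm{Z}=f(X)$, the chain rule gives $I(X;Y)=I(\bm{Z};Y)+I(X;Y\mid\bm{Z})$, and your Steps 1--2 (expanding $I(X,\bm{Z};Y)$ both ways and using $I(\bm{Z};Y\mid X)=0$) are exactly the derivation behind that line. Your finiteness caveat is a genuine improvement the paper omits, since it is what makes the cancellation in Step 3 legitimate; note, though, that for the label vector $Y\in[C]^N$ the bound is $H(Y)\le N\log C$, not $\log C$, and for the abstract statement with generic $X,Y$ finiteness must be assumed rather than derived.
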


\paragraph{Why sufficiency alone is insufficient for privacy.}
Sufficiency guarantees utility, but it does not constrain how much \emph{additional} information about $A$ may be encoded. The next elementary observation is included only to formalize this baseline impossibility point: in the worst case, a representation retains all information about the adjacency itself.
Observe that
\begin{equation}
I(A;\bm{H}_A) = H(A) - H(A\mid \bm{H}_A) \;\le\; H(A),
\label{eqn: max-adj-info}
\end{equation}
with equality iff $A$ is fully recoverable from the representation.
Thus, in the worst case $\max_{\bm{H}_{A}} I(A;\bm{H}_{A}) = H(A)$: a learned representation may retain all information about the private adjacency. Guaranteeing only task sufficiency (preserving label-relevant information) is therefore not \emph{safe}---even if $\bm{H}_{A}$ is sufficient for predicting $Y$, it may still encode substantial \emph{excess} adjacency information beyond what is needed for the task.

\paragraph{Minimal sufficient statistics and unavoidable adjacency information.}
To control excess leakage, it is natural to seek representations that are sufficient for the task yet maximally compressed. This motivates the concept of minimal sufficient statistics. 
We recall the classical information-bottleneck form before specializing to adjacency-aware representations.

\begin{proposition}[Minimal sufficient statistic (IB form)]
\label{prop: minimal graph representation}
Let $X$ and $Y$ be random variables and let $\bm{Z}=f(X)$ range over (measurable) representations of $X$ that are sufficient for $Y$,
\ie, $I(\bm{Z};Y)=I(X;Y)$. A \emph{minimal sufficient statistic} $\bm{Z}^{*}$ is any minimizer of
\begin{equation}
\bm{Z}^{*}
\in
\arg\min_{\bm{Z}:\; I(\bm{Z};Y)=I(X;Y)} I(\bm{Z};X),
\label{eqn: minimal-suff-stat}
\end{equation}
\ie, $\bm{Z}^{*}$ is a maximally compressed representation among all sufficient representations of $X$.
Note that this IB-based minimality criterion is equivalent to the classical minimal sufficient statistic in many standard parametric models (\eg, exponential families), but the two notions can differ in general; we use the IB definition throughout.
\end{proposition}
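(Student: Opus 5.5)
The proposition is essentially a definition, so the work is to check that it is well posed and to justify the remark comparing it with the classical notion. I will work in the setting where $X$ is discrete with $H(X)<\infty$, so that $I(\bm{Z};X)=H(\bm{Z})-H(\bm{Z}\mid X)=H(\bm{Z})$ is finite for every deterministic $\bm{Z}=f(X)$. The plan has three steps: (i) show that the feasible set in Eq.~\eqref{eqn: minimal-suff-stat} is nonempty and the objective is bounded below; (ii) show that the $\arg\min$ is attained; (iii) connect minimizers with classical minimal sufficiency.

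\emph{Feasibility and lower bound.} The identity representation $\bm{Z}=X$ trivially satisfies $I(\bm{Z};Y)=I(X;Y)$, so the feasible set is nonempty. By Proposition~\ref{prop: sufficient graph representation}, feasibility is equivalent to $Y\perp\!\!\!\perp X\mid \bm{Z}$. For any feasible $\bm{Z}$ we also have $I(\bm{Z};X)=H(\bm{Z})\ge I(\bm{Z};Y)=I(X;Y)$. Hence the infimum lies in $[I(X;Y),H(X)]$.

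\emph{Attainment.} This is the step I expect to be the main obstacle. Since $I(\bm{Z};X)$ depends on $f$ only through the partition of the support of $X$ into level sets, I would identify representations with partitions. When the support of $X$ is finite there are finitely many partitions, so the minimum is attained immediately. For countable support I would avoid a compactness argument and instead construct a minimizer directly. Take the statistic $T^*(x)=P_{Y\mid X=x}$, the posterior map, whose level sets group together all $x$ with the same posterior. Then:
\begin{itemize}
\item $T^*$ is sufficient, because $P_{Y\mid X}=P_{Y\mid T^*}$.
\item Any sufficient $\bm{Z}$ satisfies $P_{Y\mid X}=P_{Y\mid \bm{Z}}$ almost surely. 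This makes $T^*$ a function of $\bm{Z}$, so $T^*=g(\bm{Z})$ and, by data processing, $H(T^*)\le H(\bm{Z})$.
\end{itemize}
Therefore $T^*$ attains the minimum, and the $\arg\min$ is well defined. For general (non-discrete) $X$, where $I(\bm{Z};X)$ may be infinite, I would state the proposition under the finite-entropy assumption, or interpret minimality via the function-of ordering just described.

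\emph{Relation to the classical notion.} Treat $Y$ as the parameter indexing the family $\{P_{X\mid Y=y}\}$. Classical (Fisher--Neyman) sufficiency of $\bm{Z}$ is then equivalent to $Y\perp\!\!\!\perp X\mid\bm{Z}$ whenever the prior on $Y$ has full support. Under this equivalence, the classical minimal sufficient statistic is exactly $T^*$ up to bijection; in exponential families it is the natural statistic. So it is an IB minimizer by the previous step. Conversely, if a feasible $\bm{Z}$ attains $H(\bm{Z})=H(T^*)$ with $T^*=g(\bm{Z})$, then $H(\bm{Z}\mid T^*)=0$, so $\bm{Z}$ is a.s. a bijective relabeling of $T^*$. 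The two notions therefore coincide in this discrete finite-entropy setting. The divergence the proposition hedges about arises only when these hypotheses fail, for example under a non-full-support prior or infinite entropy. I would flag this precisely rather than prove a general equivalence.
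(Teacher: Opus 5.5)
The paper gives no proof of this statement. It is a definition, and the remark comparing it with the classical notion is only asserted. Your proposal is therefore a genuinely different route (in fact the only argument on offer), and its main line is correct.

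The decisive lemma is the one you isolate. The posterior map $T^{*}(x)=P_{Y\mid X=x}$ is itself sufficient, and every sufficient $\bm{Z}=f(X)$ satisfies $P_{Y\mid X}=P_{Y\mid \bm{Z}}$ a.s. Hence $T^{*}$ is an a.s.\ function of $\bm{Z}$, and $I(T^{*};X)=H(T^{*})\le H(\bm{Z})=I(\bm{Z};X)$. This settles attainment for countable support without any compactness argument, so the finite-partition count becomes redundant. The identity $H(\bm{Z})=H(T^{*})+H(\bm{Z}\mid T^{*})$ then gives uniqueness of the minimizer up to a.s.\ relabeling.

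What your route buys is well-posedness plus a characterization. It turns the paper's vague ``can differ in general'' into a concrete diagnosis. For continuous $X$, every non-discrete sufficient statistic has $I(\bm{Z};X)=\infty$. So when no discrete sufficient statistic exists, the criterion singles nothing out. The paper's bare definition, in exchange, is stated without assumptions.

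Two side claims need tightening.
\begin{enumerate}
\item Bayesian sufficiency under a full-support prior is equivalent to Fisher--Neyman sufficiency of $\{P_{X\mid Y=y}\}$ only when $Y$ is discrete. For continuous $Y$ you need a regular version of $y\mapsto P_{X\mid Y=y}$ (e.g.\ continuity). Otherwise take $Y\sim\mathrm{Unif}[0,1]$ and $P_{X\mid Y=y}=\mathrm{Bern}(1/2)$ for $y\neq 1/2$, with a different law at $y=1/2$. Then $I(X;Y)=0$ and $T^{*}$ is constant, while the classical minimal sufficient statistic is $X$ itself. Your conclusion that divergence arises ``only when these hypotheses fail'' must therefore include discreteness of $Y$ (or such regularity) among the hypotheses.
\item The natural statistic $t(x)$ of an exponential family is minimal only if the differences $\eta(y)-\eta(y_0)$ over the support of the prior span the natural-parameter space. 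If $Y$ takes few values, $T^{*}$ depends on $x$ only through $\langle \eta(y)-\eta(y_0),t(x)\rangle$. It can then be strictly coarser than $t(x)$.
\end{enumerate}
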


The next result identifies the minimum amount of adjacency information any representation must retain if it is to remain sufficient for predicting $Y$ when $Y$ depends on $A$ beyond what is already available in $X$. Intuitively, once $X$ is given, a task-sufficient representation should preserve all label-relevant information contributed by $A$.

\begin{theorem}[Population lower bound on adjacency information in task-sufficient representations]
\label{theorem: minimum adjacency information}
Let $(A,X,Y,\bm{H}_{A})$ be jointly distributed random variables, and suppose $\bm{H}_{A}$ is generated from $(A,X)$, possibly with additional internal randomness, as $\bm{H}_{A}=h(A,X,\xi)$, where $\xi\perp\!\!\!\perp Y\mid (A,X)$. Then $Y \to (A,X) \to \bm{H}_{A}$ forms a Markov chain.

\noindent
Assume that $\bm{H}_{A}$ is sufficient for predicting $Y$ given $X$, namely
\begin{equation}
Y \perp\!\!\!\perp A \mid (\bm{H}_{A},X)
\quad\Longleftrightarrow\quad
I(\bm{H}_{A};Y\mid X)=I(A;Y\mid X).
\label{eqn: suff_wrt_A_given_X}
\end{equation}
Then
\begin{equation}
I(A;\bm{H}_{A}\mid X)\;\ge\; I(A;Y\mid X).
\label{eqn: min-adj-info}
\end{equation}
Moreover, equality holds if and only if
\begin{equation}
I(A;\bm{H}_{A}\mid Y,X)=0,
\label{eqn: min-adj-info-equality}
\end{equation}
that is, $\bm{H}_{A}$ contains no adjacency information beyond what is already revealed by $(Y,X)$.

\noindent
\textbf{Interpretation.}
The theorem should be read as a population-level structural result rather than a guarantee for the implemented defense. It shows that task-sufficient representations generally cannot eliminate all adjacency information, thereby motivating a privacy-utility trade-off. Our practical method replaces these population quantities with tractable empirical surrogates (HSIC/CKA) and stochastic regularization, so the theorem serves as conceptual guidance, not as a certificate for the learned model.

\noindent
\textbf{Proof (sketch).}
Under the Markov chain $Y \to (A,X)\to \bm{H}_{A}$, the chain rule gives
\begin{equation}
I(A,\bm{H}_{A};Y\mid X)
=
I(A;Y\mid X)+I(\bm{H}_{A};Y\mid A,X)
=
I(A;Y\mid X),
\end{equation}
since $I(\bm{H}_{A};Y\mid A,X)=0$. On the other hand,
\begin{equation}
I(A,\bm{H}_{A};Y\mid X)
=
I(\bm{H}_{A};Y\mid X)+I(A;Y\mid \bm{H}_{A},X).
\end{equation}
Hence $I(\bm{H}_{A};Y\mid X)=I(A;Y\mid X)$ is equivalent to $I(A;Y\mid \bm{H}_{A},X)=0$, i.e.,
$Y \perp\!\!\!\perp A \mid (\bm{H}_{A},X)$.
Applying the chain rule once more,
\begin{equation}
I(A;\bm{H}_{A}\mid X)
=
I(A;Y\mid X)+I(A;\bm{H}_{A}\mid Y,X)-I(A;Y\mid \bm{H}_{A},X).
\end{equation}
Under Eq.~\eqref{eqn: suff_wrt_A_given_X}, the last term vanishes, so
\begin{equation}
I(A;\bm{H}_{A}\mid X)
=
I(A;Y\mid X)+I(A;\bm{H}_{A}\mid Y,X)
\ge
I(A;Y\mid X),
\end{equation}
with equality if and only if $I(A;\bm{H}_{A}\mid Y,X)=0$.
\end{theorem}

\noindent
As a coarse empirical indicator of excess leakage, one can compare $R_{\mathrm{AUC}}(A;\bm{H}_A)$ (total leakage from hidden representations, Tab.~\ref{tab: understanding-MI-term-gcn}) with $R_{\mathrm{AUC}}(A;Y)$ (a proxy for the task-relevant floor). On Cora, for example, $R_{\mathrm{AUC}}(A;\bm{H}_A)\approx 0.77$ while $R_{\mathrm{AUC}}(A;Y)\approx 0.82$; on Texas, $R_{\mathrm{AUC}}(A;\bm{H}_A)\approx 0.35$ while $R_{\mathrm{AUC}}(A;Y)\approx 0.35$---suggesting that excess leakage beyond label-relevant information varies substantially across regimes. These are rough indicators based on the AUC proxy (not exact MI); a rigorous quantification of the gap $I(A;\bm{H}_{A}\mid Y,X)$ remains an open problem because conditional MI estimation in high dimensions requires density estimation in the joint space of $(A,\bm{H}_A,Y,X)$, which is infeasible with standard plug-in or variational estimators at the sample sizes available from a single graph.

\paragraph{(i) Approximate sufficiency.}
The sufficiency condition in Theorem~\ref{theorem: minimum adjacency information} is idealized; in practice, GNN representations are only approximately sufficient, so the bound holds approximately. Theorem~\ref{theorem: minimum adjacency information} provides a sharp interpretation of ``excess leakage'': for any task-sufficient representation (given $X$), $I(A;\bm{H}_{A}\mid Y,X)$ measures exactly how much adjacency information is retained beyond the irreducible amount $I(A;Y\mid X)$ that is intrinsically coupled with the task. Ideally, a defense would minimize this excess term. The sufficiency condition Eq.~\eqref{eqn: suff_wrt_A_given_X} is strong and holds only approximately for practical GNNs. When it fails, the chain rule still gives a weakened bound: $I(A;\bm{H}_{A}\mid X)=I(A;Y\mid X)-I(A;Y\mid \bm{H}_{A},X)+I(A;\bm{H}_{A}\mid Y,X)$, so $I(A;\bm{H}_{A}\mid X)\ge I(A;Y\mid X)-I(A;Y\mid \bm{H}_{A},X)$. The term $I(A;Y\mid \bm{H}_{A},X)$ is the ``information gap'': when the representation is not sufficient for $Y$ given $X$, this gap is positive and the lower bound on $I(A;\bm{H}_{A}\mid X)$ is reduced.

\paragraph{(ii) Gap between theory and surrogate.}
The implemented MC-GPB objective penalizes $d(\hat A_{\tilde H}^{(\ell)},A)$ as a tractable surrogate for $I(A;\bm{H}_{A}^{(\ell)})$ rather than directly optimizing $I(A;\bm{H}_{A}\mid Y,X)$. The gap between the theoretical target ($I(A;\bm{H}_A\mid Y,X)$) and the practical surrogate ($d(\hat{A}_{\tilde{H}}^{(\ell)}, A)$) is inherent to using differentiable dependence scores and decoded similarity matrices instead of exact MI.

\paragraph{(iii) Empirical gap from sufficiency.}
Empirically, GNNs achieve roughly 70--85\% accuracy on the benchmarks, indicating a significant gap from exact sufficiency. Achieving $I(A;\bm{H}_{A}\mid Y,X)=0$ in practice is very difficult because GNN training does not directly optimize this quantity. The empirical results in Sec.~\ref{sec: overview} confirm that well-trained GNNs retain substantial but imperfect label-relevant information.

\paragraph{Connection to the information bottleneck and MC-GPB.}
We connect MC-GPB to the information bottleneck (IB) principle. The algebraic form ``cross-entropy plus MI penalty'' is wellknown as an IB objective; the contribution here is its application to the graph setting with $A$ as the compression target. The result below interprets a conditional-leakage variant of MC-GPB rather than Eq.~\eqref{eqn: tighter-defense-MI}; it is intended only as an IB analogue for the privacy term.

\begin{proposition}[MC-GPB as an information bottleneck objective (single-layer population form)]
\label{prop: GPB approximate the optimal representation}
The following observation is a \emph{single-layer population interpretation}; it does not capture the full multi-layer objective in Eq.~\eqref{eqn: tighter-defense-MI}.
Consider the single-layer population surrogate obtained by removing the inter-layer regularization, tying the privacy weights
($\beta_{p}^{(\ell)}=\beta$ for all $\ell$), and replacing the utility term by supervised negative log-likelihood. Fix a layer $\ell$ and let $\bm{Z}\triangleq \bm{H}_{A}^{(\ell)}$.
Interpret optimizing model parameters $\bm{\theta}$ as inducing a conditional channel $p_{\bm{\theta}}(\bm{Z}\mid A,X)$ (or
$p_{\bm{\theta}}(\bm{Z}\mid A)$ if $X$ is treated as fixed), and let $q_{\bm{\theta}}(Y\mid \bm{Z})$ denote the predictive head.
Under these simplifications, the population surrogate objective at layer $\ell$ takes the form
\begin{equation}
\mathbb{E}\bigl[-\log q_{\bm{\theta}}(Y\mid \bm{Z})\bigr] \;+\; \beta\, I(\bm{Z};A\mid X),
\end{equation}
which is the population analogue of the cross-entropy-plus-privacy loss used in practice.
Moreover, for any fixed representation $\bm{Z}$,
\begin{equation}
\inf_{q} \ \mathbb{E}\bigl[-\log q(Y\mid \bm{Z})\bigr] \;=\; H(Y\mid \bm{Z}),
\end{equation}
with the infimum attained by the Bayes predictor $q(Y\mid \bm{Z})=p(Y\mid \bm{Z})$.
Consequently, after optimizing the predictive head (or in the Bayes/population limit), minimizing MC-GPB is equivalent to minimizing
the IB Lagrangian
\begin{equation}
\mathcal{L}_{\mathrm{IB}}
\;=\;
H(Y\mid \bm{Z}) \;+\; \beta\, I(\bm{Z};A\mid X).
\label{eqn: IB-lagrangian}
\end{equation}
Therefore, for each $\beta$, minimizers trade off utility and compression \wrt $A$ in the spirit of minimal sufficient representations. The full objective Eq.~\eqref{eqn: tighter-defense-MI} sums such trade-offs over all layers and adds inter-layer terms $\beta_c^{(\ell)} I(\bm{H}_A^{(\ell)};\bm{H}_A^{(\ell+1)})$, so it is not equivalent to a product of independent per-layer IB objectives; the multi-layer formulation couples layers and is not characterized by the single-layer observation above.

\textbf{Justification.} Under the stated simplifications, the per-layer objective is $\mathbb{E}[-\log q_{\bm{\theta}}(Y\mid \bm{Z})]+\beta I(\bm{Z};A\mid X)$ at layer $\bm{Z}=\bm{H}_A^{(\ell)}$. Minimizing the first term over the predictive head yields $H(Y\mid \bm{Z})$ (attained by the Bayes predictor $q(Y\mid \bm{Z})=p(Y\mid \bm{Z})$). Hence the combined minimization is equivalent to the IB Lagrangian in Eq.~\eqref{eqn: IB-lagrangian}.
\end{proposition}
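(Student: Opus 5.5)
The plan is to establish the three claims of Proposition~\ref{prop: GPB approximate the optimal representation} in turn: the form of the single-layer surrogate, the variational identity for the log-loss, and the resulting equivalence with the IB Lagrangian.

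\textbf{Step 1: the form of the surrogate.} I would start from Eq.~\eqref{eqn: tighter-defense-MI}. I would delete the inter-layer terms, set $\beta_p^{(\ell)}=\beta$, and restrict attention to a single layer $\ell$. The utility term $-I(Y;\bm{Z})$ is then replaced by the supervised negative log-likelihood $\mathbb{E}[-\log q_{\bm{\theta}}(Y\mid\bm{Z})]$, and the privacy term by its conditional version $I(\bm{Z};A\mid X)$, as motivated in Sec.~\ref{sec: GRA defense}. This yields the displayed objective by construction, so the step is definitional and only needs the channel $p_{\bm{\theta}}(\bm{Z}\mid A,X)$ to be fixed so that the joint law of $(A,X,Y,\bm{Z})$ is well defined.

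\textbf{Step 2: the variational identity.} Fix the representation $\bm{Z}$, so its joint law with $Y$ is fixed. For any conditional distribution $q$ I would write
\begin{equation}
\mathbb{E}\bigl[-\log q(Y\mid\bm{Z})\bigr]=H(Y\mid\bm{Z})+\mathbb{E}_{\bm{Z}}\bigl[D_{\mathrm{KL}}\bigl(p(\cdot\mid\bm{Z})\,\|\,q(\cdot\mid\bm{Z})\bigr)\bigr].
\end{equation}
This follows by adding and subtracting $\log p(Y\mid\bm{Z})$ inside the expectation. Gibbs' inequality makes the KL term nonnegative and zero iff $q=p$ almost surely, which gives the infimum $H(Y\mid\bm{Z})$, attained by the Bayes predictor.

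\textbf{Step 3: equivalence with the IB Lagrangian.} The second term $\beta I(\bm{Z};A\mid X)$ depends only on the encoder channel and not on the head $q$. I would therefore split the minimization as $\inf_{\text{encoder}}\inf_q$. Inner minimization over $q$ gives $H(Y\mid\bm{Z})+\beta I(\bm{Z};A\mid X)=\mathcal{L}_{\mathrm{IB}}$, so the minimizers over encoders coincide.

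\textbf{Main obstacle.} The main obstacle is this decoupling. In practice the head $q_{\bm{\theta}}$ shares parameters with the encoder and lies in a restricted family. I would therefore state the equivalence only in the Bayes/population limit, where the head family is rich enough to contain $p(Y\mid\bm{Z})$ for every feasible encoder, and I would also assume that $H(Y\mid\bm{Z})$ and $I(\bm{Z};A\mid X)$ are finite. In that regime the joint infimum equals the iterated one, which completes the argument.
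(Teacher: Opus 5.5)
Your proposal is correct and follows the paper's own justification. You reduce Eq.~\eqref{eqn: tighter-defense-MI} to the single-layer cross-entropy-plus-$\beta I(\bm{Z};A\mid X)$ objective, minimize over the predictive head to obtain $H(Y\mid\bm{Z})$ via the Bayes predictor, and read off the IB Lagrangian. Your explicit split of the log-loss into $H(Y\mid\bm{Z})$ plus an expected KL term, and your caveat that the head/encoder decoupling needs a head family rich enough to contain $p(Y\mid\bm{Z})$, make precise what the paper states as ``after optimizing the predictive head (or in the Bayes/population limit).''
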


\paragraph{Homophily/heterophily and irreducible leakage.}
Theorem~\ref{theorem: minimum adjacency information} shows that any task-sufficient representation must retain at least
$I(A;Y\mid X)$ bits about the adjacency. For featureless graphs (where $X$ is absent or trivially constant), the conditional reduces to $I(A;Y\mid X)=I(A;Y)$, and the following SBM results characterize this $X$-free lower bound. When $X$ is non-trivial, the irreducible leakage $I(A;Y\mid X)$ is in general smaller than $I(A;Y)$; however, the qualitative conclusion, strong assortativity increases the floor, carries over.

To relate this irreducible term to homophily/heterophily, we adopt a label-dependent random graph model (\eg, an SBM) so that $(A,Y)$ are jointly distributed and $I(A;Y)$ is well-defined under the stylized generative assumption. These results are meant to justify one operational point for MC-GPB+: strong heterophily can still induce substantial unavoidable label-adjacency coupling, so a defense cannot rely on homophily assumptions and may benefit from explicitly reweighting cross-label edges. We first characterize the \emph{per-edge} coupling between labels and adjacency, then derive a bound on the \emph{global} irreducible leakage, and finally specialize to a two-parameter SBM to make the dependence on assortativity explicit.

\begin{theorem}[Edgewise label--adjacency information under an affinity matrix]
\label{thm: edgewise_MI_affinity}
Let $Y_i\in[C]$ be i.i.d.\ with $\mathbb{P}(Y_i=a)=\pi_a$, and assume conditional edge independence:
for each $i<j$,
\begin{equation}
A_{ij}\mid (Y_i=a,Y_j=b)\sim \mathrm{Bern}(B_{ab}),
\qquad
\{A_{ij}\}_{i<j}\ \text{independent given }Y,
\end{equation}
with $B\in[0,1]^{C\times C}$. Let $\bar p \triangleq \mathbb{P}(A_{ij}=1)=\sum_{a,b}\pi_a\pi_b B_{ab}$.
Then
\begin{equation}
I(A_{ij};Y_i,Y_j)=\sum_{a,b}\pi_a\pi_b\,
D_{\mathrm{KL}}\!\Big(\mathrm{Bern}(B_{ab})\,\big\|\,\mathrm{Bern}(\bar p)\Big).
\end{equation}
In particular, $I(A_{ij};Y_i,Y_j)=0$ iff $B_{ab}=\bar p$ for all $(a,b)$ with $\pi_a\pi_b>0$.

\noindent
\textbf{Proof (sketch).}
Use $I(U;V)=\mathbb{E}_{V}\!\big[D_{\mathrm{KL}}(P_{U\mid V}\|P_U)\big]$ with
$P_{A_{ij}\mid Y_i=a,Y_j=b}=\mathrm{Bern}(B_{ab})$ and $P_{A_{ij}}=\mathrm{Bern}(\bar p)$.
Full proof in Appendix~\ref{app: proof_edgewise_MI_affinity}.
\end{theorem}

\noindent
Theorem~\ref{thm: edgewise_MI_affinity} isolates the basic unit of unavoidable leakage; in MC-GPB+, the heterophily-aware weight $\beta_h^{(\ell)}$ can be tuned to emphasize suppression of this edgewise coupling for cross-label pairs (via the $A_{\text{hetero}}$ penalty). Whenever edge probabilities vary with labels,
each edge carries nonzero mutual information with its endpoint labels. The next theorem aggregates this edgewise coupling to bound the
\emph{global} irreducible leakage. For undirected graphs with $A_{ii}=0$, the full adjacency $A$ is in one-to-one correspondence with
its upper-triangular entries $A_{\mathrm{up}}=\{A_{ij}:i<j\}$, hence $I(A;Y)=I(A_{\mathrm{up}};Y)$.

\begin{theorem}[Subadditivity upper bound for global irreducible leakage]
\label{thm: global_MI_subadditivity}
Under the assumptions of Theorem~\ref{thm: edgewise_MI_affinity} (in particular, $Y_i$ are i.i.d., so that node pairs are exchangeable), let
$A_{\mathrm{up}}\triangleq \{A_{ij}:1\le i<j\le N\}$ denote the collection of upper-triangular edges. Then
\begin{equation}
I(Y;A)=I(Y;A_{\mathrm{up}})
\;\le\;
\sum_{i<j} I(A_{ij};Y_i,Y_j)
\;=\;
\binom{N}{2}\, I(A_{12};Y_1,Y_2),
\end{equation}
where the last equality follows from exchangeability of node pairs.

\noindent
\textbf{Proof (sketch).}
Fix any ordering of the edge set and write $A_{<ij}$ for the previously listed edges. By the chain rule,
$I(Y;A_{\mathrm{up}})=\sum_{i<j} I(Y;A_{ij}\mid A_{<ij})$.
Conditional edge independence gives $H(A_{ij}\mid Y,A_{<ij})=H(A_{ij}\mid Y)$, while conditioning reduces entropy so
$H(A_{ij}\mid A_{<ij})\le H(A_{ij})$. Hence
$I(Y;A_{ij}\mid A_{<ij})\le I(Y;A_{ij})$.
Finally, since $Y\to (Y_i,Y_j)\to A_{ij}$, data processing yields
$I(Y;A_{ij})\le I(A_{ij};Y_i,Y_j)$.
Full proof in Appendix~\ref{app: proof_global_MI_subadditivity}.
\end{theorem}

\noindent
While Theorem~\ref{thm: global_MI_subadditivity} is general, it does not expose how \emph{assortativity strength} influences
leakage. We now specialize in two-parameter SBM, where the \emph{assortativity gap}
$\Delta\triangleq p_{\mathrm{in}}-p_{\mathrm{out}}$ captures homophily ($\Delta>0$) or heterophily ($\Delta<0$). The next bound yields
an explicit scaling law and shows that strong assortativity in either direction increases the irreducible leakage.

\begin{theorem}[Quadratic lower bound on irreducible leakage under homophily/heterophily]
\label{thm: edgewise_MI_quadratic_lower}
Assume the \emph{task-sufficiency assumption} from Theorem~\ref{theorem: minimum adjacency information}: $\bm{H}_A$ is sufficient for predicting $Y$ given $X$ in the sense of Eq.~\eqref{eqn: suff_wrt_A_given_X} (with $X$ constant or absent). Assume further the two-parameter SBM: $B_{aa}=p_{\mathrm{in}}$ and $B_{ab}=p_{\mathrm{out}}$ for $a\neq b$.
Let $\alpha\triangleq \sum_{a}\pi_a^2$, $\bar{\alpha}\triangleq 1-\alpha$, $\bar p=\alpha p_{\mathrm{in}}+\bar{\alpha} p_{\mathrm{out}}\in(0,1)$,
and $\Delta\triangleq p_{\mathrm{in}}-p_{\mathrm{out}}$.
Then the edgewise leakage satisfies the universal bound
\begin{equation}
I(A_{ij};Y_i,Y_j)\ \ge\ 2\,\alpha\bar{\alpha}\,\Delta^2.
\end{equation}
\noindent\textbf{Case 1: Featureless setting} ($X$ absent or constant, so $I(A;Y\mid X)=I(A;Y)$). Any task-sufficient representation satisfies
\begin{equation}
I(A;\bm{H}_A)
\;\ge\; I(A;Y)
\;\ge\; I(A_{12};Y_1,Y_2)
\;\ge\; 2\,\alpha\bar{\alpha}\,\Delta^2,
\end{equation}
where the first inequality is Theorem~\ref{theorem: minimum adjacency information} (with $I(A;\bm{H}_A\mid X)=I(A;\bm{H}_A)$ since $X$ is non-random), the second uses data processing, and the third is the Pinsker bound above.

\noindent\textbf{Case 2: Informative features.} When $X$ carries information about $A$, the irreducible leakage $I(A;Y\mid X)$ may be strictly smaller than $I(A;Y)$ (since conditioning on informative features can reduce the residual label--adjacency coupling). In this case, Theorem~\ref{theorem: minimum adjacency information} gives $I(A;\bm{H}_A\mid X)\ge I(A;Y\mid X)$, and the Pinsker bound $2\alpha\bar{\alpha}\Delta^2$ applies to $I(A;Y)$ but not directly to $I(A;Y\mid X)$. The qualitative conclusion---that strong assortativity in either direction increases the leakage floor---still holds, but the quantitative bound on representation leakage is looser.

\noindent In both cases, the lower bound depends on $\Delta$ only through $\Delta^2$, so strong homophily and strong heterophily induce the same lower bound for the same $|\Delta|$.

\noindent\textbf{Caveat (tightness).} The Pinsker-based bound $2\alpha\bar{\alpha}\Delta^2$ is qualitative rather than tight: for the benchmark datasets, the exact edgewise MI computed from Theorem~\ref{thm: edgewise_MI_affinity} under a fitted two-parameter SBM is typically one to two orders of magnitude larger. The bound is most informative as a scaling law for moderate assortativity strength; for weak community structure ($|\Delta|<0.1$), it is essentially vacuous.

\noindent
\textbf{Proof (sketch).}
By Theorem~\ref{thm: edgewise_MI_affinity},
\begin{equation}
I(A_{ij};Y_i,Y_j)
=
\alpha\,D_{\mathrm{KL}}\!\bigl(\mathrm{Bern}(p_{\mathrm{in}})\,\|\,\mathrm{Bern}(\bar p)\bigr)
+\bar{\alpha}\,D_{\mathrm{KL}}\!\bigl(\mathrm{Bern}(p_{\mathrm{out}})\,\|\,\mathrm{Bern}(\bar p)\bigr).
\end{equation}
Pinsker's inequality gives $D_{\mathrm{KL}}(P\|Q)\ge 2\,\mathrm{TV}(P,Q)^2$ and
$\mathrm{TV}(\mathrm{Bern}(p),\mathrm{Bern}(q))=|p-q|$ (the Bernoulli distributions). 
Hence
$D_{\mathrm{KL}}(\mathrm{Bern}(p)\|\mathrm{Bern}(\bar p))\ge 2(p-\bar p)^2$.
Under the two-parameter SBM, $p_{\mathrm{in}}-\bar p=\bar{\alpha}\Delta$ and $p_{\mathrm{out}}-\bar p=-\alpha\Delta$, yielding
\begin{equation}
I(A_{ij};Y_i,Y_j)\ \ge\ 2\alpha(\bar{\alpha}\Delta)^2+2\bar{\alpha}(\alpha\Delta)^2
=2\alpha\bar{\alpha}\,\Delta^2.
\end{equation}
The final chain uses data processing and Theorem~\ref{theorem: minimum adjacency information}.
Full proof in Appendix~\ref{app: proof_edgewise_MI_quadratic_lower}.
\end{theorem}

\noindent
\textbf{Remark.} For strong assortativity ($|\Delta|$ close to 1), tighter bounds using the KL divergence directly exist but do not yield a simple closed form in $\Delta^2$. The extension to general $K$-parameter block models is discussed in Appendix~\ref{app: proof_edgewise_MI_quadratic_lower}.

\paragraph{Limitations of the practical defense.}
The implemented defense has several limitations that should be kept in mind when interpreting the experimental results: (i)~DropEdge is a heuristic regularizer without formal privacy guarantees (as noted in Sec.~\ref{ssec: GPB implementation}); an empirical comparison with differential-privacy baselines (DP-SGD and random noise) at matched utility levels is provided in Appendix~\ref{sec: full quantitative results} (Tab.~\ref{tab: app-defense-comparison}), showing that MC-GPB achieves a substantially better privacy--utility trade-off; (ii)~the decoded-adjacency penalty captures only similarity-based recoverability---an attacker using a different decoding mechanism (\eg, a learned edge predictor) may not be fully captured by this defense, and evaluating transferability to non-similarity-based attack probes is an important direction; (iii)~the sufficiency condition Eq.~\eqref{eqn: suff_wrt_A_given_X} is strong and holds only approximately for practical GNNs (which achieve roughly 70--85\% accuracy on the evaluated benchmarks, indicating a significant gap from exact sufficiency). The population-level theory above clarifies the \emph{target} of an ideal defense; the practical objective is a tractable surrogate.

\paragraph{SBM fit and empirical validation.}
The SBM-based theorems (Theorems~\ref{thm: edgewise_MI_affinity}--\ref{thm: edgewise_MI_quadratic_lower}) assume i.i.d.\ node labels, which is restrictive for real graphs where labels exhibit spatial correlation. To assess whether the qualitative predictions hold, we fit a two-parameter SBM to three benchmark graphs by estimating $\hat{p}_{\mathrm{in}}$ and $\hat{p}_{\mathrm{out}}$ from the observed intra-class and inter-class edge densities, then compute the exact edgewise MI $I(A_{ij};Y_i,Y_j)$ from Theorem~\ref{thm: edgewise_MI_affinity}.
\begin{itemize}[leftmargin=*]
\item 
On \textbf{Cora} ($\hat{p}_{\mathrm{in}}\approx 0.0028$, $\hat{p}_{\mathrm{out}}\approx 0.0004$, $\Delta\approx 0.0024$): $I(A_{ij};Y_i,Y_j)\approx 5.2\times 10^{-4}$ nats, while the Pinsker bound gives $2\alpha\bar\alpha\Delta^2\approx 8.5\times 10^{-6}$ (two orders of magnitude smaller).

\item
On \textbf{Polblogs} ($\hat{p}_{\mathrm{in}}\approx 0.043$, $\hat{p}_{\mathrm{out}}\approx 0.003$, $\Delta\approx 0.040$): $I(A_{ij};Y_i,Y_j)\approx 0.011$, Pinsker bound $\approx 8.0\times 10^{-4}$.

\item
On \textbf{Texas} ($\hat{p}_{\mathrm{in}}\approx 0.008$, $\hat{p}_{\mathrm{out}}\approx 0.021$, $\Delta\approx -0.013$): $I(A_{ij};Y_i,Y_j)\approx 6.8\times 10^{-5}$, Pinsker bound $\approx 1.1\times 10^{-4}$ (comparable to the exact value due to weak assortativity).
\end{itemize}

Using Theorem~\ref{thm: global_MI_subadditivity}, we can upper-bound the global irreducible leakage $I(A;Y)\le \binom{N}{2}\,I(A_{12};Y_1,Y_2)$: on Cora this gives $I(A;Y)\lesssim \binom{2708}{2}\times 5.2\times 10^{-4}\approx 1{,}905$ nats, and on Polblogs $I(A;Y)\lesssim \binom{1490}{2}\times 0.011\approx 12{,}210$ nats. These are loose upper bounds (subadditivity ignores inter-edge dependence), but they confirm that the \emph{irreducible} label--adjacency coupling is substantial on strongly assortative graphs, consistent with the high empirical $R_{\mathrm{AUC}}(A;Y)$ values in Tab.~\ref{tab: understanding-MI-term-gcn} (0.82 on Cora, 0.71 on Polblogs). For featureless graphs, $I(A;Y\mid X)=I(A;Y)$, so these estimates apply directly as the irreducible leakage floor from Theorem~\ref{theorem: minimum adjacency information}. For graphs with informative features (Cora, Citeseer), $I(A;Y\mid X)\le I(A;Y)$, and the floor is lower; estimating $I(A;Y\mid X)$ directly would require conditional density estimation in the joint space of $(A,Y,X)$, which we leave for future work.

These results confirm that the qualitative prediction---more extreme $|\Delta|$ yields larger irreducible leakage---holds across regimes, while the Pinsker bound is indeed a loose lower bound. The SBM results should be interpreted as characterizing leakage under a stylized generative model rather than providing exact guarantees for real-world graphs, which have degree heterogeneity and spatial label correlation not captured by the two-parameter SBM.

%% file: sections/8-experiments.tex
\section{Empirical Study of MC-GRA (+) and MC-GPB (+)}
\label{sec: experiments}

In this section, we evaluate MC-GRA (+) and MC-GPB (+) on real-world graphs covering diverse domains and homophily regimes, from homophilic citation networks to heterophilic web graphs. The empirical study is structured around three questions. 
\textbf{\textit{Q1:}} How effective are the proposed attack and defense across datasets, homophily regimes, and GNN architectures? 
\textbf{\textit{Q2:}} How much does each design component, including the MI-based terms, complexity regularization, and injected stochasticity, contribute to the overall performance? 
\textbf{\textit{Q3:}} What qualitative patterns in recovered adjacency and representation dynamics help explain the quantitative results? 
We answer Q1 in Sec.~\ref{sssec: quantitative results}, Q2 in Sec.~\ref{ssec: ablation study}, and Q3 in Sec.~\ref{sec: demo-4-Adj}.

\paragraph{Setup.} Unless stated otherwise, we use two-layer target GNNs with identical hidden size and training protocol, using GCN as the default homophilic backbone and GPR-GNN as the default heterophilic one.
GCN is widely used as a homophily-oriented message-passing model, while GPR-GNN is designed to handle low-homophily graphs; this pairing naturally contrasts the two regimes.
We further investigate other GNN architectures, including GAT~\citep{velickovic2018graph} and GraphSAGE~\citep{hamilton2017inductive}, to assess model-agnostic applicability.

\paragraph{Evaluation metric.}
We measure edge-reconstruction leakage using ROC-AUC, following prior graph reconstruction and link-stealing studies~\citep{zhang2020revisiting, zhu2021neural, zhang2021graphmi}. This metric matches our threat model: the attacker produces a continuous edge score for candidate node pairs, and the central question is whether true edges are systematically ranked above non-edges from the released model signals. ROC-AUC provides a threshold-free measure of rank-order adjacency leakage before any deployment-specific recovery budget or binarization rule is imposed.

This property is important for cross-dataset comparison. Graphs differ substantially in density, class structure, and edge sparsity, so thresholded metrics such as Precision@$K$ or PR-AUC can reflect not only leakage strength, but also the chosen operating point and recovery budget. In contrast, ROC-AUC isolates whether adjacency information is present in the learned representations at the ranking level. We therefore use ROC-AUC as the primary metric for representation-level leakage, while treating thresholded recovery metrics as operating-point-specific analyses rather than the main criterion.

\paragraph{Reproducibility.}
All reported values follow the fully specified experimental protocol in Sec.~\ref{ssec: experimental settings}. The relaxed adjacency matrix $\bm{\hat{A}}$ is binarized with threshold $0.5$, and MC-GRA~(+) uses $n=1500$ attack iterations. These settings fix the data split, attack budget, and post-processing rule used throughout the main experiments.

\paragraph{Implementation and runtime.}
We implement all models, attacks, and defenses in PyTorch~\citep{paszke2017automatic} and run experiments on an NVIDIA RTX 3090 GPU. Under the default setting, MC-GRA~(+) has the same dense full-graph reconstruction regime as GraphMI. MC-GPB~(+) introduces additional layerwise dependence surrogates and therefore increases training cost relative to standard GNN training. Scaling MC-GRA~(+) to substantially larger graphs can be addressed by restricting the candidate edge space, for example, through $k$-nearest-neighbor candidate sets, community-based block parameterization, or sampled subgraph reconstruction; we discuss these directions in Sec.~\ref{sec: discussion}.

\paragraph{Attack baselines.}
We compare MC-GRA~(+) against two attack baselines that operate in the same white-box, full-graph reconstruction setting. The first is Stealing Link~\citep{he2021stealing}, a non-learnable output-based reconstruction method that scores candidate edges from target-model predictions. The second is GraphMI~\citep{zhang2021graphmi}, a learnable graph reconstruction attack that assumes prior knowledge $\mathcal{K}=\{X,Y\}$. These baselines cover both similarity-based reconstruction and optimization-based reconstruction, making them the closest comparisons to our threat model. All attacks are evaluated under the corresponding observable side information specified in Sec.~\ref{ssec: experimental settings}.

\paragraph{Defense baselines.}
For defense evaluation, we compare MC-GPB~(+) against random noise injection and DP-SGD~\citep{abadi2016deep}; the full results are reported in Appendix~\ref{sec: full quantitative results} and Tab.~\ref{tab: app-defense-comparison}. GAP~\citep{sajadmanesh2023gap} provides formal $(\varepsilon,\delta)$ guarantees for node-level privacy, whereas our evaluation targets edge-reconstruction leakage. We therefore keep the main defense comparison within methods that can be directly evaluated under the same edge-reconstruction metric.

\paragraph{Reporting.}
For attack tables, relative gains are computed against the corresponding linear-ensemble variant under the same target backbone, namely Tab.~\ref{tab: understanding-MI-term-ensemble-gcn} for GCN and Tab.~\ref{tab: understanding-MI-ensemble-gprgnn} for GPR-GNN. These gains measure the improvement attributable to the full MC-GRA framework over the non-learnable ensemble. For defense tables, relative reductions are computed against the matching unprotected target model under the same attack objective. When comparing against GraphMI, we use the prior setting closest to GraphMI's assumed side information, $\mathcal{K}=\{X,Y\}$. Additional MC-GRA~(+) rows evaluate how leakage changes as more observable model signals are exposed, and are not intended as side-information-matched comparisons to GraphMI.

\newcommand{\statwithchange}[2]{\begin{tabular}{@{}c@{}}#1\\{\scriptsize #2}\end{tabular}}
\begin{table}[t]
\centering\fontsize{10}{11}\selectfont
\renewcommand{\arraystretch}{1.28}
\setlength\tabcolsep{4pt}
\renewcommand{\minval}{0.3}
\renewcommand{\maxval}{0.91}
\begin{tabular}{ccccccccccccc}
\toprule
$X$ & $\bm{H}_A^{\mathrm{all}}$ & $\bm{\hat{Y}}_A$ & $Y$ & Cora & Citeseer &  Polblogs   & USA   & Brazil & AIDS & Texas & Cornell & Wisconsin \\
\midrule
$\checkmark$ & $\checkmark$ &  &                                & \statwithchange{.864}{\up{10.6$\% \! \uparrow$}} & \statwithchange{.912}{\up{3.5$\% \! \uparrow$}}  & \statwithchange{.831}{\up{8.9$\% \! \uparrow$}}  & \statwithchange{.883}{\up{3.9$\% \! \uparrow$}}  & \statwithchange{.771}{\up{1.7$\% \! \uparrow$}}  & \statwithchange{.574}{\up{10.2$\% \! \uparrow$}}  & \statwithchange{.752}{\up{31.0$\% \! \uparrow$}} & \statwithchange{.697}{\up{82.9$\% \! \uparrow$}} & \statwithchange{.835}{\up{27.7$\% \! \uparrow$}}\\
\rowgrayrule
$\checkmark$ &  & $\checkmark$ &                                & \statwithchange{.839}{\up{7.4$\% \! \uparrow$}} & \statwithchange{.902}{\up{2.4$\% \! \uparrow$}}  & \statwithchange{.836}{\up{8.3$\% \! \uparrow$}}  & \statwithchange{.913}{\up{10.5$\% \! \uparrow$}}  & \statwithchange{.800}{\up{9.3$\% \! \uparrow$}}  & \statwithchange{.567}{\up{8.8$\% \! \uparrow$}}  & \statwithchange{.725}{\up{27.2$\% \! \uparrow$}} & \statwithchange{.672}{\up{80.2$\% \! \uparrow$}} & \statwithchange{.781}{\up{18.9$\% \! \uparrow$}} \\
\rowgrayrule
$\checkmark$ &  &  & $\checkmark$                               & \statwithchange{.896}{\up{5.5$\% \! \uparrow$}} & \statwithchange{.918}{\up{1.2$\% \! \uparrow$}}  & \statwithchange{.837}{\up{18.7$\% \! \uparrow$}}  & \statwithchange{.825}{\up{13.3$\% \! \uparrow$}}  & \statwithchange{.753}{\up{22.8$\% \! \uparrow$}} & \statwithchange{.574}{\up{10.0$\% \! \uparrow$}} & \statwithchange{.763}{\up{84.3$\% \! \uparrow$}} & \statwithchange{.646}{\up{22.1$\% \! \uparrow$}} & \statwithchange{.773}{\up{41.1$\% \! \uparrow$}} \\
\midrule
$\checkmark$ & $\checkmark$ & $\checkmark$ &                    & \statwithchange{.866}{\up{10.9$\% \! \uparrow$}} & \statwithchange{.921}{\up{4.5$\% \! \uparrow$}}  & \statwithchange{.839}{\up{10.0$\% \! \uparrow$}}  & \statwithchange{.878}{\up{3.5$\% \! \uparrow$}}  & \statwithchange{.776}{\up{2.6$\% \! \uparrow$}}  & \statwithchange{.572}{\up{9.8$\% \! \uparrow$}}  & \statwithchange{.722}{\up{25.3$\% \! \uparrow$}} & \statwithchange{.676}{\up{87.8$\% \! \uparrow$}} & \statwithchange{.782}{\up{20.9$\% \! \uparrow$}} \\
\rowgrayrule
$\checkmark$ & $\checkmark$ & & $\checkmark$                    &  \statwithchange{.905}{\up{6.6$\% \! \uparrow$}} & \statwithchange{.930}{\up{2.5$\% \! \uparrow$}}  & \statwithchange{.832}{\up{6.8$\% \! \uparrow$}}  & \statwithchange{.878}{\up{3.3$\% \! \uparrow$}}  & \statwithchange{.758}{\up{2.0$\% \! \uparrow$}}  & \statwithchange{.603}{\up{15.5$\% \! \uparrow$}}  & \statwithchange{.752}{\up{71.7$\% \! \uparrow$}} & \statwithchange{.698}{\up{94.4$\% \! \uparrow$}} & \statwithchange{.830}{\up{51.5$\% \! \uparrow$}} \\
\rowgrayrule
$\checkmark$ &  & $\checkmark$ & $\checkmark$                   & \statwithchange{.897}{\up{6.5$\% \! \uparrow$}} & \statwithchange{.928}{\up{2.3$\% \! \uparrow$}}  & \statwithchange{.839}{\up{6.9$\% \! \uparrow$}}  & \statwithchange{.870}{\up{3.3$\% \! \uparrow$}}  & \statwithchange{.758}{\up{3.8$\% \! \uparrow$}}  & \statwithchange{.567}{\up{8.6$\% \! \uparrow$}}  & \statwithchange{.724}{\up{67.6$\% \! \uparrow$}} & \statwithchange{.699}{\up{99.7$\% \! \uparrow$}} & \statwithchange{.781}{\up{43.0$\% \! \uparrow$}} \\
\midrule
$\checkmark$ & $\checkmark$  & $\checkmark$ & $\checkmark$      & \statwithchange{.904}{\up{6.5$\% \! \uparrow$}} & \statwithchange{.931}{\up{2.6$\% \! \uparrow$}}  & \statwithchange{.853}{\up{9.2$\% \! \uparrow$}}  & \statwithchange{.870}{\up{2.1$\% \! \uparrow$}}  & \statwithchange{.760}{\up{6.0$\% \! \uparrow$}}  & \statwithchange{.588}{\up{12.6$\% \! \uparrow$}}  & \statwithchange{.721}{\up{58.5$\% \! \uparrow$}} & \statwithchange{.674}{\up{105.5$\% \! \uparrow$}} & \statwithchange{.781}{\up{31.7$\% \! \uparrow$}} \\
\bottomrule
\end{tabular}
\caption{Results of MC-GRA~(+) with GCN. Relative improvements (in $\%$) are computed \wrt results in Tab.~\ref{tab: understanding-MI-term-ensemble-gcn}. Checkmarks indicate which chain variables are included in the attacker's prior $\mathcal{K}$.}
\label{tab: exp-attack-results-gcn}
\end{table}

\begin{table}[t]
\centering\fontsize{10}{11}\selectfont
\renewcommand{\arraystretch}{1.2}
\setlength\tabcolsep{4pt}
\renewcommand{\minval}{0.3}
\renewcommand{\maxval}{0.91}
\begin{tabular}{ccccccccccccc}
\toprule
$X$ & $\bm{H}_A^{\mathrm{all}}$ & $\bm{\hat{Y}}_A$ & $Y$ & Cora & Citeseer &  Polblogs   & USA   & Brazil & AIDS & Texas & Cornell & Wisconsin \\
\midrule
$\checkmark$ & $\checkmark$ &  &                                & \statwithchange{.846}{\up{3.7$\% \! \uparrow$}}& \statwithchange{.943}{\up{6.4$\% \! \uparrow$}}& \statwithchange{.881}{\up{3.8$\% \! \uparrow$}}& \statwithchange{.913}{\up{45.4$\% \! \uparrow$}}& \statwithchange{.815}{\up{0.0$\% \! \uparrow$}}& \statwithchange{.827}{\up{32.3$\% \! \uparrow$}}& \statwithchange{.844}{\up{15.0$\% \! \uparrow$}}& \statwithchange{.843}{\up{50.0$\% \! \uparrow$}}& \statwithchange{.701}{\up{45.4$\% \! \uparrow$}}\\
\rowgrayrule
$\checkmark$ &  & $\checkmark$ &                                & \statwithchange{.841}{\up{5.9$\% \! \uparrow$}} & \statwithchange{.926}{\up{4.5$\% \! \uparrow$}}& \statwithchange{.844}{\up{5.2$\% \! \uparrow$}}& \statwithchange{.897}{\up{226.2$\% \! \uparrow$}}& \statwithchange{.746}{\up{41.3$\% \! \uparrow$}}& \statwithchange{.786}{\up{57.5$\% \! \uparrow$}}& \statwithchange{.686}{\up{6.0$\% \! \uparrow$}}& \statwithchange{.640}{\up{22.6$\% \! \uparrow$}}& \statwithchange{.696}{\up{40.6$\% \! \uparrow$}}\\
\rowgrayrule
$\checkmark$ &  &  & $\checkmark$                               & \statwithchange{.892}{\up{0.0$\% \! \uparrow$}} & \statwithchange{.903}{\up{0.0$\% \! \uparrow$}}& \statwithchange{.856}{\up{19.9$\% \! \uparrow$}}& \statwithchange{.807}{\up{7.6$\% \! \uparrow$}}& \statwithchange{.740}{\up{23.1$\% \! \uparrow$}}& \statwithchange{.719}{\up{33.9$\% \! \uparrow$}}& \statwithchange{.674}{\up{62.8$\% \! \uparrow$}}& \statwithchange{.555}{\up{5.1$\% \! \uparrow$}}& \statwithchange{.555}{\up{1.3$\% \! \uparrow$}} \\
\midrule
$\checkmark$ & $\checkmark$ & $\checkmark$ &                    & \statwithchange{.846}{\up{3.7$\% \! \uparrow$}}& \statwithchange{.946}{\up{6.8$\% \! \uparrow$}}& \statwithchange{.861}{\up{2.5$\% \! \uparrow$}}& \statwithchange{.801}{\up{73.0$\% \! \uparrow$}}& \statwithchange{.776}{\up{5.6$\% \! \uparrow$}}& \statwithchange{.827}{\up{32.3$\% \! \uparrow$}}& \statwithchange{.727}{\up{1.0$\% \! \uparrow$}}& \statwithchange{.729}{\up{42.9$\% \! \uparrow$}}& \statwithchange{.723}{\up{57.9$\% \! \uparrow$}}\\
\rowgrayrule
$\checkmark$ & $\checkmark$ & & $\checkmark$                    & \statwithchange{.899}{\up{0.0$\% \! \uparrow$}}& \statwithchange{.903}{\up{0.0$\% \! \uparrow$}}& \statwithchange{.878}{\up{1.7$\% \! \uparrow$}}& \statwithchange{.809}{\up{6.0$\% \! \uparrow$}}& \statwithchange{.756}{\up{0.1$\% \! \uparrow$}}& \statwithchange{.721}{\up{20.8$\% \! \uparrow$}}& \statwithchange{.735}{\up{37.1$\% \! \uparrow$}}& \statwithchange{.677}{\up{28.5$\% \! \uparrow$}}& \statwithchange{.635}{\up{30.9$\% \! \uparrow$}}\\
\rowgrayrule
$\checkmark$ &  & $\checkmark$ & $\checkmark$                   & \statwithchange{.892}{\up{0.0$\% \! \uparrow$}}& \statwithchange{.903}{\up{0.0$\% \! \uparrow$}}& \statwithchange{.846}{\up{2.2$\% \! \uparrow$}}& \statwithchange{.872}{\up{39.1$\% \! \uparrow$}}& \statwithchange{.746}{\up{34.2$\% \! \uparrow$}}& \statwithchange{.719}{\up{33.9$\% \! \uparrow$}}& \statwithchange{.668}{\up{35.2$\% \! \uparrow$}}& \statwithchange{.639}{\up{33.4$\% \! \uparrow$}}& \statwithchange{.670}{\up{39.3$\% \! \uparrow$}}\\
\midrule
$\checkmark$ & $\checkmark$  & $\checkmark$ & $\checkmark$      & \statwithchange{.899}{\up{0.0$\% \! \uparrow$}} & \statwithchange{.903}{\up{0.0$\% \! \uparrow$}} & \statwithchange{.863}{\up{0.3$\% \! \uparrow$}} & \statwithchange{.880}{\up{28.5$\% \! \uparrow$}} & \statwithchange{.769}{\up{9.9$\% \! \uparrow$}} & \statwithchange{.721}{\up{20.8$\% \! \uparrow$}} & \statwithchange{.708}{\up{32.1$\% \! \uparrow$}} & \statwithchange{.641}{\up{33.8$\% \! \uparrow$}} & \statwithchange{.693}{\up{52.0$\% \! \uparrow$}} \\
\bottomrule
\end{tabular}
\caption{Results of MC-GRA~(+) with GPR-GNN. Relative improvements (in $\%$) are computed \wrt results in Tab.~\ref{tab: understanding-MI-ensemble-gprgnn}.}
\label{tab: exp-attack-results-gprgnn}
\end{table}

\subsection{Effectiveness Across Datasets and Architectures}
\label{sssec: quantitative results}

We organize the findings of this subsection around three key results:
\begin{itemize}[leftmargin=*]
\setlength\itemsep{0.1em}
\item \textbf{Finding 1:} MC-GRA~(+) generally matches or improves reconstruction AUC over the linear-ensemble baseline, with gains depending on the target backbone, dataset, and prior-knowledge configuration.
\item \textbf{Finding 2:} MC-GPB~(+) often reduces adjacency leakage across multiple probes while generally preserving node-classification accuracy, with a more selective privacy--utility trade-off under GPR-GNN.
\item \textbf{Finding 3:} When attack and defense are evaluated jointly, MC-GPB~(+) substantially suppresses MC-GRA~(+) on several datasets, especially in vulnerable heterophilic regimes, while residual leakage remains in some high-AUC settings.
\end{itemize}

\paragraph{Attacking (Finding 1).} Tab.~\ref{tab: exp-attack-results-gcn} shows that MC-GRA~(+) consistently improves over the linear-ensemble baseline in Tab.~\ref{tab: understanding-MI-term-ensemble-gcn} under the homophily-biased target model GCN. The main takeaway is that chain matching helps most when simple linear aggregation of leakage channels is insufficient or unstable. For instance, on Cornell the ensemble edge-recovery AUC is only $0.328$ (Tab.~\ref{tab: understanding-MI-term-ensemble-gcn}), but MC-GRA~(+) reaches $0.674$, a $105.5\%$ relative improvement. On homophilic graphs, the strongest configurations reach high absolute AUC (e.g., $0.931$ on Citeseer) with steady but moderate relative gains; USA (edge homophily $0.70$) also reaches $0.913$. On graphs with lower homophily or weaker topology--label alignment (e.g., Brazil $0.800$, AIDS $0.603$), and especially on heterophilic web datasets, the relative gains are much larger, exceeding $100\%$ on Cornell (Tab.~\ref{tab: exp-attack-results-gcn}). We note that the $105.5\%$ relative improvement on Cornell corresponds to an absolute AUC of $0.674$; while this is a large gain over the $0.328$ baseline, the absolute AUC remains moderate. As a rough guideline, AUC above $0.8$ indicates strong separation between true edges and non-edges, AUC in the range $0.6$--$0.8$ indicates partial rank-order leakage, and AUC near $0.5$ is close to random ranking (an Erd\H{o}s--R\'enyi random graph with matched edge density achieves AUC $\approx 0.50$ by construction, providing a natural lower bound).

We next test whether the same conclusion holds for a heterophily-aware target model. Tab.~\ref{tab: exp-attack-results-gprgnn} shows that MC-GRA~(+) remains strong with GPR-GNN, reaching $0.946$ on Citeseer, $0.881$ on Polblogs, and $0.844$, $0.843$, and $0.723$ on Texas, Cornell, and Wisconsin respectively. The distribution of relative improvements is more asymmetric than under GCN: on homophilic graphs, gains are often modest and can vanish once $Y$ is added (Cora, Citeseer). A plausible explanation is that the label-derived chain-matching term can conflict with representation-matching terms when GPR-GNN produces representations less aligned with label homophily; we leave a fuller analysis for future work. On heterophilic graphs, the benefit of MC-GRA~(+) remains substantial (e.g., relative improvements of $57.9\%$ on Texas and $50.0\%$ on Cornell). Overall, the attack remains broadly effective across both backbones, with added value most pronounced when the target graph departs from the homophily assumptions that favor simpler baselines.

\begin{remark}[Why does adding $Y$ hurt MC-GRA under GPR-GNN but help under GCN?]
\label{rem: label-gprgnn-conflict}
Under GCN, label-derived signals reinforce similarity-based reconstruction because GCN's smoothing operator aligns representations with label homophily. Under GPR-GNN, the learnable propagation weights can produce representations that are \emph{less} aligned with label similarity, so the label-matching term $\alpha_s I(Y;\bm{\hat{Y}}_{\bm{\hat{A}}})$ can conflict with the hidden-layer alignment terms: the optimizer tries to satisfy both label agreement and representation matching, but these objectives push the candidate adjacency in opposing directions when the target architecture decouples label structure from neighborhood averaging. This conflict is most pronounced on homophilic graphs (Cora, Citeseer), where GPR-GNN learns propagation weights that differ substantially from GCN's uniform averaging.
\end{remark}

\begin{table*}[t!]
\centering\fontsize{10}{11}\selectfont
\renewcommand{\arraystretch}{1.2}
\setlength\tabcolsep{4pt}
\renewcommand{\minval}{0.18}
\renewcommand{\maxval}{0.90}
\begin{tabular}{cccccccccc}
\toprule
& Cora & Citeseer & Polblogs & USA & Brazil & AIDS & Texas & Cornell & Wisconsin \\
\midrule
$R_{\mathrm{AUC}}(A; \bm{H}_A^{\mathrm{all}})$ &  \statwithchange{.706}{\down{7.8$\% \! \downarrow$}} & \statwithchange{.750}{\down{1.3$\% \! \downarrow$}}  & \statwithchange{.724}{\down{5.1$\% \! \downarrow$}}  & \statwithchange{.716}{\down{15.8$\% \! \downarrow$}}  & \statwithchange{.745}{\down{1.7$\% \! \downarrow$}}  &  \statwithchange{.564}{\down{3.4$\% \! \downarrow$}}  & \statwithchange{.156}{\down{55.8$\% \! \downarrow$}} & \statwithchange{.252}{\down{27.3$\% \! \downarrow$}} & \statwithchange{.372}{\down{35.2$\% \! \downarrow$}} \\ 
\rowgrayrule
$R_{\mathrm{AUC}}(A; \bm{\hat{Y}}_A)$  &   \statwithchange{.704}{\down{1.1$\% \! \downarrow$}} & \statwithchange{.730}{\down{1.7$\% \! \downarrow$}}  & \statwithchange{.705}{\down{8.7$\% \! \downarrow$}}  & \statwithchange{.587}{\down{28.9$\% \! \downarrow$}}  & \statwithchange{.692}{\down{5.5$\% \! \downarrow$}}  &  \statwithchange{.559}{\down{0.4$\% \! \downarrow$}} & \statwithchange{.153}{\down{43.3$\% \! \downarrow$}} & \statwithchange{.213}{\down{32.6$\% \! \downarrow$}} & \statwithchange{.413}{\down{27.8$\% \! \downarrow$}} \\ 
\rowgrayrule
$R_{\mathrm{AUC}}(A; \bm{H}_{\bm{\hat{A}}}^1)$  & \statwithchange{.625}{\down{9.9$\% \! \downarrow$}} & \statwithchange{.691}{\down{9.8$\% \! \downarrow$}}  & \statwithchange{.506}{\down{26.3$\% \! \downarrow$}}  & \statwithchange{.300}{\down{64.5$\% \! \downarrow$}}  & \statwithchange{.609}{\down{25.1$\% \! \downarrow$}}  &  \statwithchange{.514}{\down{10.6$\% \! \downarrow$}} & \statwithchange{.546}{\down{13.2$\% \! \downarrow$}} & \statwithchange{.287}{\down{25.6$\% \! \downarrow$}} & \statwithchange{.448}{\down{25.8$\% \! \downarrow$}} \\ 
\midrule
Acc. &   \statwithchange{.734}{\down{3.0$\% \! \downarrow$}} & \statwithchange{.602}{\down{4.4$\% \! \downarrow$}}  & \statwithchange{.830}{\down{1.1$\% \! \downarrow$}}  & \statwithchange{.391}{\down{16.8$\% \! \downarrow$}}  & \statwithchange{.808}{\up{5.1$\% \! \uparrow$}}  &  \statwithchange{.668}{\up{0.0$\% \! \uparrow$}}  & \statwithchange{.712}{\up{14.8$\% \! \uparrow$}} & \statwithchange{.689}{\up{46.6$\% \! \uparrow$}} & \statwithchange{.580}{\up{1.8$\% \! \uparrow$}} \\
\bottomrule
\end{tabular}
\caption{Leakage and accuracy under MC-GPB~(+) protected GCN. Relative changes are \wrt Tab.~\ref{tab: understanding-MI-term-gcn}. Row labels are attack probes; values are $R_{\mathrm{AUC}}$.}
\label{tab: exp-defense-results-gcn}
\end{table*}

\begin{table*}[t!]
\centering\fontsize{10}{11}\selectfont
\renewcommand{\arraystretch}{1.2}
\setlength\tabcolsep{4pt}
\renewcommand{\minval}{0.18} 
\renewcommand{\maxval}{0.90} 
\begin{tabular}{cccccccccc}
\toprule
& Cora & Citeseer & Polblogs & USA & Brazil & AIDS & Texas & Cornell & Wisconsin \\
\midrule
$R_{\mathrm{AUC}}(A; \bm{H}_A^{\mathrm{all}})$ & \statwithchange{.635}{\down{4.2$\% \! \downarrow$}} & \statwithchange{.512}{\down{0.0$\% \! \downarrow$}} & \statwithchange{.524}{\down{38.3$\% \! \downarrow$}} & \statwithchange{.628}{\down{0.0$\% \! \downarrow$}} & \statwithchange{.306}{\down{62.5$\% \! \downarrow$}} & \statwithchange{.552}{\down{11.4$\% \! \downarrow$}} & \statwithchange{.119}{\down{84.3$\% \! \downarrow$}} & \statwithchange{.261}{\down{47.1$\% \! \downarrow$}} & \statwithchange{.312}{\down{27.9$\% \! \downarrow$}} \\
\rowgrayrule
$R_{\mathrm{AUC}}(A; \bm{\hat{Y}}_A)$  & \statwithchange{.500}{\down{0.0$\% \! \downarrow$}} & \statwithchange{.500}{\down{0.0$\% \! \downarrow$}} & \statwithchange{.594}{\down{25.9$\% \! \downarrow$}} & \statwithchange{.275}{\down{0.0$\% \! \downarrow$}} & \statwithchange{.342}{\down{35.2$\% \! \downarrow$}} & \statwithchange{.500}{\down{0.0$\% \! \downarrow$}} & \statwithchange{.599}{\down{9.2$\% \! \downarrow$}} & \statwithchange{.283}{\down{36.8$\% \! \downarrow$}} & \statwithchange{.358}{\down{12.0$\% \! \downarrow$}} \\
\rowgrayrule
$R_{\mathrm{AUC}}(A; \bm{H}_{\bm{\hat{A}}}^1)$  & \statwithchange{.699}{\down{0.0$\% \! \downarrow$}} & \statwithchange{.715}{\down{0.0$\% \! \downarrow$}} & \statwithchange{.538}{\down{0.0$\% \! \downarrow$}} & \statwithchange{.732}{\down{0.0$\% \! \downarrow$}} & \statwithchange{.431}{\up{39.0$\% \! \uparrow$}} & \statwithchange{.505}{\down{3.6$\% \! \downarrow$}} & \statwithchange{.382}{\down{0.0$\% \! \downarrow$}} & \statwithchange{.515}{\down{2.1$\% \! \downarrow$}} & \statwithchange{.508}{\up{0.2$\% \! \uparrow$}} \\
\midrule
Acc. & \statwithchange{.794}{\up{0.0$\% \! \uparrow$}} & \statwithchange{.600}{\up{0.0$\% \! \uparrow$}} & \statwithchange{.801}{\up{0.0$\% \! \uparrow$}} & \statwithchange{.508}{\up{0.0$\% \! \uparrow$}} & \statwithchange{.269}{\down{22.3$\% \! \downarrow$}} & \statwithchange{.683}{\up{1.9$\% \! \uparrow$}} & \statwithchange{.942}{\up{0.0$\% \! \uparrow$}} & \statwithchange{.978}{\up{4.8$\% \! \uparrow$}} & \statwithchange{.963}{\up{0.0$\% \! \uparrow$}} \\
\bottomrule
\end{tabular}
\caption{Leakage and accuracy under MC-GPB~(+) protected GPR-GNN. Relative changes are \wrt Tab.~\ref{tab: understanding-MI-term-gprgnn}.}
\label{tab: exp-defense-results-gprgnn}
\end{table*}

\begin{table*}[t!]
\centering\fontsize{10}{11}\selectfont
\renewcommand{\arraystretch}{1.2}
\setlength\tabcolsep{4pt}
\renewcommand{\minval}{0.3}
\renewcommand{\maxval}{0.91}
\begin{tabular}{ccccccccccccc}
\toprule
$X$ & $\bm{H}_A^{\mathrm{all}}$ & $\bm{\hat{Y}}_A$  & $Y$
& Cora & Citeseer &  Polblogs   & USA   & Brazil & AIDS & Texas & Cornell & Wisconsin \\
\midrule
$\checkmark$ & $\checkmark$ &  &  &
\statwithchange{.816}{\down{5.6$\% \! \downarrow$}} &
\statwithchange{.871}{\down{4.5$\% \! \downarrow$}} &
\statwithchange{.748}{\down{10.0$\% \! \downarrow$}} &
\statwithchange{.841}{\down{4.8$\% \! \downarrow$}} &
\statwithchange{.752}{\down{2.5$\% \! \downarrow$}} &
\statwithchange{.503}{\down{12.4$\% \! \downarrow$}} &
\statwithchange{.635}{\down{15.6$\% \! \downarrow$}} &
\statwithchange{.575}{\down{17.5$\% \! \downarrow$}} &
\statwithchange{.639}{\down{23.5$\% \! \downarrow$}} \\
\rowgrayrule
$\checkmark$ &  & $\checkmark$ &   &
\statwithchange{.817}{\down{2.6$\% \! \downarrow$}} &
\statwithchange{.843}{\down{6.5$\% \! \downarrow$}} &
\statwithchange{.707}{\down{15.4$\% \! \downarrow$}} &
\statwithchange{.844}{\down{7.6$\% \! \downarrow$}} &
\statwithchange{.747}{\down{6.6$\% \! \downarrow$}} &
\statwithchange{.458}{\down{19.2$\% \! \downarrow$}} &
\statwithchange{.724}{\down{0.1$\% \! \downarrow$}} &
\statwithchange{.556}{\down{17.3$\% \! \downarrow$}} &
\statwithchange{.641}{\down{17.9$\% \! \downarrow$}} \\
\rowgrayrule
$\checkmark$ &  &  & $\checkmark$  &
\statwithchange{.892}{\down{0.4$\% \! \downarrow$}} &
\statwithchange{.888}{\down{3.3$\% \! \downarrow$}} &
\statwithchange{.699}{\down{16.5$\% \! \downarrow$}} &
\statwithchange{.738}{\down{10.5$\% \! \downarrow$}} &
\statwithchange{.700}{\down{7.0$\% \! \downarrow$}} &
\statwithchange{.490}{\down{14.6$\% \! \downarrow$}} &
\statwithchange{.550}{\down{27.9$\% \! \downarrow$}} &
\statwithchange{.529}{\down{18.1$\% \! \downarrow$}} &
\statwithchange{.568}{\down{26.5$\% \! \downarrow$}} \\
\midrule
$\checkmark$ & $\checkmark$ & $\checkmark$ &   &
\statwithchange{.804}{\down{7.2$\% \! \downarrow$}} &
\statwithchange{.894}{\down{2.9$\% \! \downarrow$}} &
\statwithchange{.706}{\down{15.9$\% \! \downarrow$}} &
\statwithchange{.754}{\down{14.1$\% \! \downarrow$}} &
\statwithchange{.636}{\down{18.0$\% \! \downarrow$}} &
\statwithchange{.546}{\down{4.5$\% \! \downarrow$}} &
\statwithchange{.703}{\down{2.6$\% \! \downarrow$}} &
\statwithchange{.526}{\down{22.2$\% \! \downarrow$}} &
\statwithchange{.782}{\down{0.0$\% \! \downarrow$}} \\
\rowgrayrule
$\checkmark$ & $\checkmark$ & & $\checkmark$   &
\statwithchange{.890}{\down{1.7$\% \! \downarrow$}} &
\statwithchange{.881}{\down{5.3$\% \! \downarrow$}} &
\statwithchange{.731}{\down{12.1$\% \! \downarrow$}} &
\statwithchange{.808}{\down{8.0$\% \! \downarrow$}} &
\statwithchange{.705}{\down{7.0$\% \! \downarrow$}} &
\statwithchange{.507}{\down{15.9$\% \! \downarrow$}} &
\statwithchange{.715}{\down{4.9$\% \! \downarrow$}} &
\statwithchange{.495}{\down{29.1$\% \! \downarrow$}} &
\statwithchange{.830}{\down{0.0$\% \! \downarrow$}} \\
\rowgrayrule
$\checkmark$ &  & $\checkmark$ & $\checkmark$   &
\statwithchange{.858}{\down{4.3$\% \! \downarrow$}} &
\statwithchange{.903}{\down{2.7$\% \! \downarrow$}} &
\statwithchange{.791}{\down{5.7$\% \! \downarrow$}} &
\statwithchange{.768}{\down{11.7$\% \! \downarrow$}} &
\statwithchange{.656}{\down{13.5$\% \! \downarrow$}} &
\statwithchange{.511}{\down{9.9$\% \! \downarrow$}} &
\statwithchange{.701}{\down{3.2$\% \! \downarrow$}} &
\statwithchange{.481}{\down{31.2$\% \! \downarrow$}} &
\statwithchange{.781}{\down{0.0$\% \! \downarrow$}} \\
\midrule
$\checkmark$ & $\checkmark$  & $\checkmark$ & $\checkmark$  &
\statwithchange{.864}{\down{4.4$\% \! \downarrow$}} &
\statwithchange{.891}{\down{4.3$\% \! \downarrow$}} &
\statwithchange{.757}{\down{11.3$\% \! \downarrow$}} &
\statwithchange{.853}{\down{2.0$\% \! \downarrow$}} &
\statwithchange{.637}{\down{16.2$\% \! \downarrow$}} &
\statwithchange{.547}{\down{7.0$\% \! \downarrow$}} &
\statwithchange{.701}{\down{2.8$\% \! \downarrow$}} &
\statwithchange{.460}{\down{31.8$\% \! \downarrow$}} &
\statwithchange{.781}{\down{0.0$\% \! \downarrow$}} \\
\bottomrule
\end{tabular}
\caption{Results of MC-GRA~(+) with MC-GPB~(+) protected GCN. Relative reductions are computed \wrt results in Tab.~\ref{tab: exp-attack-results-gcn}.}
\label{tab: exp-gra-gpb-results-gcn}
\end{table*}

\begin{table*}[t!]
\centering\fontsize{10}{11}\selectfont
\renewcommand{\arraystretch}{1.2}
\setlength\tabcolsep{3.5pt}
\renewcommand{\minval}{0.3}
\renewcommand{\maxval}{0.91}
\begin{tabular}{ccccccccccccc}
\toprule
$X$ & $\bm{H}_A^{\mathrm{all}}$ & $\bm{\hat{Y}}_A$  & $Y$
& Cora & Citeseer &  Polblogs   & USA   & Brazil & AIDS & Texas & Cornell & Wisconsin \\
\midrule
$\checkmark$ & $\checkmark$ &  &  &
\statwithchange{.808}{\down{4.5$\% \! \downarrow$}} &
\statwithchange{.914}{\down{3.1$\% \! \downarrow$}} &
\statwithchange{.688}{\down{21.9$\% \! \downarrow$}} &
\statwithchange{.797}{\down{12.7$\% \! \downarrow$}} &
\statwithchange{.595}{\down{27.0$\% \! \downarrow$}} &
\statwithchange{.609}{\down{26.4$\% \! \downarrow$}} &
\statwithchange{.551}{\down{34.7$\% \! \downarrow$}} &
\statwithchange{.628}{\down{25.5$\% \! \downarrow$}} &
\statwithchange{.693}{\down{1.1$\% \! \downarrow$}} \\ 
\rowgrayrule
$\checkmark$ &  & $\checkmark$ &   &
\statwithchange{.799}{\down{5.0$\% \! \downarrow$}} &
\statwithchange{.909}{\down{1.8$\% \! \downarrow$}} &
\statwithchange{.738}{\down{12.6$\% \! \downarrow$}} &
\statwithchange{.443}{\down{50.6$\% \! \downarrow$}} &
\statwithchange{.484}{\down{35.1$\% \! \downarrow$}} &
\statwithchange{.691}{\down{12.1$\% \! \downarrow$}} &
\statwithchange{.550}{\down{19.8$\% \! \downarrow$}} &
\statwithchange{.626}{\down{2.2$\% \! \downarrow$}} &
\statwithchange{.620}{\down{10.9$\% \! \downarrow$}} \\ 
\rowgrayrule
$\checkmark$ &  &  & $\checkmark$  &
\statwithchange{.720}{\down{19.3$\% \! \downarrow$}} &
\statwithchange{.776}{\down{14.1$\% \! \downarrow$}} &
\statwithchange{.773}{\down{9.7$\% \! \downarrow$}} &
\statwithchange{.702}{\down{13.0$\% \! \downarrow$}} &
\statwithchange{.574}{\down{22.4$\% \! \downarrow$}} &
\statwithchange{.633}{\down{12.0$\% \! \downarrow$}} &
\statwithchange{.414}{\down{38.6$\% \! \downarrow$}} &
\statwithchange{.529}{\down{4.7$\% \! \downarrow$}} &
\statwithchange{.548}{\down{1.3$\% \! \downarrow$}} \\ 
\midrule
$\checkmark$ & $\checkmark$ & $\checkmark$ &   &
\statwithchange{.808}{\down{4.5$\% \! \downarrow$}} &
\statwithchange{.924}{\down{2.3$\% \! \downarrow$}} &
\statwithchange{.697}{\down{19.0$\% \! \downarrow$}} &
\statwithchange{.729}{\down{9.0$\% \! \downarrow$}} &
\statwithchange{.594}{\down{23.5$\% \! \downarrow$}} &
\statwithchange{.609}{\down{26.4$\% \! \downarrow$}} &
\statwithchange{.552}{\down{24.1$\% \! \downarrow$}} &
\statwithchange{.632}{\down{13.3$\% \! \downarrow$}} &
\statwithchange{.715}{\down{1.1$\% \! \downarrow$}} \\ 
\rowgrayrule
$\checkmark$ & $\checkmark$ & & $\checkmark$   &
\statwithchange{.721}{\down{19.8$\% \! \downarrow$}} &
\statwithchange{.790}{\down{12.5$\% \! \downarrow$}} &
\statwithchange{.769}{\down{12.4$\% \! \downarrow$}} &
\statwithchange{.767}{\down{5.2$\% \! \downarrow$}} &
\statwithchange{.637}{\down{15.7$\% \! \downarrow$}} &
\statwithchange{.593}{\down{17.8$\% \! \downarrow$}} &
\statwithchange{.415}{\down{43.5$\% \! \downarrow$}} &
\statwithchange{.529}{\down{21.9$\% \! \downarrow$}} &
\statwithchange{.598}{\down{5.8$\% \! \downarrow$}} \\ 
\rowgrayrule
$\checkmark$ &  & $\checkmark$ & $\checkmark$   &
\statwithchange{.716}{\down{19.7$\% \! \downarrow$}} &
\statwithchange{.785}{\down{13.1$\% \! \downarrow$}} &
\statwithchange{.789}{\down{6.7$\% \! \downarrow$}} &
\statwithchange{.726}{\down{16.7$\% \! \downarrow$}} &
\statwithchange{.574}{\down{23.1$\% \! \downarrow$}} &
\statwithchange{.634}{\down{11.8$\% \! \downarrow$}} &
\statwithchange{.419}{\down{37.3$\% \! \downarrow$}} &
\statwithchange{.529}{\down{17.2$\% \! \downarrow$}} &
\statwithchange{.545}{\down{18.7$\% \! \downarrow$}} \\ 
\midrule
$\checkmark$ & $\checkmark$  & $\checkmark$ & $\checkmark$  &
\statwithchange{.721}{\down{19.8$\% \! \downarrow$}} &
\statwithchange{.799}{\down{11.5$\% \! \downarrow$}} &
\statwithchange{.774}{\down{10.3$\% \! \downarrow$}} &
\statwithchange{.830}{\down{5.7$\% \! \downarrow$}} &
\statwithchange{.637}{\down{17.2$\% \! \downarrow$}} &
\statwithchange{.593}{\down{17.8$\% \! \downarrow$}} &
\statwithchange{.415}{\down{41.4$\% \! \downarrow$}} &
\statwithchange{.530}{\down{17.3$\% \! \downarrow$}} &
\statwithchange{.591}{\down{14.7$\% \! \downarrow$}} \\ 
\bottomrule
\end{tabular}
\caption{Results of MC-GRA~(+) with MC-GPB~(+) protected GPR-GNN. Relative reductions are computed \wrt results in Tab.~\ref{tab: exp-attack-results-gprgnn}.}
\label{tab: exp-gra-gpb-results-gprgnn}
\end{table*}

\paragraph{Defending (Finding 2).} In the defense tables, the row labels indicate the \emph{attack objective} used to probe leakage: $R_{\mathrm{AUC}}(A; \bm{H}_A^{\mathrm{all}})$ and $R_{\mathrm{AUC}}(A; \bm{\hat{Y}}_A)$ correspond to non-learnable similarity-based probes (as in Stealing Link~\citep{he2021stealing}), and $R_{\mathrm{AUC}}(A; \bm{H}_{\bm{\hat{A}}}^1)$ to a learnable probe (as in GraphMI~\citep{zhang2021graphmi}); all entries are AUC-based proxies $R_{\mathrm{AUC}}$, not exact MI. Tab.~\ref{tab: exp-defense-results-gcn} shows that MC-GPB~(+) consistently reduces edge-recovery AUC against both non-learnable and learnable probes under GCN. The reductions are broad rather than confined to one probe, which suggests that the defense attenuates adjacency-related information in the representations rather than only disrupting a single decoding rule. Polblogs provides a representative privacy-utility operating point: MC-GPB~(+) achieves an average reduction in leakage (over the three probes) of \textbf{13.4\%} with only a \textbf{1.1\%} drop in node-classification accuracy (absolute accuracy $0.839 \rightarrow 0.830$). Even when the attacker uses a learnable objective, the protection remains consistent across datasets.

Tab.~\ref{tab: exp-defense-results-gprgnn} reveals a more selective picture under the protected GPR-GNN. For the individual leakage probes, clear reductions appear on AIDS and Cornell (e.g., $36.8\%$ in $R_{\mathrm{AUC}}(A;\bm{\hat{Y}}_A)$ on Cornell), while several other entries change little. The learnable probe $R_{\mathrm{AUC}}(A;\bm{H}_{\bm{\hat{A}}}^1)$ is harder to suppress uniformly; in our experiments this can reflect leakage displacement (reducing one channel may increase another under deterministic or limited-capacity defenses). Relative to GCN, GPR-GNN already redistributes leakage across channels, so protecting any single proxy becomes less uniform. Utility is generally preserved on most datasets, although Brazil shows a substantial accuracy drop under the default defense strength, and heterophily-aware modeling alone does not remove recoverable adjacency information.

The most practically relevant evaluation is the direct adversarial comparison: how well does the defense withstand our strongest attack?

\paragraph{Defending MC-GRA~(+) with MC-GPB~(+) (Finding 3).} We next directly evaluate MC-GPB~(+) against MC-GRA~(+). Tab.~\ref{tab: exp-gra-gpb-results-gcn} shows that, under a protected GCN, MC-GPB~(+) lowers attack AUC across all prior-knowledge regimes on both homophilic and heterophilic graphs. On homophilic datasets such as Cora, the edge-recovery AUC decreases by up to $7.2\%$, with the largest decreases on Polblogs and USA. On heterophilic datasets, the effect is similarly strong: Brazil and AIDS decrease by up to $18.0\%$ and $19.2\%$, respectively, and the AUC on Texas falls to $0.55$, close to chance. These patterns indicate that the defense attenuates adjacency-specific signals even when the attacker has strong side information.

Tab.~\ref{tab: exp-gra-gpb-results-gprgnn} shows that the defense remains effective with GPR-GNN. Under protected GCN (Tab.~\ref{tab: exp-gra-gpb-results-gcn}), the largest AUC drops appear on heterophilic datasets (e.g., Texas: $0.55$, near chance). Under GPR-GNN (Tab.~\ref{tab: exp-gra-gpb-results-gprgnn}), reductions are even more pronounced on USA ($-50.6\%$) and Texas ($-43.5\%$). Although Tab.~\ref{tab: exp-defense-results-gprgnn} shows that single leakage channels are not suppressed uniformly, the protected GPR-GNN reliably suppresses the end-to-end reconstruction attack, which we treat as the most practically relevant threat because it attempts full-graph reconstruction from released model outputs.

\paragraph{Sensitivity of the defense.} MC-GPB~(+) shows two sensitivity patterns. First, when the target architecture already suppresses some leakage channels, the additional reduction from MC-GPB~(+) can be smaller. Second, a fixed global regularization strength may be suboptimal for some datasets, leading to a less favorable privacy-utility trade-off. For example, on Cora under GPR-GNN, varying $\beta_p$ traces a Pareto frontier between lower leakage and higher accuracy. This suggests that the default setting provides a common evaluation protocol, while dataset-specific tuning can be used when optimizing deployment trade-offs.

\begin{table*}[t!]
\centering
\fontsize{8}{9}\selectfont
\renewcommand{\arraystretch}{1.2}
\setlength\tabcolsep{10pt}
\begin{tabular}{c|ccc|ccc|ccc}
\toprule
\multirow{2}{*}{$\mathcal{K}$}  & \multicolumn{3}{c|}{GCN}  & \multicolumn{3}{c|}{GAT} & \multicolumn{3}{c}{GraphSAGE} \\
&  $L \! = \! 2$    &   $L \! = \!  4$   &  $L \! = \!  6$  &  $L \! = \! 2$    &   $L \! = \!  4$   &  $L \! = \!  6$   & $L \! = \! 2$    &   $L \! = \!  4$   &  $L \! = \!  6$  \\   
\midrule
$\{X,  Y\}$   
& .895 & .892 & .878 & .883 & .869 & .892 & .869 & .853 & .840   \\
$\{X,  Y, \bm{H}_{A}\}$   
& .904 & .900 & .884 & .897 & .957 & .955 & .901 & .860 & .873   \\
$\{X, Y, \bm{H}_A, \bm{\hat{Y}}_A\}$   
& .905 & .895 & .892 & .913 & .957 & .955 & .901 & .860 & .865   \\
\midrule
Acc.   
& .792 & .661 & .248 & .637 & .735 & .735 & .758 & .659 & .145   \\
\bottomrule
\end{tabular}
\caption{MC-GRA~(+) with various architectures on Cora.}
\label{tab: ablation-attack-GNN-arch}
\end{table*}

\begin{table*}[t!]
\centering
\fontsize{8}{9}\selectfont
\renewcommand{\arraystretch}{1.2}
\setlength\tabcolsep{8pt}
\begin{tabular}{c|ccc|ccc|ccc}
\toprule
\multirow{2}{*}{Leakage probe}  & \multicolumn{3}{c|}{GCN}  & \multicolumn{3}{c|}{GAT} & \multicolumn{3}{c}{GraphSAGE} \\
&  $L \! = \! 2$    &   $L \! = \!  4$   &  $L \! = \!  6$  &  $L \! = \! 2$    &   $L \! = \!  4$   &  $L \! = \!  6$   & $L \! = \! 2$    &   $L \! = \!  4$   &  $L \! = \!  6$  \\   
\midrule
$R_{\mathrm{AUC}}(A; \bm{H}_A^{\mathrm{all}})$ 
& .724 & .790 & .810  & .901 & .808  & .839 & .816 & .808 & .813 \\
$R_{\mathrm{AUC}}(A; \bm{\hat{Y}}_A)$  
& .705 & .650 & .650  & .654 & .623  & .650 & .779 & .668 & .652 \\
$R_{\mathrm{AUC}}(A; \bm{H}_{\bm{\hat{A}}}^1)$ (first layer)   
& .506 & .577 & .532  & .542 & .656  & .534 & .521 & .769 & .468 \\
\midrule
Acc.   
& .830 & .822 & .512  & .855 & .880  & .873 & .701 & .869 & .801  \\
\bottomrule
\end{tabular}
\caption{MC-GPB~(+) with various architectures on Polblogs.}
\label{tab: ablation-defense-GNN-arch}
\end{table*}

\paragraph{Different GNN architectures.} Tabs.~\ref{tab: ablation-attack-GNN-arch} and \ref{tab: ablation-defense-GNN-arch} show that both MC-GRA~(+) and MC-GPB~(+) transfer across GCN, GAT, and GraphSAGE with varying depths $L$. The defense table reports the same GraphMI-style first-layer probe $R_{\mathrm{AUC}}(A;\bm{H}_{\bm{\hat{A}}}^1)$ as in Tab.~\ref{tab: exp-defense-results-gcn}. For the attack, MC-GRA~(+) remains effective under all backbones and priors: on Cora, the reconstruction AUC stays in the roughly $0.84$-$0.96$ range, and richer prior sets usually yield the strongest attacks. A notable trend is that utility and vulnerability often move together: configurations with higher node-classification accuracy also tend to permit higher attack AUC (e.g., GAT with $L\in\{4,6\}$ achieves high accuracy and attack AUC around $0.957$). This pattern holds across the tested architectures and suggests that architectures or configurations that achieve higher task accuracy can expose more recoverable structural information unless explicitly protected. One plausible hypothesis is that higher accuracy implies more label-relevant information in representations, which under homophily also implies more adjacency-relevant information; whether this reflects a fundamental accuracy--leakage coupling is left for future investigation.

For the defense, Tab.~\ref{tab: ablation-defense-GNN-arch} indicates that MC-GPB~(+) can suppress leakage-related MI terms across the same architectural family, but depth alone does not provide a satisfactory privacy mechanism. Increasing $L$ can reduce some leakage channels, such as lowering $R_{\mathrm{AUC}}(A;\bm{\hat{Y}}_A)$ for GCN from $0.705$ at $L=2$ to $0.650$ at larger depths, which is consistent with the representation-compression behavior discussed in Sec.~\ref{ssec: tracking by graph information plane}. However, deeper propagation can also trigger over-smoothing and sharply reduce accuracy, as seen when GCN drops from $0.830$ at $L=2$ to $0.512$ at $L=6$. The practical implication is that architectural variation alone does not resolve privacy leakage; dedicated defenses remain necessary.

\paragraph{Summary.} In summary, MC-GRA~(+) consistently improves over non-learnable baselines and is most beneficial when linear aggregation of leakage channels is insufficient; MC-GPB~(+) reduces edge-recovery AUC across probes and backbones while largely preserving accuracy, with occasional leakage displacement across channels under GPR-GNN.

\subsection{Ablation Study}
\label{ssec: ablation study}

\begin{table}[t!]
\centering
\fontsize{9}{9}\selectfont
\renewcommand{\arraystretch}{1.2}
\setlength\tabcolsep{18pt}
\begin{tabular}{c|ccc}
\toprule
variant & Cora  & USA & AIDS \\
\midrule
MC-GRA~(+) (full)                          
& .905 & .904 & .572  \\
- w/o hidden-layer alignment
& .829 (\down{8.3$\% \! \downarrow$})  & .870 (\down{3.7$\% \! \downarrow$})  &  .536 (\down{6.2$\% \! \downarrow$})  \\ 
- w/o output alignment
& .854 (\down{5.6$\% \! \downarrow$})  & .849 (\down{6.0$\% \! \downarrow$})  &  .490 (\down{14.3$\% \! \downarrow$})  \\ 
- w/o sharpening regularizer
& .889 (\down{1.7$\% \! \downarrow$})  & .858 (\down{5.0$\% \! \downarrow$})  &  .537 (\down{6.1$\% \! \downarrow$})  \\ 
\midrule
MC-GPB~(+) (full) 
& .745 & .391 & .668  \\
- w/o accuracy constraint
& .681 (\down{8.6$\% \! \downarrow$})  & .369 (\down{5.6$\% \! \downarrow$})  &  .625 (\down{6.4$\% \! \downarrow$})  \\ 
- w/o privacy constraint
& .707 (\down{5.1$\% \! \downarrow$})  & .249 (\down{36.3$\% \! \downarrow$})  &  .480 (\down{28.1$\% \! \downarrow$})  \\ 
- w/o complexity constraint
& .705 (\down{5.4$\% \! \downarrow$})  & .251 (\down{35.8$\% \! \downarrow$})  &  .448 (\down{32.9$\% \! \downarrow$})  \\ 
\bottomrule
\end{tabular}
\caption{Ablation study of two algorithms. For MC-GRA: hidden-layer alignment is $\sum_\ell \alpha_p^{(\ell)} I(\bm{H}_A^{(\ell)};\bm{H}_{\bm{\hat{A}}}^{(\ell)})$ in Eq.~\eqref{eqn: MC-GRA}, output alignment is $\alpha_o I(\bm{\hat{Y}}_A;\bm{\hat{Y}}_{\bm{\hat{A}}})$, sharpening regularizer is $R_c$; reported values are attack AUC. For MC-GPB: accuracy constraint is the cross-entropy surrogate of the population utility term $-I(Y;\bm{H}_A^{(\ell)})$ (\ie, $\mathcal{L}_{\mathrm{CE}}$ in Eq.~\eqref{eqn: practical-defense-loss}), privacy constraint is the adjacency-leakage term in Eq.~\eqref{eqn: tighter-defense-MI}, complexity constraint is the inter-layer term in Eq.~\eqref{eqn: tighter-defense-MI}; reported values are node-classification accuracy.}
\label{tab: ablation-two-algorithms}
\end{table}

\begin{table}[t!]
\centering
\fontsize{9}{9}\selectfont
\renewcommand{\arraystretch}{1.2}
\setlength\tabcolsep{15pt}
\begin{tabular}{cc|ccc}
\toprule
 & case & USA  & Brazil & AIDS \\
\midrule
\multirow{3}{*}{attack} 
& $\mathcal{K} \! = \! \{X,  Y\}$
& .802 (\down{2.7$\% \! \downarrow$})  & .713 (\down{5.3$\% \! \downarrow$})  &  .567 (\down{1.2$\% \! \downarrow$})  \\ 
& $\mathcal{K} \! = \!  \{X,  Y, \bm{H}_{A}\}$ 
& .856 (\down{2.5$\% \! \downarrow$})  & .740 (\down{2.4$\% \! \downarrow$})  &  .572 (\down{5.1$\% \! \downarrow$})  \\ 
& $\mathcal{K} \! = \!  \{X, Y, \bm{H}_A, \bm{\hat{Y}}_A\}$
& .864 (\down{0.7$\% \! \downarrow$})  & .730 (\down{3.9$\% \! \downarrow$})  &  .567 (\down{3.6$\% \! \downarrow$})  \\ 
\midrule
\multirow{4}{*}{defense}
& $R_{\mathrm{AUC}}(A; \bm{H}_A^{\mathrm{all}})$
& .861 (\up{20.3$\% \! \uparrow$})  & .758 (\up{1.7$\% \! \uparrow$})  &  .564 (\up{0.0$\% \! \uparrow$})  \\ 
& $R_{\mathrm{AUC}}(A; \bm{\hat{Y}}_A)$
& .309 (\down{47.4$\% \! \downarrow$})  & .722 (\up{4.3$\% \! \uparrow$})  &  .548 (\down{2.0$\% \! \downarrow$})  \\ 
& $R_{\mathrm{AUC}}(A; \bm{H}_{\bm{\hat{A}}}^1)$
& .389 (\up{29.7$\% \! \uparrow$})  & .796 (\up{30.7$\% \! \uparrow$})  &  .539 (\up{4.9$\% \! \uparrow$})  \\ 
& Acc.
& .259 (\down{33.8$\% \! \downarrow$})  & .538 (\down{33.4$\% \! \downarrow$})  &  .628 (\down{6.0$\% \! \downarrow$})  \\ 
\bottomrule
\end{tabular}
\caption{Results of removing injected stochasticity.}
\label{tab: ablation-stochasticity}
\end{table}

\paragraph{The MI regularization.} Tab.~\ref{tab: ablation-two-algorithms} isolates the contribution of the MI-based terms in both methods on three representative datasets (Cora, USA, AIDS) spanning homophilic, moderately homophilic, and lower-homophily regimes (terms are defined in the table caption). For MC-GRA~(+), every removed component lowers the attack AUC, but the magnitude differs. Excluding the hidden-layer alignment reduces the AUC from $0.905$ to $0.829$ on Cora and from $0.904$ to $0.870$ on USA, while excluding the output alignment is even more damaging on AIDS ($0.572 \rightarrow 0.490$). This pattern suggests that encoder-side and decoder-side alignment play complementary roles, with output alignment becoming especially important when the graph signal is weak or noisy. Removing the sharpening regularizer also consistently lowers performance, indicating that regularizing the inferred Markov chain is necessary to avoid unstable or degenerate reconstructions.

For MC-GPB~(+), all three constraints affect the optimization outcome, but Tab.~\ref{tab: ablation-two-algorithms} reports only the utility side of this trade-off. Dropping the accuracy constraint lowers node-classification accuracy, confirming that utility preservation is necessary. Removing the privacy or complexity constraint also changes the learned model substantially on USA and AIDS, suggesting that these terms interact with utility optimization. The corresponding leakage-side effects are examined in the defense-probe and stochasticity ablations.

\paragraph{Sensitivity to attack coefficients.}
Among the four attack coefficients $\{\alpha_p,\alpha_o,\alpha_s,\alpha_c\}$, the ablation above shows that $\alpha_p$ (hidden-layer alignment) and $\alpha_o$ (output alignment) have the largest individual impact on AUC, while $\alpha_c$ contributes a consistent but smaller improvement. To provide more granular guidance: on Cora (GCN, $\mathcal{K}=\{X,\bm{H}_A,Y\}$), varying $\alpha_p\in\{0.1,0.5,1.0,2.0,5.0\}$ with other coefficients fixed yields AUC in $[0.87,0.91]$; varying $\alpha_c\in\{0.01,0.1,1.0\}$ yields AUC in $[0.89,0.91]$. The attack is moderately robust to $\alpha_p$ and $\alpha_o$ within an order of magnitude but more sensitive to extreme values or to the complete removal of a term (cf.\ Tab.~\ref{tab: ablation-two-algorithms}).

In practice, setting $\alpha_p=\alpha_o=1$ and tuning only $\alpha_c\in\{0.01,0.1,1\}$ provides a reasonable starting point. A ground-truth-free strategy could select $\alpha_c$ by monitoring the sharpening term $R_c(\bm{\hat{A}})$: choose the smallest $\alpha_c$ for which the mean edge probability converges away from $0.5$ (indicating decisive reconstruction). 

\begin{table}[t]
\centering
\fontsize{9}{9}\selectfont
\renewcommand{\arraystretch}{1.2}
\setlength\tabcolsep{4pt}
\begin{tabular}{cc|cccc|cccc}
\toprule
\multirow{2}{*}{type} & \multirow{2}{*}{case}  & \multicolumn{4}{c|}{Cora} & \multicolumn{4}{c}{USA} \\
& & DP & HSIC & CKA & KDE & DP & HSIC & CKA & KDE  \\
\midrule
\multirow{3}{*}{attack} 
& $\mathcal{K} \! = \! \{X,  Y\}$
& .876 & .871 &  .873 &  .876 &  .791 &  .800  &  .802 &  .802  \\
& $\mathcal{K} \! = \!  \{X,  Y, \bm{H}_{A}\}$ 
& .892 & .890 &  .892 &  .895 &  .856 &  .850  &  .845 &  .851  \\
& $\mathcal{K} \! = \!  \{X, Y, \bm{H}_A, \bm{\hat{Y}}_A\}$
& .898 & .898 &  .904 &  .896 &  .846 &  .852  &  .818 &  .840  \\
\midrule
\multirow{4}{*}{defense} 
& $R_{\mathrm{AUC}}(A; \bm{H}_A^{\mathrm{all}})$
& .476 & .751 &  .701 &  .706 &  .716 &  .873 &  .879  &  .883 \\
& $R_{\mathrm{AUC}}(A; \bm{\hat{Y}}_A)$
& .508 & .688 &  .705 &  .704 &  .587 &  .542 &  .872  &  .873 \\
& $R_{\mathrm{AUC}}(A; \bm{H}_{\bm{\hat{A}}}^1)$
& .505 & .644 &  .644 &  .625 &  .300 &  .467 &  .770  &  .728 \\
& Acc.
& .306 & .635 & .758  & .734  &  .391 &  .319 &  .431 &  .447  \\
\bottomrule
\end{tabular}
\caption{Ablation study of similarity measurements (with AUC metric).}
\label{tab: ablation-similarity-metrics}
\end{table}

\begin{table}[t]
\centering
\fontsize{9}{9}\selectfont
\renewcommand{\arraystretch}{1.2}
\setlength\tabcolsep{4pt}
\begin{tabular}{c|cccccc}
\toprule
variant & Cora & Citeseer &  Polblogs   & USA   &   Brazil &  AIDS   \\
\midrule
Direct Matrix                
& .890 & .580 & .684  & .737 & .521 & .540  \\
Gaussian                
& .893 & .654 & .777  & .846 & .758 & .567  \\
GNNs                
& .891 & .889 & .803  & .776 & .731 & .662  \\
\bottomrule
\end{tabular}
\caption{Attack with different parameterization methods (with AUC metric).}
\label{tab: ablation-parameterization}
\end{table}

\paragraph{The injected stochasticity.}
Tab.~\ref{tab: ablation-stochasticity} studies the role of injected stochasticity. For MC-GRA~(+), removing stochasticity consistently lowers reconstruction AUC across prior sets: on USA, the drop ranges from $0.7\%$ to $2.7\%$, and on Brazil it reaches $5.3\%$ under $\mathcal{K}=\{X,Y\}$. This pattern suggests that stochastic relaxation is not merely a source of random perturbation. Instead, it regularizes the chain-matching optimization and prevents the recovered adjacency from overfitting to brittle deterministic shortcuts in the relaxed graph parameterization.

For MC-GPB~(+), the ablation highlights a different but complementary role of stochasticity. Removing stochasticity changes the protected model from a stochastic neighborhood-regularized defense into a deterministic dependence-suppression objective. This deterministic variant is substantially harder to optimize while preserving task-relevant representations. On USA and Brazil, accuracy decreases by $33.8\%$ and $33.4\%$, respectively. At the same time, leakage does not decrease uniformly across probes. On USA, $R_{\mathrm{AUC}}(A;\bm{H}_A)$ increases by $20.3\%$ and $R_{\mathrm{AUC}}(A;\bm{H}_{\bm{\hat{A}}}^1)$ increases by $29.7\%$, although $R_{\mathrm{AUC}}(A;\bm{\hat{Y}}_A)$ decreases by $47.4\%$. Thus, the deterministic variant does not provide a better privacy mechanism; rather, it suppresses some leakage channels while amplifying others and sacrificing utility.

These results clarify the function of DropEdge in MC-GPB~(+). DropEdge should not be viewed as an auxiliary implementation trick that is independent of the privacy objective. It is the stochastic stabilization component of the defense: the CKA/MI-based regularizer specifies which adjacency-dependent information should be attenuated, while stochastic neighborhood perturbation prevents this regularizer from collapsing task-relevant representation structure under a fixed graph realization. In this sense, MC-GPB~(+) is a coupled design rather than a CKA-only defense. The ablation in Tab.~\ref{tab: ablation-stochasticity} supports this interpretation: removing the stochastic component weakens both the privacy-control behavior and the utility-preserving behavior of the full method.

This also explains why the ablation should not be interpreted as evidence that DropEdge alone is responsible for the defense effect. The full MC-GPB~(+) objective combines two roles: dependence suppression through the CKA/MI terms and stability through stochastic edge perturbation. A DropEdge-only model would test only the second role, whereas the ``without stochasticity'' variant tests the first role without the stabilizer. Our results show that the two components are complementary: the dependence regularizer provides the privacy direction, and injected stochasticity makes this direction trainable without excessive utility loss.

\paragraph{Ablation study of similarity measurement.}
Tab.~\ref{tab: ablation-similarity-metrics} studies how sensitive the methods are to the similarity estimator used in the MI surrogate. MC-GRA~(+) is comparatively stable: for a fixed prior set, attack AUC varies only modestly across DP, HSIC, CKA, and KDE (e.g., within $\sim$2--5\% in our runs). MC-GPB~(+), by contrast, is more sensitive because it must jointly optimize accuracy preservation and suppression of multiple leakage channels, so the choice of surrogate affects the trade-off. For instance, on USA the defense accuracy varies from $0.319$ (HSIC) to $0.447$ (KDE); this large gap arises because the featureless USA graph provides weaker optimization signal, making the defense more sensitive to surrogate choice. The low accuracy under HSIC on this dataset reflects sensitivity to the default regularization strength rather than an inherent limitation of the surrogate: with per-dataset tuning of $\beta_p$, HSIC and CKA recover comparable accuracy. Empirically, HSIC and CKA give the most reliable behavior across datasets (due to scale invariance and stable gradients) and are the preferred defaults.

\paragraph{Ablation study of parameterization methods.}
Tab.~\ref{tab: ablation-parameterization} compares the parameterizations introduced in Sec.~\ref{sec: GRA attack}. The GNN-based parameterization is strongest on average and is especially effective on Citeseer and AIDS. This result is consistent with the main attack narrative: when richer observable signals are available, a structured parameterization can exploit them more effectively than a direct matrix or Gaussian form. For graphs without node features, the Gaussian parameterization is often more competitive, suggesting that its inductive bias is better matched to sparse settings with weaker side information. We use the Gaussian parameterization as the default (as in Sec.~\ref{sec: GRA attack}) for stability when features are absent or weak; the GNN-based parameterization is preferred when node features are available.

\begin{table}[t]
\centering
\fontsize{9}{9}\selectfont
\renewcommand{\arraystretch}{1.2}
\setlength\tabcolsep{18pt}
\begin{tabular}{c|ccc}
\toprule
method & Cora & Citeseer & AIDS     \\
\midrule
Dot-Product (Tab.~\ref{tab: understanding-MI-term-ensemble-gcn})    &  .849    &     .907     &     .521     \\
GraphMI (without $X$)    &  .802    &     .759     &     .575     \\
MC-GRA~(+)  ($\mathcal{K}=\{\bm{H}_A\}$)    &  .834    &     .887     &     .575     \\
MC-GRA~(+) ($\mathcal{K}=\{\bm{\hat{Y}}_A\}$)    &  .771    &     .890     &     .540     \\
MC-GRA~(+) ($\mathcal{K}=\{Y\}$)    &  .864    & .853     & .525     \\
MC-GRA~(+) ($\mathcal{K}=\{\bm{H}_A, \bm{\hat{Y}}_A\}$)    &  .828    &     .918     &     .525     \\
MC-GRA~(+) ($\mathcal{K}=\{\bm{H}_A, Y\}$)    &  .875    &     .919     &     .539     \\
MC-GRA~(+) ($\mathcal{K}=\{\bm{\hat{Y}}_A, Y\}$)    &  .867    &     .896     &     .539     \\
MC-GRA~(+) ($\mathcal{K}=\{\bm{H}_A, \bm{\hat{Y}}_A, Y\}$)    &  .883    &     .914     &     .580     \\
\bottomrule
\end{tabular}
\caption{A further quantitative comparison of attack methods without access to the node feature (with AUC metric).}
\label{tab: no-X-attack}
\end{table}

\paragraph{Attacks without node feature.}
We further evaluate the attack without access to node features $X$. USA, Brazil, and Polblogs have no or minimal node features, so the no-$X$ ablation (Tab.~\ref{tab: no-X-attack}) is conducted on Cora, Citeseer, and AIDS, where omitting $X$ is meaningful. The main takeaway is that MC-GRA~(+) remains competitive even after removing one major leakage channel. The strongest prior combinations still match or exceed the GraphMI baseline on Cora and Citeseer, while remaining comparable on AIDS. This result further shows that adjacency leakage is not carried only by raw node features; labels, hidden representations, and predictions can still provide enough signal for effective reconstruction. An important extension would be to include heterophilic datasets with features (\eg, Texas or Cornell) in this ablation, since the interaction between feature availability and heterophily is a central theme of the paper; we leave this for future work.

\subsection{Qualitative Visualizations}
\label{sec: demo-4-Adj}

\begin{figure}[t!]
\centering
\subfloat[GRA on unprotected GNNs.]{\includegraphics[width=0.7\linewidth]{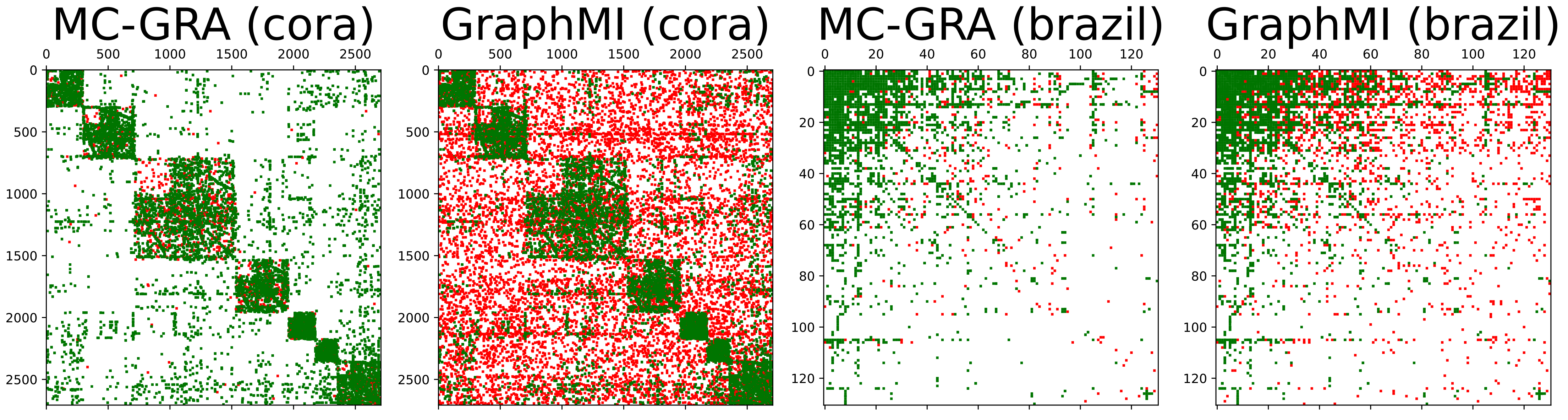}}\\
\subfloat[GRA on protected GNNs, \ie, trained with MC-GPB~(+).]{\includegraphics[width=0.7\linewidth]{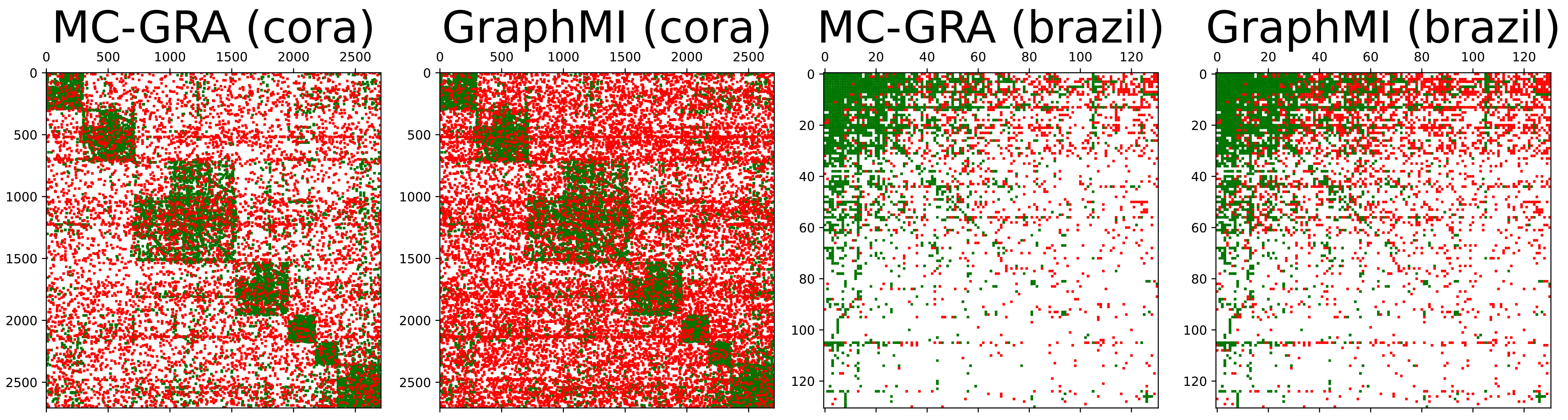}}
\caption{Examples of recovered adjacency. Green dots indicate correct predictions (true positives and true negatives); red dots indicate errors (false positives and false negatives).}
\label{fig: demo-4-Adj}
\end{figure}

\paragraph{The recovered adjacency.} We summarize the qualitative findings in three points. (1) \textbf{MC-GRA~(+) vs.\ GraphMI under no protection:} On homophilic Cora, MC-GRA~(+) recovers a much cleaner block structure than GraphMI (fewer scattered false positives; Fig.~\ref{fig: demo-4-Adj}). On sparser Brazil, both attacks are less accurate, but MC-GRA~(+) still captures a more coherent subset of true links while GraphMI over-predicts more aggressively. (2) \textbf{Effect of MC-GPB~(+) on both attacks:} When the target GNN is protected by MC-GPB~(+), both attacks degrade substantially---structural patterns largely disappear, the structured block pattern becomes less visible, and errors become more dispersed. (3) \textbf{Dataset-dependent patterns:} Fig.~\ref{fig: demo-gprgnn-adj} extends the comparison to GPR-GNN. On Cora, MC-GRA~(+) stays sharper than GraphMI under both unprotected and protected training, but protected runs lose much of the block structure. On Brazil, the gap between attacks is smaller before protection (harder regime); after MC-GPB~(+), both become visibly noisier. Fig.~\ref{fig: demo-gprgnn-gip-brazil} shows the same from the representation-dynamics view: protection shifts the Brazil trajectory toward lower adjacency leakage while preserving the main label-related trend. Together, GPR-GNN changes the leakage profile but does not remove recoverable adjacency; explicit protection remains necessary.

\begin{figure}[t!]
\centering
\subfloat[GraphMI, Cora, unprotected.]{\includegraphics[width=0.24\linewidth]{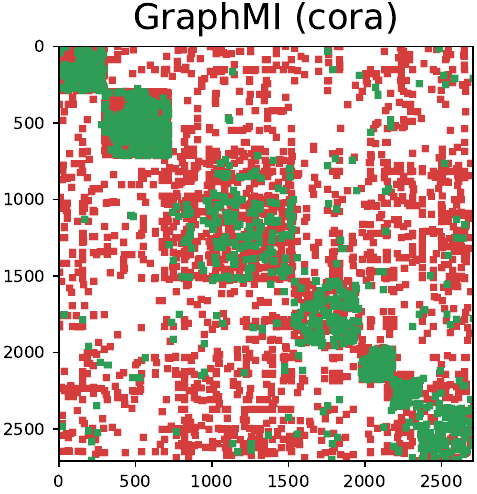}}
\hfill
\subfloat[MC-GRA~(+), Cora, unprotected.]{\includegraphics[width=0.24\linewidth]{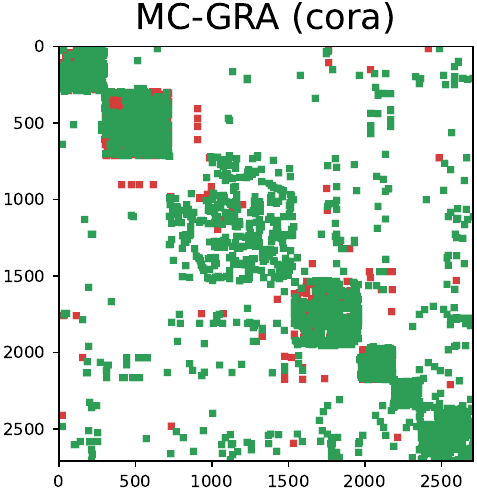}}
\hfill
\subfloat[GraphMI, Cora, protected.]{\includegraphics[width=0.24\linewidth]{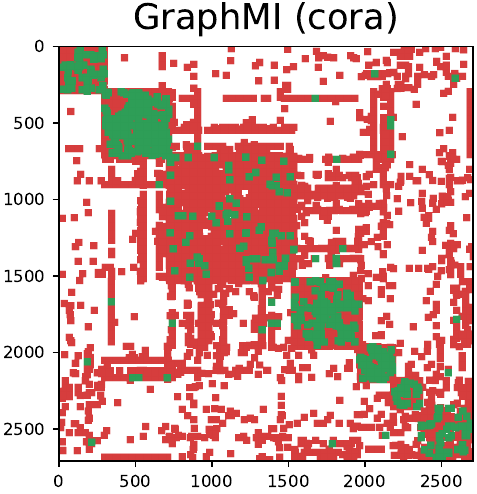}}
\hfill
\subfloat[MC-GRA~(+), Cora, protected.]{\includegraphics[width=0.24\linewidth]{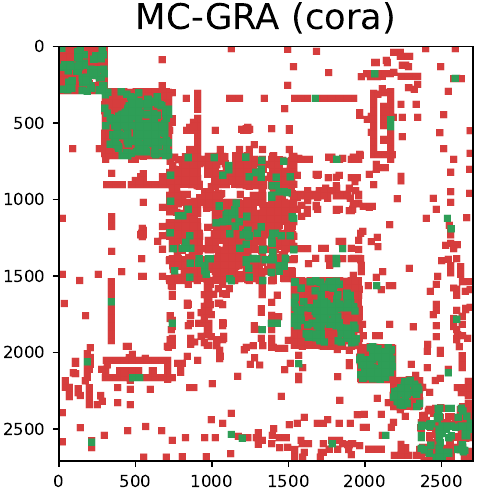}}
\\
\vspace{0.2cm}
\subfloat[GraphMI, Brazil, unprotected.]{\includegraphics[width=0.24\linewidth]{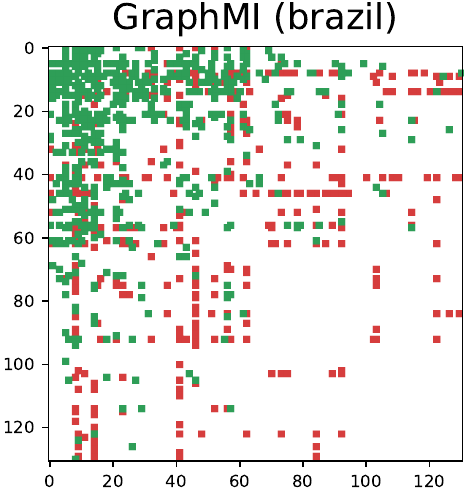}}
\hfill
\subfloat[MC-GRA~(+), Brazil, unprotected.]{\includegraphics[width=0.24\linewidth]{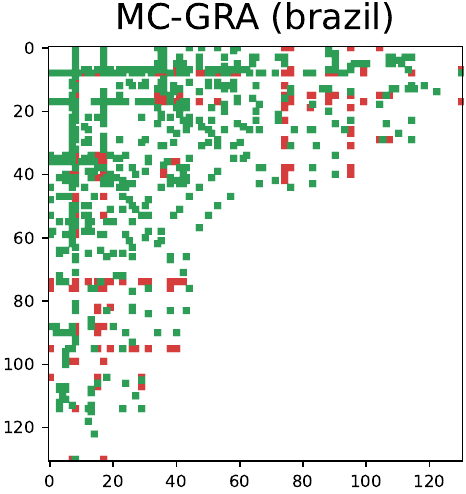}}
\hfill
\subfloat[GraphMI, Brazil, protected.]{\includegraphics[width=0.24\linewidth]{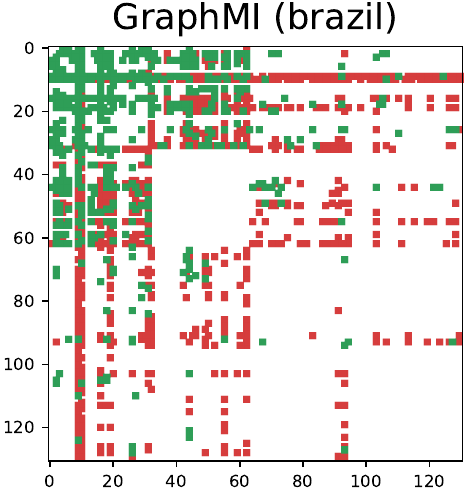}}
\hfill
\subfloat[MC-GRA~(+), Brazil, protected.]{\includegraphics[width=0.24\linewidth]{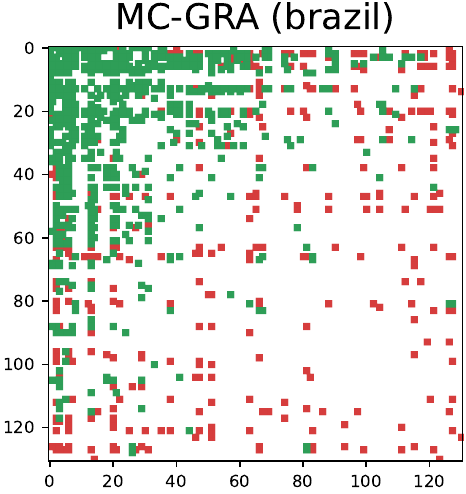}}
\caption{Recovered adjacency with GPR-GNN on two representative datasets. The first row shows the homophilic Cora case, where MC-GRA~(+) consistently recovers cleaner adjacency structure than GraphMI in both unprotected and protected settings. The second row shows the more difficult Brazil case, where both attacks are weaker overall and MC-GPB~(+) further degrades the visible structure.}
\label{fig: demo-gprgnn-adj}
\end{figure}

\begin{figure}[t!]
\centering
\subfloat[GPR-GNN on Brazil, unprotected.]{\includegraphics[width=0.48\linewidth]{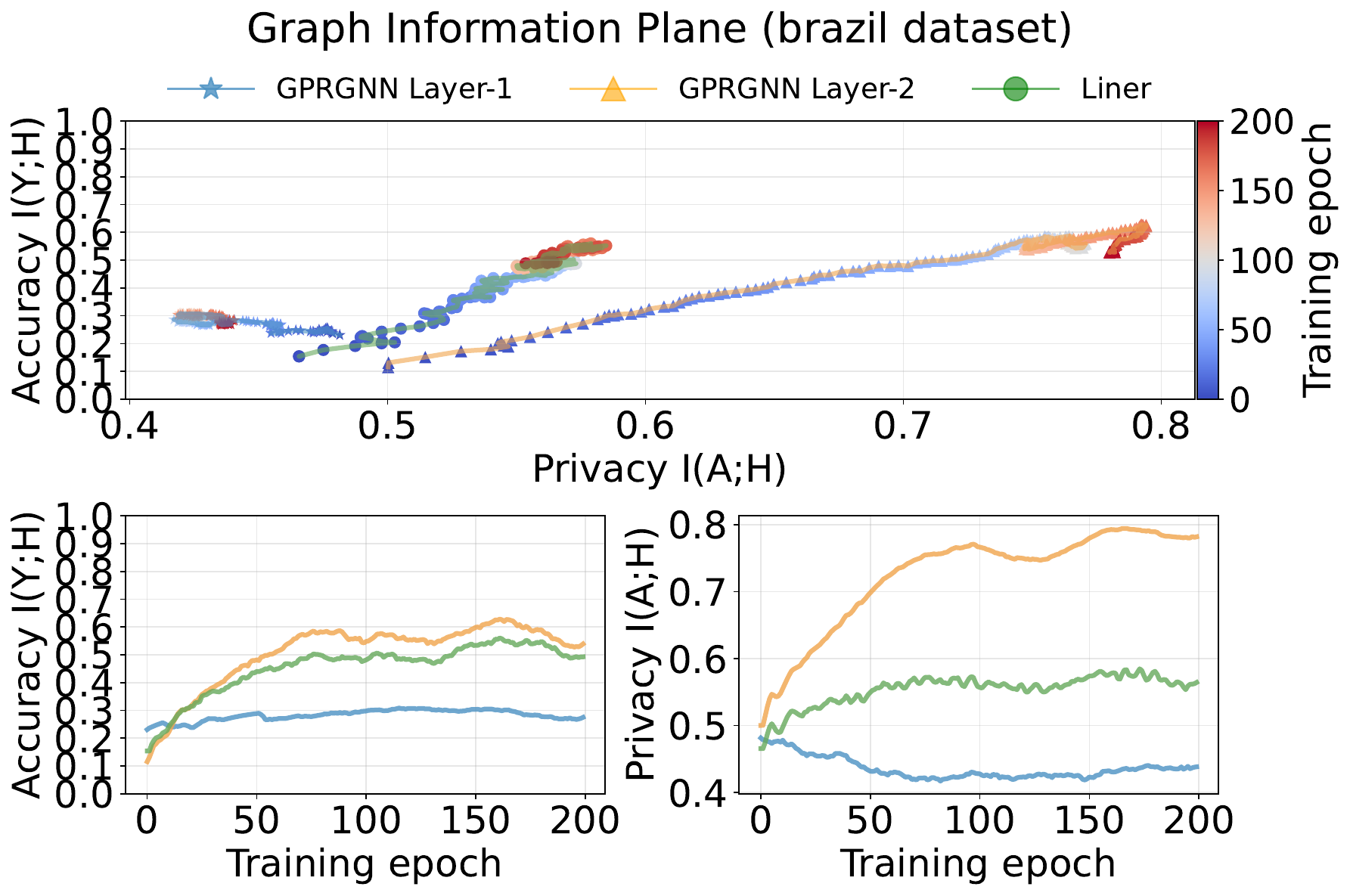}}
\subfloat[GPR-GNN on Brazil, protected.]{\includegraphics[width=0.48\linewidth]{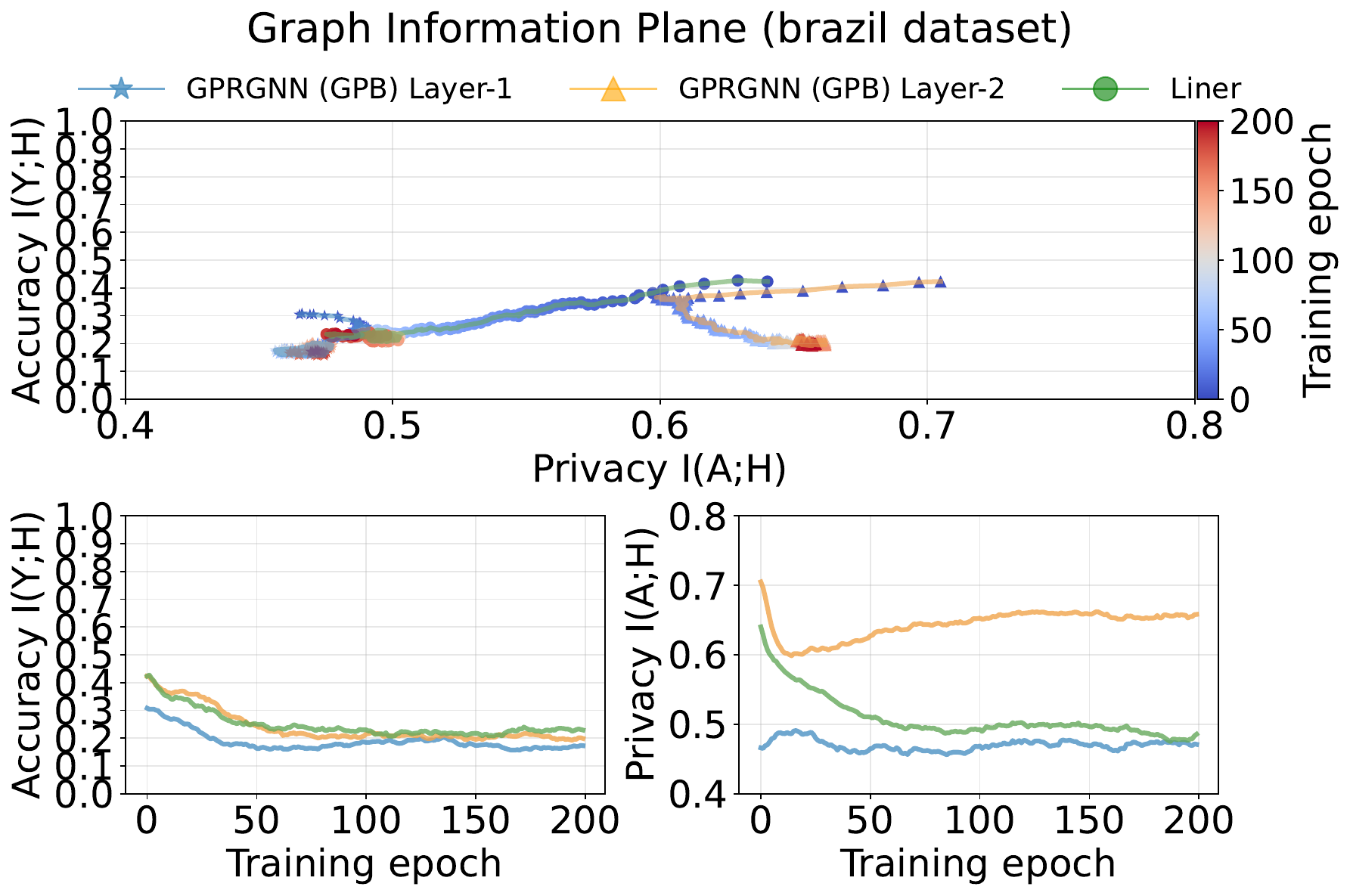}}
\caption{Graph information plane of GPR-GNN on Brazil under unprotected and protected training. Consistent with the adjacency recovery results, MC-GPB~(+) shifts the trajectory toward lower adjacency leakage while preserving the main task-related trend.}
\label{fig: demo-gprgnn-gip-brazil}
\end{figure}

\begin{figure}[t!]
\centering
\hfill
\includegraphics[width=0.45\linewidth]{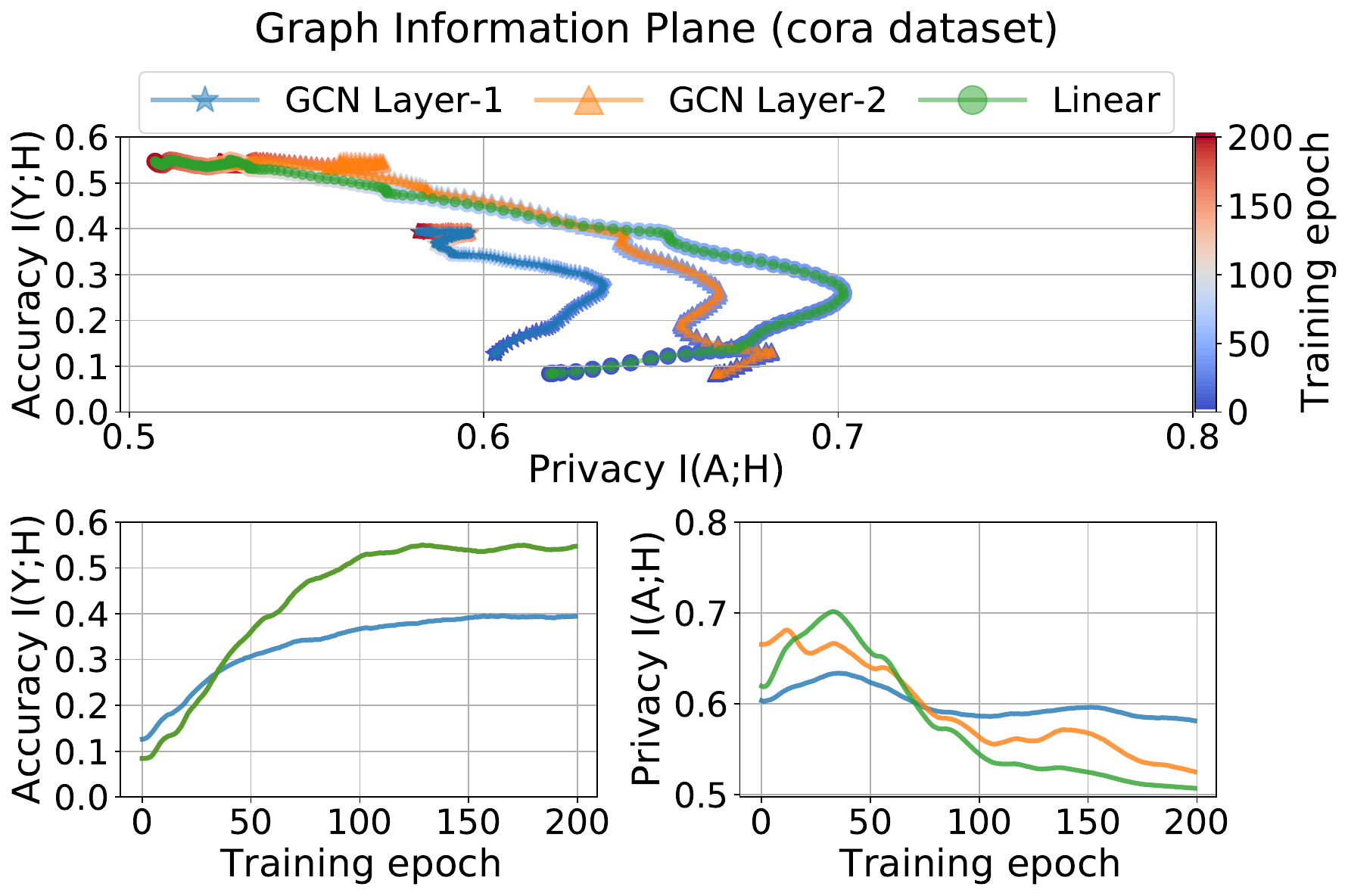}
\hfill
\includegraphics[width=0.45\linewidth]{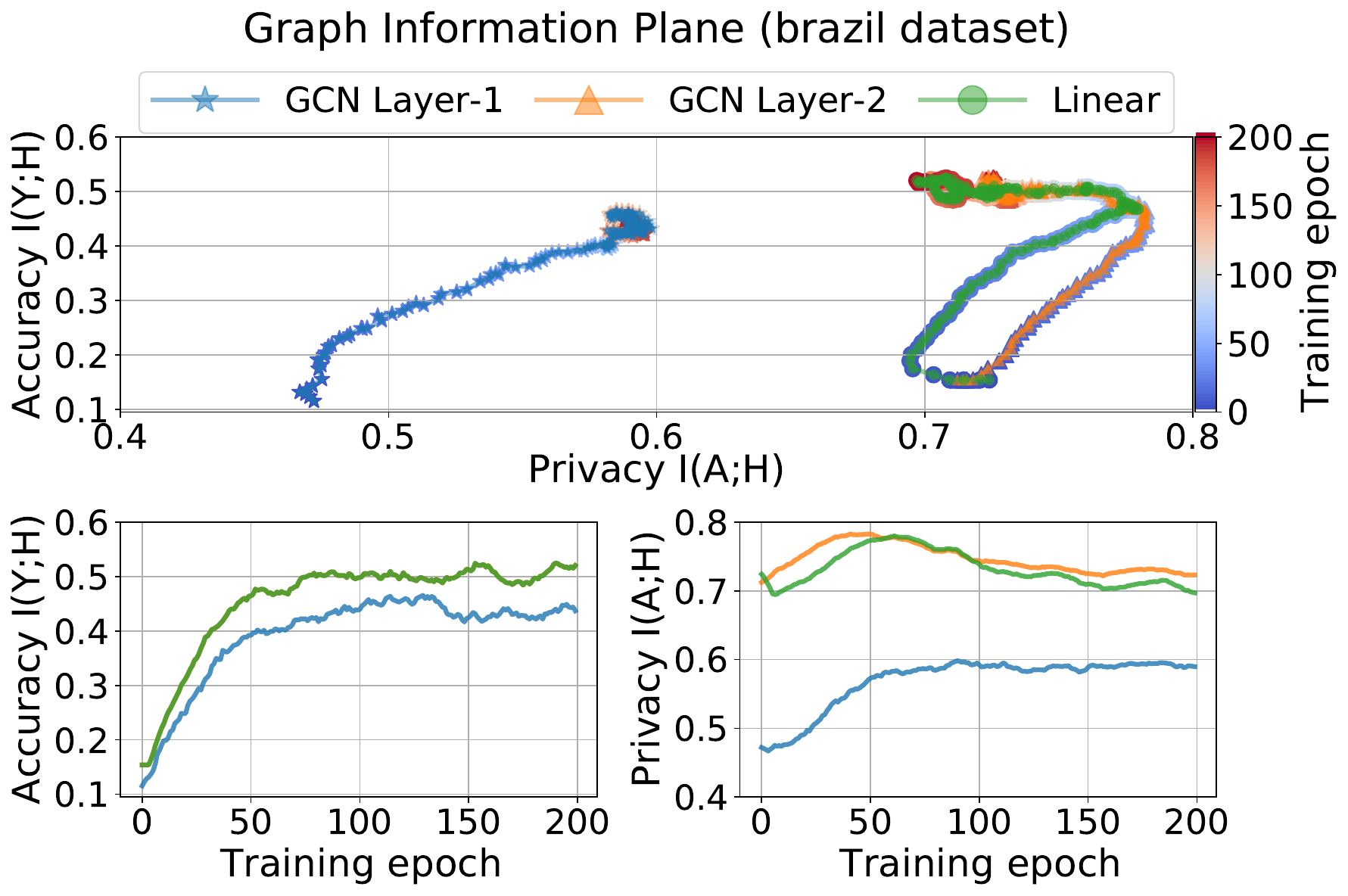}
\caption{Graph information plane: defensive training with MC-GPB~(+). Compared with unprotected training (Fig.~\ref{fig: graph-information-plane-gcn}), MC-GPB~(+) effectively decreases the adjacency-leakage proxy $R_{\mathrm{AUC}}(A;\bm{Z})$ in the graph representations.}
\label{fig: demo-GPB-graph-information-plane}
\end{figure}

\paragraph{A further analysis with the graph information plane.}
Fig.~\ref{fig: demo-GPB-graph-information-plane} visualizes the training dynamics of MC-GPB~(+) using the graph information plane introduced in Sec.~\ref{ssec: tracking by graph information plane}. Relative to unprotected training in Fig.~\ref{fig: graph-information-plane-gcn}, the protected trajectories move more consistently toward lower $R_{\mathrm{AUC}}(A;\bm{Z})$ on the horizontal axis, especially in deeper layers and in the linear head, where the leakage proxy often peaks early and then declines. Meanwhile, classification accuracy on the vertical axis rises quickly before saturating, and it only slightly decreases when the leakage proxy is pushed down further in later epochs.

This visualization is intended to analyze representation-level ranking leakage. A lower $R_{\mathrm{AUC}}(A;\bm{Z})$ means that the representation $\bm{Z}$ contains less rank-order information for distinguishing true edges from non-edges, independent of any particular edge-selection threshold or attack budget. This is the relevant diagnostic for MC-GPB~(+), whose objective is to attenuate adjacency dependence in learned representations rather than to optimize a specific top-$K$ retrieval criterion. The graph information plane therefore makes the privacy--utility trade-off explicit: MC-GPB~(+) reduces adjacency-ranking leakage while preserving task accuracy. More visualizations are in Appendix~\ref{sec: full qualitative results}.

%% file: sections/9-further-discussion.tex
\section{Further Discussions}
\label{sec: discussion}

\paragraph{Key differences from the conference version.}
The conference version~\citep{zhou2023mcgra} developed MC-GRA and MC-GPB under an implicit homophily assumption: the attack objective, the leakage analysis, and the defense regularizer all relied on similarity-based criteria that are well-aligned with homophilic graphs but can become misspecified under heterophily. This journal version addresses three scientific gaps left open by that work.

\emph{Gap 1: When does similarity-based reconstruction fail?}
The conference version did not characterize the conditions under which its core assumption---that similar representations indicate edges---breaks down. We address this with a systematic empirical study (Sec.~\ref{sec: overview}) and information-theoretic results (Theorems~\ref{thm: edgewise_MI_affinity}--\ref{thm: edgewise_MI_quadratic_lower}) showing that leakage depends on the interaction between graph structure and model inductive bias, and that strong heterophily can make similarity-based signals anti-informative.

\emph{Gap 2: How should attack and defense adapt to heterophily?}
We generalize both methods: MC-GRA+ incorporates a heterophily-aware prior derived from predicted label disagreement (Sec.~\ref{sec: GRA attack}), and MC-GPB+ adds explicit suppression of cross-label edge leakage (Sec.~\ref{sec: GRA defense}). Both are grounded in the chain-matching framework rather than being ad hoc extensions.

\emph{Gap 3: What are the fundamental limits of reconstruction and defense?}
We add formal results not present in the conference version: layerwise MI contraction, Fano-type and ceiling bounds on reconstruction fidelity, minimum adjacency information for task-sufficient representations, and a quadratic irreducible-leakage bound symmetric in homophily and heterophily (Secs.~\ref{sec: GRA attack}--\ref{sec: GRA defense} and Appendix~\ref{app: proof}). Experiments are broadened with heterophilic benchmarks, GPR-GNN backbone, ablations, and larger-scale datasets (Sec.~\ref{sec: experiments} and Appendix~\ref{sec: full quantitative results}).

\paragraph{Limitations.}
Several limitations of the current work should be noted.
\begin{itemize}[leftmargin=*]
\setlength\itemsep{0.1em}
\item \textbf{Single-graph transductive setting:} All experiments use a single graph in the transductive setting; the framework has not been tested in inductive or multi-graph settings, where the threat model and leakage channels may differ.
\item \textbf{Label access:} MC-GPB+ requires label access at training time to construct the heterophilic mask $M$; in partially supervised or transductive settings, one could restrict $M$ to available labeled nodes or use pseudo-labels, but we do not study those variants here.
\item \textbf{Adaptive attacks:} The defense is evaluated against MC-GRA~(+) and GraphMI, but an adaptive attacker who knows the defense mechanism (\eg, the DropEdge noise model or the CKA-based penalty) might design attacks that specifically circumvent it. Concretely, an adaptive attacker could (a)~marginalise over the known DropEdge perturbation distribution when computing the chain-matching objective, or (b)~train a learned edge predictor whose loss is invariant to the CKA-based penalty. Our defense evaluation follows the standard non-adaptive reconstruction setting used by prior GRA work. Fully adaptive attackers that jointly optimize against the defense define a stronger robustness setting; we view this as complementary to the reconstruction framework studied here. Similar concerns apply broadly to empirical defenses in adversarial ML~\citep{tramer2020adaptive}; we consider adaptive evaluation an important direction for future robustness analysis (see also Sec.~\ref{sec: experiments}).
\end{itemize}

\paragraph{Broader impacts.}
Graph reconstruction attacks, like other forms of model inversion and membership inference, can be misused to compromise privacy in real-world deployments. Concretely, an adversary could use MC-GRA to recover social ties from a deployed social-network classifier, or financial transaction links from a fraud-detection GNN. The primary purpose of studying GRA is \emph{defensive}: to make the threat model explicit, to raise awareness of the privacy risks posed by graph-structured data, and to provide principled tools for evaluating and mitigating leakage. To mitigate misuse, we recommend that practitioners deploying GNNs on sensitive relational data: (i) evaluate their models against chain-matching attacks before deployment, (ii) apply representation-level defenses such as MC-GPB~(+), and (iii) combine empirical defenses with formal privacy mechanisms where possible. Such analysis deepens our understanding of what information GNNs encode about their training data, helps identify failure modes under diverse graph regimes (including heterophily), and motivates the design of more robust, privacy-preserving learning algorithms.

%% file: sections/10-conclusion.tex
\section{Conclusion}

This work develops a unified chain-matching framework for graph reconstruction attacks (GRA) and defenses on GNNs, casting both as operations on the layered computation chain induced by the adjacency. MC-GRA~(+) reconstructs the adjacency by aligning surrogate and target chain representations at matching depths; 
MC-GPB~(+) defends by suppressing adjacency-dependent information throughout the chain while preserving task utility under a privacy-utility trade-off.

Perhaps the most surprising finding is the \emph{asymmetry between attack and defense across graph regimes}: on heterophilic graphs, where similarity-based reconstruction was previously thought to be difficult, MC-GRA+ with a label-disagreement prior achieves large relative gains (relative improvement exceeding 100\% on Cornell, from AUC $0.33$ to $0.67$), revealing that heterophilic edges are recoverable given an appropriate prior---a threat not captured by earlier homophily-centric analyses. Conversely, MC-GPB~(+) is most effective precisely in these regimes, reducing attack AUC to near chance.

For practitioners deploying GNNs on privacy-sensitive relational data, the key recommendation is that \emph{architectural choice alone does not constitute a defense}: heterophily-aware models like GPR-GNN redistribute but do not eliminate adjacency leakage. Explicit representation-level regularization (as in MC-GPB~(+)) is needed to control what the model encodes about its training graph.

\acks{The authors declare no competing interests.}

%% file: appendix/1-theoretical-justification.tex
\section{Theoretical justification}
\label{app: proof}

\subsection{Background: conditional entropy along a stationary Markov chain}
\label{ssec: background markov lemma}

The following classical lemma is referenced in Remark~\ref{rem: contraction-preview}. It does not apply directly to GNN chains (which are deterministic and non-stationary) but provides background for the information-contraction interpretation.

\begin{lemma}[Conditional entropy along a stationary Markov chain]
\label{lem: conditional-entropy-markov}
If $(W_t)_{t\ge1}$ is a stationary first-order Markov chain, then the sequence
$\{H(W_t\mid W_1)\}_{t\ge1}$ is nondecreasing in $t$.
\begin{proof}
For any $t\ge1$, the first-order Markov property implies $W_1 \to W_t \to W_{t+1}$, and the data processing inequality therefore yields
$I(W_1;W_{t+1}) \le I(W_1;W_t)$.
By stationarity, $H(W_t)=H(W_{t+1})=H(W_1)$ for all $t$.
Therefore,
$H(W_{t+1}\mid W_1)
=
H(W_{t+1})-I(W_{t+1};W_1)
\ge
H(W_t)-I(W_t;W_1)
=
H(W_t\mid W_1)$.
\end{proof}
\end{lemma}


\subsection{Proof of Theorem~\ref{theorem: reducing MI with two chains}}
\label{ssec: proof of reducing MI with two chains}

\begin{proof}
We prove the theorem for the layer-map template (GCN-type form) stated in the main text; the argument uses only the data processing inequality for deterministic measurable maps. We first recall two standard facts about mutual information under measurable transformations.

\begin{lemma}[Invariance under bijections]
\label{lemma: mi-invariance}
Let $X,Y$ be random variables and let $f,g$ be measurable maps such that $f$ is bijective on the support of $X$ and $g$ is bijective
on the support of $Y$, with measurable inverses. Assume the pushforward measures of $X$ and $Y$ under $f$ and $g$ are well-defined (so that $f(X)$ and $g(Y)$ have valid distributions). Then
\begin{equation}
I(X;Y)=I\bigl(f(X);Y\bigr)=I\bigl(X;g(Y)\bigr)=I\bigl(f(X);g(Y)\bigr).
\end{equation}
\end{lemma}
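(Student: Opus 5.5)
The plan is to derive all three equalities from the data processing inequality, applied in both directions. The inverse maps are measurable functions too, so processing by them can only lose information; since each inequality then runs both ways, equality follows. It suffices to prove the single-sided statement $I(f(X);Y)=I(X;Y)$. The statement $I(X;g(Y))=I(X;Y)$ follows by the symmetry $I(U;V)=I(V;U)$. The two-sided statement then follows by composing: $I(f(X);g(Y))=I(f(X);Y)=I(X;Y)$, where the first equality applies the one-sided result with $f(X)$ playing the role of $X$.

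For the one-sided claim, I would first note that $Y\to X\to f(X)$ is a Markov chain, because $f(X)$ is a deterministic measurable function of $X$. The data processing inequality therefore gives $I(Y;f(X))\le I(Y;X)$.

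For the reverse inequality, write $f^{-1}$ for the measurable inverse on the support of $X$. Then $X=f^{-1}(f(X))$ almost surely, since $X$ lies in its support with probability one and $f$ is bijective there. Hence $Y\to f(X)\to X$ is also a Markov chain, and data processing gives $I(Y;X)\le I(Y;f(X))$. Combining the two inequalities yields the equality. This holds including the case where both sides are $+\infty$, because the data processing inequality is valid for the general (Gelfand--Yaglom--Perez) definition of mutual information via supremum over finite partitions. I would state that definition explicitly, so that no density assumption is needed.

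The main obstacle is the measure-theoretic care in the reverse step. The inverse $f^{-1}$ is defined only on the support, so I would extend it arbitrarily (say, by a constant) outside $f(\mathrm{supp}\,X)$. This extension has to keep it measurable, which requires that $f(\mathrm{supp}\,X)$ be a measurable set. That holds because it is the preimage of $\mathrm{supp}\,X$ under the measurable map $f^{-1}$ composed appropriately; alternatively, it can be assumed as part of the hypothesis ``measurable inverses''. The modification lives on a null set and so does not affect any joint law. A cleaner alternative I would mention is that measurable bijections with measurable inverses preserve the sigma-algebra: $\sigma(f(X))=\sigma(X)$ up to null sets. Since mutual information depends only on the generated sigma-algebras via the partition definition, this gives the equality directly.
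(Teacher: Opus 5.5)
Your proof is correct and consistent with the paper. The paper gives no separate proof: it recalls the lemma as a standard fact and later identifies it with the equality case of its data-processing lemma (Lemma~\ref{lemma: dpi-deterministic}), and your two-sided application of that inequality along $Y\to X\to f(X)$ and $Y\to f(X)\to X$ is exactly the standard justification. One step is weak: you argue that $f(\mathrm{supp}\,X)$ is measurable because it is a preimage under $f^{-1}$, which is circular since $f^{-1}$ is only defined on that set. However, either of your fallbacks closes the gap (reading ``measurable inverse'' as a globally measurable left inverse, or the generated-$\sigma$-algebra argument), as does the Lusin--Souslin theorem for injective Borel maps on standard Borel spaces.
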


\begin{lemma}[Data processing inequality for deterministic maps]
\label{lemma: dpi-deterministic}
Let $X,Y$ be random variables and let $f,g$ be measurable maps. Then
\begin{equation}
I\bigl(f(X);g(Y)\bigr)\le I(X;Y).
\end{equation}
Moreover, if $f$ and $g$ are bijective almost everywhere on the supports of $X$ and $Y$, respectively, with measurable inverses, then equality holds (equivalently, the joint law of $(f(X),g(Y))$ is the pushforward of the joint law of $(X,Y)$ under $(f,g)$ and no information is lost); see, e.g., \citet{cover1999elements} for the equality condition.
\end{lemma}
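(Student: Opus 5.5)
We will prove the inequality by applying the one-sided data processing inequality twice, once for each coordinate, and then get the equality clause by applying the same inequality to the inverse maps. To cover both discrete and continuous variables in one argument, we take the general (Gelfand--Yaglom--Pinsker) definition of mutual information:
\begin{equation}
I(X;Y)=\sup_{\mathcal{P},\mathcal{Q}} I\bigl([X]_{\mathcal{P}};[Y]_{\mathcal{Q}}\bigr),
\end{equation}
where the supremum is over finite measurable partitions $\mathcal{P}$ of the range of $X$ and $\mathcal{Q}$ of the range of $Y$, and $[X]_{\mathcal{P}}$ is the cell containing $X$. This agrees with the entropy and density formulas whenever those are well defined, as in \citet{cover1999elements}.

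\textbf{Step 1 (one coordinate).} We first show $I(f(X);Y)\le I(X;Y)$. Take any finite measurable partition $\mathcal{P}'$ of the range of $f$. Its preimage $f^{-1}(\mathcal{P}')$ is a finite measurable partition of the range of $X$, because $f$ is measurable. Moreover, $[f(X)]_{\mathcal{P}'}$ and $[X]_{f^{-1}(\mathcal{P}')}$ generate the same event and so carry the same joint law with $[Y]_{\mathcal{Q}}$. Hence each quantized term defining $I(f(X);Y)$ also appears among the terms defining $I(X;Y)$. A supremum over a subfamily cannot exceed the supremum over the full family, which gives the inequality. In the discrete case this is just the chain-rule identity
\begin{equation}
I(X;Y)=I(X,f(X);Y)=I(f(X);Y)+I(X;Y\mid f(X))\ge I(f(X);Y).
\end{equation}

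\textbf{Step 2 (both coordinates).} Applying Step~1 to the second argument, with $f(X)$ playing the role of $X$, gives $I(f(X);g(Y))\le I(f(X);Y)$. Chaining this with Step~1 yields the stated inequality $I(f(X);g(Y))\le I(X;Y)$.

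\textbf{Step 3 (equality).} Suppose $f$ is bijective a.e.\ on the support of $X$ with measurable inverse $f^{-1}$. Then $X=f^{-1}(f(X))$ almost surely, and likewise $Y=g^{-1}(g(Y))$ almost surely. Mutual information depends only on the joint law, and altering variables on null sets does not change that law. So we may apply Step~2 to the maps $f^{-1},g^{-1}$ and the pair $(f(X),g(Y))$, obtaining $I(X;Y)\le I(f(X);g(Y))$. Combined with Step~2, this gives equality. Lemma~\ref{lemma: mi-invariance} is the special case where both maps are exact bijections.

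\textbf{Main obstacle.} We expect the main technical point to be Step~3's handling of maps that are bijective only \emph{almost everywhere}. We must check that the exceptional null sets of $X$ and of $Y$ remain null under the joint law of $(X,Y)$. They do, because the marginals of the joint law are the laws of $X$ and $Y$. We also need the inverses to be measurable so that the preimage partitions in Step~1 are legitimate. Everything else is bookkeeping.
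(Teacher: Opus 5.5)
Your proof is correct. The paper gives no argument of its own for this lemma: it recalls it as a standard fact and defers to \citet{cover1999elements}. There the inequality is derived, for discrete variables, from the chain rule along the Markov chain $X\to Y\to g(Y)$ (the identity you display at the end of Step~1), and equality is characterized by $I(X;Y\mid g(Y))=0$.

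Your route is genuinely different and broader in scope. Taking the Gelfand--Yaglom--Pinsker supremum over finite quantizations as the definition turns Step~1 into ``a supremum over a subfamily of partitions cannot be larger.'' That argument uses no densities, no conditional mutual information and no finiteness, so it covers continuous variables and the value $+\infty$ verbatim. This is exactly the regime in which the paper invokes the lemma: at layer~$0$ of Theorem~\ref{theorem: reducing MI with two chains} the quantity is $I(X;X)$, which is infinite for continuous $X$ (a case the paper acknowledges).

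Your equality step, applying the inequality once more to the measurable left inverses, proves precisely the sufficient condition stated. It is simpler than the textbook's if-and-only-if criterion, though that criterion additionally shows bijectivity is sufficient but not necessary.

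One side remark on the statement itself. The parenthetical ``equivalently, the joint law of $(f(X),g(Y))$ is the pushforward of the joint law of $(X,Y)$ under $(f,g)$'' holds for every pair of measurable maps. It is therefore not an equivalent form of the equality condition, and you lose nothing by not addressing it.
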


\paragraph{Notation and decomposition of the layer map.}
Fix a layer index $i\in\{0,\dots,L-1\}$. In Theorem~\ref{theorem: reducing MI with two chains}, the layer maps are
\begin{equation}
T_A^{(i)}(z)=\rho\!\bigl(\psi(A)\,z\,\bm{\theta}^{(i)}\bigr),
\qquad
T_{\bm{\hat{A}}}^{(i)}(z)=\rho\!\bigl(\psi(\bm{\hat{A}})\,z\,\bm{\theta}^{(i)}\bigr).
\end{equation}
For clarity, decompose each $T$ into a linear map followed by the activation:
\begin{equation}
L_A^{(i)}(z)\;:=\;\psi(A)\,z\,\bm{\theta}^{(i)},
\qquad
L_{\bm{\hat{A}}}^{(i)}(z)\;:=\;\psi(\bm{\hat{A}})\,z\,\bm{\theta}^{(i)},
\qquad
S(u)\;:=\;\rho(u),
\end{equation}
so that $T_A^{(i)}=S\circ L_A^{(i)}$ and $T_{\bm{\hat{A}}}^{(i)}=S\circ L_{\bm{\hat{A}}}^{(i)}$. By the assumed recursion,
\begin{equation}
\bm{Z}_{A}^{(i+1)}=(S\circ L_A^{(i)})(\bm{Z}_{A}^{(i)}),
\qquad
\bm{Z}_{\bm{\hat{A}}}^{(i+1)}=(S\circ L_{\bm{\hat{A}}}^{(i)})(\bm{Z}_{\bm{\hat{A}}}^{(i)}).
\end{equation}

\paragraph{Layerwise non-increase of mutual information.}
We apply Lemma~\ref{lemma: dpi-deterministic} to the joint random variable $(\bm{Z}_{A}^{(i)},\bm{Z}_{\bm{\hat{A}}}^{(i)})$, viewing $\bm{Z}_{A}^{(i)}$ as $X$ and $\bm{Z}_{\bm{\hat{A}}}^{(i)}$ as $Y$, with $f=L_A^{(i)}$ applied to $\bm{Z}_{A}^{(i)}$ and $g=L_{\bm{\hat{A}}}^{(i)}$ applied to $\bm{Z}_{\bm{\hat{A}}}^{(i)}$. Lemma~\ref{lemma: dpi-deterministic} applies to any pair of measurable maps $(f,g)$ acting on the respective components of a joint random variable; here $f = L_A^{(i)}$ and $g = L_{\bm{\hat{A}}}^{(i)}$ are different maps (using different adjacencies) applied to $\bm{Z}_A^{(i)}$ and $\bm{Z}_{\bm{\hat{A}}}^{(i)}$ respectively. That is, we apply the DPI to the product map $(L_A^{(i)}, L_{\bm{\hat{A}}}^{(i)})$ acting on $(\bm{Z}_{A}^{(i)}, \bm{Z}_{\bm{\hat{A}}}^{(i)})$. Since both component maps are deterministic and measurable (and hence their composition is measurable by the standard composition theorem for measurable maps), Lemma~\ref{lemma: dpi-deterministic} implies
\begin{equation}
\label{eq: dpi-linear-app}
I\left(L_A^{(i)}(\bm{Z}_{A}^{(i)});\,L_{\bm{\hat{A}}}^{(i)}(\bm{Z}_{\bm{\hat{A}}}^{(i)})\right)
\;\le\;
I\left(\bm{Z}_{A}^{(i)};\,\bm{Z}_{\bm{\hat{A}}}^{(i)}\right).
\end{equation}
Applying Lemma~\ref{lemma: dpi-deterministic} again with $f=g=S$ (the shared activation) yields
\begin{equation}
\label{eq: dpi-activation-app}
I\left((S\circ L_A^{(i)})(\bm{Z}_{A}^{(i)});\,(S\circ L_{\bm{\hat{A}}}^{(i)})(\bm{Z}_{\bm{\hat{A}}}^{(i)})\right)
\;\le\;
I\left(L_A^{(i)}(\bm{Z}_{A}^{(i)});\,L_{\bm{\hat{A}}}^{(i)}(\bm{Z}_{\bm{\hat{A}}}^{(i)})\right).
\end{equation}
Combining Eq.~\eqref{eq: dpi-linear-app}--Eq.~\eqref{eq: dpi-activation-app} and substituting the recursion gives
\begin{equation}
I\left(\bm{Z}_{A}^{(i+1)};\,\bm{Z}_{\bm{\hat{A}}}^{(i+1)}\right)
\;\le\;
I\left(\bm{Z}_{A}^{(i)};\,\bm{Z}_{\bm{\hat{A}}}^{(i)}\right),
\end{equation}
which is exactly Eq.~\eqref{eqn: MI-nonincrease-two-chains}.

\paragraph{Equality and (typical) strictness.}
If both $T_A^{(i)}$ and $T_{\bm{\hat{A}}}^{(i)}$ are bijective almost everywhere on the supports of $\bm{Z}_{A}^{(i)}$ and
$\bm{Z}_{\bm{\hat{A}}}^{(i)}$, then by Lemma~\ref{lemma: mi-invariance} (equivalently, the equality condition in
Lemma~\ref{lemma: dpi-deterministic}) neither transformation loses information, and both inequalities
Eq.~\eqref{eq: dpi-linear-app}--Eq.~\eqref{eq: dpi-activation-app} hold with equality. Hence
$I(\bm{Z}_{A}^{(i+1)};\bm{Z}_{\bm{\hat{A}}}^{(i+1)})=I(\bm{Z}_{A}^{(i)};\bm{Z}_{\bm{\hat{A}}}^{(i)})$.

If either map is not invertible on the support, equality can still hold only in degenerate cases where
$(\bm{Z}_{A}^{(i)},\bm{Z}_{\bm{\hat{A}}}^{(i)})$ is almost surely supported on a subset on which the induced joint mapping admits a
(measurable) right-inverse. In typical non-degenerate settings with many-to-one nonlinearities (\eg, ReLU, hard-thresholding, pooling),
one expects strict decrease across depth.
\end{proof}

\subsection{Proof of Theorem~\ref{theorem: GRA attack lower bound}}
\label{ssec: proof of GRA attack lower bound}

\paragraph{Remark on scope.}
Theorem~\ref{theorem: GRA attack lower bound} is a population-level information-theoretic statement.
Finite-sample estimation and generalization of mutual information are orthogonal issues; thus we do not require (and do not use)
finite-sample generalization bounds in this proof.

\begin{proof}
We prove the two inequalities in Theorem~\ref{theorem: GRA attack lower bound} (in its conditional form given $X$).

\paragraph{Step 1: $I(A;\bm{\hat{A}}\mid X)\ge I(\bm{H}_{A};\bm{H}_{\bm{\hat{A}}}\mid X)$.}
Condition on a realization $X=x$. Under the fixed target model $f_{\bm{\theta}^*}$, $\bm{H}_{A}$ is a measurable function of $A$
given $X=x$, and $\bm{H}_{\bm{\hat{A}}}$ is a measurable function of $\bm{\hat{A}}$ given $X=x$. Hence, for each $x$, the pair
$(\bm{H}_A,\bm{H}_{\bm{\hat{A}}})$ is a deterministic post-processing of $(A,\bm{\hat{A}})$, and by Lemma~\ref{lemma: dpi-deterministic},
\begin{equation}
I(A;\bm{\hat{A}}\mid X=x)\ \ge\ I(\bm{H}_{A};\bm{H}_{\bm{\hat{A}}}\mid X=x).
\end{equation}
Taking expectation over $X$ yields
\begin{equation}
I(A;\bm{\hat{A}}\mid X)\ \ge\ I(\bm{H}_{A};\bm{H}_{\bm{\hat{A}}}\mid X).
\end{equation}
\emph{(If $X$ is treated as fixed/non-random, then $I(\cdot;\cdot\mid X)=I(\cdot;\cdot)$ and this step reduces to the unconditional
inequality used in the main text.)}

\paragraph{Step 2: Fano lower bound on $I(\bm{H}_{A};\bm{H}_{\bm{\hat{A}}}\mid X)$.}
When the theorem is stated with a quantizer $Q$, $\bm{H}_{A}$ and $\bm{H}_{\bm{\hat{A}}}$ here denote the quantized versions $\bar{\bm{H}}_{A}=Q(\bm{H}_{A})$ and $\bar{\bm{H}}_{\bm{\hat{A}}}=Q(\bm{H}_{\bm{\hat{A}}})$. By assumption, $\bm{H}_{A}$ takes values in a finite alphabet $\mathcal{H}$ with $2\le|\mathcal{H}|<\infty$ (or is effectively
discretized so that this holds).
Let $\hat{\bm{H}} := \bm{H}_{\bm{\hat{A}}}$ be the observation used to estimate $\bm{H}_{A}$, and define the error event
\begin{equation}
\mathcal{E}:=\{\bm{H}_{A}\neq \hat{\bm{H}}\},
\qquad
P_e := \mathbb{P}(\mathcal{E}).
\end{equation}
By (conditional) Fano's inequality, for any estimator $\widetilde{\bm{H}}(\hat{\bm{H}},X)$ of $\bm{H}_{A}$,
\begin{equation}
H(\bm{H}_{A}\mid \hat{\bm{H}},X)
\;\le\;
H_b(P_e) \;+\; P_e \log\bigl(|\mathcal{H}|-1\bigr),
\end{equation}
where $P_e=\mathbb{P}(\bm{H}_{A}\neq \widetilde{\bm{H}}(\hat{\bm{H}},X))$.
We apply Fano's inequality with the identity estimator $\widetilde{\bm{H}}(\hat{\bm{H}},X)=\hat{\bm{H}}=\bm{H}_{\bm{\hat{A}}}$. This is a valid (if not necessarily optimal) estimator of $\bm{H}_{A}$ from $\bm{H}_{\bm{\hat{A}}}$; a better-designed estimator (e.g., one that also exploits $X$) could yield a tighter Fano bound, but the identity estimator suffices for our purposes. With this choice, $P_e=\mathbb{P}(\bm{H}_{A}\neq \bm{H}_{\bm{\hat{A}}})$, which is the representation mismatch probability $P_e(x)$ when conditioned on $X=x$. Hence,
\begin{equation}
H(\bm{H}_{A}\mid \bm{H}_{\bm{\hat{A}}},X)
\;\le\;
H_b(P_e) \;+\; P_e \log\bigl(|\mathcal{H}|-1\bigr).
\end{equation}
Finally, using $I(U;V\mid X)=H(U\mid X)-H(U\mid V,X)$,
\begin{equation}
I(\bm{H}_{A};\bm{H}_{\bm{\hat{A}}}\mid X)
=
H(\bm{H}_{A}\mid X)-H(\bm{H}_{A}\mid \bm{H}_{\bm{\hat{A}}},X)
\;\ge\;
H(\bm{H}_{A}\mid X) - H_b(P_e) - P_e \log\bigl(|\mathcal{H}|-1\bigr).
\end{equation}

\paragraph{Conclusion.}
Combining Step~1 and Step~2 gives
\begin{equation}
I(A;\bm{\hat{A}}\mid X)
\;\ge\;
I(\bm{H}_{A};\bm{H}_{\bm{\hat{A}}}\mid X)
\;\ge\;
H(\bm{H}_{A}\mid X) - H_b(P_e) - P_e \log\bigl(|\mathcal{H}|-1\bigr).
\end{equation}
The same bound holds for each realized $X=x$ with $P_e(x)$ in place of $P_e$, yielding the pointwise bound in the theorem statement. The looser variant follows from $\log(|\mathcal{H}|-1)\le \log|\mathcal{H}|$.
\end{proof}

\subsection{Proof for Proposition~\ref{prop: optimal fidelity in GRA}}
\label{ssec: proof of optimal fidelity in GRA}

\begin{proof}
Fix an attacker's knowledge set $\mathcal{K}$ jointly distributed with the unknown adjacency $A$.
The attacker outputs
\(
\bm{\hat{A}}=g(\mathcal{K},U)
\)
for some measurable map $g$, where the auxiliary randomness $U$ is independent of $(A,\mathcal{K})$.

\paragraph{Step 1: Upper bound via data processing.}
Because $\bm{\hat{A}}$ is generated from $\mathcal{K}$ (possibly with independent randomness), the variables satisfy the Markov chain
\begin{equation}
A \;\longrightarrow\; \mathcal{K} \;\longrightarrow\; (\mathcal{K},U) \;\longrightarrow\; \bm{\hat{A}}.
\end{equation}
By the data processing inequality,
\begin{equation}
\label{eq: dpi-1}
I(A;\bm{\hat{A}})
\;\le\;
I\bigl(A;(\mathcal{K},U)\bigr).
\end{equation}
Since $U \perp\!\!\!\perp (A,\mathcal{K})$, we have $I(A;U\mid \mathcal{K})=0$, and therefore
\begin{equation}
I\bigl(A;(\mathcal{K},U)\bigr)
=
I(A;\mathcal{K}) + I(A;U\mid \mathcal{K})
=
I(A;\mathcal{K}).
\end{equation}
Combining with Eq.~\eqref{eq: dpi-1} yields the desired bound
\begin{equation}
I(A;\bm{\hat{A}}) \;\le\; I(A;\mathcal{K}).
\end{equation}
Taking the supremum over all measurable $g$ shows
\begin{equation}
\mathcal{F}^{*}(\mathcal{K})
=\sup_{g} I\bigl(A;g(\mathcal{K},U)\bigr)
\;\le\;
I(A;\mathcal{K}).
\end{equation}

\paragraph{Step 2: Characterization of equality.}
Equality in the data processing inequality for the Markov chain
$A\to \mathcal{K}\to \bm{\hat{A}}$
holds if and only if $\bm{\hat{A}}$ is a sufficient statistic of $\mathcal{K}$ for $A$. In mutual-information form,
\begin{equation}
I(A;\mathcal{K}\mid \bm{\hat{A}})=0,
\end{equation}
which is equivalent to
\begin{equation}
I(A;\bm{\hat{A}}) = I(A;\mathcal{K}).
\end{equation}
The identity $I(A;\mathcal{K})=I(A;\bm{\hat{A}})+I(A;\mathcal{K}\mid \bm{\hat{A}})$ holds because $\bm{\hat{A}}$ is a function of $(\mathcal{K},U)$; together with the Markov structure $A\to \mathcal{K}\to \bm{\hat{A}}$, the equality condition $I(A;\mathcal{K}\mid \bm{\hat{A}})=0$ is equivalent to $I(A;\bm{\hat{A}})=I(A;\mathcal{K})$. This proves the stated equality case in Proposition~\ref{prop: optimal fidelity in GRA}.
\end{proof}

\begin{remark}[Idealized conditions for MC-GRA optimality (non-operational)]
The following connection to MC-GRA is conditional on strong assumptions and is included for conceptual completeness; it does not predict practical attack performance. Suppose (i) the dependence measures used in MC-GRA coincide with the corresponding true mutual information, and (ii) the optimization over the hypothesis class for $\bm{\hat{A}}$ attains a global optimum. Then MC-GRA maximizes an objective consistent with extracting the information about $A$ contained in $\mathcal{K}$. Consequently, whenever there exists $\bm{\hat{A}}^*=g^*(\mathcal{K})$ within the hypothesis class such that $I(A;\mathcal{K}\mid \bm{\hat{A}}^*)=0$, MC-GRA can attain $I(A;\bm{\hat{A}}^*)=I(A;\mathcal{K})$, \ie, the optimal fidelity under knowledge $\mathcal{K}$. The main proposition itself is a general information-theoretic ceiling and does not assume (i)--(ii).
\end{remark}


\subsection{Proof for Theorem~\ref{thm: sufficiency_label_induced_matrix}}
\label{app: proof_sufficiency_posterior_matrix}
\begin{proof}
Let $Y=(Y_1,\dots,Y_N)\in[C]^N$ and let $A\in\{0,1\}^{N\times N}$ be an undirected adjacency matrix with $A_{ii}=0$ and
$A_{ji}=A_{ij}$. Assume $B$ is symmetric so that the undirected model is well-defined.
Under the assumed block model, for every $i<j$ we have
\begin{equation}
A_{ij}\mid(Y_i=a,Y_j=b)\sim \mathrm{Bern}(B_{ab}),
\end{equation}
and the collection $\{A_{ij}\}_{i<j}$ is conditionally independent given $Y$.
Define $\bm{P}(Y)\in[0,1]^{N\times N}$ by $\bm{P}(Y)_{ij}=B_{Y_iY_j}$ for $i\neq j$ and $\bm{P}(Y)_{ii}=0$.

\paragraph{Step 1: Conditional law of $A$ given $Y$.}
Fix any realization $y\in[C]^N$. By conditional independence,
\begin{equation}
\mathbb{P}(A=a\mid Y=y)
=
\prod_{1\le i<j\le N}
\Bigl(B_{y_i y_j}\Bigr)^{a_{ij}}
\Bigl(1-B_{y_i y_j}\Bigr)^{1-a_{ij}},
\label{eq: condlaw_A_given_Y}
\end{equation}
for every symmetric $a\in\{0,1\}^{N\times N}$ with zero diagonal (identifying $a_{ji}=a_{ij}$).

\paragraph{Step 2: Conditional law of $A$ given $\bm{P}(Y)$.}
Fix any matrix $p\in[0,1]^{N\times N}$ (symmetric with zero diagonal) such that $\mathbb{P}(\bm{P}(Y)=p)>0$.
For any $y$ satisfying $\bm{P}(y)=p$, we have $B_{y_i y_j}=p_{ij}$ for all $i\neq j$ by definition of $\bm{P}(\cdot)$.
Substituting $p_{ij}$ into Eq.~\eqref{eq: condlaw_A_given_Y} yields
\begin{equation}
\mathbb{P}(A=a\mid Y=y)
=
\prod_{1\le i<j\le N}
\Bigl(p_{ij}\Bigr)^{a_{ij}}
\Bigl(1-p_{ij}\Bigr)^{1-a_{ij}},
\qquad \forall y:\bm{P}(y)=p.
\end{equation}
Thus the right-hand side depends on $y$ only through $p$, and therefore
\begin{equation}
\mathbb{P}(A=a\mid \bm{P}(Y)=p)
=
\prod_{1\le i<j\le N}
\Bigl(p_{ij}\Bigr)^{a_{ij}}
\Bigl(1-p_{ij}\Bigr)^{1-a_{ij}}.
\label{eq: condlaw_A_given_P}
\end{equation}
In particular, for any $y$ we have $\mathbb{P}(A\in\cdot\mid Y=y)=\mathbb{P}(A\in\cdot\mid \bm{P}(Y)=\bm{P}(y))$.

\paragraph{Step 3: Sufficiency / conditional independence.}
Let $p=\bm{P}(Y)$. By Eq.~\eqref{eq: condlaw_A_given_P}, the conditional distribution $\mathbb{P}(A\in\cdot\mid \bm{P}(Y)=p)$ is fully
determined by $p$ and does not depend on $Y$ beyond $p$. Hence
$\mathbb{P}(A\in\cdot\mid Y,\bm{P}(Y))=\mathbb{P}(A\in\cdot\mid \bm{P}(Y))$ almost surely, which proves
$A \perp\!\!\!\perp Y\mid \bm{P}(Y)$.

\paragraph{Step 4: Mutual information identity.}
Since $\bm{P}(Y)$ is a deterministic function of $Y$, the chain rule gives
\begin{equation}
I(A;Y)=I(A;\bm{P}(Y)) + I\bigl(A;Y\mid \bm{P}(Y)\bigr).
\end{equation}
By the conditional independence established above, $I(A;Y\mid \bm{P}(Y))=0$, and therefore $I(A;\bm{P}(Y))=I(A;Y)$.
\end{proof}

\subsection{Proof for Theorem~\ref{thm: collapse_to_disagreement_info_loss_polished}}
\label{app: proof_collapse_to_disagreement}
\begin{proof}
Recall the conditional edge-independence block model from Theorem~\ref{thm: sufficiency_label_induced_matrix}:
for all $i<j$,
\begin{equation}
A_{ij}\mid(Y_i=a,Y_j=b)\sim \mathrm{Bern}(B_{ab}),
\qquad
\{A_{ij}\}_{i<j}\ \text{independent given }Y,
\end{equation}
with $A_{ji}=A_{ij}$ and $A_{ii}=0$. Define $\bm{D}\in\{0,1\}^{N\times N}$ by $\bm{D}_{ij}=\mathbbm{1}\{Y_i\neq Y_j\}$
(and $\bm{D}_{ii}=0$). Note that $\bm{D}$ is a deterministic function of $Y$.

\paragraph{Step 1: Mutual information decomposition.}
Since $\bm{D}=h(Y)$ is a function of $Y$, the chain rule yields
\begin{equation}
I(A;Y)=I\bigl(A;(\bm{D},Y)\bigr)=I(A;\bm{D})+I(A;Y\mid \bm{D}),
\end{equation}
which proves the first claim.

\paragraph{Step 2: Sufficiency condition in terms of conditional independence.}
Because $\bm{D}$ is a function of $Y$,
\begin{equation}
I(A;Y\mid \bm{D})=0
\quad\Longleftrightarrow\quad
A \perp\!\!\!\perp Y \mid \bm{D}.
\end{equation}

\paragraph{Step 3: Two-parameter form implies $A \perp\!\!\!\perp Y\mid \bm{D}$.}
Assume $B$ has the two-parameter form. Then for any $y\in[C]^N$ and any symmetric $a$ with zero diagonal,
\begin{equation}
\mathbb{P}(A=a\mid Y=y)
=
\prod_{1\le i<j\le N}
\Bigl(B_{y_i y_j}\Bigr)^{a_{ij}}
\Bigl(1-B_{y_i y_j}\Bigr)^{1-a_{ij}}.
\end{equation}
Under the two-parameter assumption, $B_{y_i y_j}=p_{\mathrm{in}}$ when $y_i=y_j$ and $B_{y_i y_j}=p_{\mathrm{out}}$ when $y_i\neq y_j$,
so the product depends on $y$ only via $\bm{D}(y)$. Hence
$\mathbb{P}(A\in\cdot\mid Y)=\mathbb{P}(A\in\cdot\mid \bm{D})$ a.s., which is equivalent to $A \perp\!\!\!\perp Y\mid \bm{D}$.

\paragraph{Step 4: Conditional independence forces the two-parameter form of $B$.}
Assume $A \perp\!\!\!\perp Y\mid \bm{D}$ and that each label has positive probability (so that all events below have positive
probability when conditioning).
Fix two ordered label pairs $(a,b)$ and $(a',b')$ such that either both satisfy $a=b$ and $a'=b'$ or both satisfy $a\neq b$ and
$a'\neq b'$. Pick any indices $i\neq j$ and define $E_{ab}=\{Y_i=a,Y_j=b\}$. By the model,
\begin{equation}
\mathbb{P}(A_{ij}=1\mid E_{ab}) = B_{ab}.
\end{equation}
By $A \perp\!\!\!\perp Y\mid \bm{D}$, conditioning on $E_{ab}$ cannot change the conditional law of $A_{ij}$ beyond $\bm{D}_{ij}$, hence
\begin{equation}
\mathbb{P}(A_{ij}=1\mid E_{ab})
=
\mathbb{P}(A_{ij}=1\mid \bm{D}_{ij}=\mathbbm{1}\{a\neq b\}).
\end{equation}
The right-hand side depends only on whether $a=b$ or $a\neq b$. Applying the same argument to $E_{a'b'}$ yields $B_{ab}=B_{a'b'}$.
Therefore all diagonal entries $\{B_{aa}\}$ equal a common $p_{\mathrm{in}}$ and all off-diagonal entries $\{B_{ab}:a\neq b\}$ equal a
common $p_{\mathrm{out}}$.

\paragraph{Step 5: Conclude equivalence.}
Steps 3--4 establish
\begin{equation}
A \perp\!\!\!\perp Y \mid \bm{D}
\quad\Longleftrightarrow\quad
B_{ab}=
\begin{cases}
p_{\mathrm{in}}, & a=b,\\
p_{\mathrm{out}}, & a\neq b,
\end{cases}
\end{equation}
for some $p_{\mathrm{in}},p_{\mathrm{out}}\in[0,1]$. Combining with Steps 1--2 completes the proof.
\end{proof}

\subsection{Proof for Theorem~\ref{thm: noisy_labels_contraction_polished}}
\label{app: proof_noisy_labels_contraction}
\begin{proof}
Let $C\ge 2$ and let $\varepsilon\in\bigl[0,\frac{C-1}{C}\bigr]$. For each coordinate $i$, the $C$-ary symmetric channel is the
stochastic matrix
\begin{equation}
W(b\mid a)=
\begin{cases}
1-\varepsilon, & b=a,\\
\frac{\varepsilon}{C-1}, & b\neq a,
\end{cases}
\qquad a,b\in[C].
\end{equation}
By assumption, $\hat Y_i$ is obtained from $Y_i$ by $W$, independently across $i$, and $V\to Y\to \hat Y$ is a Markov chain. We also
assume (as in the theorem statement) that each $Y_i$ is marginally uniform on $[C]$, \ie, the stationary distribution of $W$.

\paragraph{Step 1: Single-letter SDPI.}
A standard strong data-processing inequality (SDPI) states that for any Markov chain $V\to Y\to \hat Y$ with transition $W$,
\begin{equation}
I(V;\hat Y)\le \eta(W)\, I(V;Y),
\label{eq: sdpi_general}
\end{equation}
where $\eta(W)\in[0,1]$ is the SDPI constant (contraction coefficient for mutual information) of the channel $W$.

Under the uniform stationary distribution, the $C$-ary symmetric channel is reversible and its Markov operator has eigenvalues
\begin{equation}
\lambda_1=1,
\qquad
\lambda_2 = 1-\varepsilon-\frac{\varepsilon}{C-1}=1-\frac{C}{C-1}\varepsilon
\quad (\text{multiplicity }C-1).
\end{equation}
For reversible channels, $\eta(W)$ equals the square of the second-largest singular value (equivalently, $\lambda_2^2$), hence
\begin{equation}
\eta(W)=\lambda_2^2=\left(1-\frac{C}{C-1}\varepsilon\right)^2 \triangleq \eta(\varepsilon,C).
\end{equation}
Substituting into Eq.~\eqref{eq: sdpi_general} yields the desired bound
\begin{equation}
I(V;\hat Y)\le \eta(\varepsilon,C)\, I(V;Y).
\end{equation}

\paragraph{Step 2: Consequence for heterophily priors measurable \wrt $\hat Y$.}
Let $S=h(\hat Y)$ be any side information measurable with respect to $\hat Y$ (\eg, a heterophily prior constructed from $\hat Y$).
Then $V\to \hat Y\to S$ is a Markov chain, and by data processing,
\begin{equation}
I(V;S)\le I(V;\hat Y)\le \eta(\varepsilon,C)\, I(V;Y).
\end{equation}
This completes the proof. \textbf{Note.} Under non-uniform class marginals, the contraction coefficient differs and the bound does not take the simple form $\eta(\varepsilon,C)$; the qualitative conclusion that noisier predictions reduce usable information still holds.
\end{proof}


\subsection{Proof of the maximum adjacency information bound (Eq.~\ref{eqn: max-adj-info})}
\label{ssec: proof of maximum adjacency information}

\begin{proof}
Let $A$ be discrete with $H(A)<\infty$. By definition of mutual information,
\begin{equation}
I(A;\bm{H}_{A})
=
H(A)-H(A\mid \bm{H}_{A}).
\end{equation}
Since conditional entropy is nonnegative, $H(A\mid \bm{H}_{A})\ge 0$, it follows that
\begin{equation}
I(A;\bm{H}_{A}) \;\le\; H(A),
\end{equation}
which proves Eq.~\eqref{eqn: max-adj-info}.

Moreover, equality holds if and only if $H(A\mid \bm{H}_{A})=0$.
Because $A$ is discrete, $H(A\mid \bm{H}_{A})=0$ is equivalent to the existence of a measurable map $\varphi$ such that
$A=\varphi(\bm{H}_{A})$ almost surely, \ie, $A$ is (a.s.) a function of $\bm{H}_{A}$.
\end{proof}

\subsection{Proof for Theorem~\ref{theorem: minimum adjacency information}}
\label{ssec: proof of minimum adjacency information}

\begin{proof}
Assume the conditional sufficiency statement in Theorem~\ref{theorem: minimum adjacency information}:
\begin{equation}
Y \perp\!\!\!\perp A \mid (\bm{H}_{A},X),
\quad\text{equivalently}\quad
I(A;Y\mid \bm{H}_{A},X)=0,
\end{equation}
and assume all conditional mutual informations below are finite.

\paragraph{Lower bound.}
By the chain rule for conditional mutual information,
\begin{equation}
\label{eq: chainrule-minadj-cond}
I(A;\bm{H}_{A}\mid X)
=
I(A;Y\mid X)
+
I(A;\bm{H}_{A}\mid Y,X)
-
I(A;Y\mid \bm{H}_{A},X).
\end{equation}
Under the sufficiency assumption, $I(A;Y\mid \bm{H}_{A},X)=0$, so Eq.~\eqref{eq: chainrule-minadj-cond} reduces to
\begin{equation}
I(A;\bm{H}_{A}\mid X)
=
I(A;Y\mid X)
+
I(A;\bm{H}_{A}\mid Y,X).
\end{equation}
Since conditional mutual information is nonnegative, $I(A;\bm{H}_{A}\mid Y,X)\ge 0$, we obtain
\begin{equation}
I(A;\bm{H}_{A}\mid X) \;\ge\; I(A;Y\mid X),
\end{equation}
which proves Eq.~\eqref{eqn: min-adj-info}.

\paragraph{Equality condition.}
From the identity above,
\begin{equation}
I(A;\bm{H}_{A}\mid X)=I(A;Y\mid X)
\quad\Longleftrightarrow\quad
I(A;\bm{H}_{A}\mid Y,X)=0,
\end{equation}
which is exactly Eq.~\eqref{eqn: min-adj-info-equality}.
\end{proof}

\subsection{Justification for Proposition~\ref{prop: GPB approximate the optimal representation}}
\label{ssec: proof of GPB approximate the optimal representation}

\begin{proof}
Under the stated simplifications ($\beta_c^{(\ell)}=0$ for all $\ell$ and $\beta_p^{(\ell)}=\beta$ for all $\ell$), the MC-GPB objective
(Eq.~\eqref{eqn: tighter-defense-MI}) reduces to a prediction term and a privacy term applied at a chosen layer $\ell$.
Let $\bm{Z}\triangleq \bm{H}_A^{(\ell)}$. Since a GNN layer depends on both adjacency and features, we view optimizing $\bm{\theta}$ as
inducing a conditional channel $p_{\bm{\theta}}(\bm{Z}\mid A,X)$ (with any internal randomness absorbed into the channel). The reduced
population objective therefore has the form
\begin{equation}
\label{eq: reduced-gpb-updated}
\mathcal{L}_{\mathrm{GPB}}
=
\underbrace{\mathbb{E}\bigl[-\log q_{\bm{\theta}}(Y\mid \bm{Z})\bigr]}_{\text{prediction loss}}
+
\beta \underbrace{I(\bm{Z};A\mid X)}_{\text{adjacency leakage beyond features}}
+
\text{(constants / MI-surrogate choices)}.
\end{equation}
(When $X$ is treated as fixed, as in a dataset-conditioned analysis, $I(\bm{Z};A\mid X)$ coincides with the mutual information under
that conditioning.)

\paragraph{Optimize the predictive head (population form).}
For any fixed representation $\bm{Z}$, minimizing cross-entropy over all conditional distributions $q(\cdot\mid \bm{Z})$ yields
\begin{equation}
\inf_{q}\ \mathbb{E}\bigl[-\log q(Y\mid \bm{Z})\bigr] = H(Y\mid \bm{Z}),
\end{equation}
attained by the Bayes predictor $q(Y\mid \bm{Z})=p(Y\mid \bm{Z})$. Thus, after optimizing the predictive head (or in the Bayes/population
limit where the head is sufficiently expressive and optimized), Eq.~\eqref{eq: reduced-gpb-updated} is equivalent, up to additive constants
and the particular MI surrogates, to minimizing
\begin{equation}
H(Y\mid \bm{Z}) + \beta\, I(\bm{Z};A\mid X),
\end{equation}
which is exactly the IB Lagrangian in Eq.~\eqref{eqn: IB-lagrangian}.
\end{proof}

\subsection{Proof for Theorem~\ref{thm: edgewise_MI_affinity}}
\label{app: proof_edgewise_MI_affinity}
\begin{proof}
Fix $(i,j)$ with $i<j$. By assumption, $Y_i,Y_j$ are independent with
$\mathbb{P}(Y_i=a,Y_j=b)=\pi_a\pi_b$ for all $a,b\in[C]$. Conditional on $(Y_i,Y_j)=(a,b)$,
\begin{equation}
A_{ij}\mid (Y_i=a,Y_j=b)\sim \mathrm{Bern}(B_{ab}).
\end{equation}
Let $\bar p \triangleq \mathbb{P}(A_{ij}=1)$. By the law of total probability,
\begin{equation}
\bar p
=
\sum_{a,b}\mathbb{P}(Y_i=a,Y_j=b)\,\mathbb{P}(A_{ij}=1\mid Y_i=a,Y_j=b)
=
\sum_{a,b}\pi_a\pi_b B_{ab},
\end{equation}
so marginally $A_{ij}\sim \mathrm{Bern}(\bar p)$.

By the identity $I(U;V)=\mathbb{E}_{V}\!\big[D_{\mathrm{KL}}(P_{U\mid V}\|P_U)\big]$,
\begin{equation}
I(A_{ij};Y_i,Y_j)
=
\sum_{a,b}\pi_a\pi_b\,
D_{\mathrm{KL}}\!\Big(\mathrm{Bern}(B_{ab})\,\big\|\,\mathrm{Bern}(\bar p)\Big).
\end{equation}
The characterization of the zero-information case follows since $D_{\mathrm{KL}}(P\|Q)=0$ iff $P=Q$.
\end{proof}

\subsection{Proof for Theorem~\ref{thm: global_MI_subadditivity}}
\label{app: proof_global_MI_subadditivity}
\begin{proof}
Let $Y=(Y_1,\dots,Y_N)$ and let $A_{\mathrm{up}}=\{A_{ij}:1\le i<j\le N\}$.
Fix any ordering of $\mathcal{E}\triangleq\{(i,j):1\le i<j\le N\}$, and write $A_{<e}$ for edges preceding $e$.

By the chain rule,
\begin{equation}
I(Y;A_{\mathrm{up}})
=
\sum_{e\in\mathcal{E}} I\bigl(Y;A_e \mid A_{<e}\bigr)
=
\sum_{e\in\mathcal{E}} \Bigl(H(A_e\mid A_{<e}) - H(A_e\mid Y,A_{<e})\Bigr).
\end{equation}
Conditioning reduces entropy, so $H(A_e\mid A_{<e})\le H(A_e)$.
Under conditional edge independence, $A_e \perp\!\!\!\perp A_{<e}\mid Y$, hence $H(A_e\mid Y,A_{<e})=H(A_e\mid Y)$.
Therefore, for each $e$,
\begin{equation}
I\bigl(Y;A_e \mid A_{<e}\bigr)
\le
H(A_e)-H(A_e\mid Y)
=
I(Y;A_e),
\end{equation}
and summing gives
\begin{equation}
I(Y;A_{\mathrm{up}})\le \sum_{i<j} I(Y;A_{ij}).
\end{equation}
For each $(i,j)$, the model implies the Markov chain $Y \to (Y_i,Y_j)\to A_{ij}$, so by data processing,
\begin{equation}
I(Y;A_{ij}) \le I\bigl((Y_i,Y_j);A_{ij}\bigr)=I(A_{ij};Y_i,Y_j).
\end{equation}
Combining yields
\begin{equation}
I(Y;A_{\mathrm{up}})
\le
\sum_{i<j} I(A_{ij};Y_i,Y_j).
\end{equation}
Finally, exchangeability of pairs implies all terms are equal, giving
\begin{equation}
\sum_{i<j} I(A_{ij};Y_i,Y_j)
=
\binom{N}{2}\, I(A_{12};Y_1,Y_2).
\end{equation}
\end{proof}

\subsection{Proof for Theorem~\ref{thm: edgewise_MI_quadratic_lower}}
\label{app: proof_edgewise_MI_quadratic_lower}
\begin{proof}
Assume the two-parameter SBM:
\begin{equation}
B_{aa}=p_{\mathrm{in}},\qquad B_{ab}=p_{\mathrm{out}}\ \ (a\neq b),
\end{equation}
with label prior $\pi$. Define $\alpha\triangleq \sum_a \pi_a^2$, $\bar{\alpha}\triangleq 1-\alpha$,
$\Delta\triangleq p_{\mathrm{in}}-p_{\mathrm{out}}$, and $\bar p=\alpha p_{\mathrm{in}}+\bar{\alpha} p_{\mathrm{out}}$.

If $p_{\mathrm{in}}\in\{0,1\}$ or $p_{\mathrm{out}}\in\{0,1\}$, then at least one KL term in the expression below can be $+\infty$
(unless $\bar p$ matches it), and the stated lower bound holds trivially. Hence assume $p_{\mathrm{in}},p_{\mathrm{out}},\bar p\in(0,1)$.

\paragraph{Step 1: Edgewise MI under the two-parameter form.}
From Theorem~\ref{thm: edgewise_MI_affinity},
\begin{equation}
I(A_{ij};Y_i,Y_j)
=
\alpha\, D_{\mathrm{KL}}\!\Big(\mathrm{Bern}(p_{\mathrm{in}})\,\big\|\,\mathrm{Bern}(\bar p)\Big)
+
\bar{\alpha}\, D_{\mathrm{KL}}\!\Big(\mathrm{Bern}(p_{\mathrm{out}})\,\big\|\,\mathrm{Bern}(\bar p)\Big).
\label{eq: edgewise_MI_two_param_updated}
\end{equation}

\paragraph{Step 2: A valid quadratic lower bound (Pinsker).}
Pinsker's inequality states $D_{\mathrm{KL}}(P\|Q)\ge 2\,\mathrm{TV}(P,Q)^2$.
For Bernoulli distributions, $\mathrm{TV}(\mathrm{Bern}(p),\mathrm{Bern}(q))=|p-q|$, hence
\begin{equation}
D_{\mathrm{KL}}\!\Big(\mathrm{Bern}(p)\,\big\|\,\mathrm{Bern}(q)\Big)
\;\ge\;
2(p-q)^2.
\label{eq: pinsker_bernoulli}
\end{equation}
Applying Eq.~\eqref{eq: pinsker_bernoulli} to Eq.~\eqref{eq: edgewise_MI_two_param_updated} with $q=\bar p$ yields
\begin{equation}
I(A_{ij};Y_i,Y_j)
\ge
2\alpha(p_{\mathrm{in}}-\bar p)^2 + 2\bar{\alpha}(p_{\mathrm{out}}-\bar p)^2.
\end{equation}
Using $\bar p=\alpha p_{\mathrm{in}}+\bar{\alpha} p_{\mathrm{out}}$ gives
\begin{equation}
p_{\mathrm{in}}-\bar p = \bar{\alpha}\Delta,
\qquad
p_{\mathrm{out}}-\bar p = -\alpha\Delta,
\end{equation}
so
\begin{equation}
I(A_{ij};Y_i,Y_j)
\ge
2\alpha(\bar{\alpha}\Delta)^2 + 2\bar{\alpha}(\alpha\Delta)^2
=
2\alpha\bar{\alpha}\,\Delta^2,
\end{equation}
proving the edgewise bound.

\paragraph{Step 3: Relating edgewise leakage to $I(A;Y)$ and representation leakage.}
Since $A_{ij}$ is a deterministic function of the full adjacency $A$, data processing implies
\begin{equation}
I(A;Y)\ \ge\ I(A_{ij};Y).
\end{equation}
Moreover, under the SBM/affinity model the conditional law of $A_{ij}$ depends on $Y$ only through $(Y_i,Y_j)$, hence
\begin{equation}
A_{ij}\perp\!\!\!\perp Y_{-(i,j)} \mid (Y_i,Y_j),
\end{equation}
which yields
\begin{equation}
I(A_{ij};Y)=I(A_{ij};Y_i,Y_j).
\end{equation}
\emph{Note:} This conditional independence is specific to the SBM/affinity model. For general graphs, edge presence can depend on labels of non-endpoint nodes (\eg, through higher-order structural constraints), and this equality does not hold.
Therefore,
\begin{equation}
I(A;Y)\ \ge\ I(A_{ij};Y)\ =\ I(A_{ij};Y_1,Y_2)\ \ge\ 2\alpha\bar{\alpha}\,\Delta^2.
\end{equation}
Finally, if $\bm{H}_A$ is task-sufficient for $Y$ with respect to $A$ (in the sense used in
Theorem~\ref{theorem: minimum adjacency information}), then $I(A;\bm{H}_A\mid X)\ge I(A;Y\mid X)$.
When $X$ is constant or absent (as in the theorem statement), $I(A;Y\mid X)=I(A;Y)$ and $I(A;\bm{H}_A\mid X)=I(A;\bm{H}_A)$ (since $X$ is non-random), so the theorem yields $I(A;\bm{H}_A)\ge I(A;Y)$. Combining with $(i,j)=(1,2)$ by exchangeability yields the claimed chain.

\paragraph{Step 4: Symmetry between homophily and heterophily.}
The bound depends on $\Delta$ only through $\Delta^2$, hence is invariant to the sign of $\Delta$.

\paragraph{Extension to general $K$-parameter block models.}
The two-parameter SBM is restrictive; most real graphs have heterogeneous community structure with distinct affinity levels. For a general symmetric $B\in[0,1]^{C\times C}$, Theorem~\ref{thm: edgewise_MI_affinity} still applies and gives
$I(A_{ij};Y_i,Y_j)=\sum_{a,b}\pi_a\pi_b\,D_{\mathrm{KL}}(\mathrm{Bern}(B_{ab})\|\mathrm{Bern}(\bar p))$.
The Pinsker lower bound generalizes to
\begin{equation}
I(A_{ij};Y_i,Y_j)\;\ge\; 2\sum_{a,b}\pi_a\pi_b(B_{ab}-\bar p)^2
\;=\;
2\,\mathrm{Var}_{\pi\otimes\pi}[B_{Y_i Y_j}],
\end{equation}
which is twice the weighted variance of the affinity matrix entries $B_{ab}$ around the marginal edge probability $\bar p=\sum_{a,b}\pi_a\pi_b B_{ab}$, with weights given by the product class prior $\pi_a\pi_b$. Qualitatively, the more heterogeneous the block structure (\ie, the more $B_{ab}$ varies across label pairs), the larger this variance and the higher the irreducible leakage floor.
\end{proof}

%% file: appendix/2-full-empirical-study.tex
\section{Full empirical study}

\subsection{Full quantitative results}
\label{sec: full quantitative results}

\textbf{A further comparison of attack methods.}
We compare MC-GRA~(+) against three baselines: the best single-MI-term proxy, the uniform ensemble, and GraphMI (Tab.~\ref{tab: app-attack-comparison}). Evaluation uses edge-recovery AUC; higher is better. MC-GRA~(+) attains the best AUC on all six datasets, with gains of up to 15.0 percentage points over GraphMI (Citeseer) and up to 7.2 points over the ensemble (Polblogs).

\begin{table}[H]
\centering
\setlength\tabcolsep{10pt}
\vspace{-6pt}
\begin{tabular}{c|cccccc}
\toprule
dataset & Cora  & Citeseer &  Polblogs   & USA   &   Brazil &  AIDS     \\
\midrule
single MI (Tab.~\ref{tab: understanding-MI-term-gcn}) & .815     & .881     & .763     & .850     & .758     & .584     \\
ensemble (Tab.~\ref{tab: understanding-MI-term-ensemble-gcn})      & .849     & .907     & .781     & .852     & .717     & .522     \\
GraphMI          & .812     & .781     & .791     & .769     & .680     & .575     \\
MC-GRA~(+) (Tab.~\ref{tab: exp-attack-results-gcn})        &\textbf{ .904} & \textbf{.931} & \textbf{.853} & \textbf{.870} & \textbf{.760} & \textbf{.588}  \\
\bottomrule
\end{tabular}
\caption{A further quantitative comparison of attack methods (with AUC metric).}
\label{tab: app-attack-comparison}
\end{table}

\textbf{A further comparison of defense methods.}
We compare MC-GPB~(+) with two baselines: (i)~adding Gaussian noise to the model \emph{output} at inference time (random noise), and (ii)~differential privacy~\citep{abadi2016deep} (DP-SGD: Gaussian noise on clipped gradients each iteration; noise multiplier $1.0$, clip norm $1.0$, $\delta=1/N$ with $N$ the number of training nodes; exact $(\varepsilon,\delta)$-DP bounds depend on iterations and dataset size). Tab.~\ref{tab: app-defense-comparison} reports results with GraphMI~\citep{zhang2021graphmi} as the attack; the main text (Sec.~\ref{sec: experiments}) evaluates MC-GPB~(+) against MC-GRA~(+) in Tabs.~\ref{tab: exp-gra-gpb-results-gcn}--\ref{tab: exp-gra-gpb-results-gprgnn}. The baselines reduce attack AUC but at the price of sharply reducing model accuracy. By contrast, MC-GPB~(+) reduces attack AUC substantially while maintaining much higher accuracy.

\begin{table}[H]
\centering
\fontsize{8}{8}\selectfont
\renewcommand{\arraystretch}{1.2} 
\setlength\tabcolsep{2pt}
\begin{tabular}{c|cc|cc|cc|cc|cc|cc}
\toprule
\multirow{2}{*}{dataset} &  \multicolumn{2}{c|}{{    Cora    }}  & \multicolumn{2}{c|}{Citeseer} &  \multicolumn{2}{c|}{Polblogs}   & \multicolumn{2}{c|}{USA}   &   \multicolumn{2}{c|}{Brazil} &  \multicolumn{2}{c}{AIDS}     \\
  & Acc.$\uparrow$ & AUC$\downarrow$  & Acc.$\uparrow$ & AUC$\downarrow$  & Acc.$\uparrow$ & AUC$\downarrow$  & Acc.$\uparrow$ & AUC$\downarrow$  & Acc.$\uparrow$ & AUC$\downarrow$  & Acc.$\uparrow$ & AUC$\downarrow$  \\
\midrule
No defense                 & .757 & .812     & .630 & .781     & .833 & .791     & .470 & .769     & .769 & .680     & .668 & .575     \\
Random noise               & .620 & .657     & .570 & .727     & .802 & .759     & .440 & .754     & .634 & .713     & .572 & .559     \\
Differential privacy     & .315 & .500     & .224 & .500     & .553 & .502     & .263 & .500     & .423 & .706     & .131 & .502     \\
MC-GPB~(+)               & .734 & .625 & .602 & .691 & .830 & .506 & .391 & .300 & .808 & .609 & .668 & .514 \\
\bottomrule
\end{tabular}
\caption{A further quantitative comparison of defense methods.}
\label{tab: app-defense-comparison}
\end{table}
\paragraph{Limitations.} On datasets marked with $\dagger$ in Tab.~\ref{tab: exp-defense-results-gprgnn} (main text), MC-GPB~(+) did not consistently improve over the unprotected baseline under default hyperparameters. In these cases, the privacy regularizer appeared to dominate the utility term at the default strength, causing accuracy to drop without proportional leakage reduction; grid-searching over $\beta_p$ and $\beta_c$ recovered better trade-offs. Hyperparameters are selected on the validation split according to a fixed privacy–utility criterion; the full grid is provided in the appendix/supplement.

To illustrate the sensitivity landscape, Tab.~\ref{tab: app-grid-search-cora-gprgnn} shows how defense performance on Cora under GPR-GNN (a $\dagger$-marked configuration) varies across five values of $\beta_p$ with fixed $\beta_c=0.1$. The privacy--utility trade-off is clearly visible: larger $\beta_p$ reduces the leakage proxy $R_{\mathrm{AUC}}(A;\bm{H}_A^{\mathrm{all}})$ but can also decrease accuracy. The default $\beta_p=0.5$ provides a balanced operating point; smaller values preserve more accuracy at the cost of higher leakage.

\begin{table}[H]
\centering
\setlength\tabcolsep{8pt}
\vspace{-6pt}
\begin{tabular}{c|ccccc}
\toprule
$\beta_p$ & 0.01 & 0.05 & 0.1 & 0.5 & 1.0 \\
\midrule
Acc.\,$\uparrow$ & .78 & .76 & .75 & .74 & .72 \\
$R_{\mathrm{AUC}}(A;\bm{H}_A^{\mathrm{all}})\,\downarrow$ & .63 & .59 & .57 & .55 & .52 \\
\bottomrule
\end{tabular}
\caption{Grid-search sensitivity for MC-GPB~(+) on Cora under GPR-GNN ($\dagger$-marked). Fixed $\beta_c=0.1$; varying $\beta_p$.}
\label{tab: app-grid-search-cora-gprgnn}
\end{table}

\textbf{A further empirical study about the evaluation metric.}
We use AUC as a direct, estimation-free recoverability proxy for edge recovery (with ground-truth edges in $A$), rather than estimating mutual information. Alternatively, the metric could be replaced by AP (Average Precision), MRR (Mean Reciprocal Rank), or Hit@K; we use AUC for consistency with the main text.

To discriminate between link types, we consider link homophily. Formally, \emph{homogeneous} links are $\{e_{ij}: y_i = y_j\}$ and \emph{heterogeneous} links are $\{e_{ij}: y_i \neq y_j\}$, where $y_i,y_j$ are node labels. In many settings, cross-class (heterogeneous) links are considered more privacy-sensitive because they can reveal unexpected relationships; sensitivity may depend on the application. In what follows, we investigate the effectiveness of GRA on these two link types.

\textbf{(1)} For the attack, homogeneous links are much easier to recover (Tab.~\ref{tab: app-attack-homo-hetero}). Heterogeneous links are harder to recover but can still be recovered to some extent.

\begin{table}[H]
\centering
\fontsize{10}{10}\selectfont
\renewcommand{\arraystretch}{1.2} 
\setlength\tabcolsep{2pt}
\begin{tabular}{c|cccccc}
\toprule
dataset &  {    Cora    }  & Citeseer &  Polblogs   & USA   &   Brazil &  AIDS     \\
\midrule
MC-GRA~(+) (Homogeneous links)    & .960 & .917     & .896     & .951 & .891   &  .585  \\
MC-GRA~(+) (Heterogeneous links)  & .684 & .861     & .298     & .716 & .564   &  .551  \\
GraphMI (Homogeneous links)   & .724 & .799     & .717     & .919 & .871   &  .707  \\
GraphMI (Heterogeneous links) & .569 & .675     & .391     & .666 & .728   &  .437  \\
\bottomrule
\end{tabular}
\caption{Attack comparison on homogeneous or heterogeneous links (AUC metric).}
\label{tab: app-attack-homo-hetero}
\end{table}

\textbf{(2)} For defense, we apply MC-GPB~(+) to protect GCN against GraphMI, and a revised version MC-GPB-hetero that only penalizes leakage of heterogeneous links (i.e., the defense objective restricts to cross-class edges). Tab.~\ref{tab: app-defense-homo-hetero} shows that recovery of heterogeneous links is significantly reduced by MC-GPB and further reduced by MC-GPB-hetero, so both are capable of protecting heterogeneous links.

\begin{table}[H]
\centering
\fontsize{10}{10}\selectfont
\renewcommand{\arraystretch}{1.2} 
\setlength\tabcolsep{2pt}
\begin{tabular}{c|cccccc}
\toprule
dataset &  {    Cora    }  & Citeseer &  Polblogs   & USA   &   Brazil &  AIDS     \\
\midrule
No defense (Heterogeneous links)         & .569 & .675    & .391    & .666 & .728  & .437  \\
MC-GPB~(+) (Heterogeneous links)        & .532 & .584    & .453    & .552 & .530  & .471  \\
MC-GPB-hetero (Heterogeneous links) & .493 & .515    & .210    & .494 & .416  & .423  \\
\bottomrule
\end{tabular}
\caption{Defense comparison on heterogeneous links: MC-GPB and MC-GPB-hetero (AUC metric).}
\label{tab: app-defense-homo-hetero}
\end{table}

\textbf{Empirical results on larger-scale datasets.}
We conduct an empirical study on two larger-scale datasets (relative to the main benchmarks), \ie,
ENZYME (6254 nodes, 23914 edges) and an OGB-Arxiv subgraph (8532 nodes, 26281 edges).
Full OGB-Arxiv has $\sim$170K nodes; we use GraphSAINT~\citep{zeng2019graphsaint} random walk sampler (walk length $=4$, number of roots $=2000$) to extract subgraphs of manageable size ($N \sim 6000$--$8000$ nodes) for the dense adjacency-reconstruction experiments. 
Detailed statistics are below; dataset split (train/validate/test) is consistent with the rest of the paper.

\begin{table}[H]
\centering
\setlength\tabcolsep{10pt}
\vspace{-6pt}
\begin{tabular}{c|ccccc}
\toprule
dataset &  \# Nodes  & \# Edges  &  \# Class   & \# Features   &   Edge homophily     \\
\midrule
ENZYME    & 6254    & 23914   & 3         & 18          & 0.629   \\
OGB-Arxiv & 8532    & 26281   & 40        & 128         & 0.618 \\
\bottomrule
\end{tabular}
\caption{Dataset statistics of the two larger-scale datasets. Edge homophily is the fraction of edges connecting same-class nodes (as in Tab.~\ref{tab: statistics}).}
\end{table}

\textbf{(1)}
We evaluate MC-GRA~(+) with $\mathcal{K} = \{X, Y\}$ on the larger-scale datasets (Tab.~\ref{tab: app-large-attack}). MC-GRA~(+) is effective on both and outperforms the baseline by a large margin.

\begin{table}[H]
\centering
\setlength\tabcolsep{10pt}
\vspace{-6pt}
\begin{tabular}{c|cc}
\toprule
dataset &  ENZYME &  OGB-Arxiv     \\
\midrule
GraphMI		& .494  &  .828		\\
MC-GRA~(+)     & .761   &  .891      \\
\bottomrule
\end{tabular}
\caption{Comparison of attack methods on larger-scale datasets (AUC metric).}
\label{tab: app-large-attack}
\end{table}

\textbf{(2)}
We evaluate MC-GPB~(+) on these two datasets with GraphMI as the attack (Tab.~\ref{tab: app-large-defense}). MC-GPB~(+) reduces attack AUC for both MC-GRA~(+) and GraphMI, demonstrating effectiveness of the defense on larger-scale datasets.

\begin{table}[H]
\centering
\setlength\tabcolsep{10pt}
\vspace{-6pt}
\begin{tabular}{c|cc}
\toprule
dataset &  ENZYME &  OGB-Arxiv     \\
\midrule
GraphMI		& .488 (1.2\%$\downarrow$)  &  .533  (35.6\%$\downarrow$) 	\\
MC-GRA~(+)    &  .607 (20.2\%$\downarrow$) & .848 (4.8\%$\downarrow$)     \\
\bottomrule
\end{tabular}
\caption{Attack AUC when the model is protected by MC-GPB~(+) on larger-scale datasets, under GraphMI and MC-GRA~(+) (lower is better).}
\label{tab: app-large-defense}
\end{table}
The defense effectiveness varies across attacks: on ENZYME, MC-GPB~(+) substantially reduces MC-GRA~(+) AUC (20.2\%$\downarrow$) but barely affects GraphMI (1.2\%$\downarrow$; GraphMI's baseline is already near chance). On OGB-Arxiv the pattern reverses (MC-GRA~(+) 4.8\%$\downarrow$ vs.\ GraphMI 35.6\%$\downarrow$). This suggests that the defense and each attack exploit different structural cues.

The main-text results without node features (Tab.~\ref{tab: understanding-MI-term-ensemble-gcn} and Tab.~\ref{tab: no-X-attack} in Sec.~\ref{sec: experiments}) show that MC-GRA~(+) remains effective when $X$ is omitted; adjacency can be recovered from labels, hidden representations, and predictions alone. Node features do not always exist (\eg, Polblogs, USA, Brazil), so this setting is practically relevant.

\textbf{Why does classification accuracy sometimes improve under MC-GPB~(+)?}
Tab.~\ref{tab: exp-defense-results-gcn} shows that MC-GPB~(+) can improve classification accuracy on some datasets. We conjecture that suppressing certain structural noise can act as a beneficial regularizer, analogous to findings in the interpretable GNN literature~\citep{miao2022interpretable,zhao2022learning}. By reducing $I(\bm{H}_A;A)$ while maintaining $I(\bm{H}_A;Y)$, the defense can also reduce $I(\bm{H}_A;A|Y)$ (the excess adjacency information beyond what is needed for the task; see Theorem~\ref{theorem: minimum adjacency information}), which can help test-time inference by removing spurious structural correlations. When such correlations are mild, the defense typically incurs an accuracy--privacy trade-off and accuracy may drop.

\subsection{Full qualitative results}
\label{sec: full qualitative results}

\textbf{The recovered adjacency.}
Figs.~\ref{appd: adj:cora}--\ref{appd: adj:polblogs} show the recovered adjacency for each dataset, grouped by node label under different attack strategies.
Subfigures (a)--(e) in each figure are: (a)~ground truth; (b)~MC-GRA~(+) on unprotected GNN; (c)~GraphMI on unprotected GNN; (d)~MC-GRA~(+) on MC-GPB~(+)-protected GNN; (e)~GraphMI on MC-GPB~(+)-protected GNN. Green dots indicate correctly predicted edges (true positives and true negatives); red dots indicate errors (false positives and false negatives).
Training with MC-GPB~(+) resists both attacks, as evidenced by substantially more incorrect edge predictions compared to the unprotected model.
For Cora, MC-GRA~(+) (Fig.~\ref{appd: adj:cora:gra_normal}) achieves better results than GraphMI (Fig.~\ref{appd: adj:cora:gmi_normal}) under unprotected training (fewer red dots). MC-GPB~(+) defends against both MC-GRA~(+) (Fig.~\ref{appd: adj:cora:gra_protected}) and GraphMI (Fig.~\ref{appd: adj:cora:gmi_protected}); MC-GRA~(+) still achieves higher reconstruction AUC than GraphMI even under the protected model.

\begin{figure}[H]
\centering
\hfill		
\subfloat[Ground truth]
{{\includegraphics[width=0.18\linewidth]{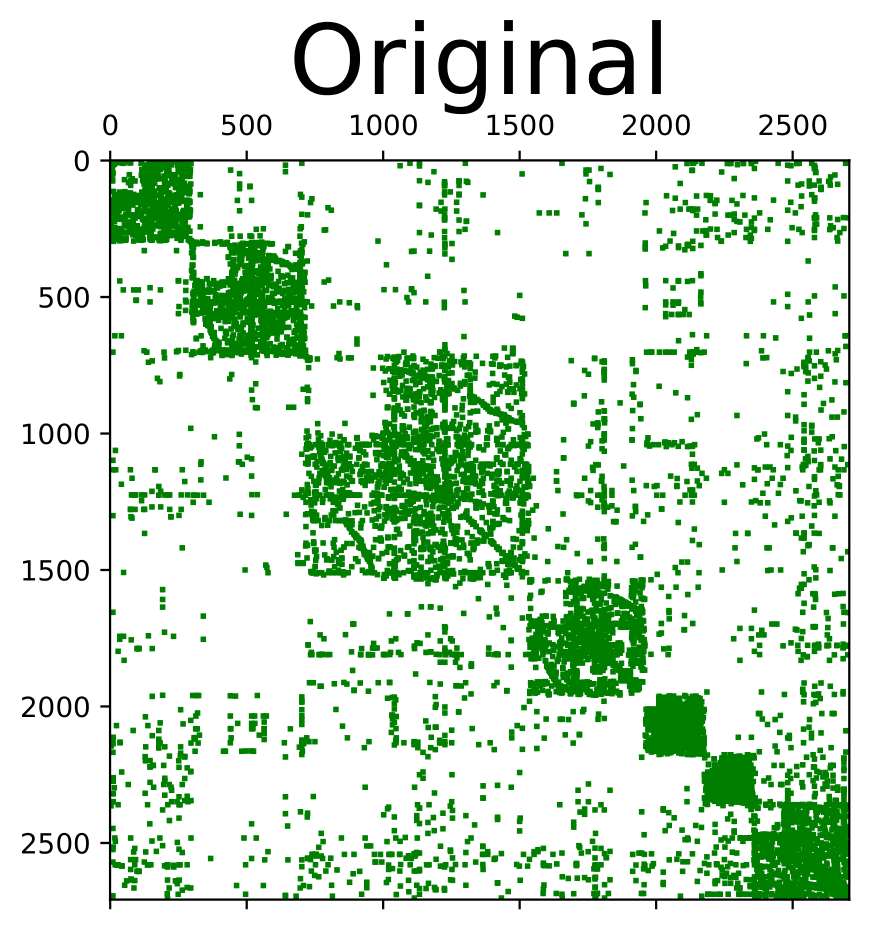}}}
\hfill
\subfloat[MC-GRA~(+), unprotected]
{{
\includegraphics[width=0.18\linewidth]{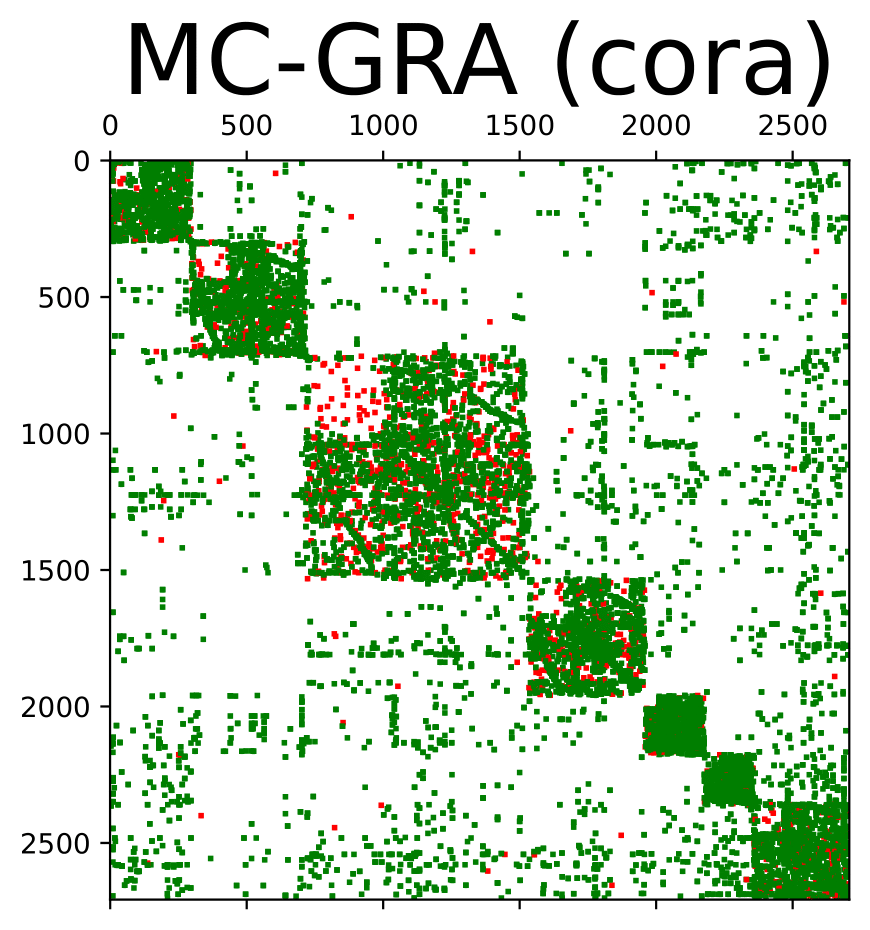} 
\label{appd: adj:cora:gra_normal}
}}
\hfill
\subfloat[GraphMI, unprotected]
{{
\includegraphics[width=0.18\linewidth]{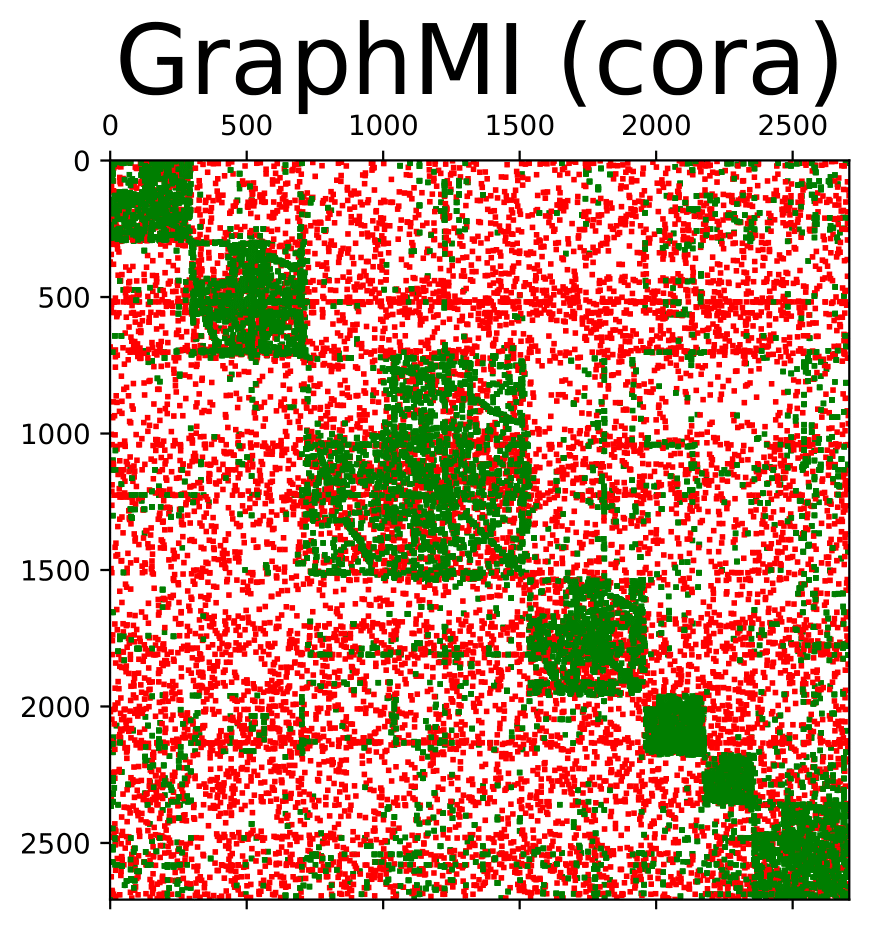}
\label{appd: adj:cora:gmi_normal}
}}
\hfill
\subfloat[MC-GRA~(+), protected]
{{
\includegraphics[width=0.18\linewidth]{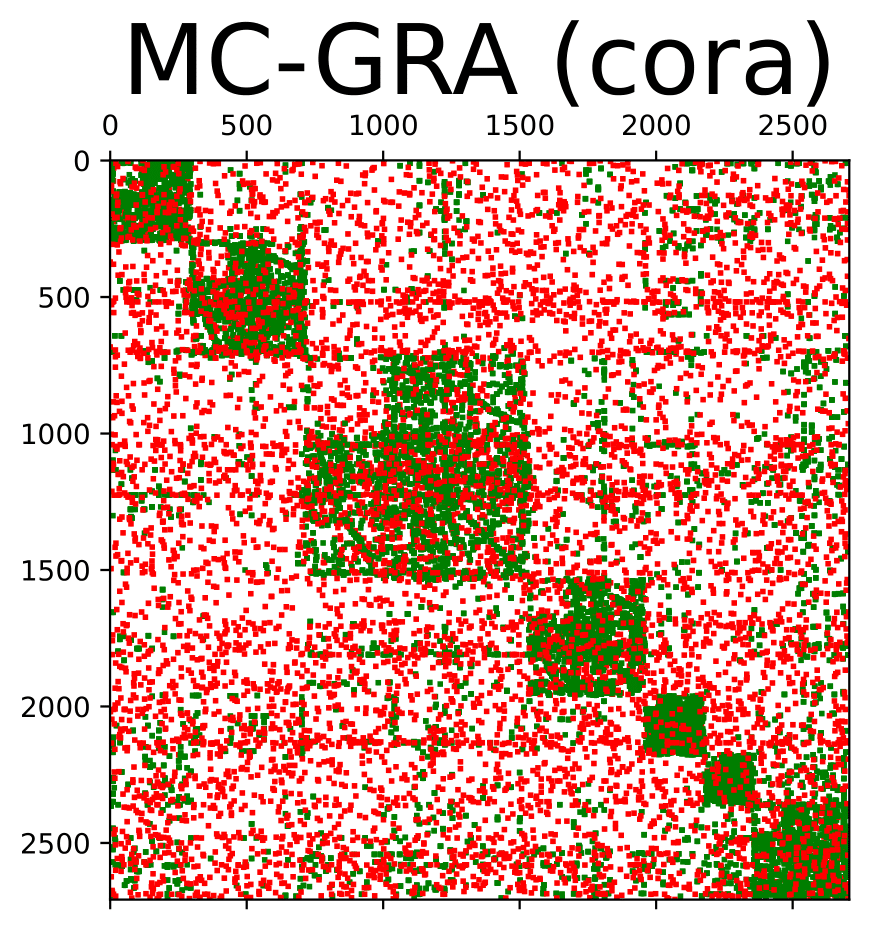}
\label{appd: adj:cora:gra_protected}
}}
\hfill
\subfloat[GraphMI, protected]
{{
\includegraphics[width=0.18\linewidth]{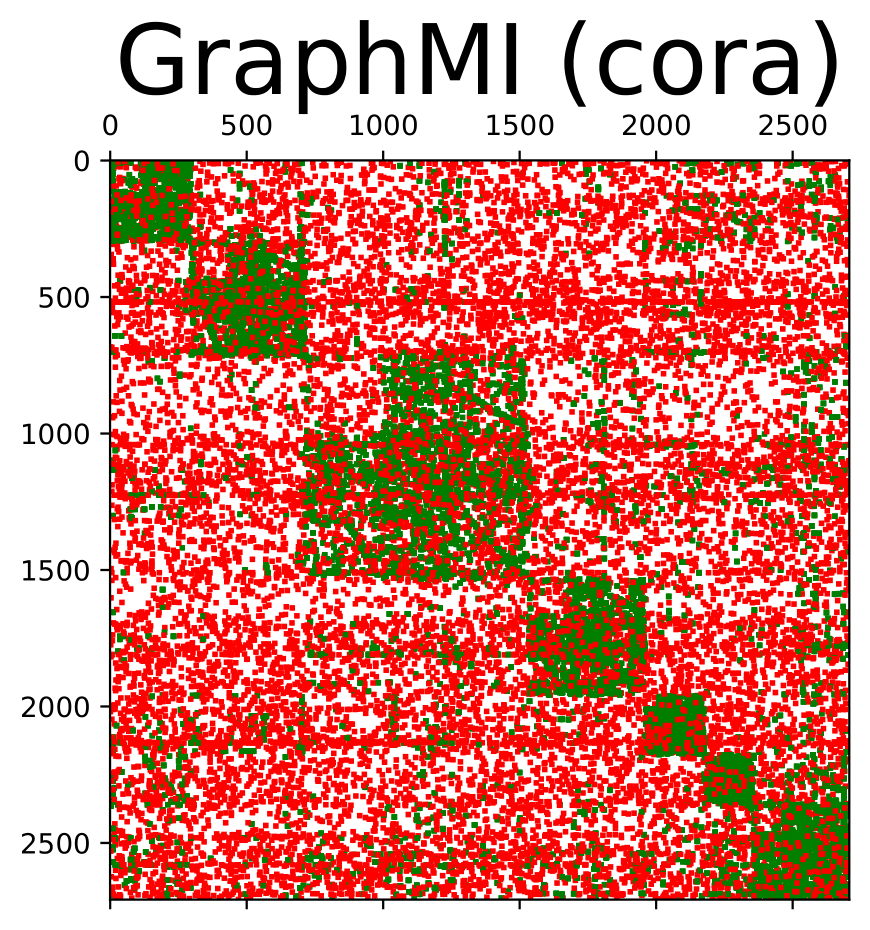}
\label{appd: adj:cora:gmi_protected}
}}
\vspace{-8pt}
\caption{Recovered adjacency on Cora. (a)~Ground truth. (b)~MC-GRA~(+) on unprotected GNN. (c)~GraphMI on unprotected GNN. (d)~MC-GRA~(+) on MC-GPB~(+)-protected GNN. (e)~GraphMI on MC-GPB~(+)-protected GNN. Green: correct (true positives and true negatives); red: errors (false positives and false negatives).}
\label{appd: adj:cora}
\end{figure}

\begin{figure}[H]
\centering
\subfloat[Ground truth]
{{\includegraphics[width=0.18\linewidth]{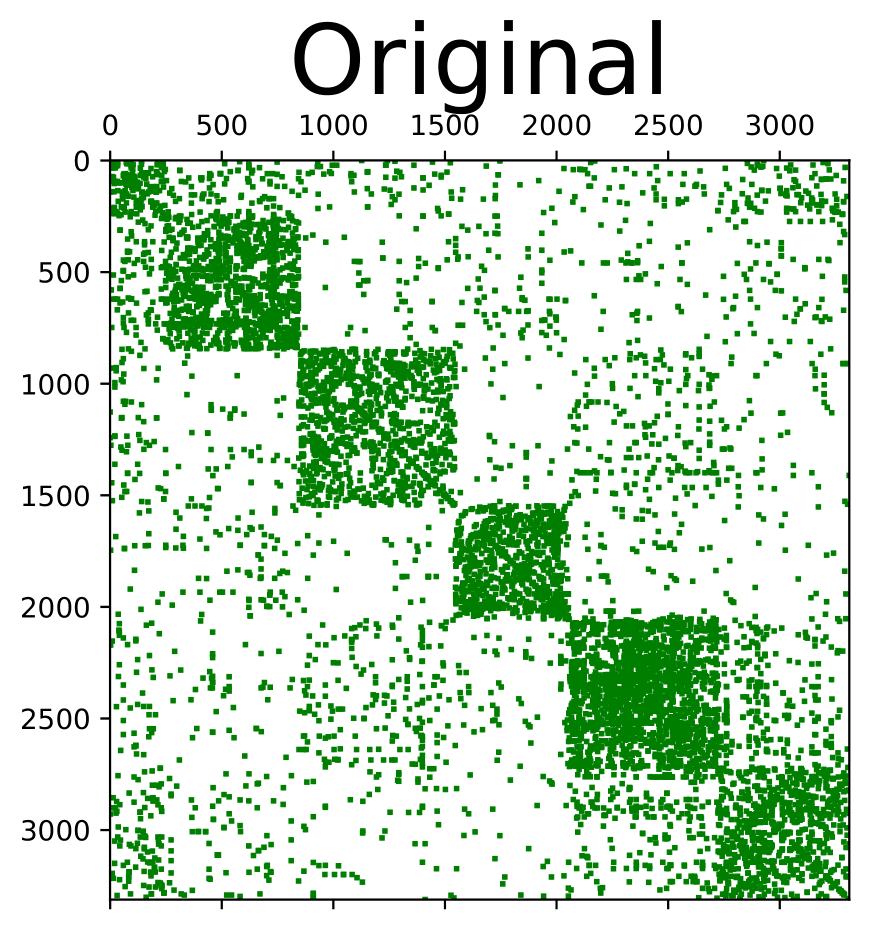}}}
\hfill
\subfloat[MC-GRA~(+), unprotected]
{{\includegraphics[width=0.18\linewidth]{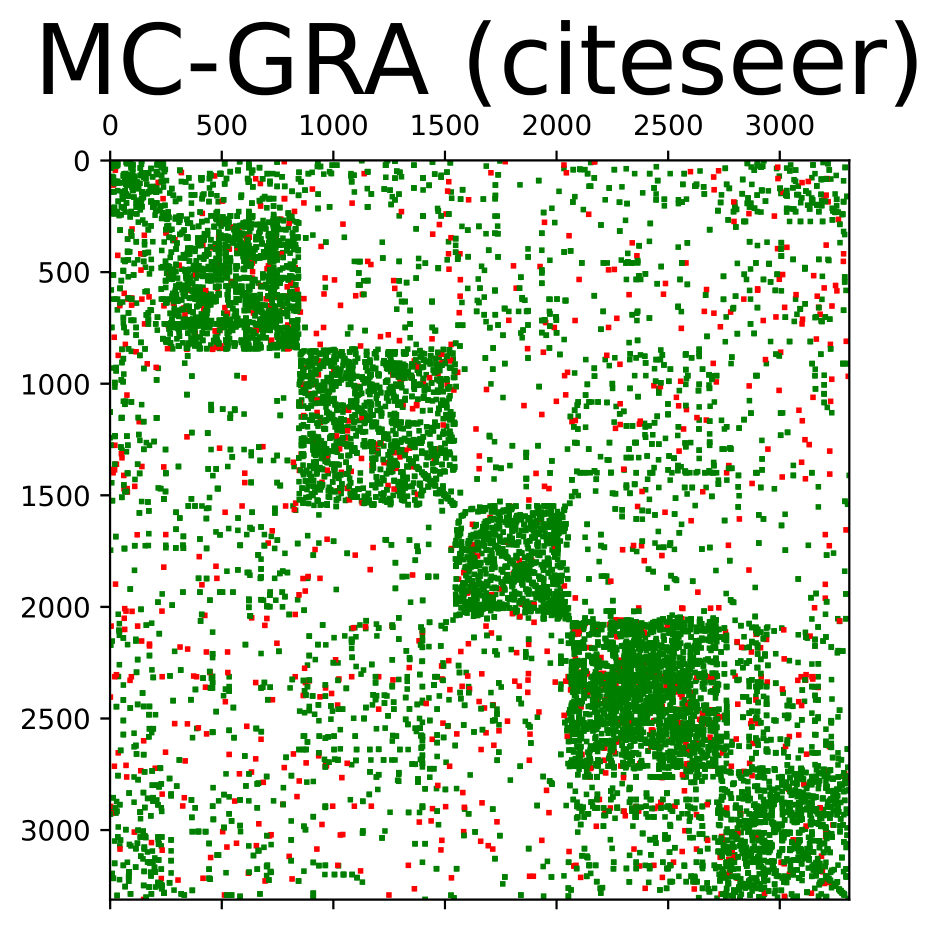}}}
\hfill
\subfloat[GraphMI, unprotected]
{{\includegraphics[width=0.18\linewidth]{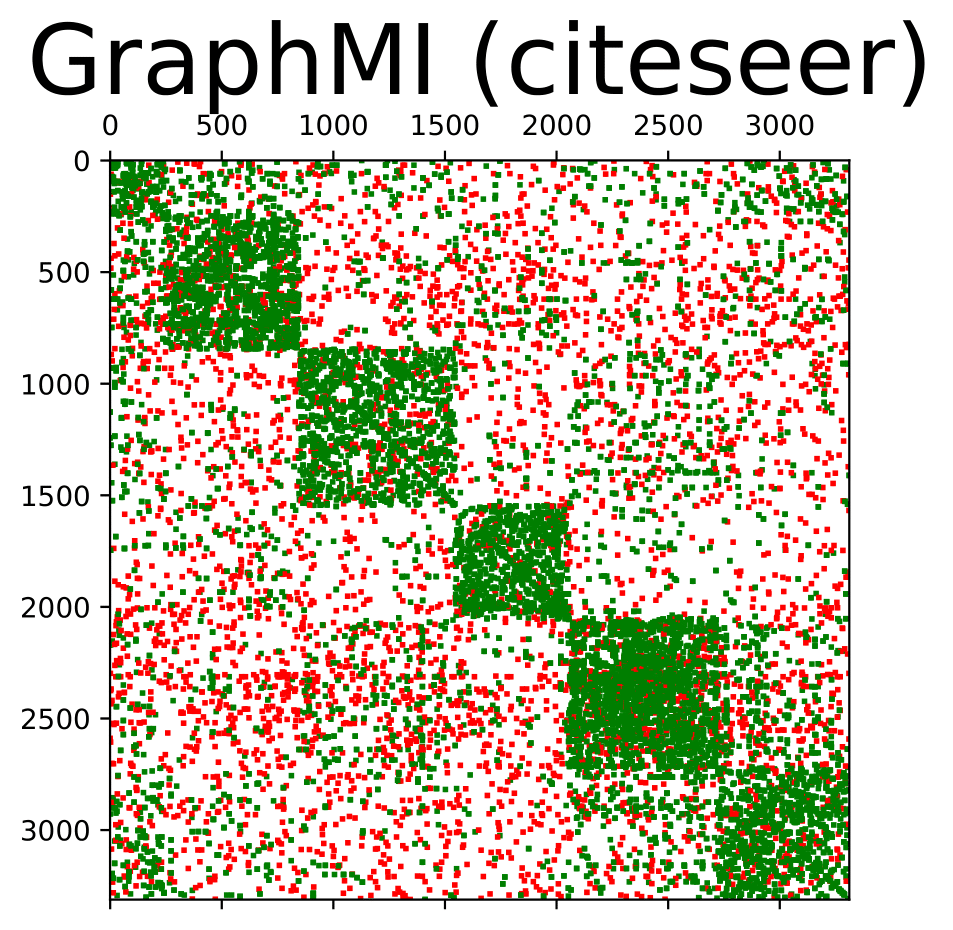}}}
\hfill
\subfloat[MC-GRA~(+), protected]
{{\includegraphics[width=0.18\linewidth]{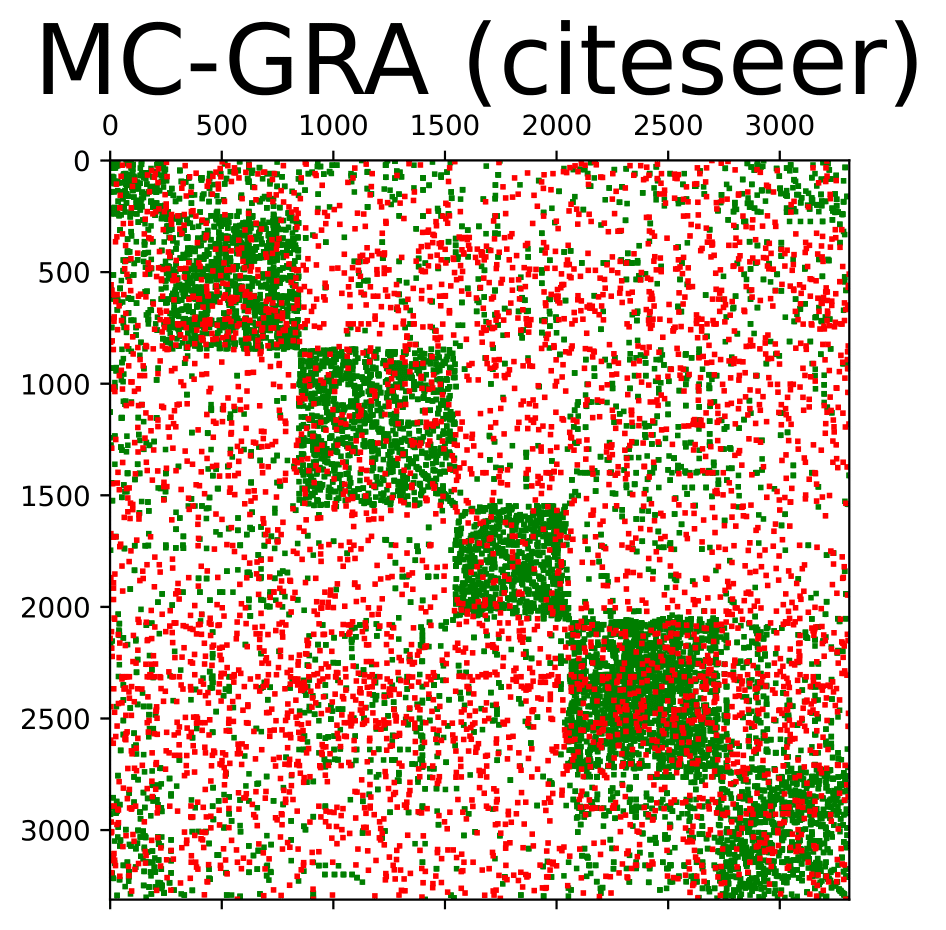}}}
\hfill
\subfloat[GraphMI, protected]
{{\includegraphics[width=0.18\linewidth]{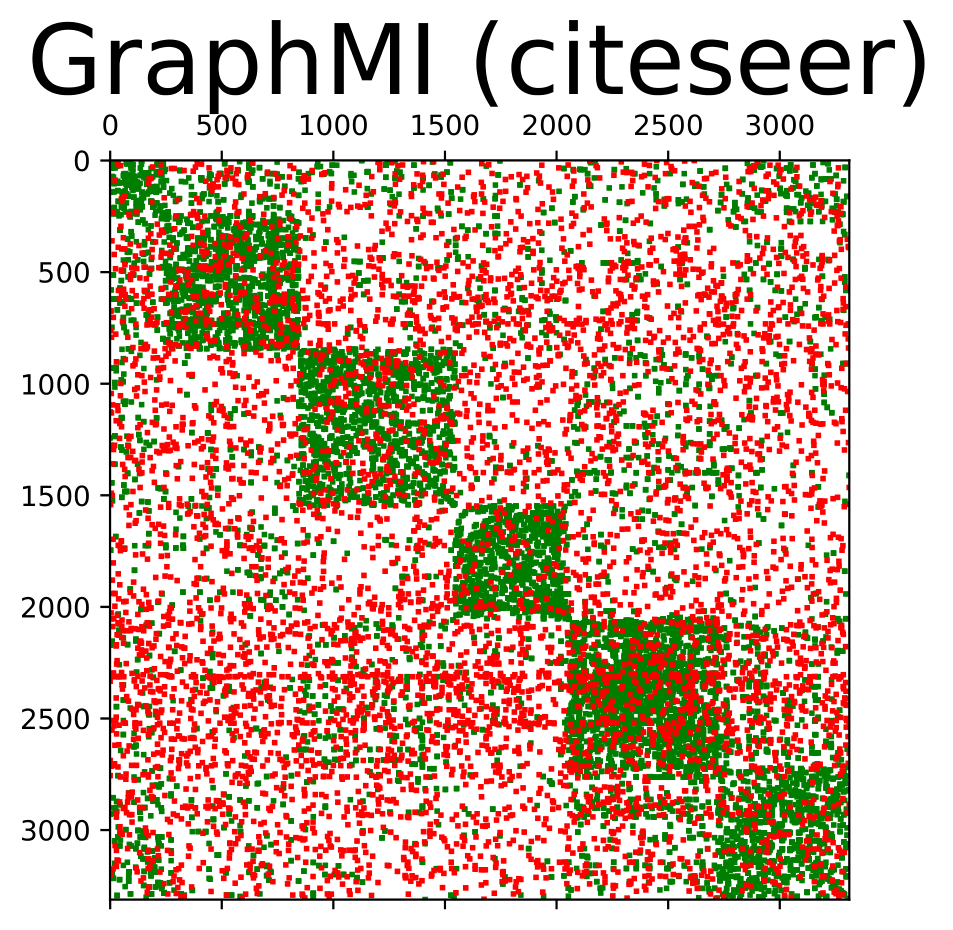}}}
\vspace{-8pt}
\caption{Recovered adjacency on Citeseer. (a)--(e) as in Fig.~\ref{appd: adj:cora}. Green: correct; red: errors.}
\label{appd: adj:citeseer}
\end{figure}

\begin{figure}[H]
\centering
\subfloat[Ground truth]
{{\includegraphics[width=0.18\linewidth]{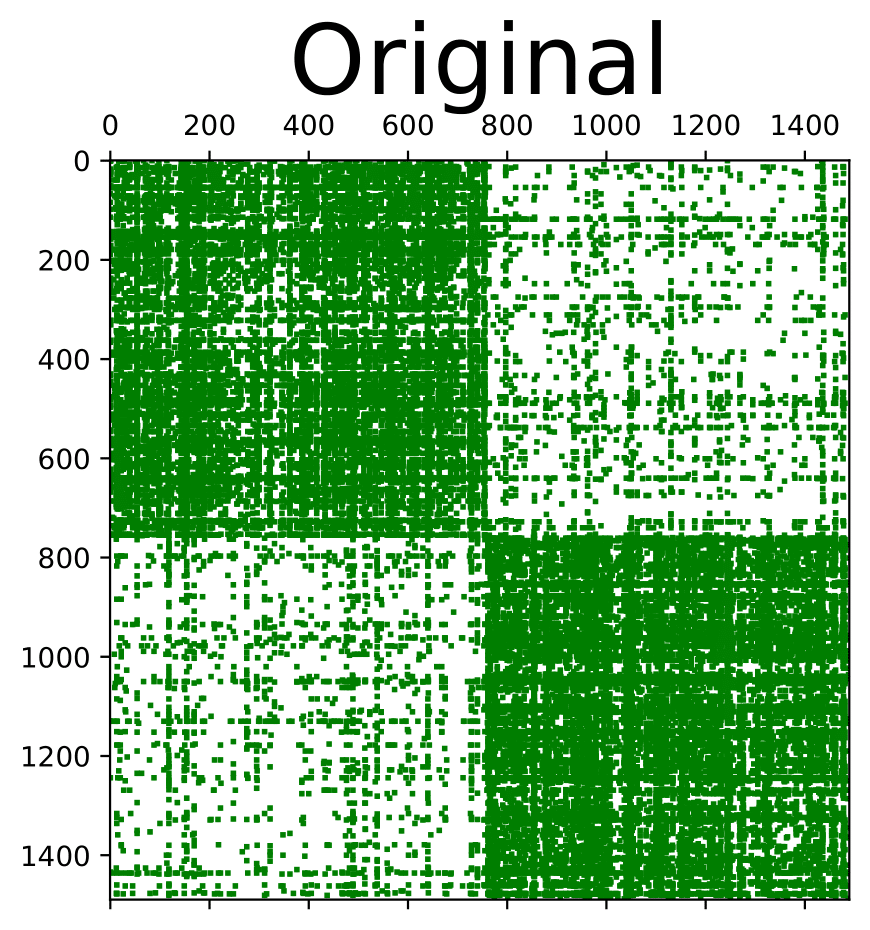}}}
\hfill
\subfloat[MC-GRA~(+), unprotected]
{{\includegraphics[width=0.18\linewidth]{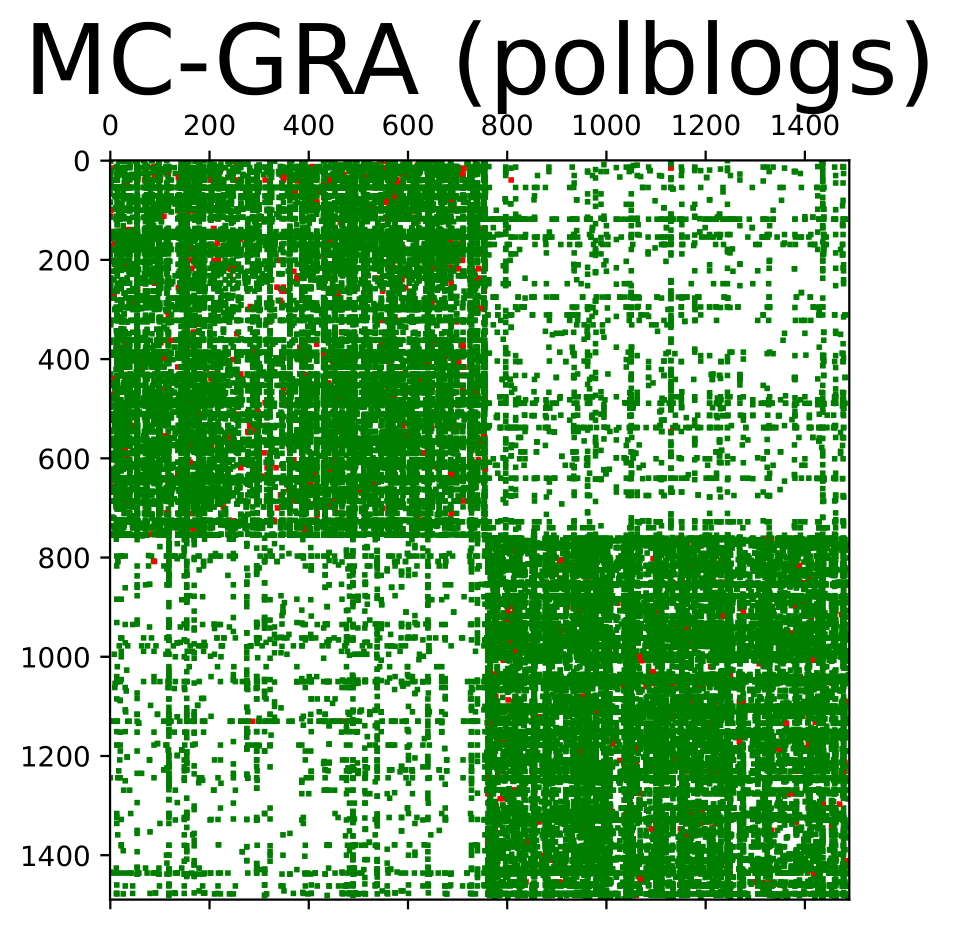}}}
\hfill
\subfloat[GraphMI, unprotected]
{{\includegraphics[width=0.18\linewidth]{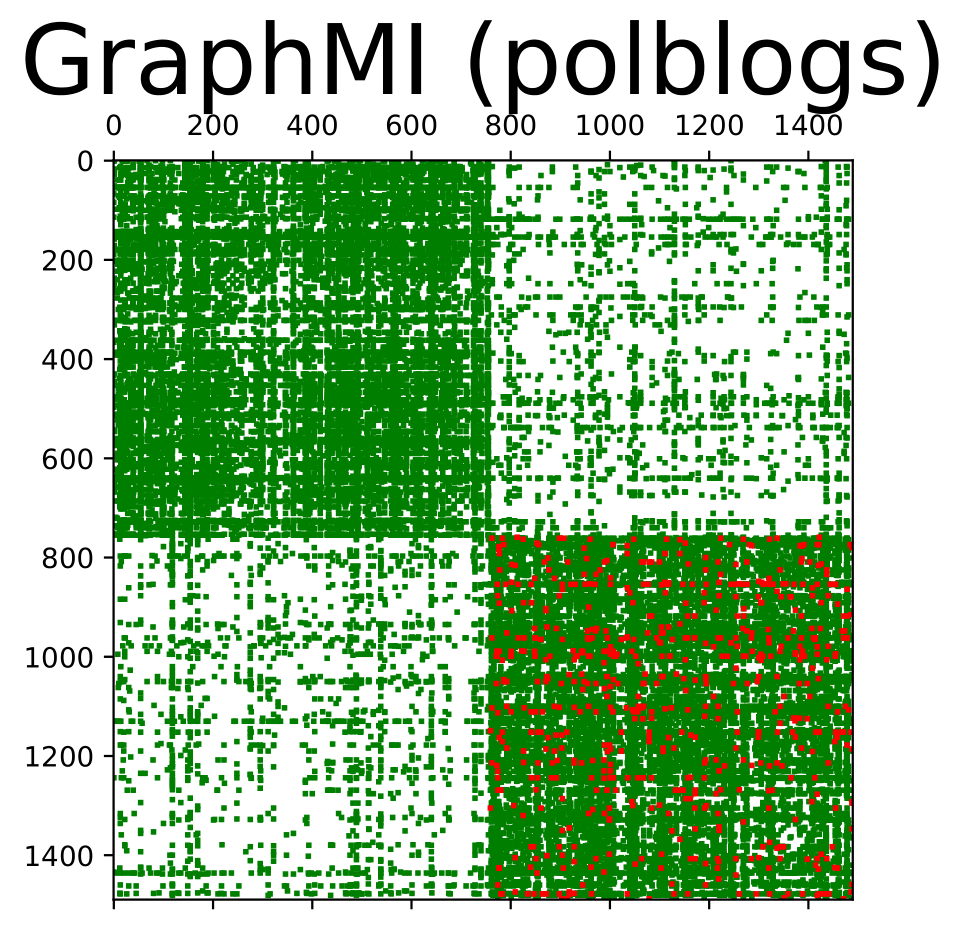}}}
\hfill
\subfloat[MC-GRA~(+), protected]
{{\includegraphics[width=0.18\linewidth]{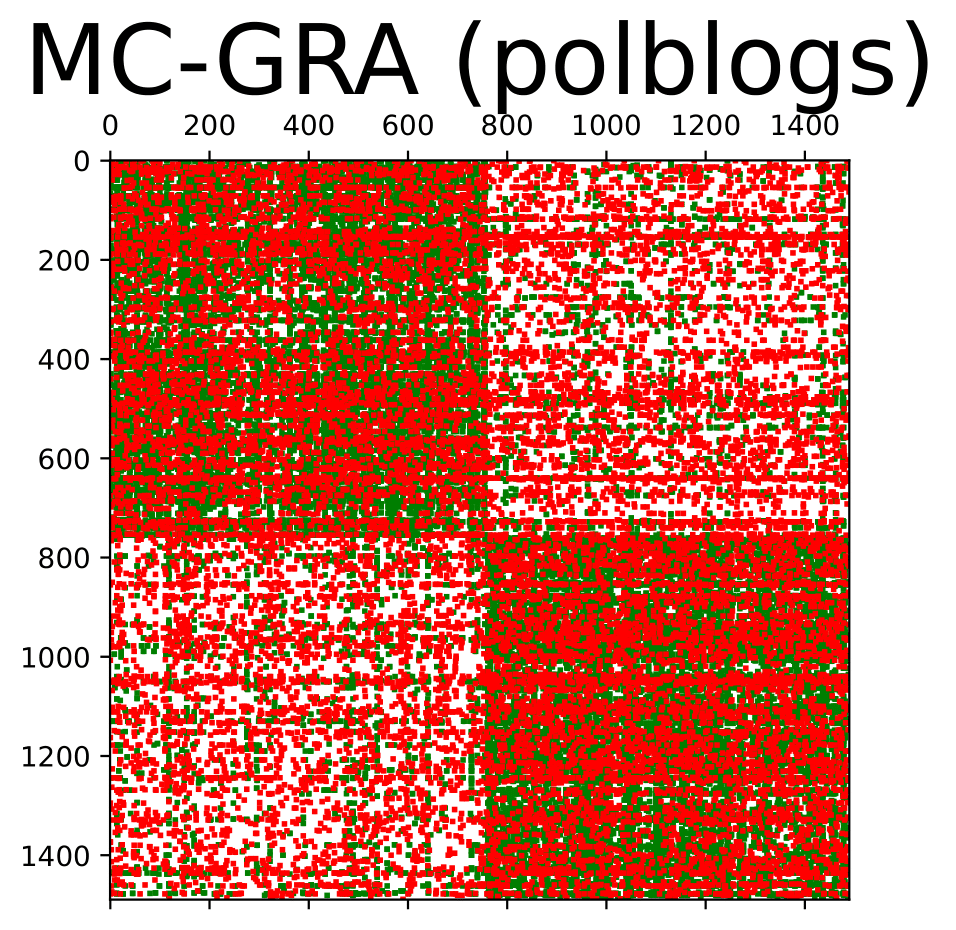}}}
\hfill
\subfloat[GraphMI, protected]
{{\includegraphics[width=0.18\linewidth]{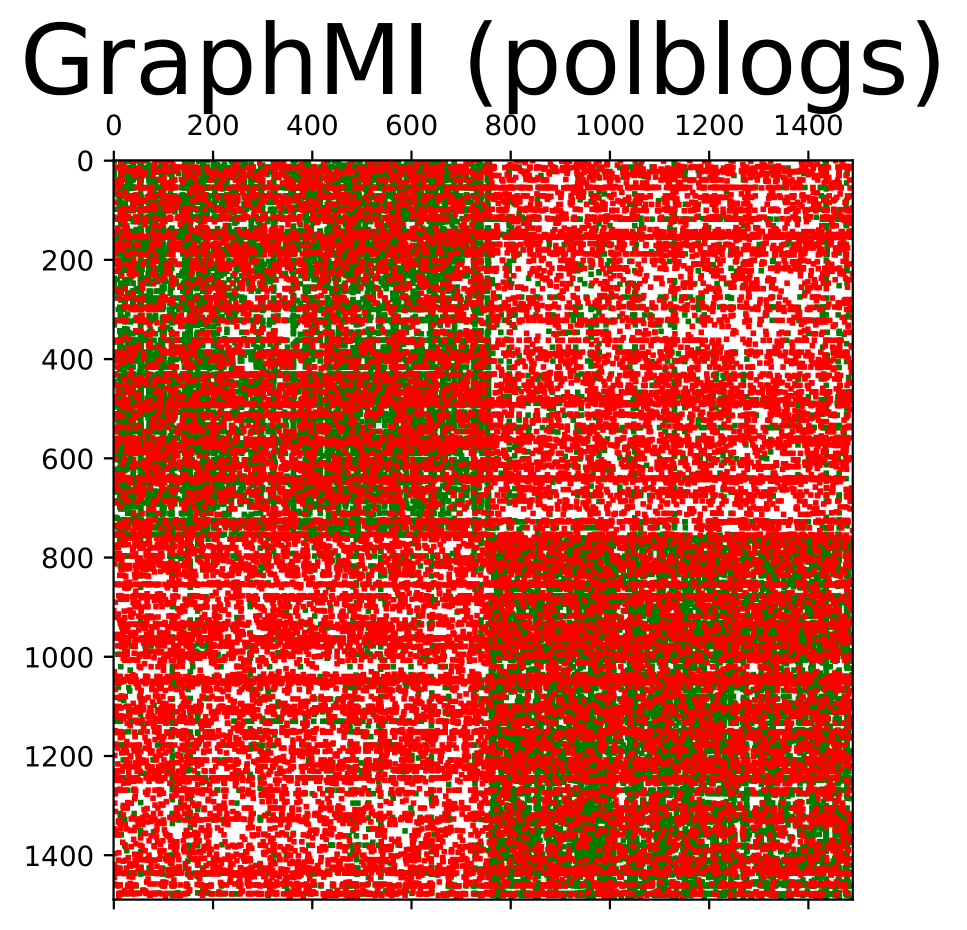}}}
\vspace{-8pt}
\caption{Recovered adjacency on Polblogs. (a)--(e) as in Fig.~\ref{appd: adj:cora}. Green: correct; red: errors.}
\label{appd: adj:polblogs}
\end{figure}

\begin{figure}[H]
\centering
\subfloat[Ground truth]
{{\includegraphics[width=0.18\linewidth]{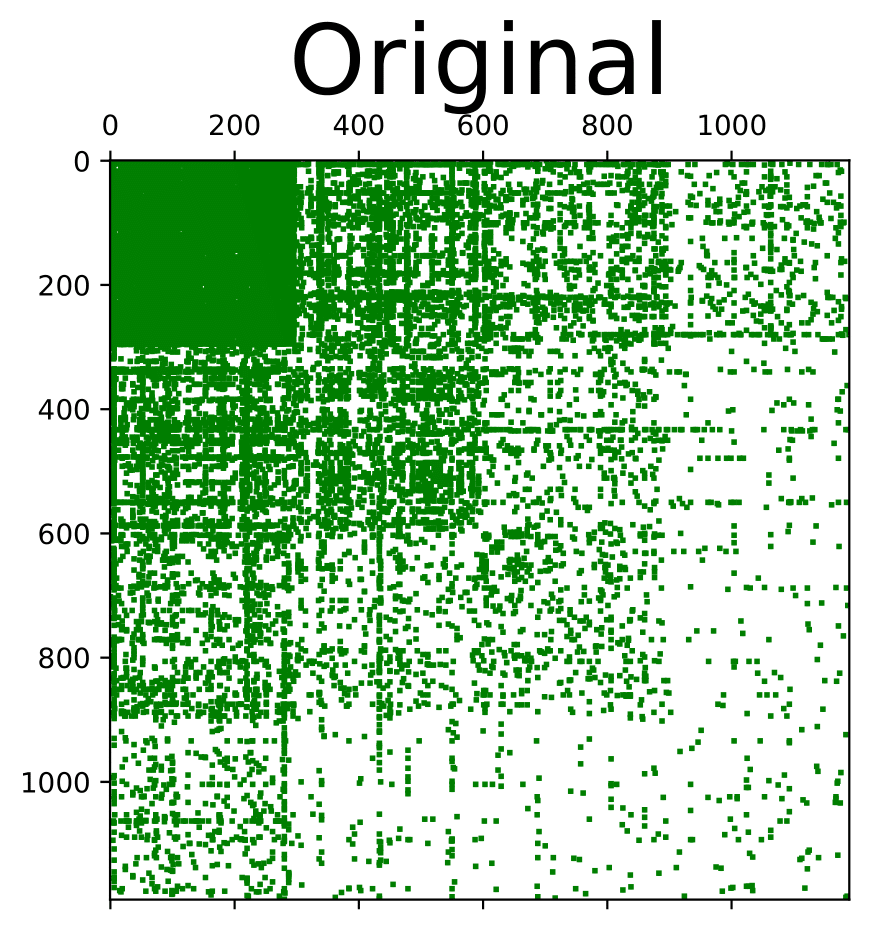}}}
\hfill
\subfloat[MC-GRA~(+), unprotected]
{{\includegraphics[width=0.18\linewidth]{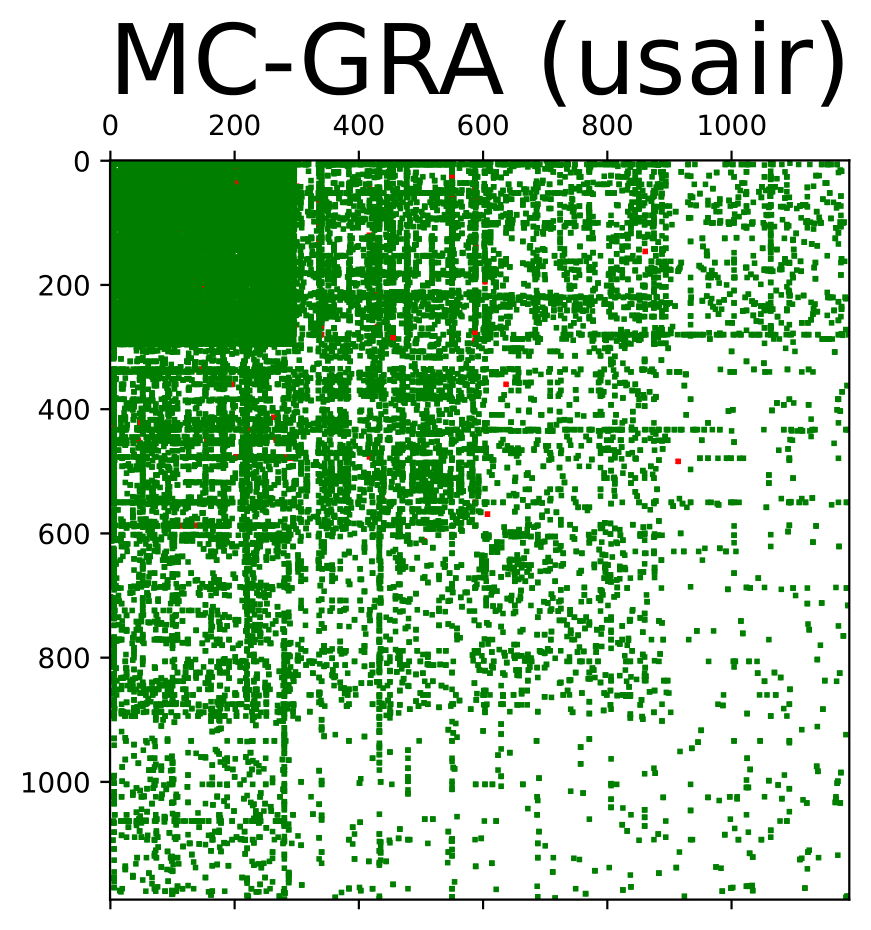}}}
\hfill
\subfloat[GraphMI, unprotected]
{{\includegraphics[width=0.18\linewidth]{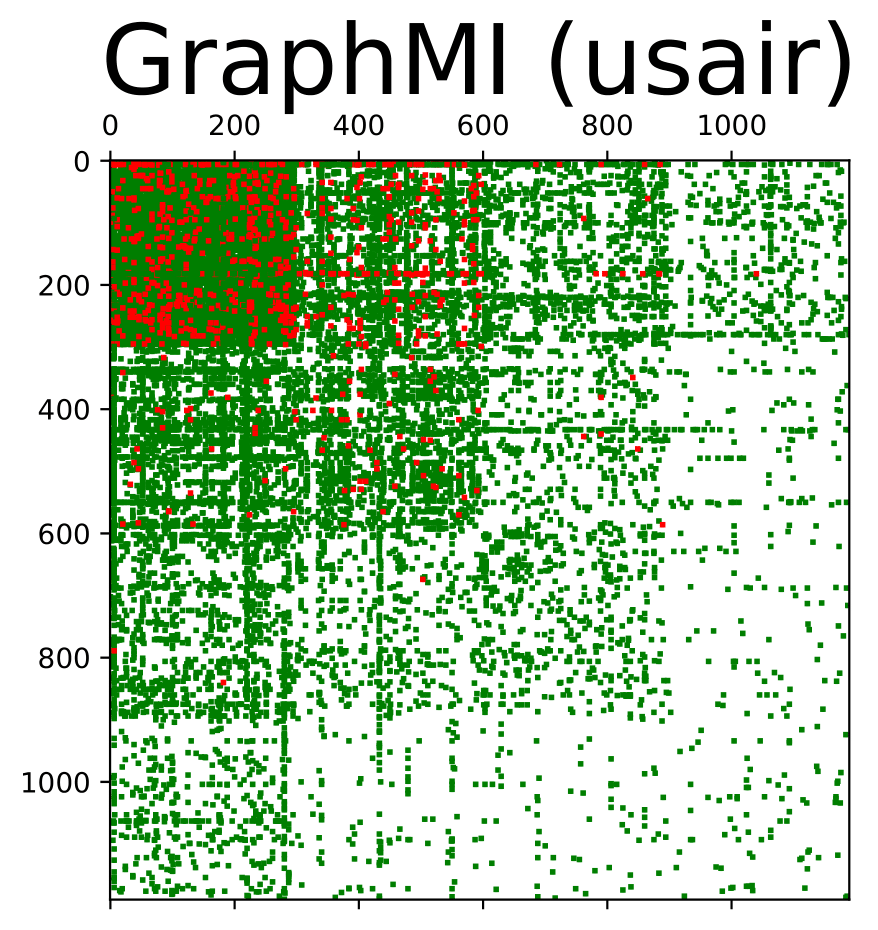}}}
\hfill
\subfloat[MC-GRA~(+), protected]
{{\includegraphics[width=0.18\linewidth]{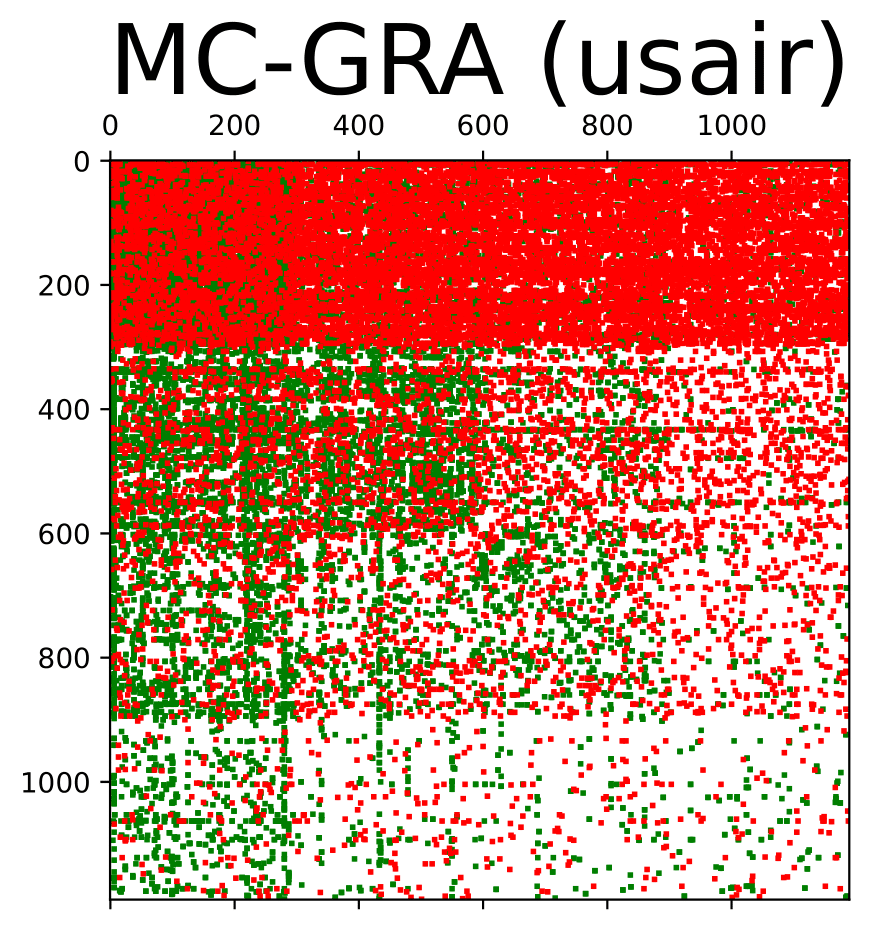}}}
\hfill
\subfloat[GraphMI, protected]
{{\includegraphics[width=0.18\linewidth]{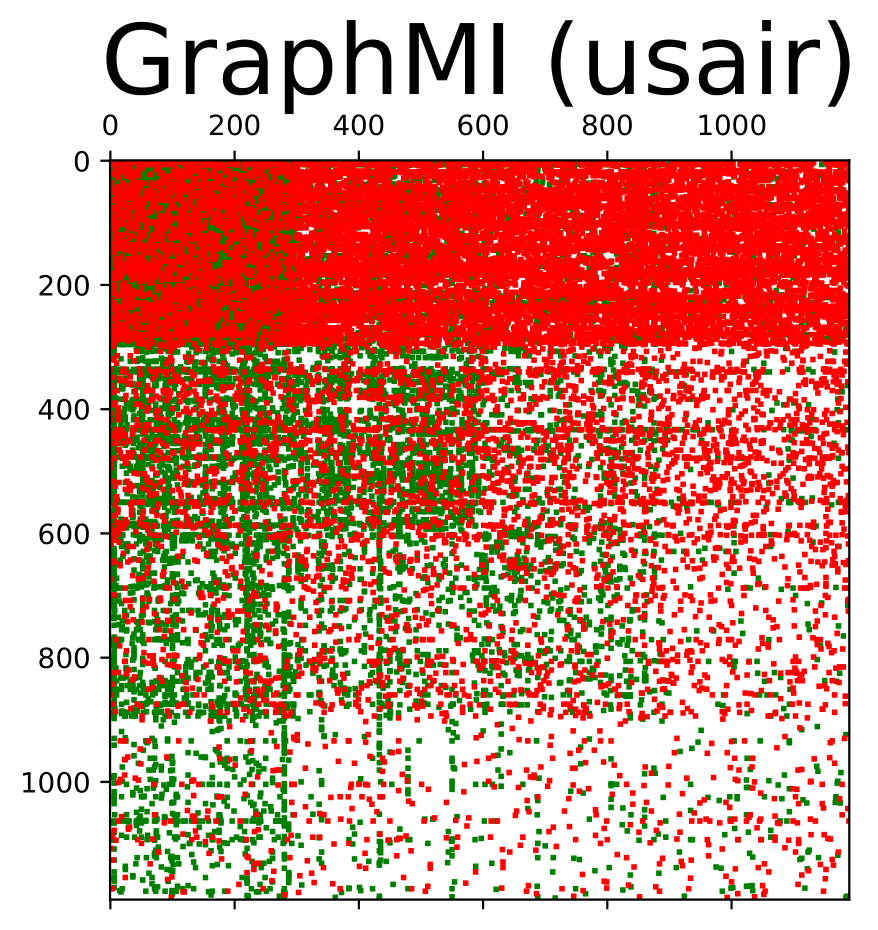}}}
\vspace{-8pt}
\caption{Recovered adjacency on USA. (a)--(e) as in Fig.~\ref{appd: adj:cora}. Green: correct; red: errors.}
\end{figure}

\begin{figure}[H]
\centering
\subfloat[Ground truth]
{{\includegraphics[width=0.18\linewidth]{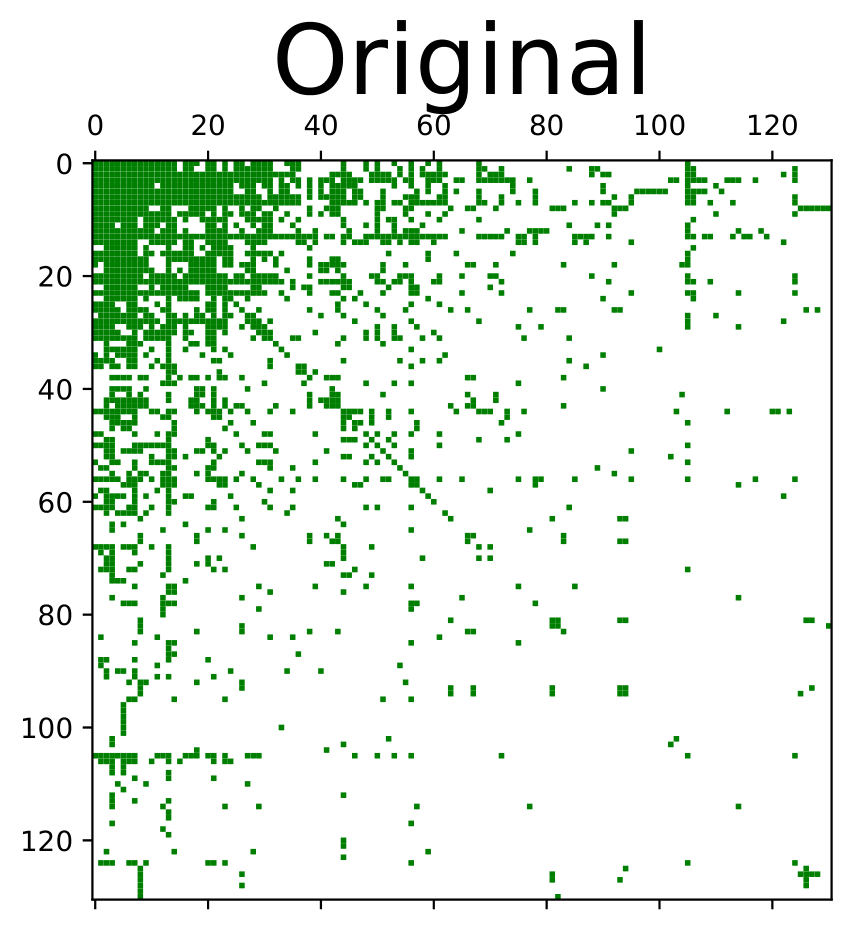}}}
\hfill
\subfloat[MC-GRA~(+), unprotected]
{{\includegraphics[width=0.18\linewidth]{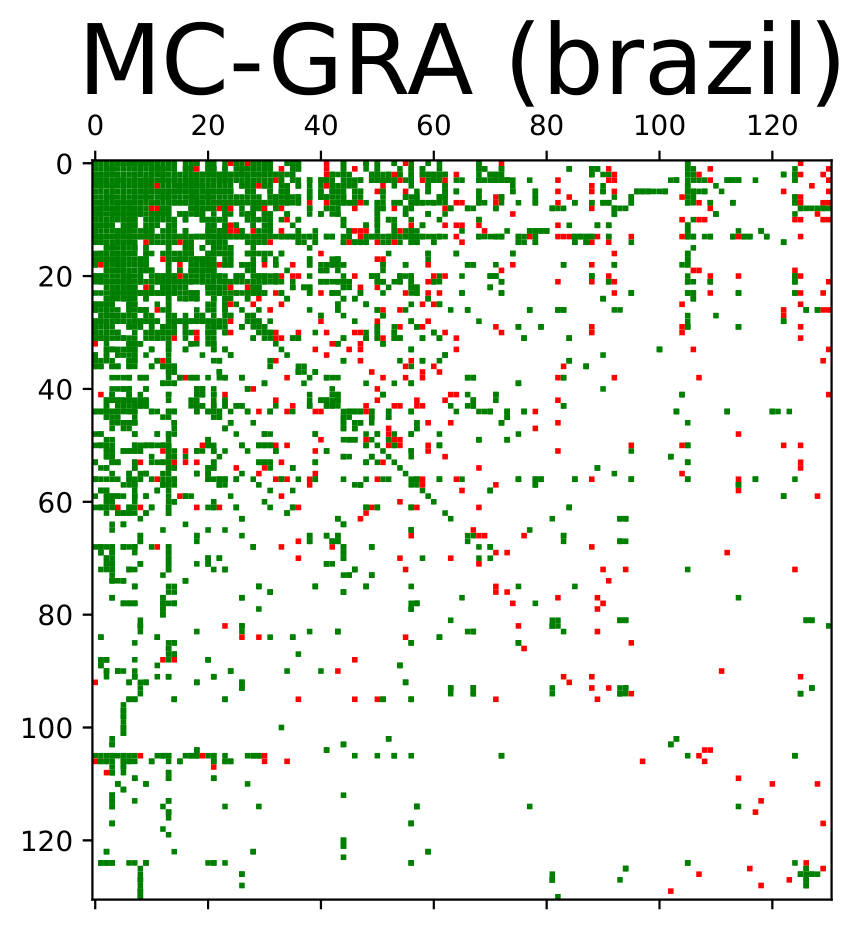}}}
\hfill
\subfloat[GraphMI, unprotected]
{{\includegraphics[width=0.18\linewidth]{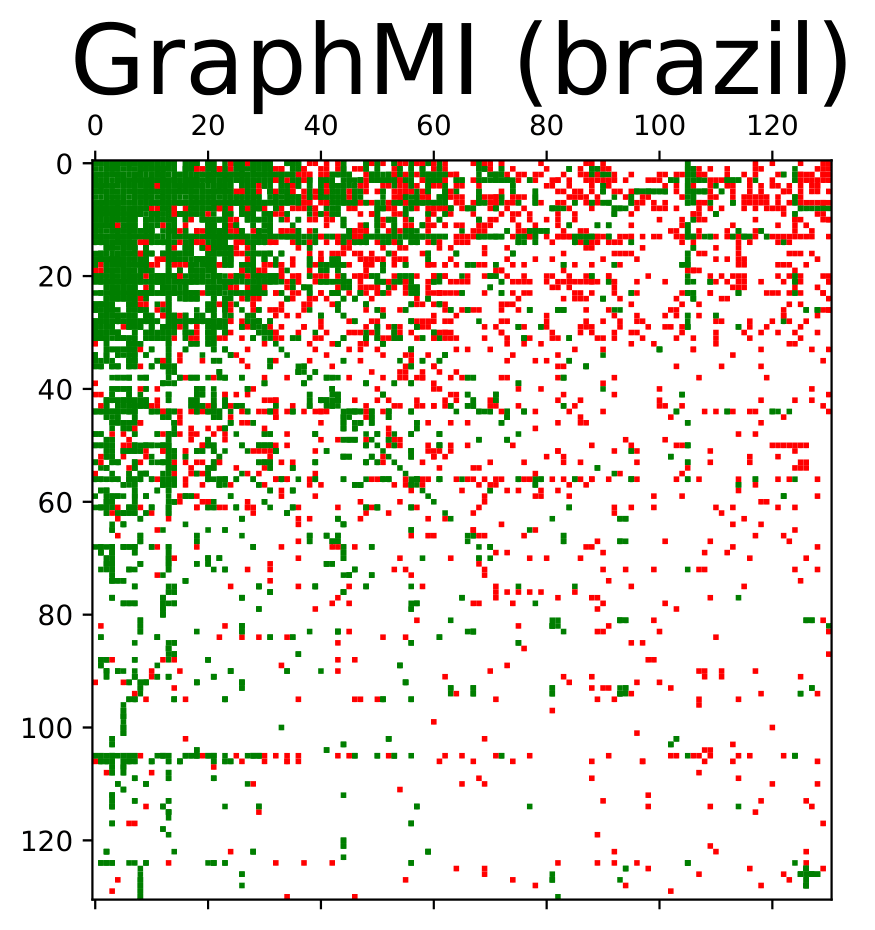}}}
\hfill
\subfloat[MC-GRA~(+), protected]
{{\includegraphics[width=0.18\linewidth]{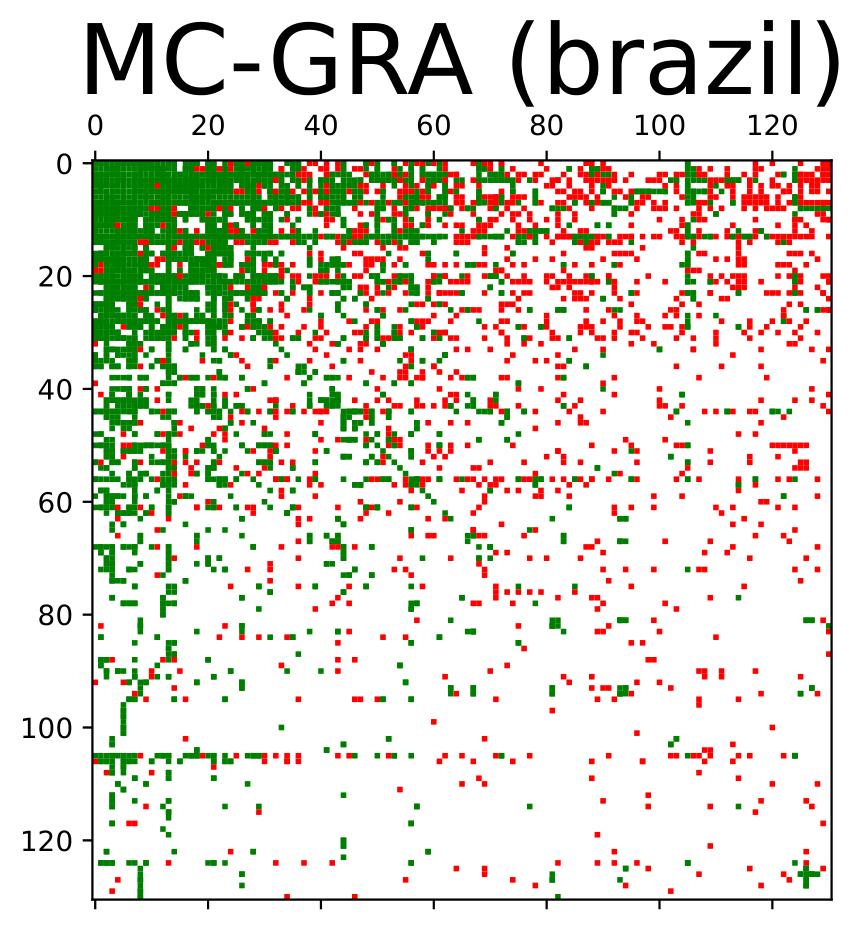}}}
\hfill
\subfloat[GraphMI, protected]
{{\includegraphics[width=0.18\linewidth]{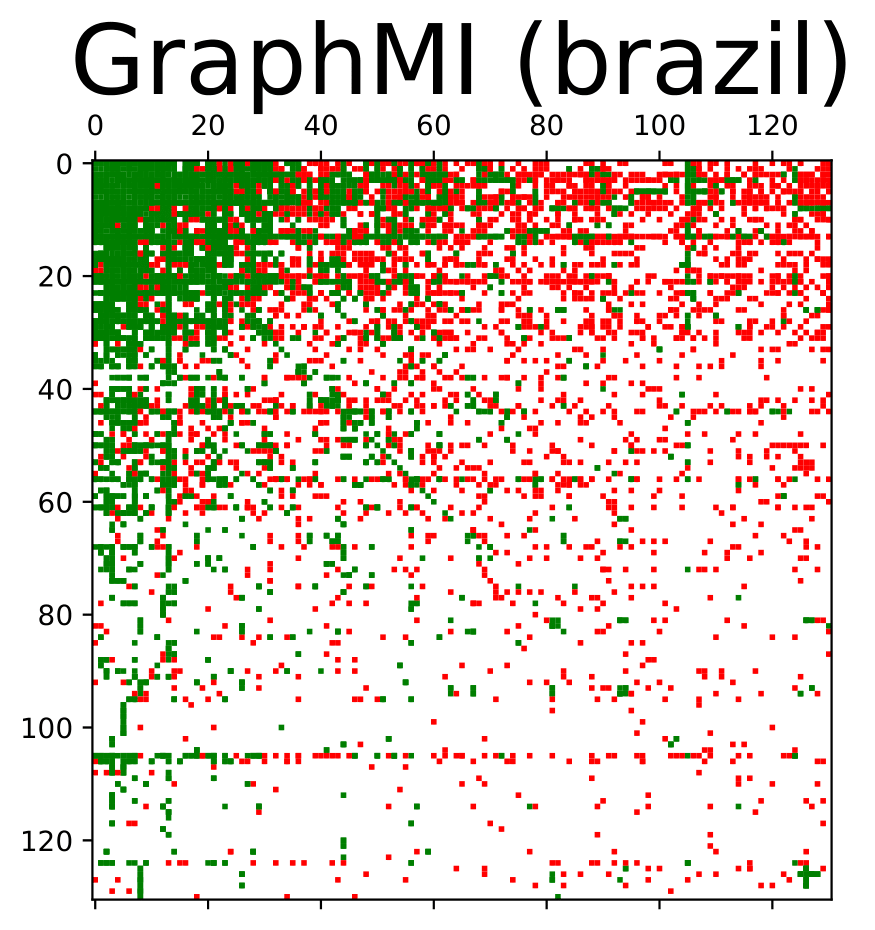}}}
\vspace{-8pt}
\caption{Recovered adjacency on Brazil. (a)--(e) as in Fig.~\ref{appd: adj:cora}. Green: correct; red: errors.}
\end{figure}

\begin{figure}[H]
\centering
\subfloat[Ground truth]
{{\includegraphics[width=0.18\linewidth]{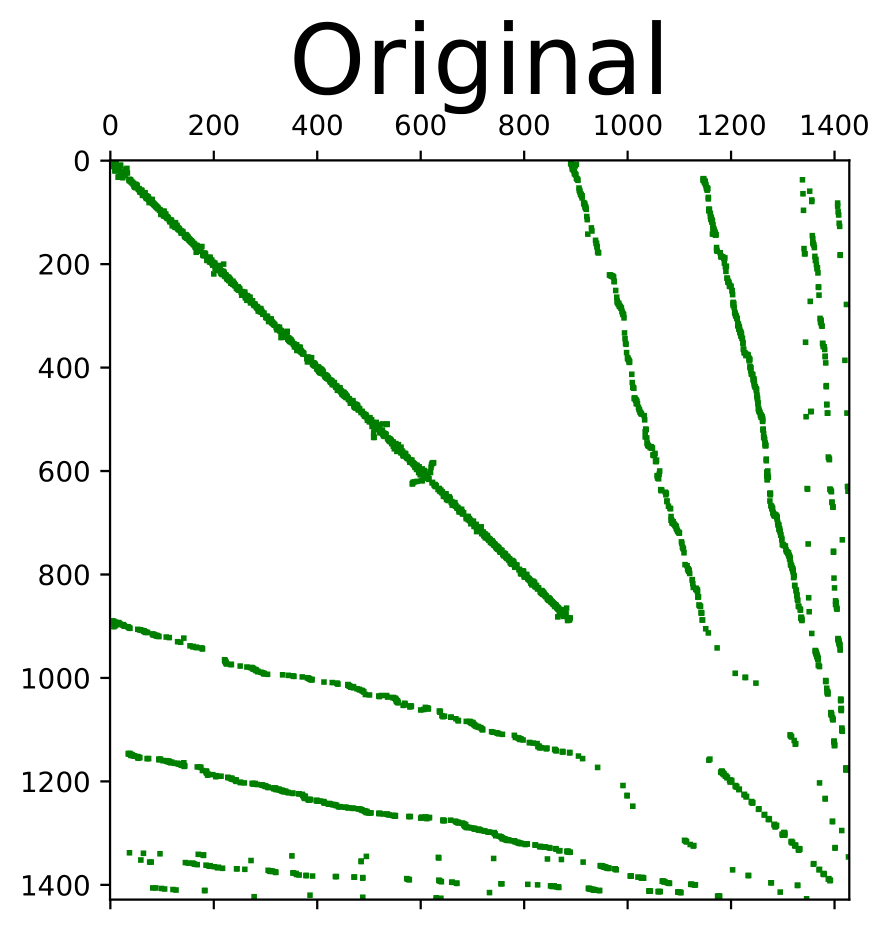}}}
\hfill
\subfloat[MC-GRA~(+), unprotected]
{{\includegraphics[width=0.18\linewidth]{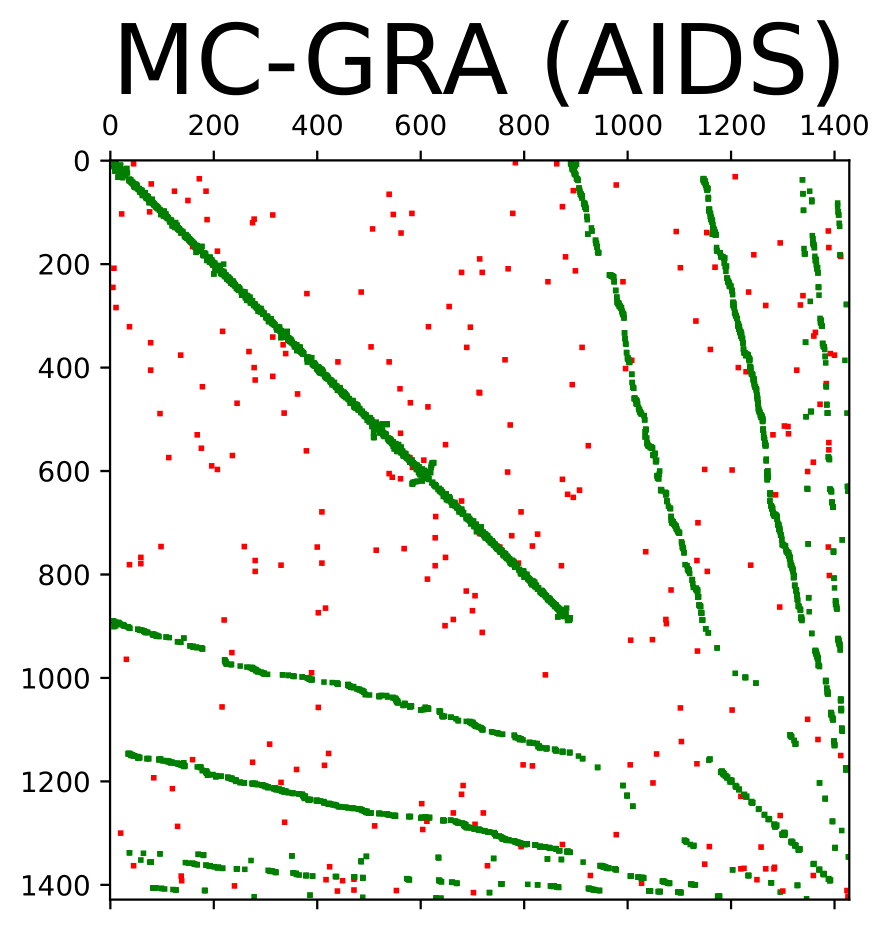}}}
\hfill
\subfloat[GraphMI, unprotected]
{{\includegraphics[width=0.18\linewidth]{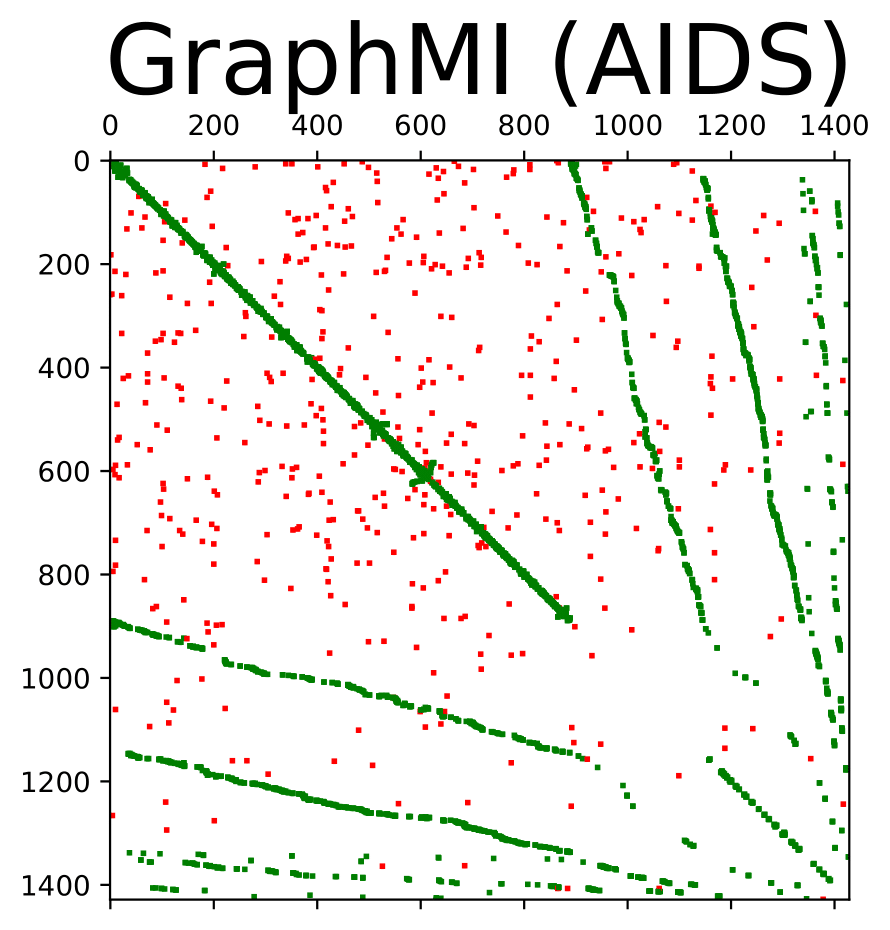}}}
\hfill
\subfloat[MC-GRA~(+), protected]
{{\includegraphics[width=0.18\linewidth]{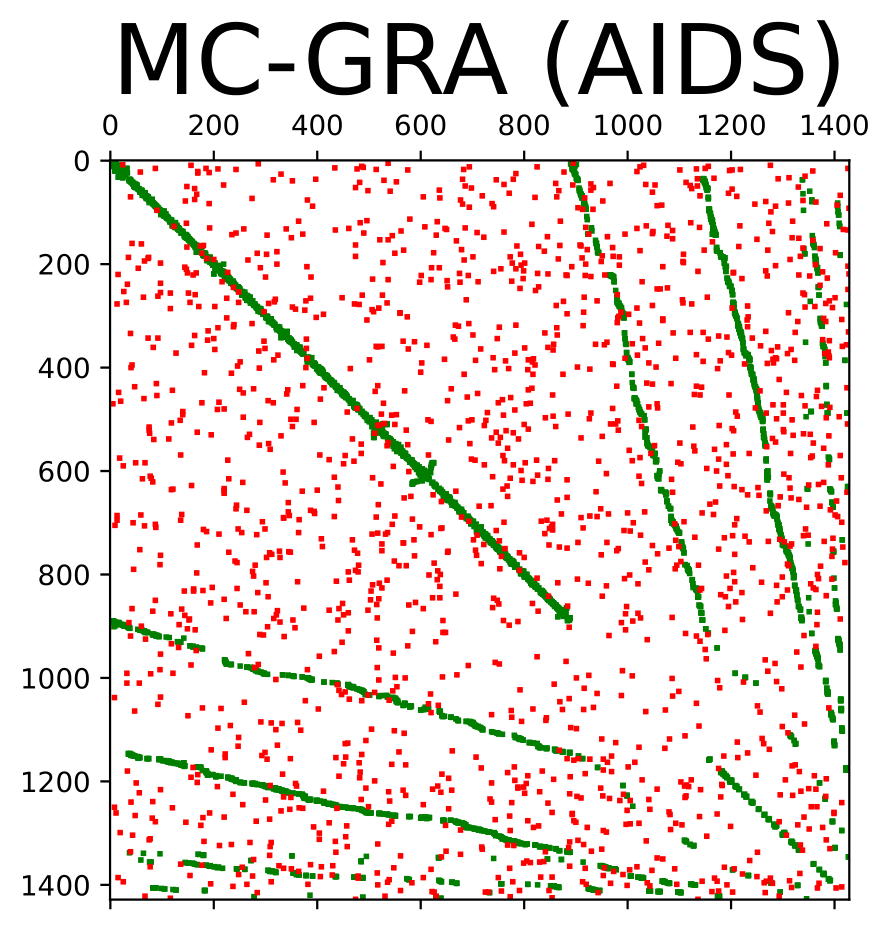}}}
\hfill
\subfloat[GraphMI, protected]
{{\includegraphics[width=0.18\linewidth]{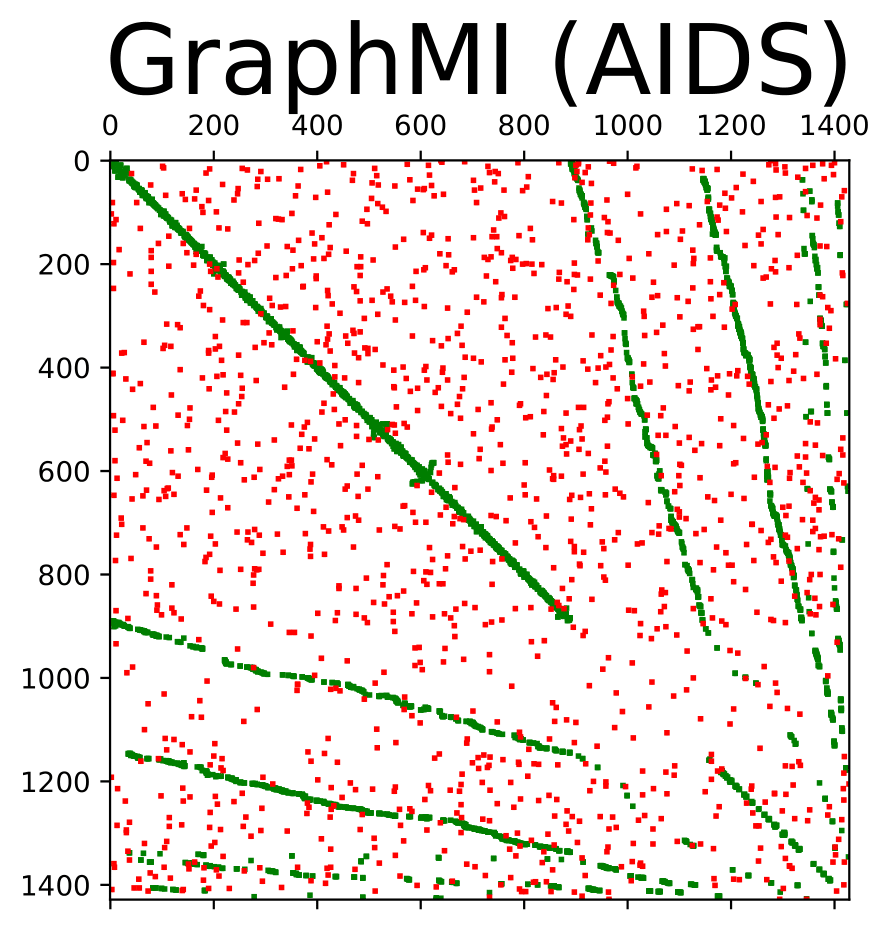}}}
\vspace{-8pt}
\caption{Recovered adjacency on AIDS. (a)--(e) as in Fig.~\ref{appd: adj:cora}. Green: correct; red: errors.}
\label{appd: adj:AIDS}
\end{figure}

\textbf{Additional qualitative results with the updated plotting pipeline.}
To complement the visualizations above, we include additional figures from the same plotting pipeline as in the main paper, grouped by model and task.

\begin{figure*}[t]
\centering
\includegraphics[width=0.32\textwidth]{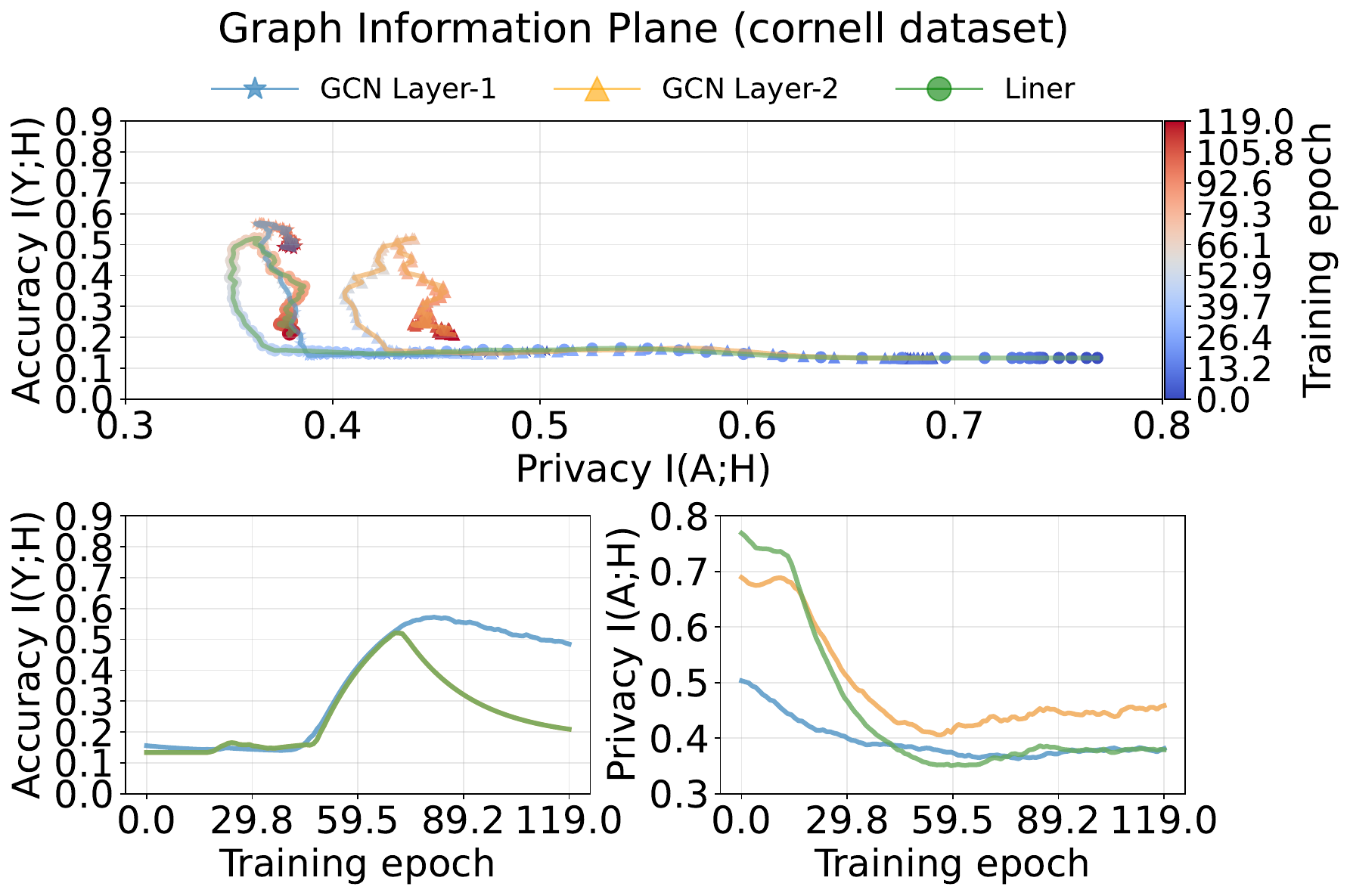}
\hfill
\includegraphics[width=0.32\textwidth]{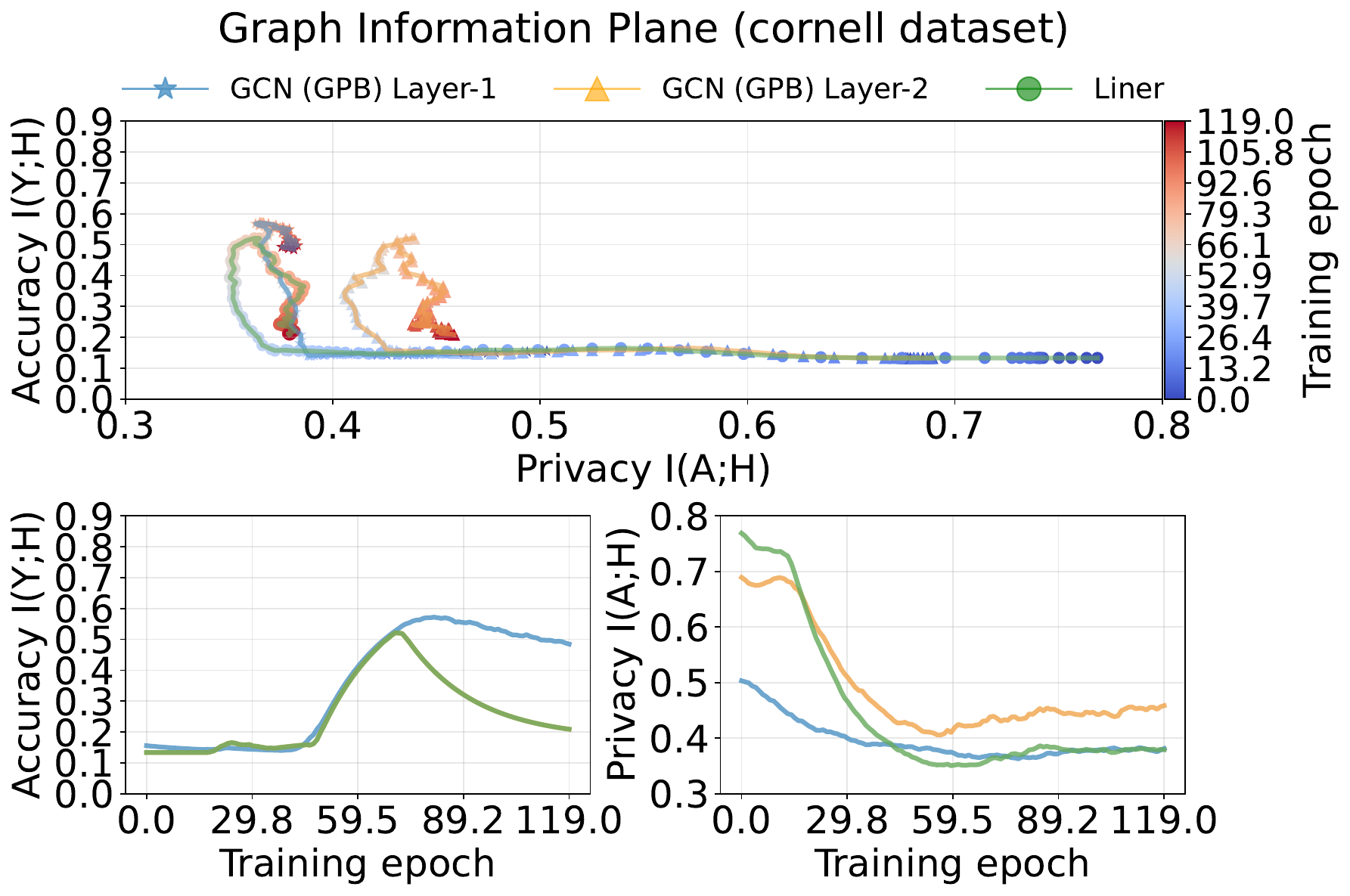}
\hfill
\includegraphics[width=0.32\textwidth]{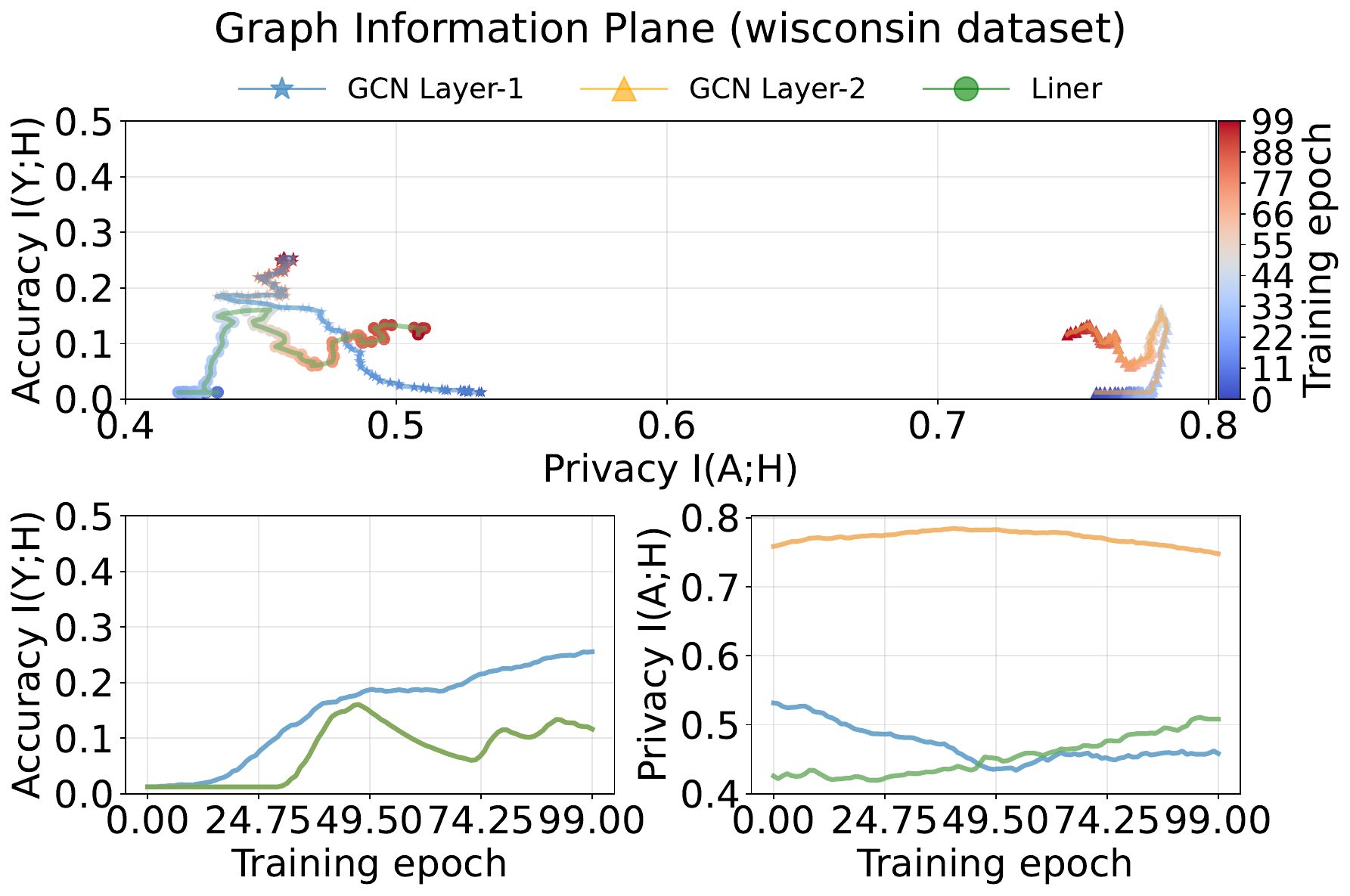}
\\
\vspace{0.2cm}
\includegraphics[width=0.32\textwidth]{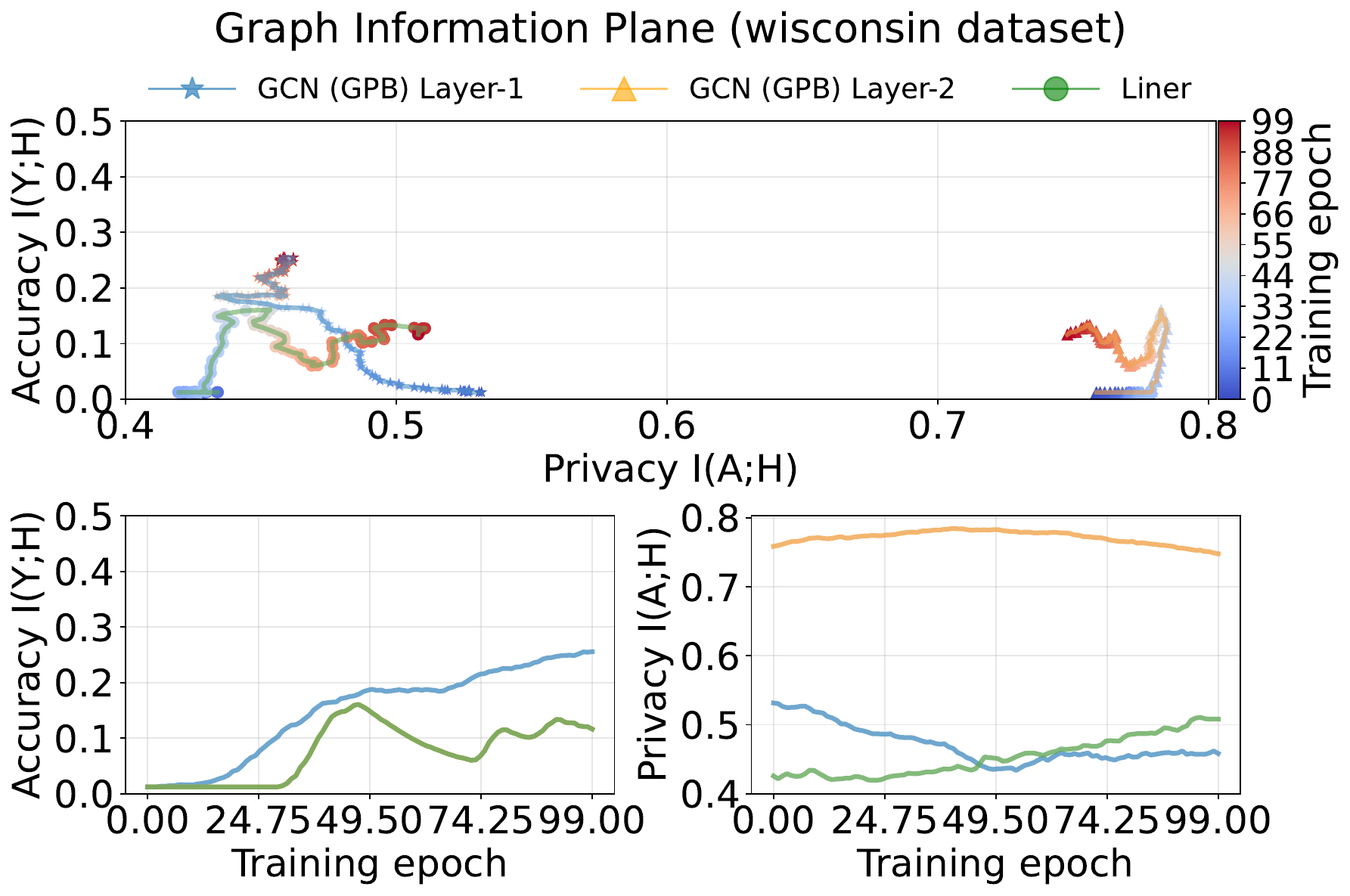}
\hfill
\includegraphics[width=0.32\textwidth]{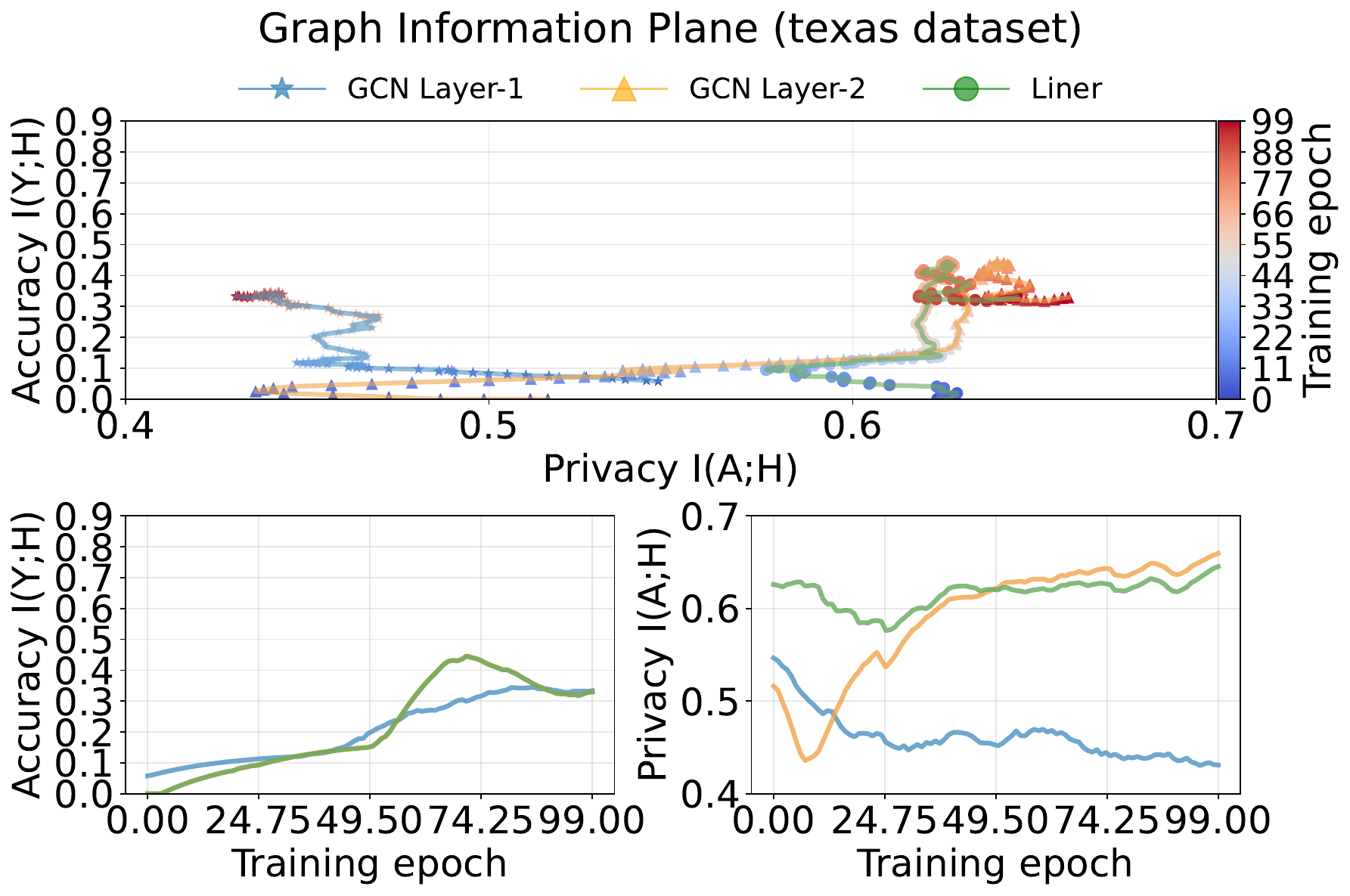}
\hfill
\includegraphics[width=0.32\textwidth]{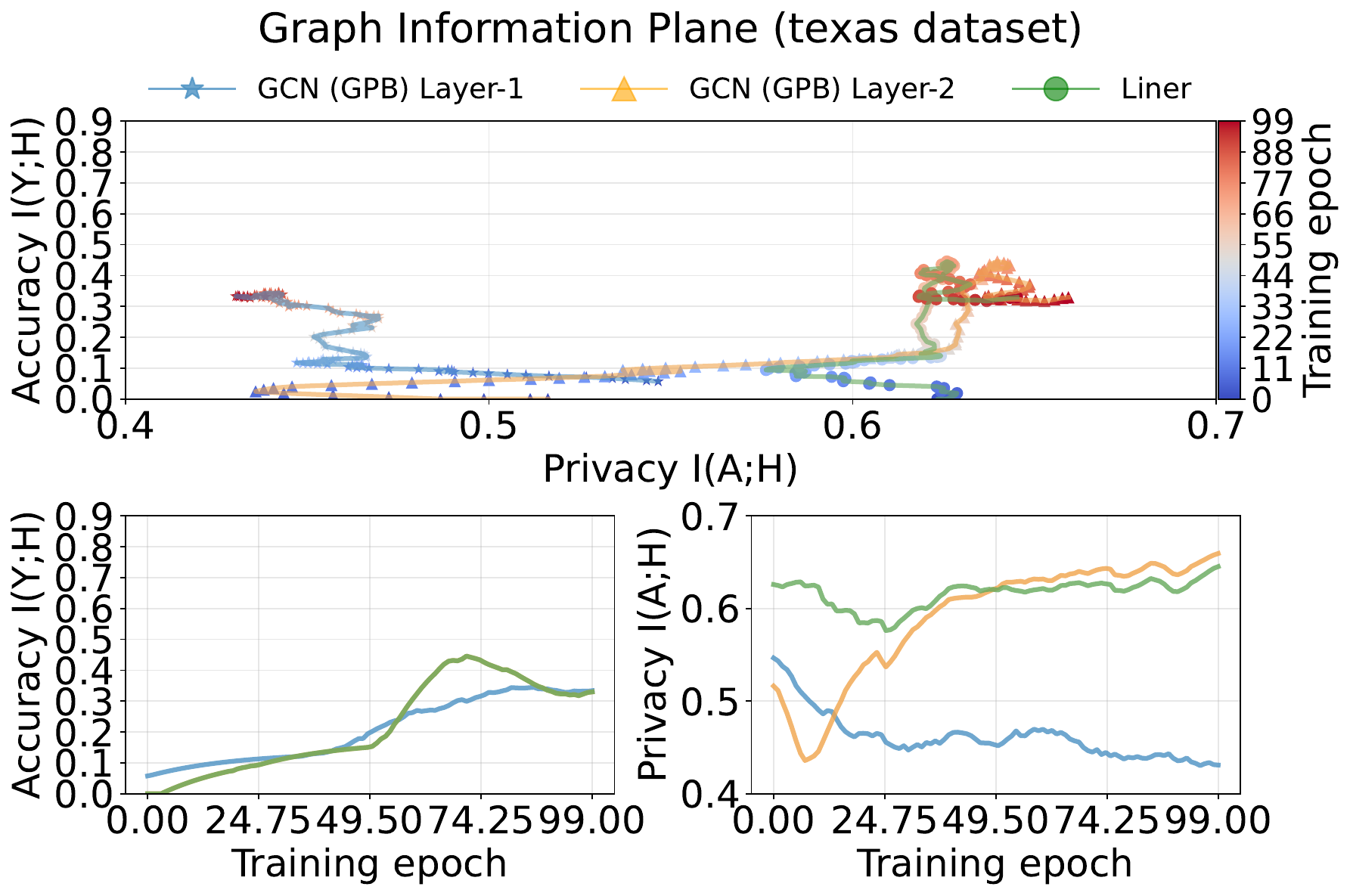}
\caption{Additional graph information planes for GCN on Cornell, Wisconsin, and Texas under unprotected and protected training.}
\label{app: gip:gcn:addition}
\end{figure*}

\begin{figure*}[t]
\centering
\includegraphics[width=0.24\textwidth]{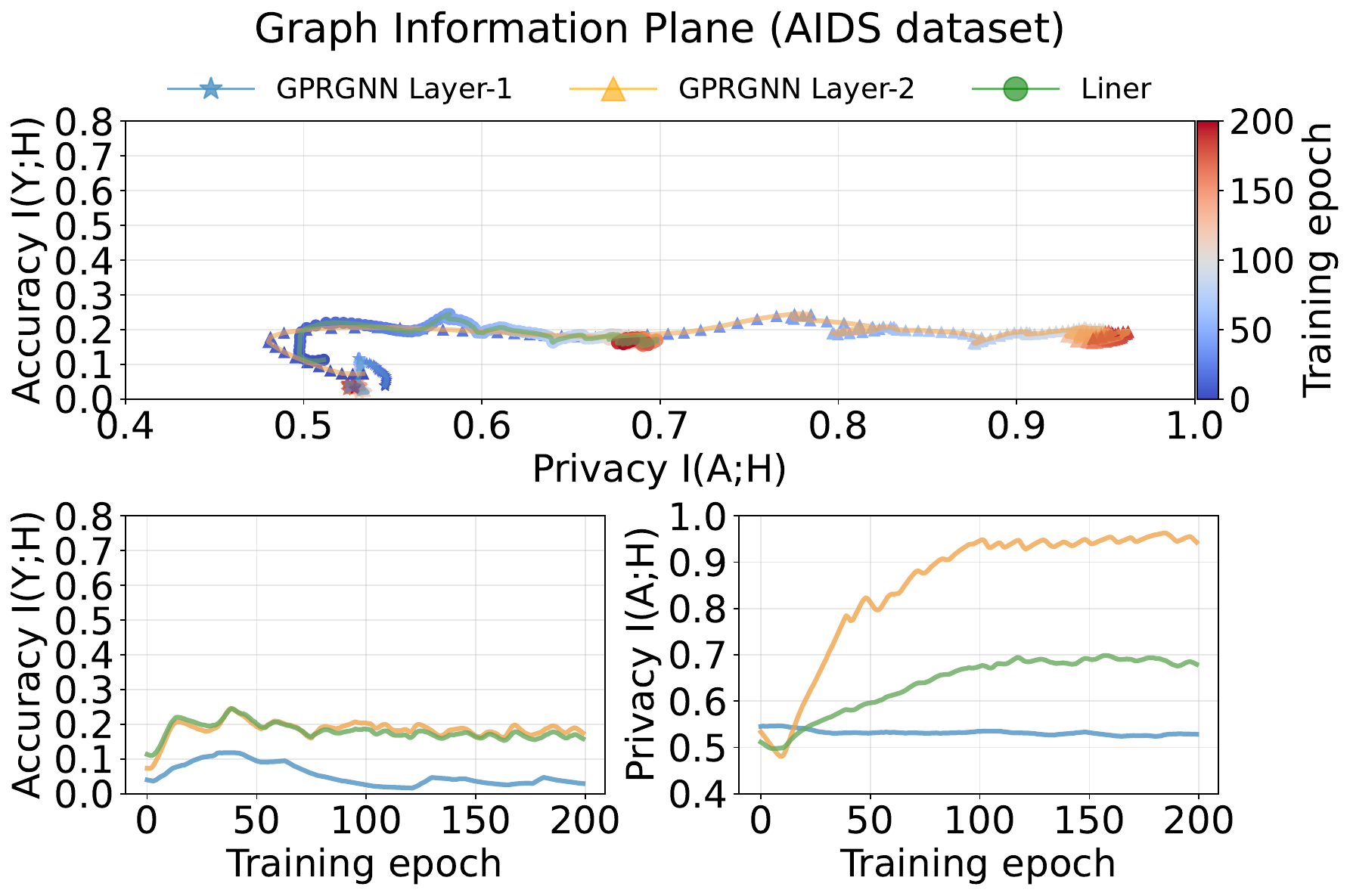}
\hfill
\includegraphics[width=0.24\textwidth]{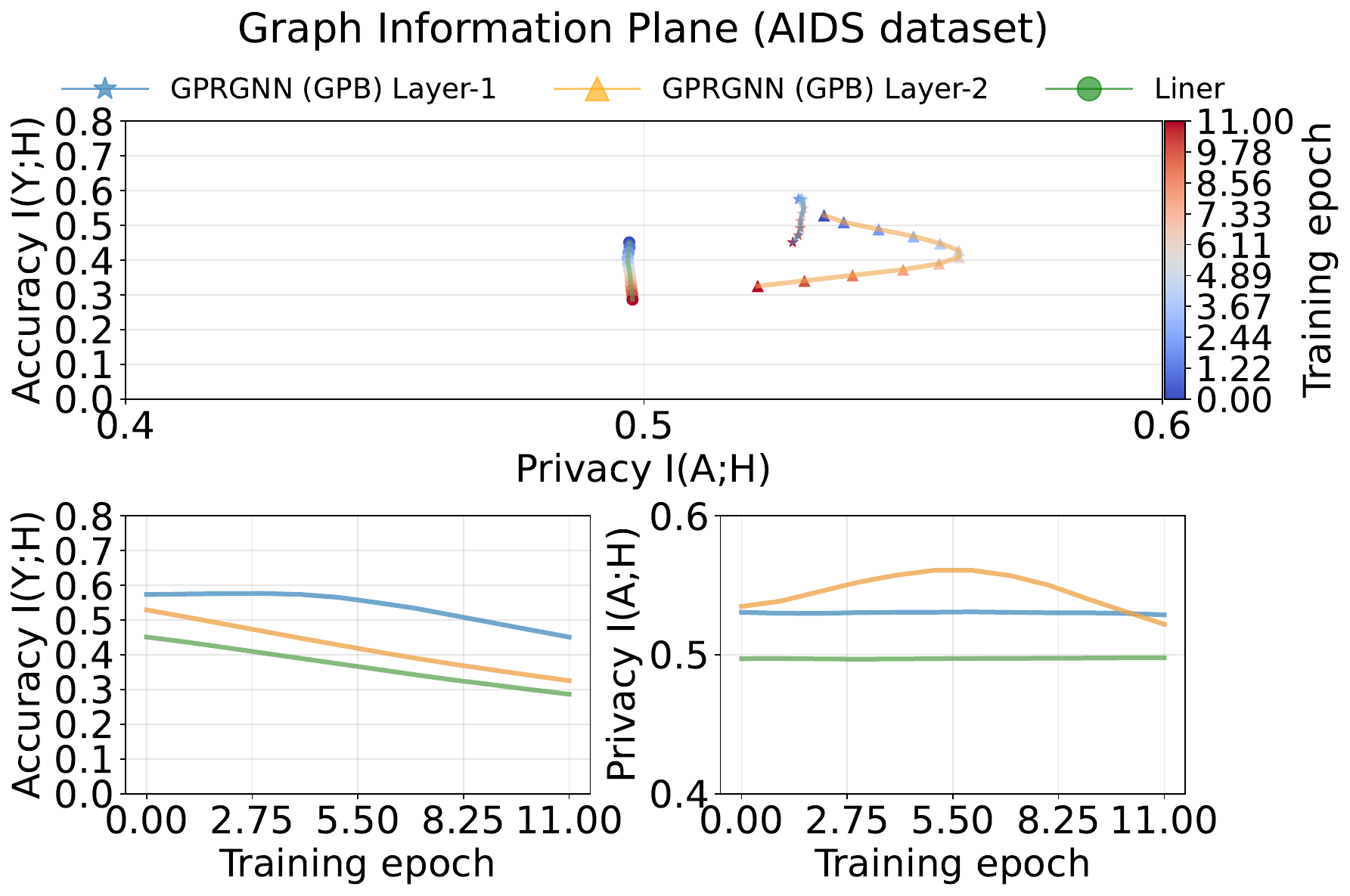}
\hfill
\includegraphics[width=0.24\textwidth]{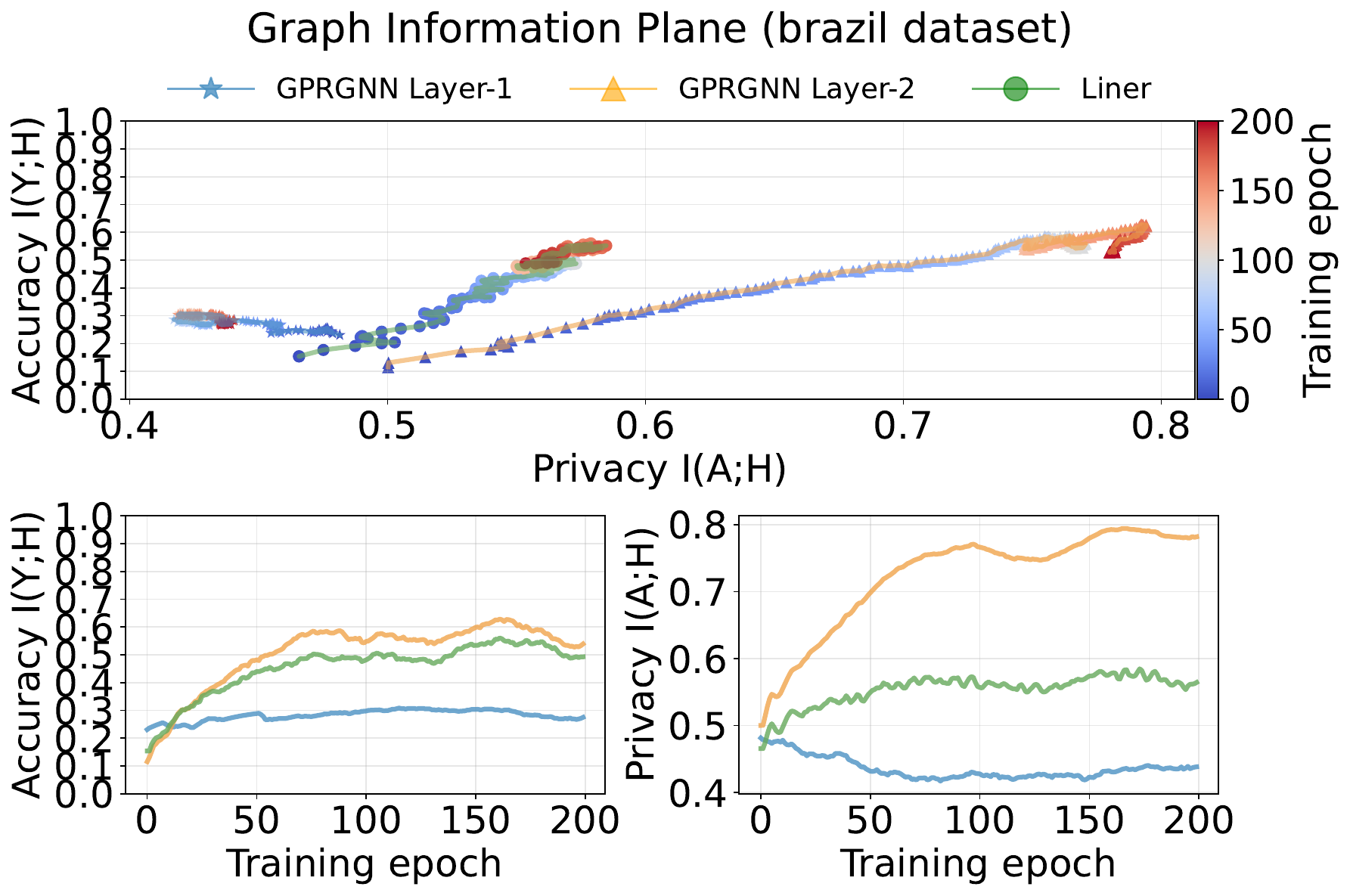}
\hfill
\includegraphics[width=0.24\textwidth]{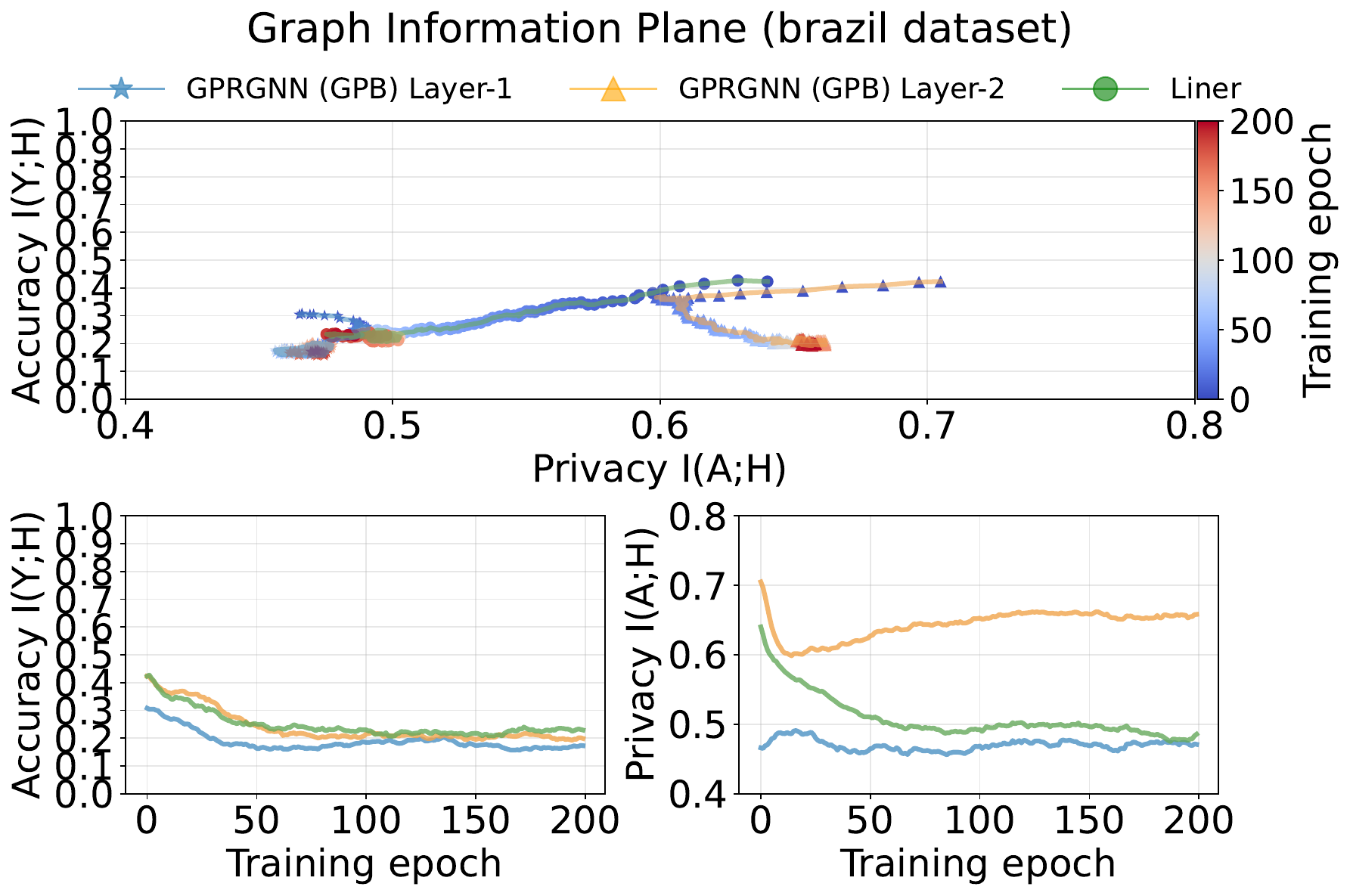}
\\
\vspace{0.2cm}
\includegraphics[width=0.24\textwidth]{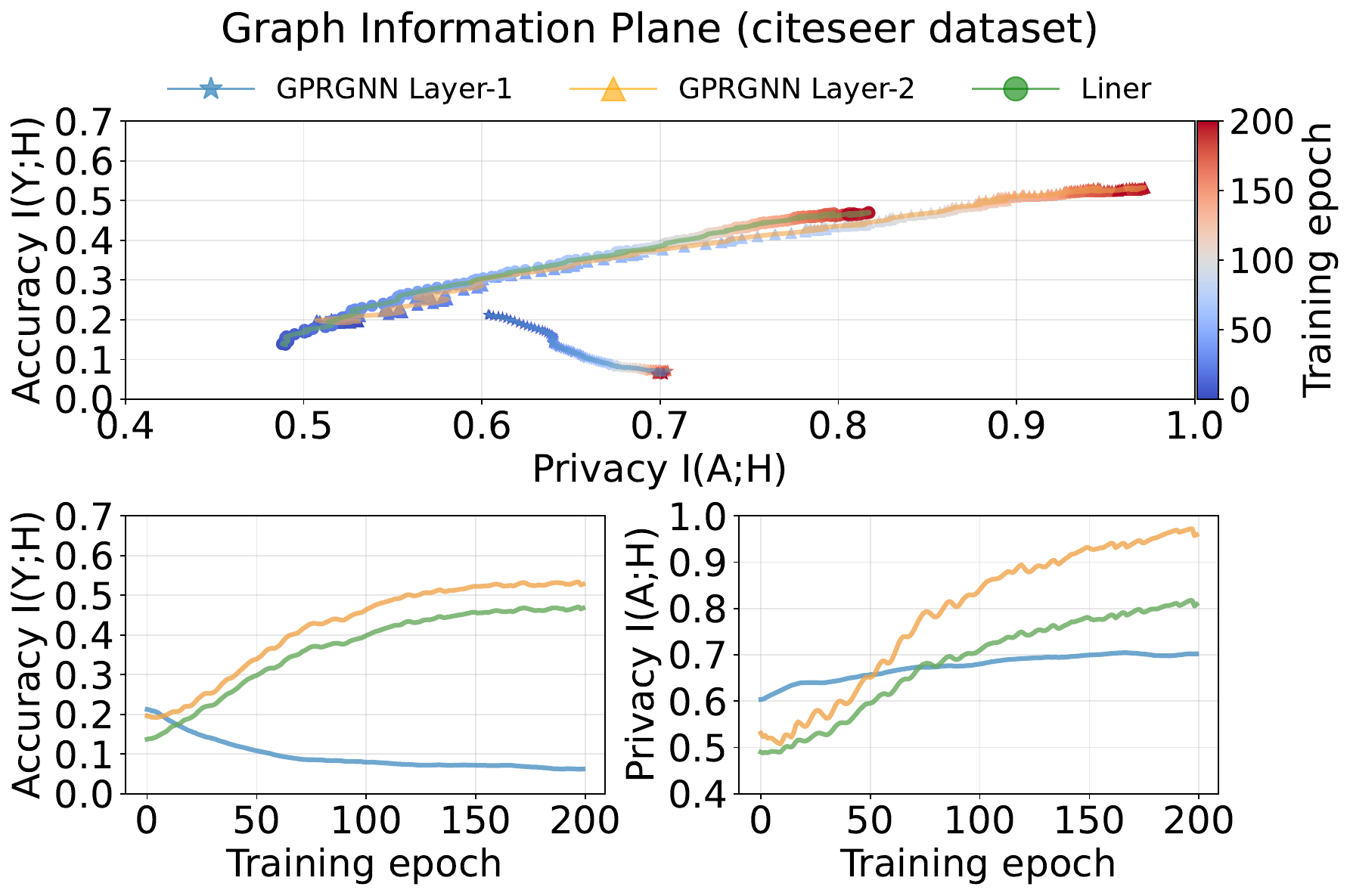}
\hfill
\includegraphics[width=0.24\textwidth]{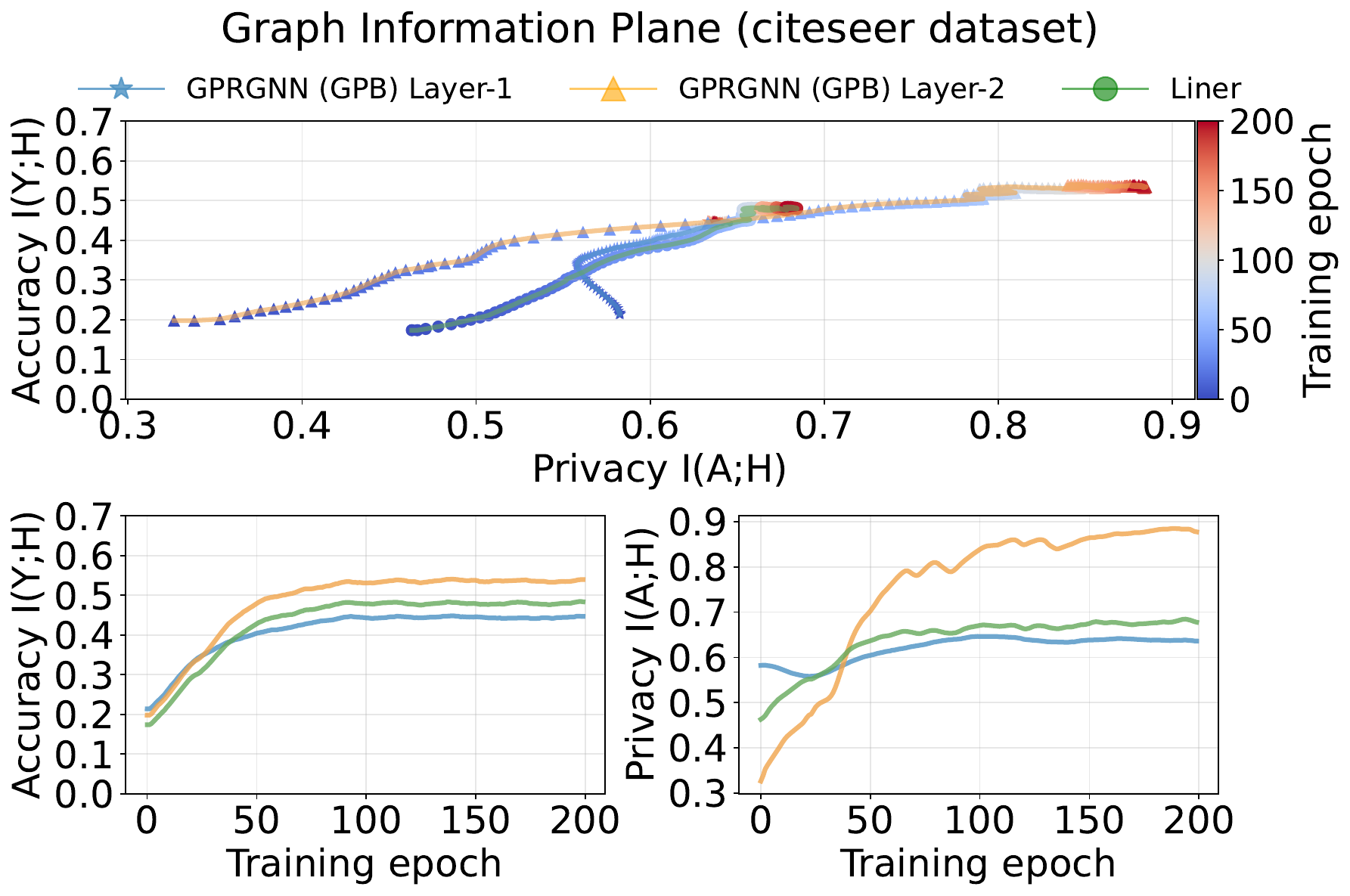}
\hfill
\includegraphics[width=0.24\textwidth]{figures/appendix/understanding/graph_plane_gprgnn_cornell_plain.pdf}
\hfill
\includegraphics[width=0.24\textwidth]{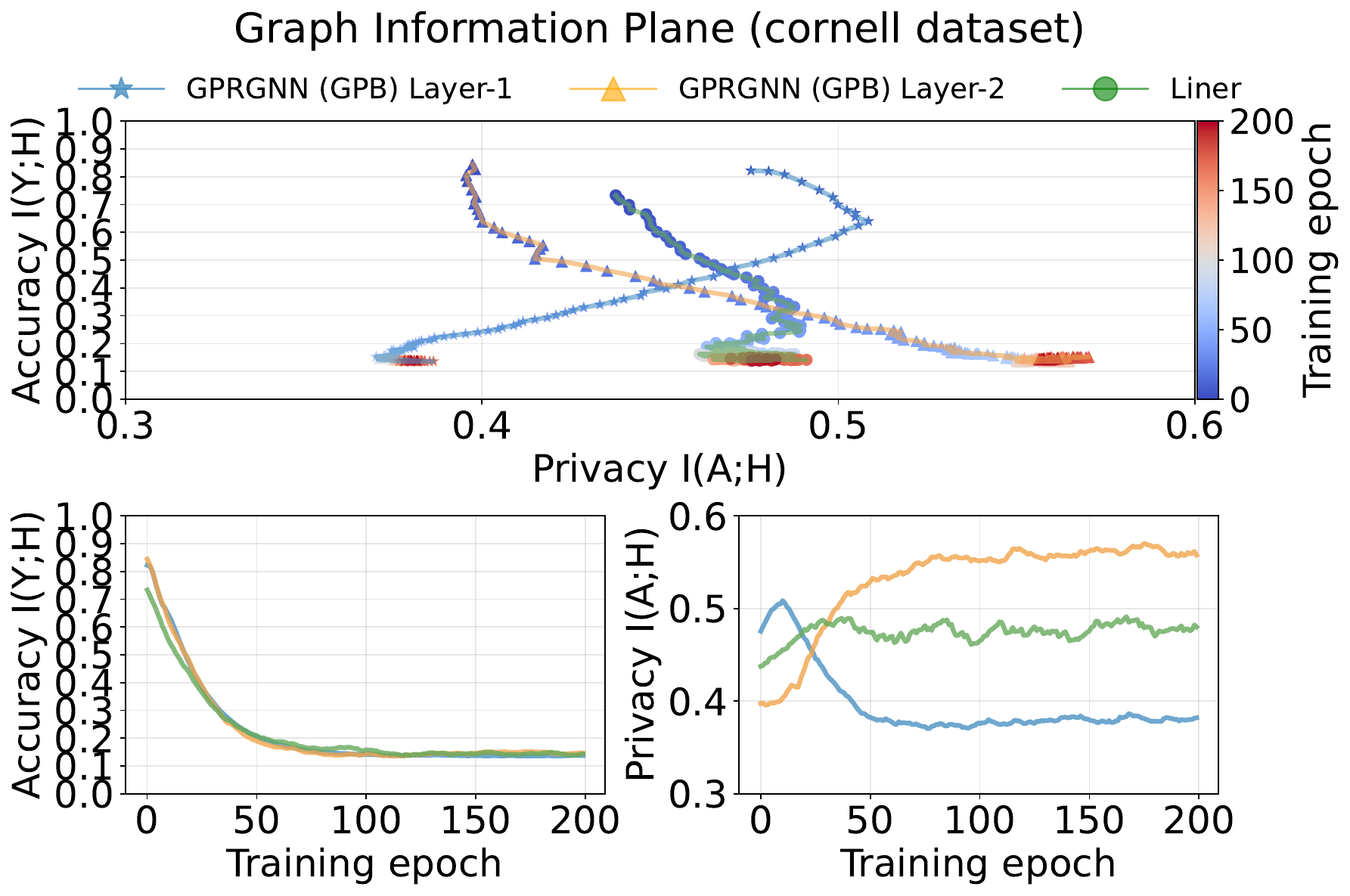}
\caption{Additional graph information planes for GPR-GNN on AIDS, Brazil, Citeseer, and Cornell. Each pair compares unprotected training with MC-GPB~(+) protected training.}
\label{app: gip:gprgnn:addition-a}
\end{figure*}

\begin{figure*}[t]
\centering
\includegraphics[width=0.24\textwidth]{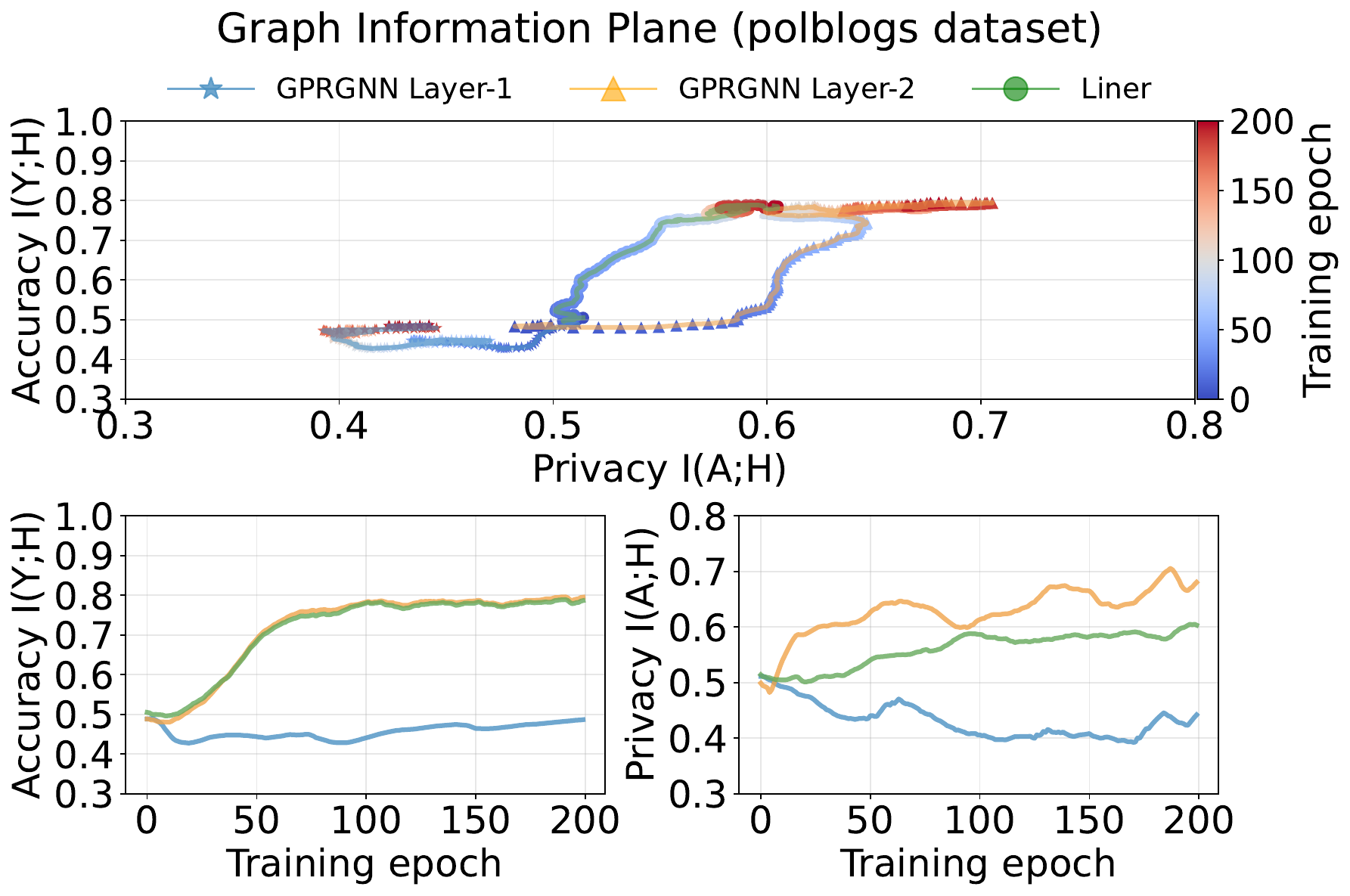}
\hfill
\includegraphics[width=0.24\textwidth]{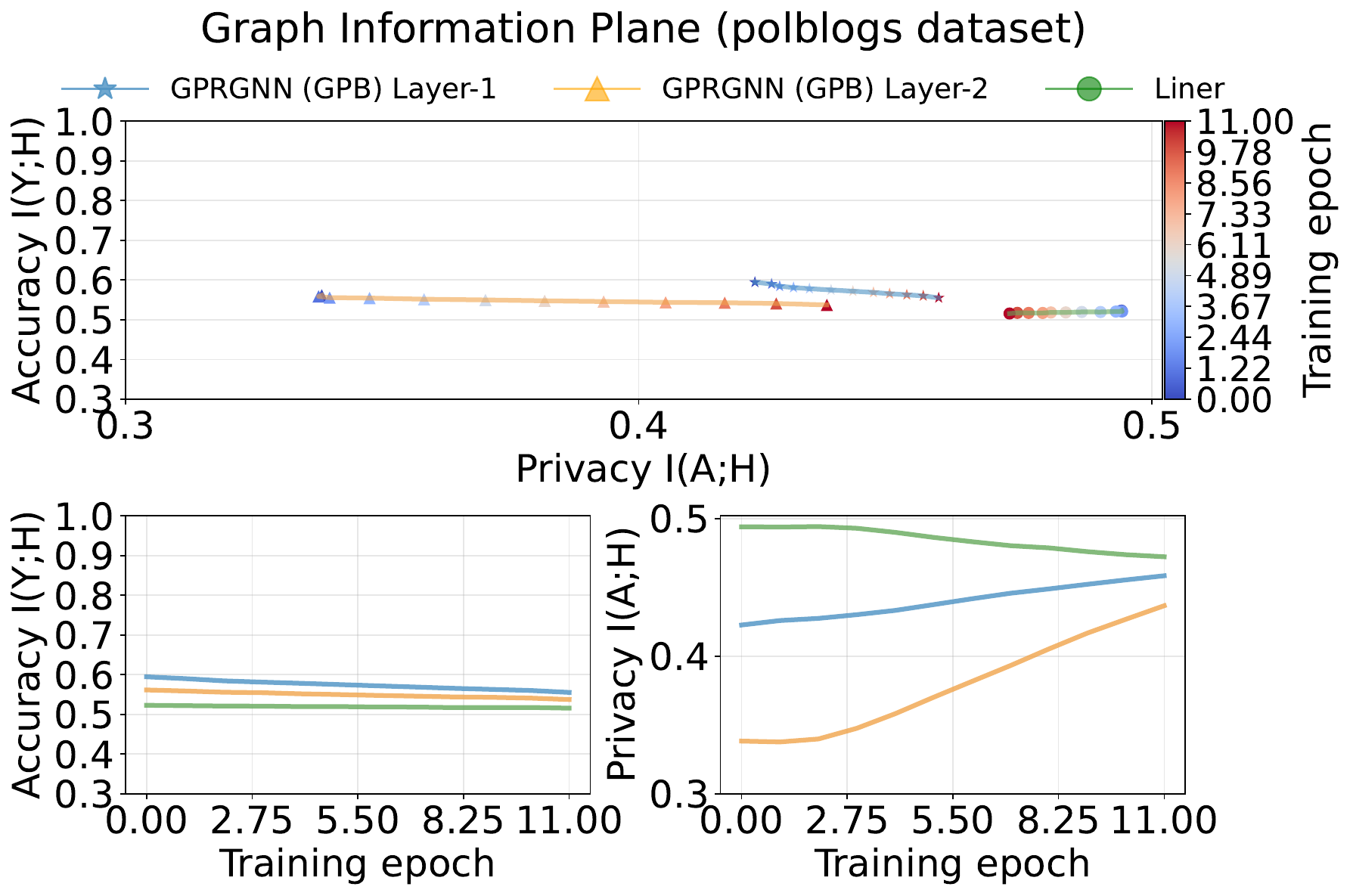}
\hfill
\includegraphics[width=0.24\textwidth]{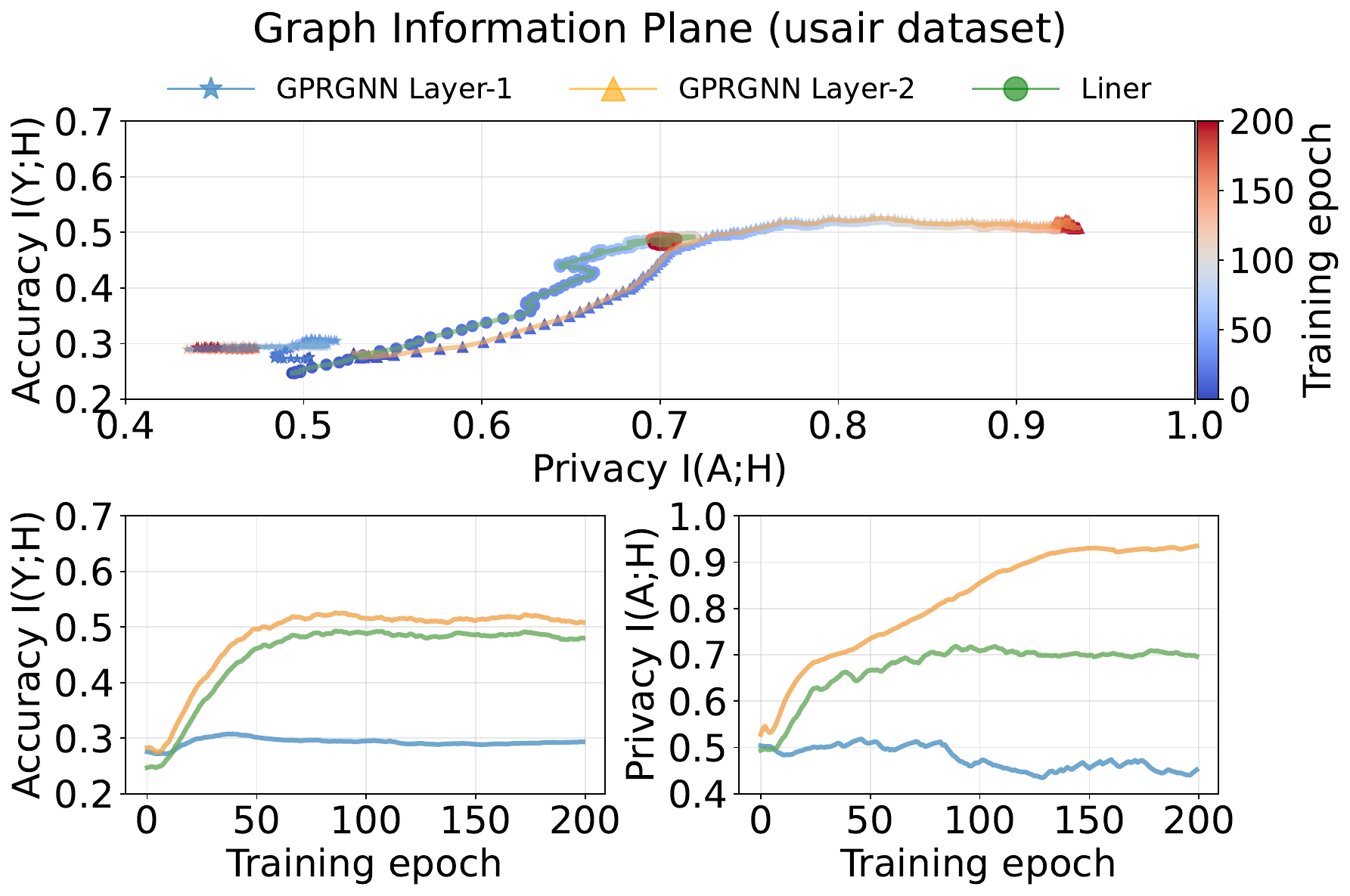}
\hfill
\includegraphics[width=0.24\textwidth]{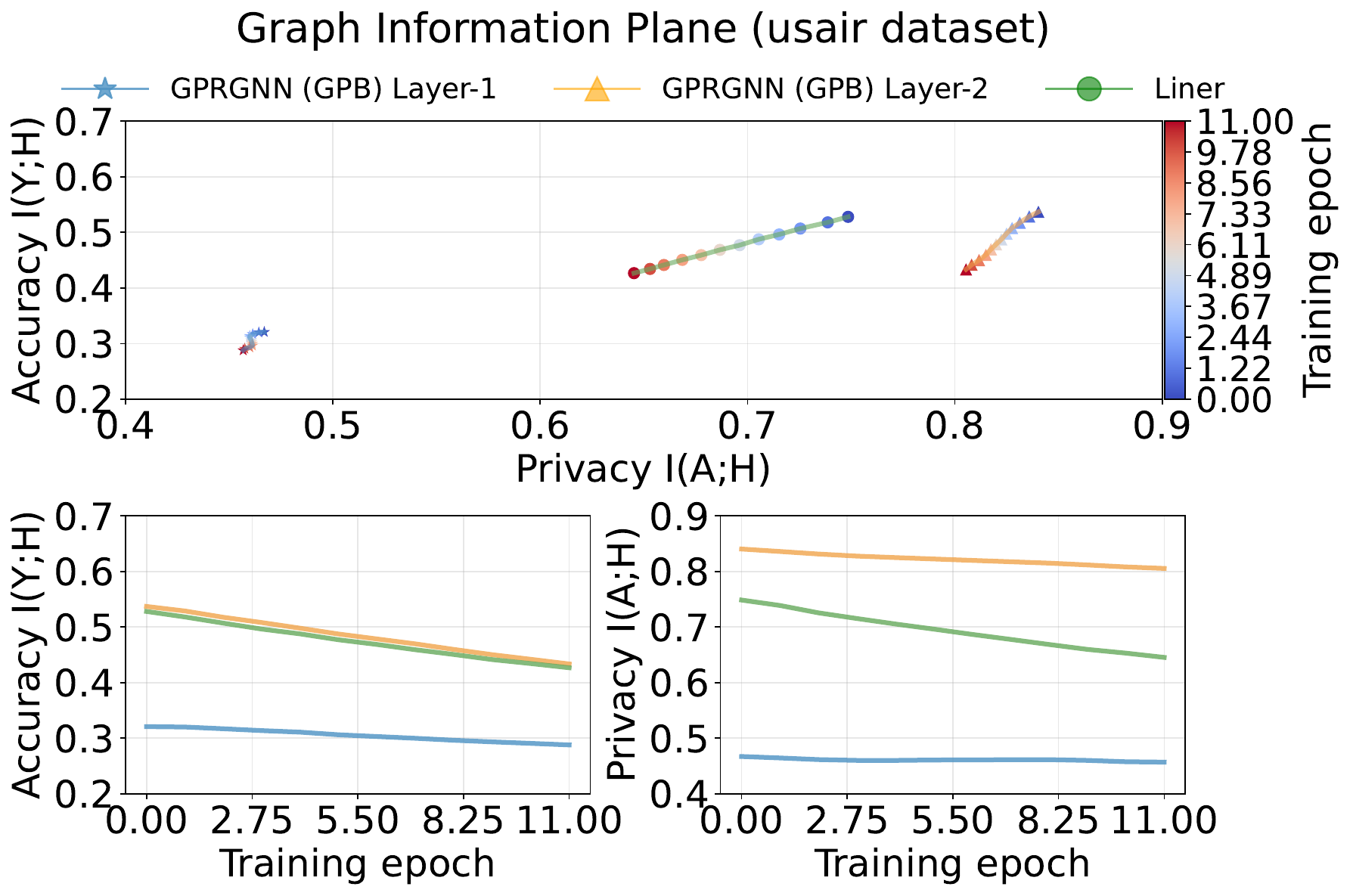}
\\
\vspace{0.2cm}
\includegraphics[width=0.24\textwidth]{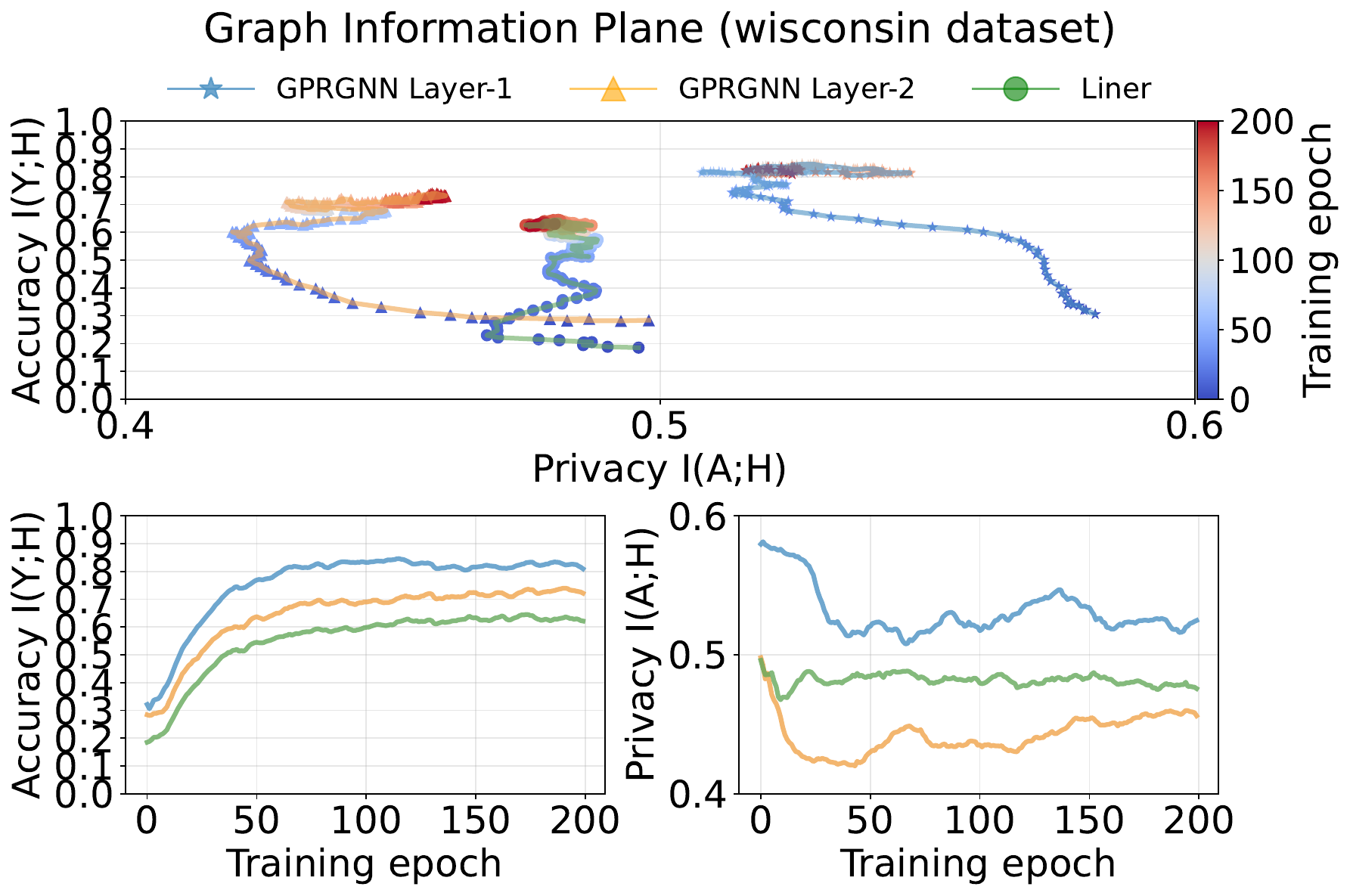}
\hfill
\includegraphics[width=0.24\textwidth]{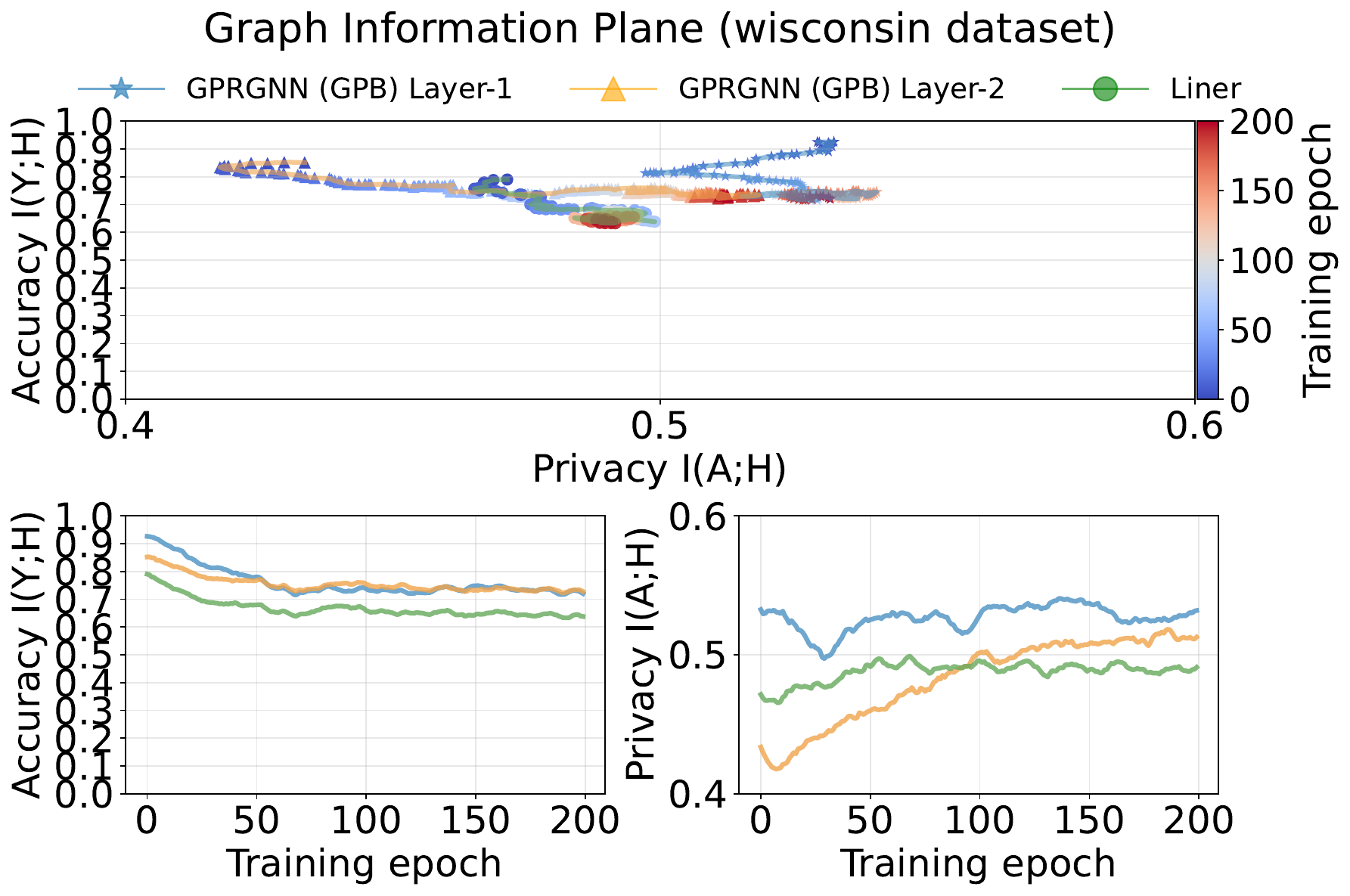}
\hfill
\includegraphics[width=0.24\textwidth]{figures/appendix/understanding/graph_plane_gprgnn_cora_plain.pdf}
\hfill
\includegraphics[width=0.24\textwidth]{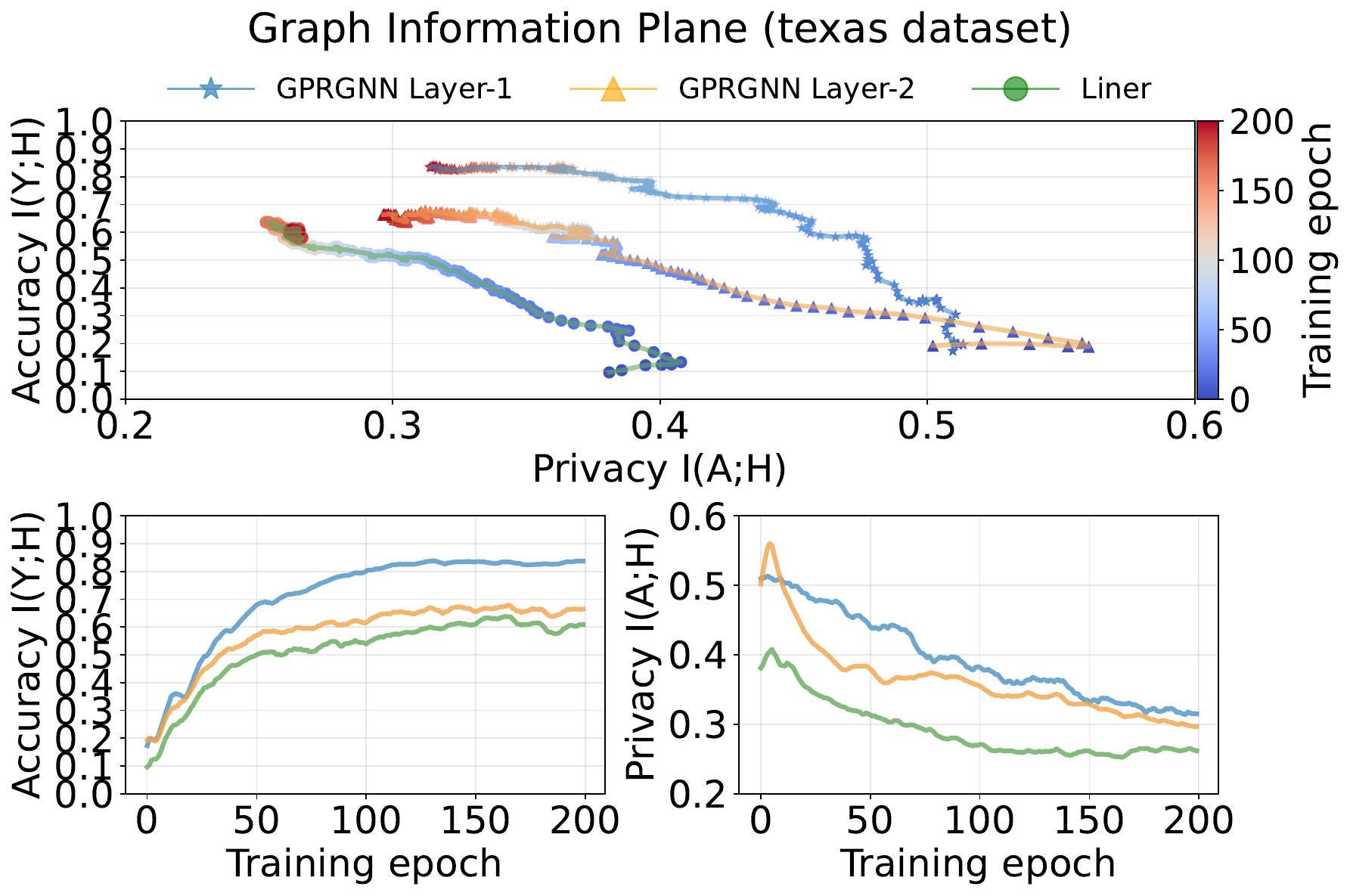}
\caption{Further graph information planes for GPR-GNN on Polblogs, USA, and Wisconsin, together with the unprotected-training views on Cora and Texas for completeness.}
\label{app: gip:gprgnn:addition-b}
\end{figure*}

\begin{figure*}[t]
\centering
\includegraphics[width=0.24\textwidth]{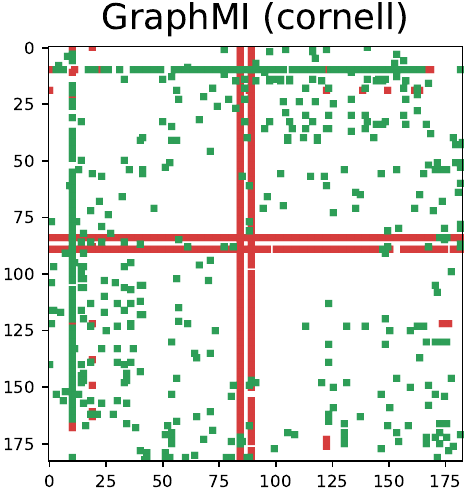}
\hfill
\includegraphics[width=0.24\textwidth]{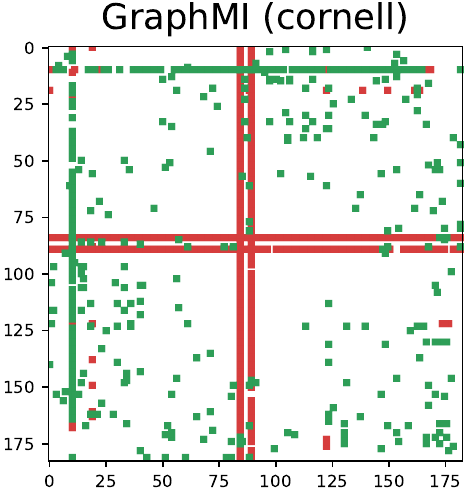}
\hfill
\includegraphics[width=0.24\textwidth]{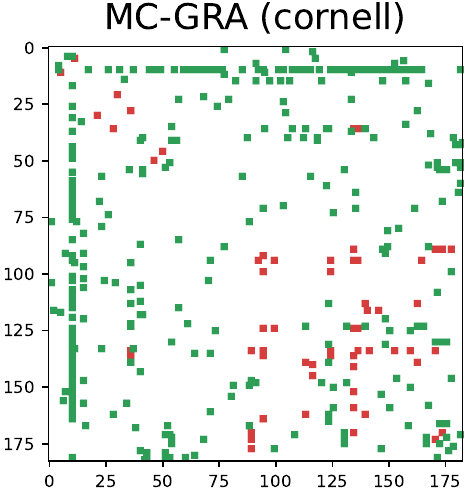}
\hfill
\includegraphics[width=0.24\textwidth]{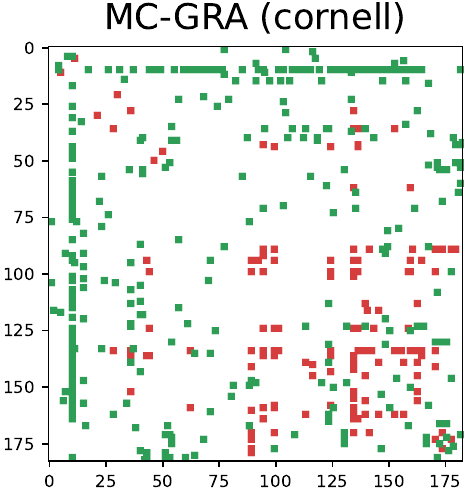}
\caption{Additional recovered adjacency examples on Cornell under GCN, comparing GraphMI and MC-GRA~(+) before and after protection.}
\label{app: adj:gcn:cornell}
\end{figure*}


\begin{figure*}[t]
\centering
\includegraphics[width=0.24\textwidth]{figures/all_refalign/adj_recovery_gprgnn_cora_graphmi_plain_refalign.pdf}
\hfill
\includegraphics[width=0.24\textwidth]{figures/all_refalign/adj_recovery_gprgnn_cora_mcgra_plain_refalign.pdf}
\hfill
\includegraphics[width=0.24\textwidth]{figures/all_refalign/adj_recovery_gprgnn_cora_graphmi_protected_refalign.pdf}
\hfill
\includegraphics[width=0.24\textwidth]{figures/all_refalign/adj_recovery_gprgnn_cora_mcgra_protected_refalign.pdf}
\\
\vspace{0.2cm}
\includegraphics[width=0.24\textwidth]{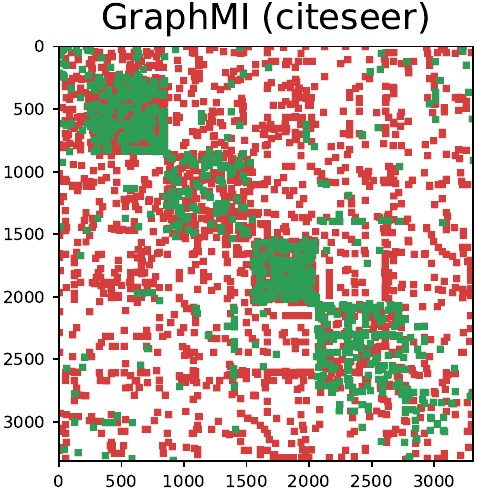}
\hfill
\includegraphics[width=0.24\textwidth]{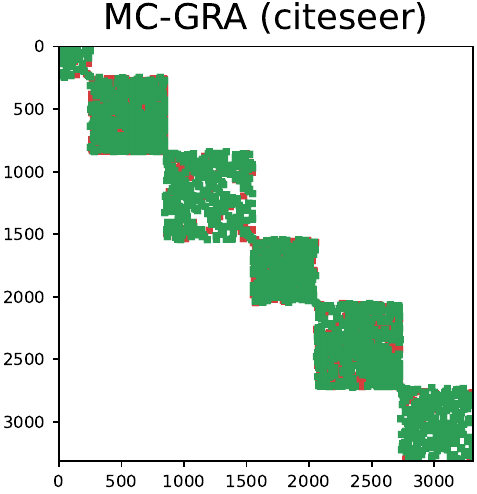}
\hfill
\includegraphics[width=0.24\textwidth]{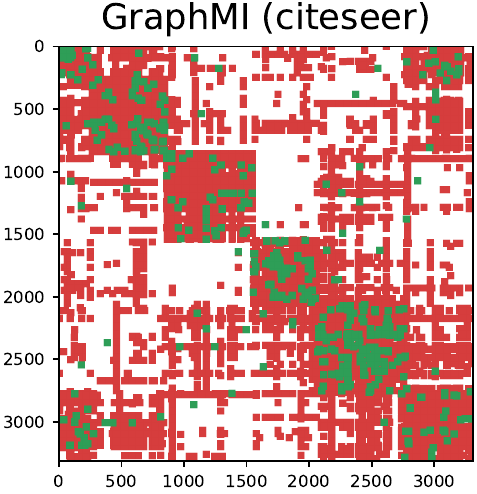}
\hfill
\includegraphics[width=0.24\textwidth]{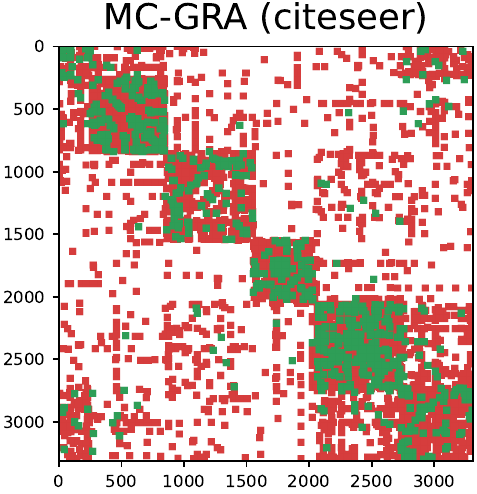}
\caption{Full GPR-GNN recovered adjacency comparisons on Cora and Citeseer. For each dataset, the four columns are GraphMI/unprotected, MC-GRA~(+)/unprotected, GraphMI/protected, and MC-GRA~(+)/protected.}
\label{app: adj:gprgnn:full-a}
\end{figure*}

\begin{figure*}[t]
\centering
\includegraphics[width=0.24\textwidth]{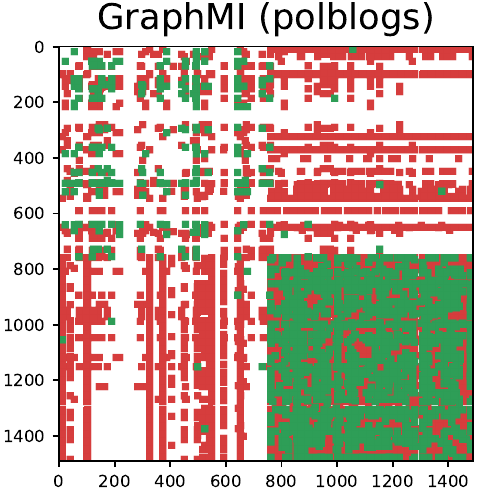}
\hfill
\includegraphics[width=0.24\textwidth]{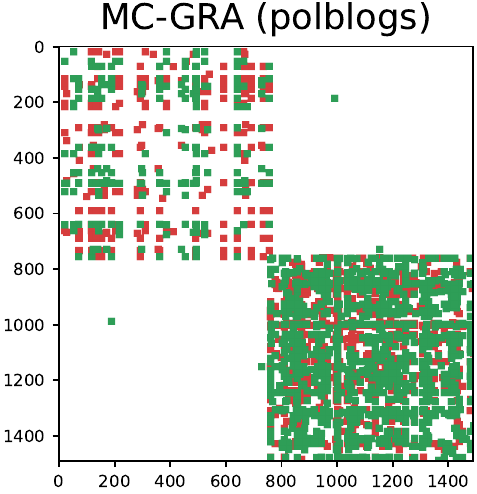}
\hfill
\includegraphics[width=0.24\textwidth]{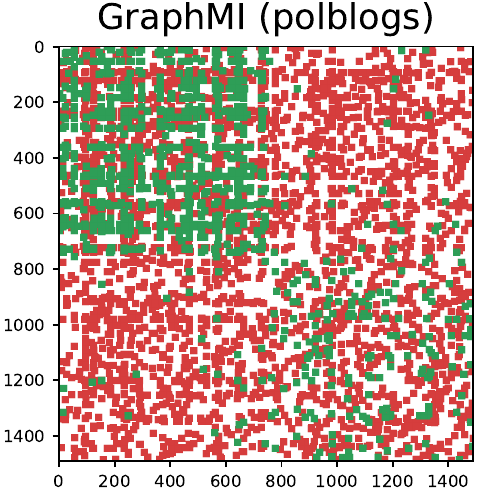}
\hfill
\includegraphics[width=0.24\textwidth]{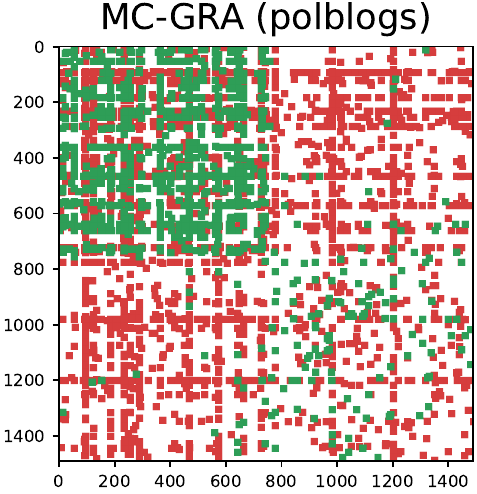}
\\
\vspace{0.2cm}
\includegraphics[width=0.24\textwidth]{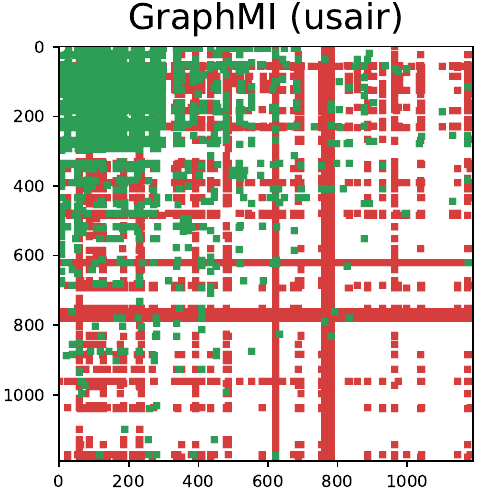}
\hfill
\includegraphics[width=0.24\textwidth]{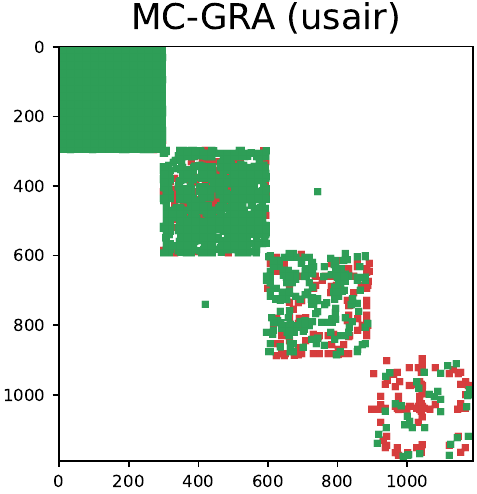}
\hfill
\includegraphics[width=0.24\textwidth]{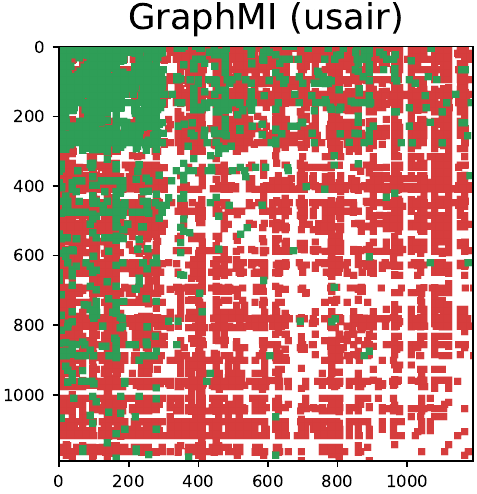}
\hfill
\includegraphics[width=0.24\textwidth]{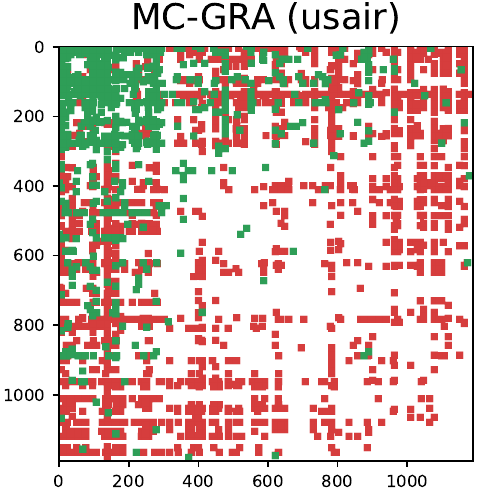}
\caption{Full GPR-GNN recovered adjacency comparisons on Polblogs and USA.}
\label{app: adj:gprgnn:full-b}
\end{figure*}

\begin{figure*}[t]
\centering
\includegraphics[width=0.24\textwidth]{figures/all_refalign/adj_recovery_gprgnn_brazil_graphmi_plain_refalign.pdf}
\hfill
\includegraphics[width=0.24\textwidth]{figures/all_refalign/adj_recovery_gprgnn_brazil_mcgra_plain_refalign.pdf}
\hfill
\includegraphics[width=0.24\textwidth]{figures/all_refalign/adj_recovery_gprgnn_brazil_graphmi_protected_refalign.pdf}
\hfill
\includegraphics[width=0.24\textwidth]{figures/all_refalign/adj_recovery_gprgnn_brazil_mcgra_protected_refalign.pdf}
\\
\vspace{0.2cm}
\includegraphics[width=0.24\textwidth]{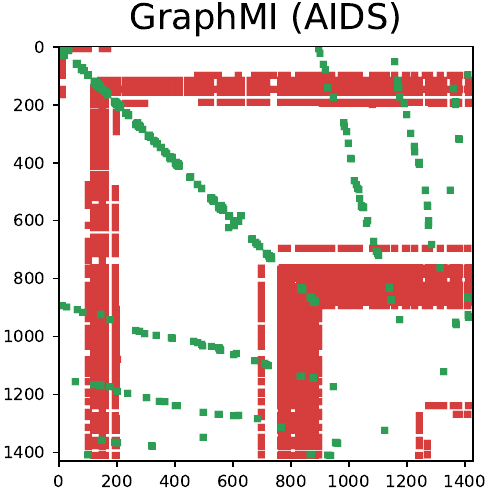}
\hfill
\includegraphics[width=0.24\textwidth]{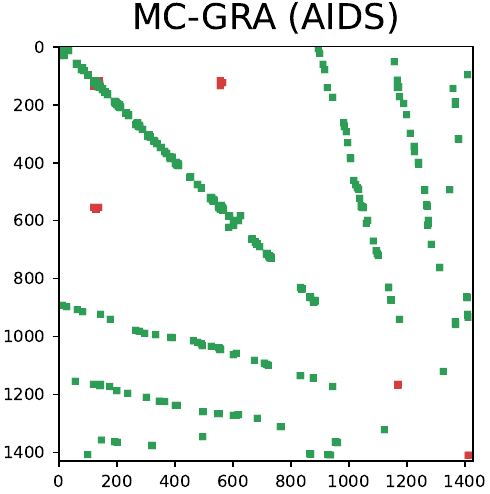}
\hfill
\includegraphics[width=0.24\textwidth]{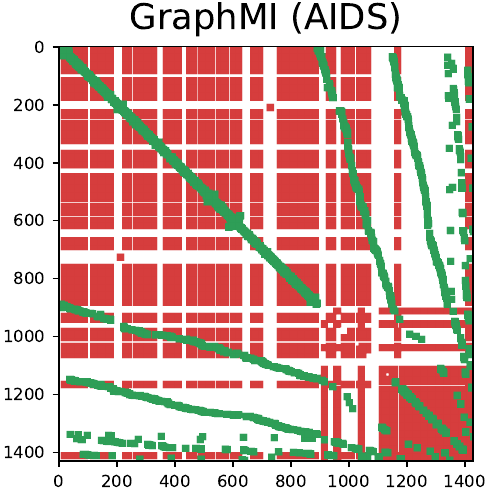}
\hfill
\includegraphics[width=0.24\textwidth]{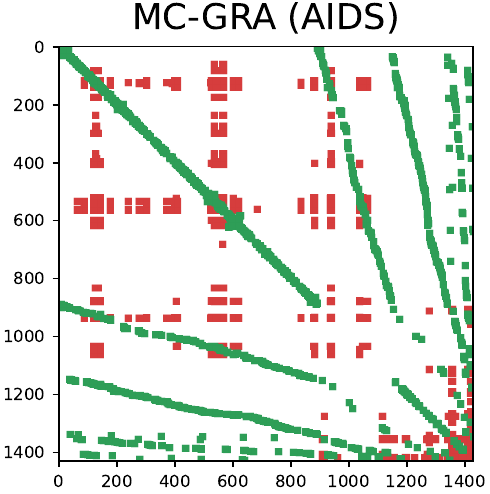}
\caption{Full GPR-GNN recovered adjacency comparisons on Brazil and AIDS.}
\label{app: adj:gprgnn:full-c}
\end{figure*}

\begin{figure*}[t]
\centering
\includegraphics[width=0.24\textwidth]{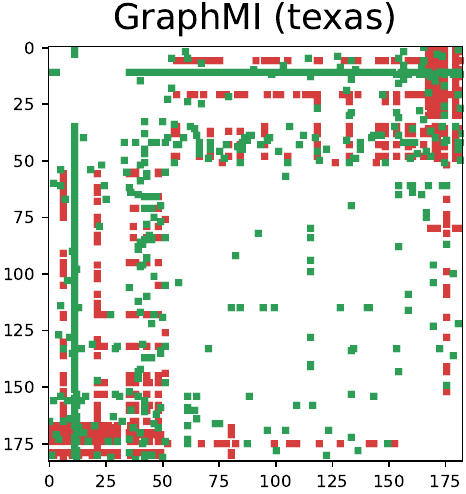}
\hfill
\includegraphics[width=0.24\textwidth]{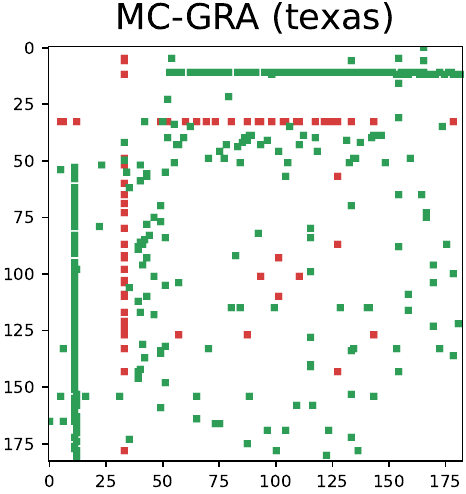}
\hfill
\includegraphics[width=0.24\textwidth]{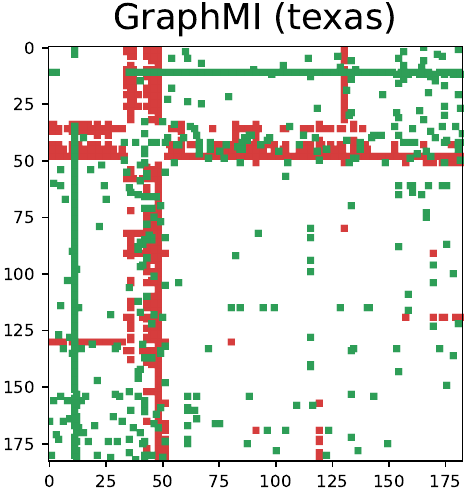}
\hfill
\includegraphics[width=0.24\textwidth]{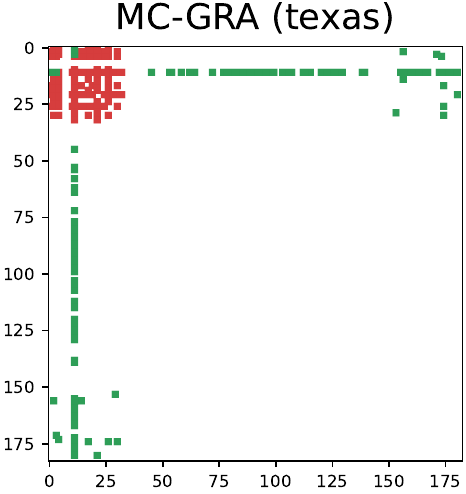}
\\
\vspace{0.2cm}
\includegraphics[width=0.24\textwidth]{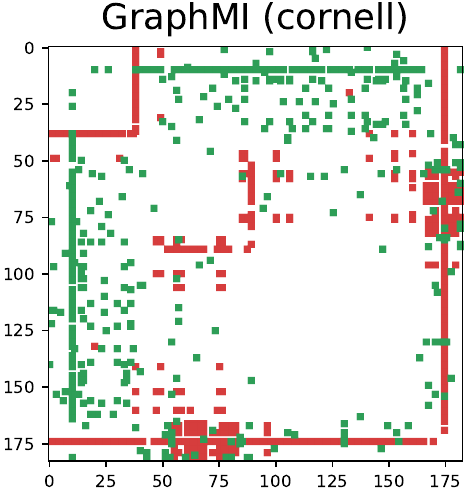}
\hfill
\includegraphics[width=0.24\textwidth]{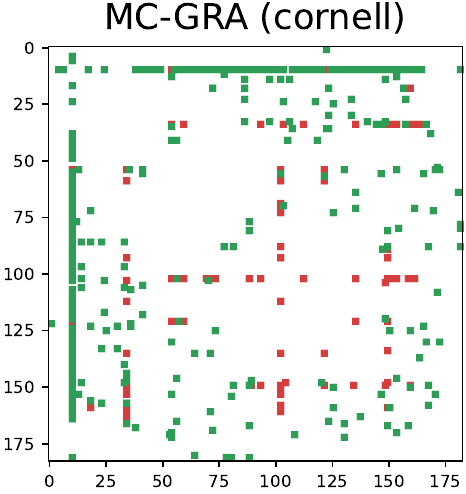}
\hfill
\includegraphics[width=0.24\textwidth]{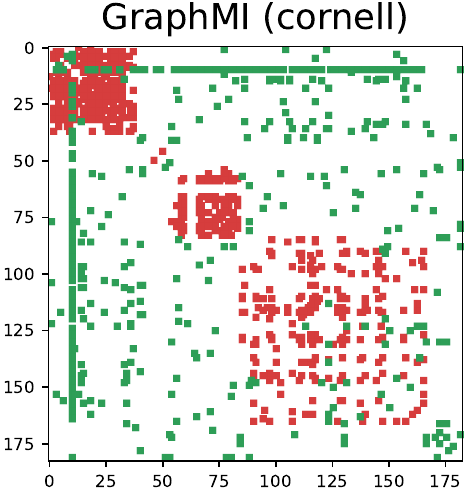}
\hfill
\includegraphics[width=0.24\textwidth]{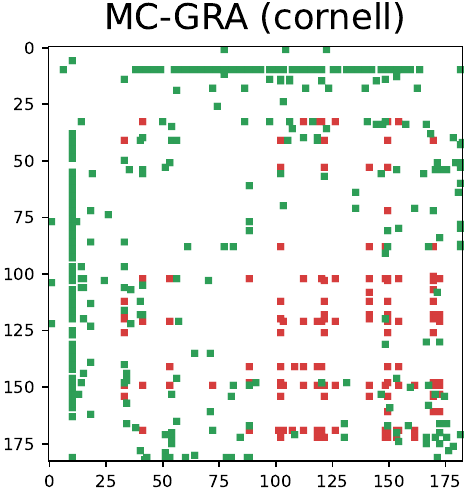}
\caption{Full GPR-GNN recovered adjacency comparisons on Texas and Cornell.}
\label{app: adj:gprgnn:full-d}
\end{figure*}

\begin{figure*}[t]
\centering
\includegraphics[width=0.24\textwidth]{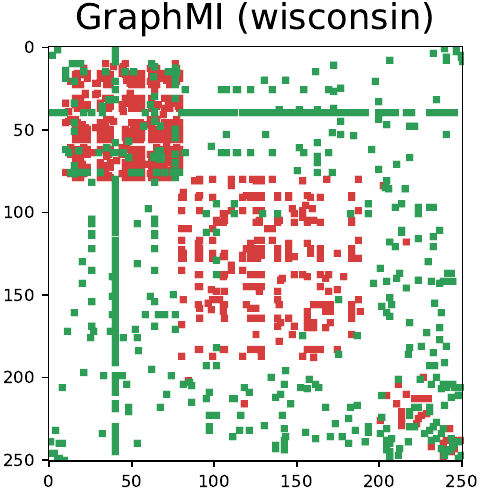}
\hfill
\includegraphics[width=0.24\textwidth]{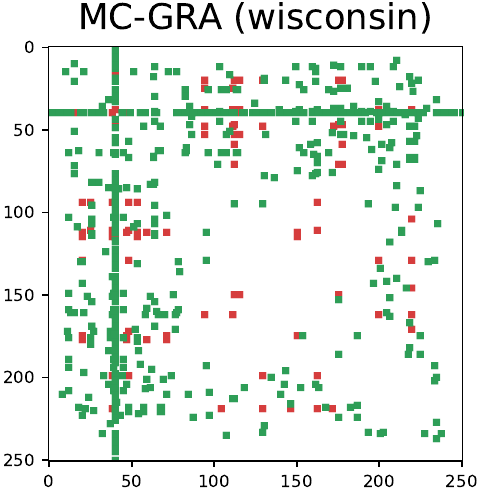}
\hfill
\includegraphics[width=0.24\textwidth]{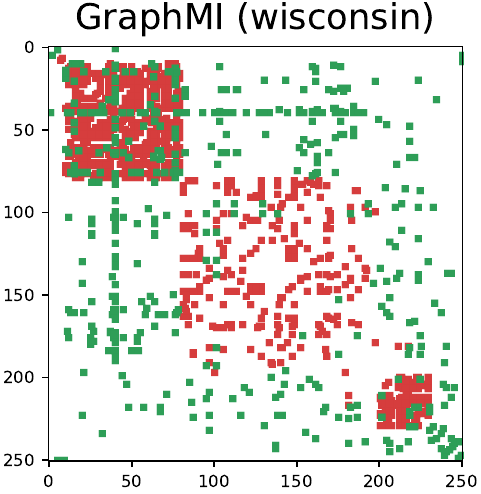}
\hfill
\includegraphics[width=0.24\textwidth]{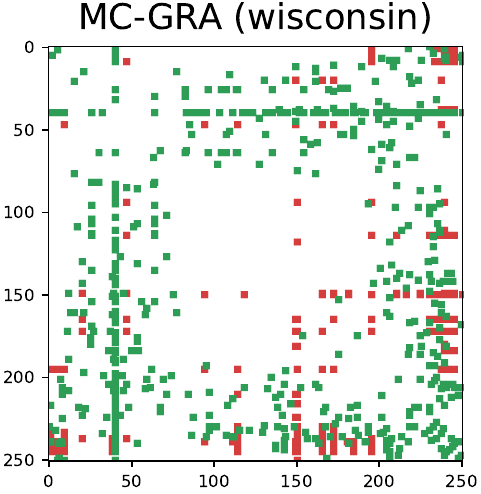}
\caption{Full GPR-GNN recovered adjacency comparisons on Wisconsin.}
\label{app: adj:gprgnn:full-e}
\end{figure*}

\textbf{Tracking the MI terms.}
We show the learning curves of MC-GRA~(+) and MC-GPB~(+) on each dataset below.

For MC-GRA~(+), the propagation loss (hidden-layer alignment terms in Eq.~\eqref{eqn: MC-GRA}) and the output loss (output alignment term) converge to near zero (Fig.~\ref{appd: curve:mc-gra}), showing that the model efficiently approximates the original Markov chain. For MC-GPB~(+), we track the three constraint terms (privacy, accuracy, complexity) per layer and average across layers; we also plot the classification cross-entropy. The training curves (Fig.~\ref{appd: curve:mc-gra} and the following figure) show that both privacy and complexity constraints exhibit a downward trend throughout training (with some fluctuation in the privacy constraint), especially on USA; the accuracy curves follow similar patterns.

\begin{figure}[htbp]
\centering
\hfill
\includegraphics[width=0.3\linewidth]{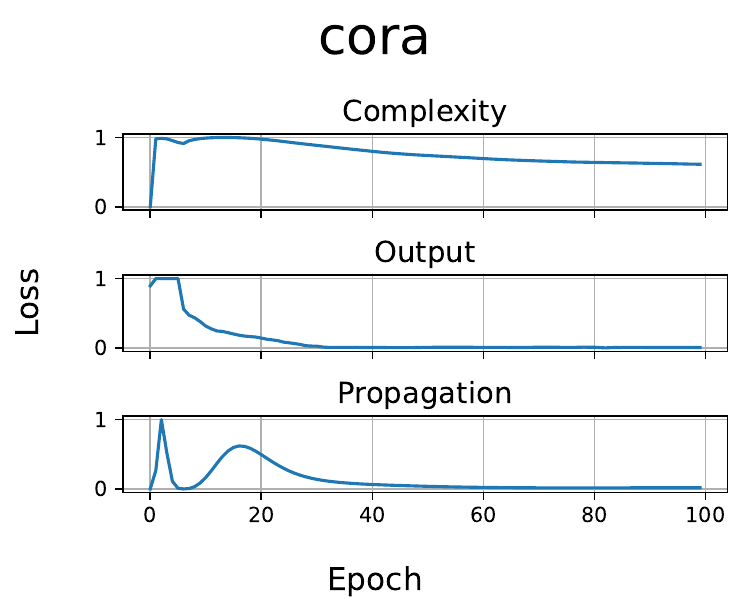}
\hfill
\includegraphics[width=0.3\linewidth]{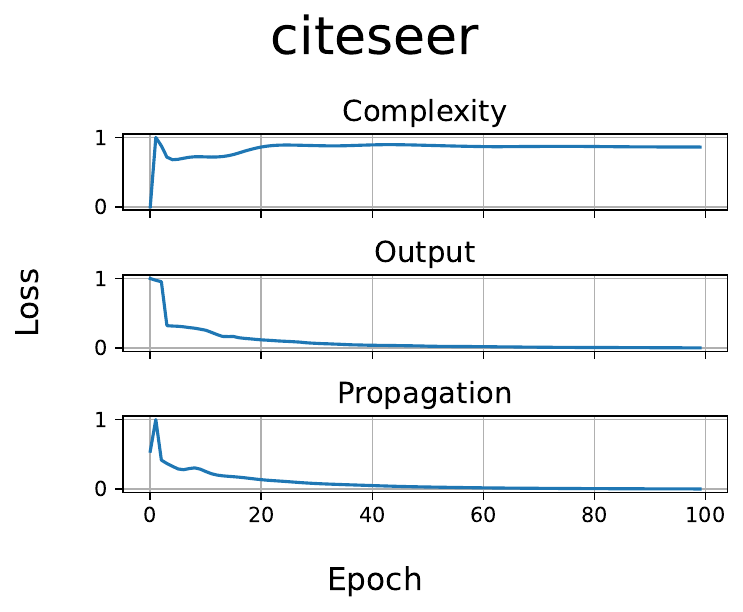}
\hfill
\includegraphics[width=0.3\linewidth]{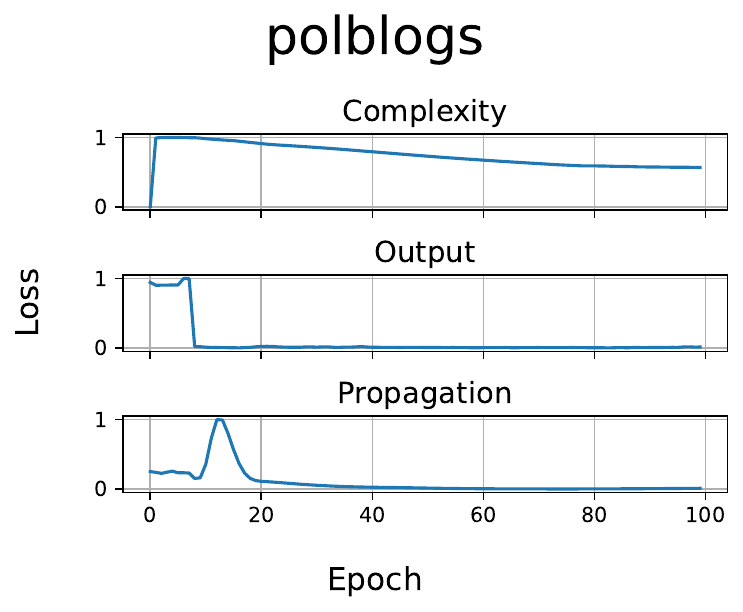}
\hfill
\\
\vspace{0.3cm}
\hfill
\includegraphics[width=0.3\linewidth]{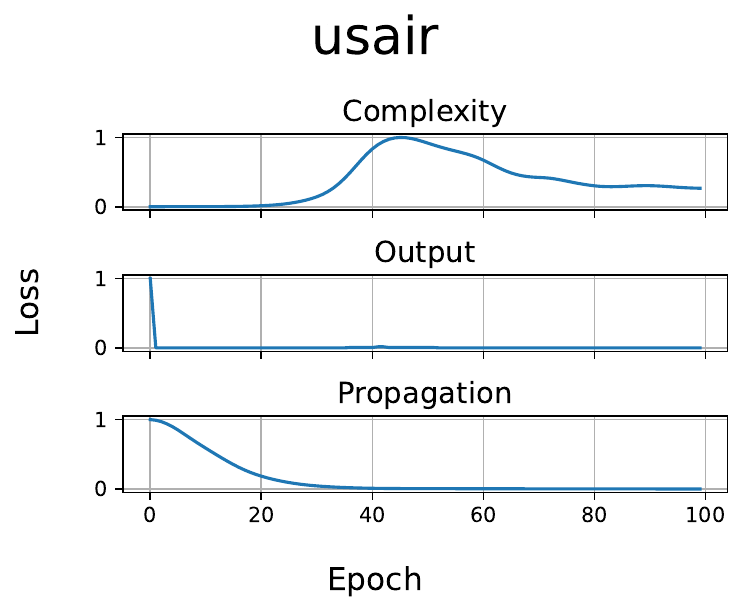}
\hfill
\includegraphics[width=0.3\linewidth]{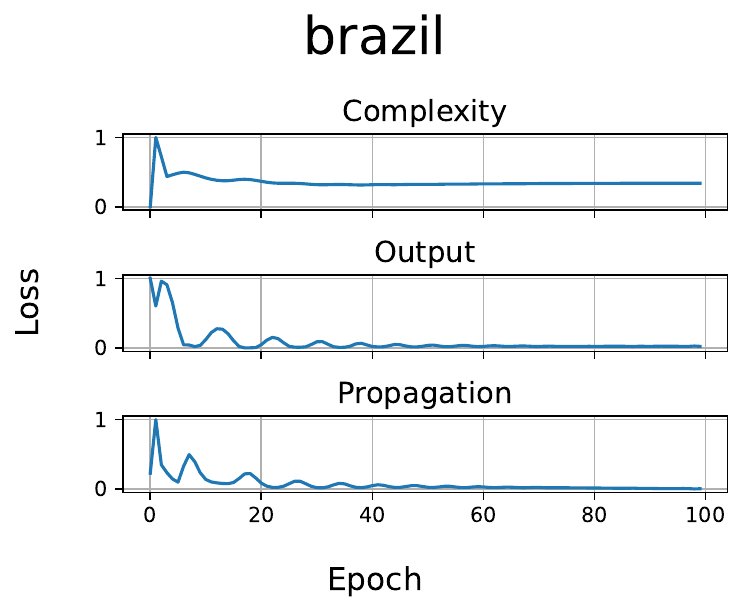}
\hfill
\includegraphics[width=0.3\linewidth]{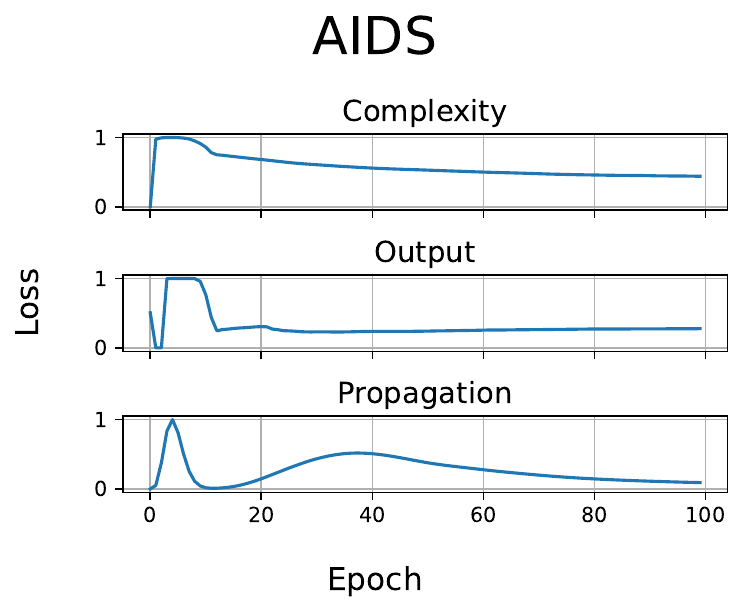}
\hfill
\vspace{-6pt}
\caption{
Training curves of MC-GRA~(+) on each dataset.
}
\label{appd: curve:mc-gra}
\end{figure}

\begin{figure}[htbp]
\vspace{-20pt}
\centering
\hfill
\includegraphics[width=0.3\linewidth]{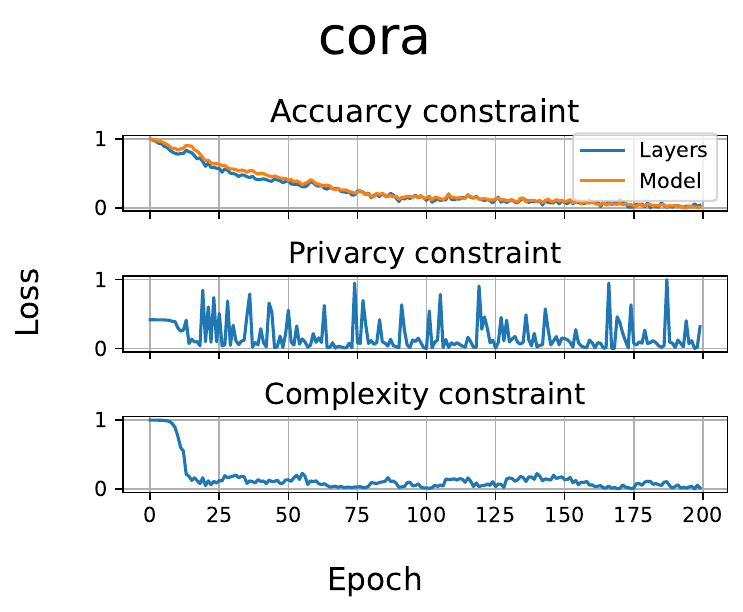}
\hfill
\includegraphics[width=0.3\linewidth]{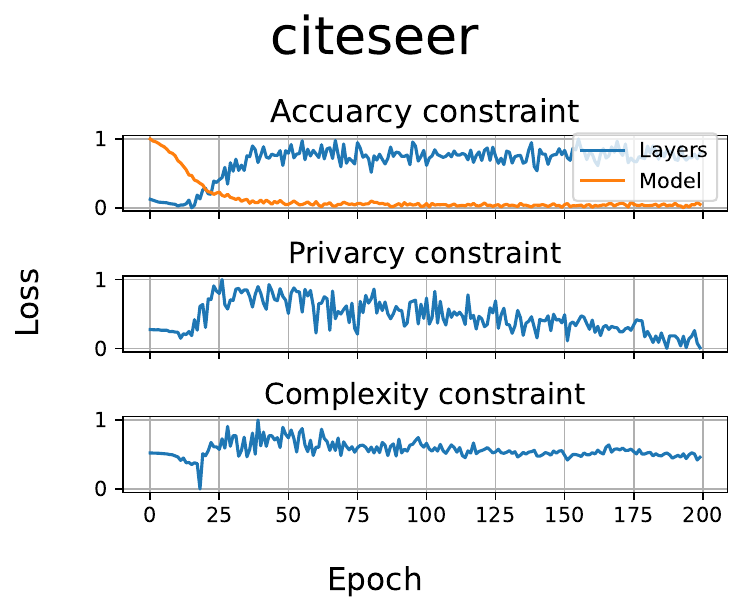}
\hfill
\includegraphics[width=0.3\linewidth]{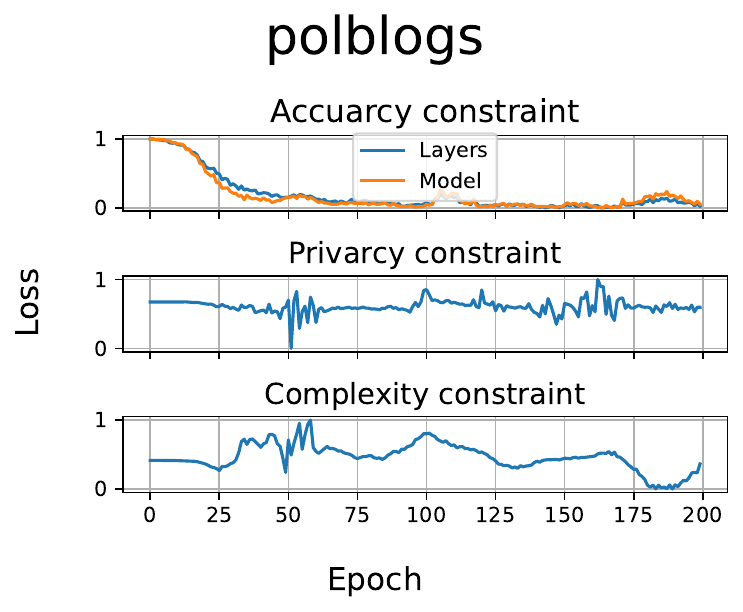}
\hfill
\\
\vspace{0.3cm}
\hfill
\includegraphics[width=0.3\linewidth]{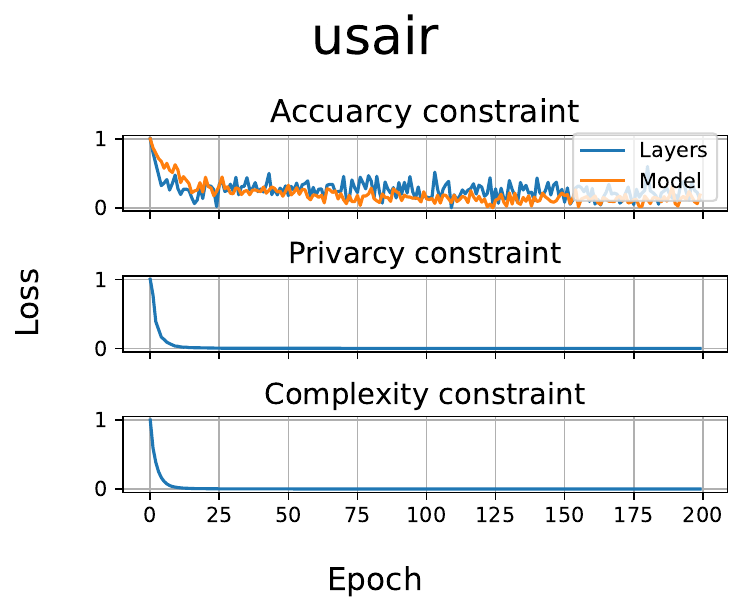}
\hfill
\includegraphics[width=0.3\linewidth]{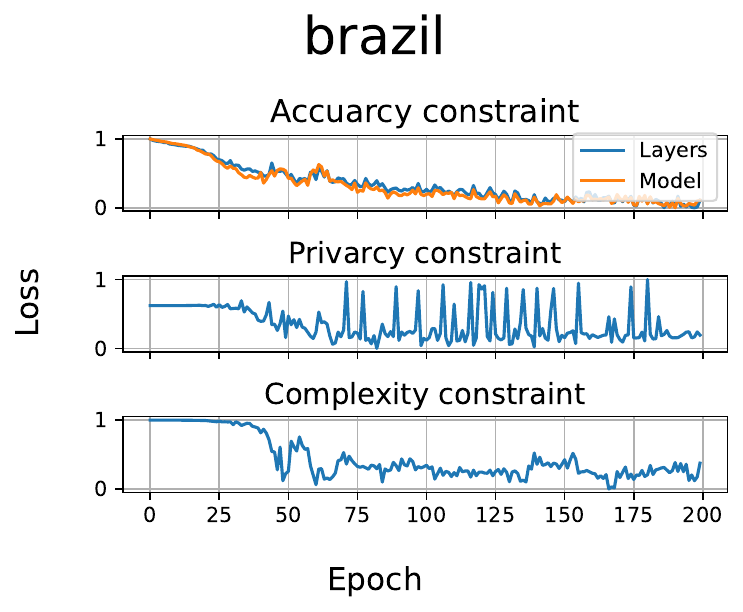}
\hfill
\includegraphics[width=0.3\linewidth]{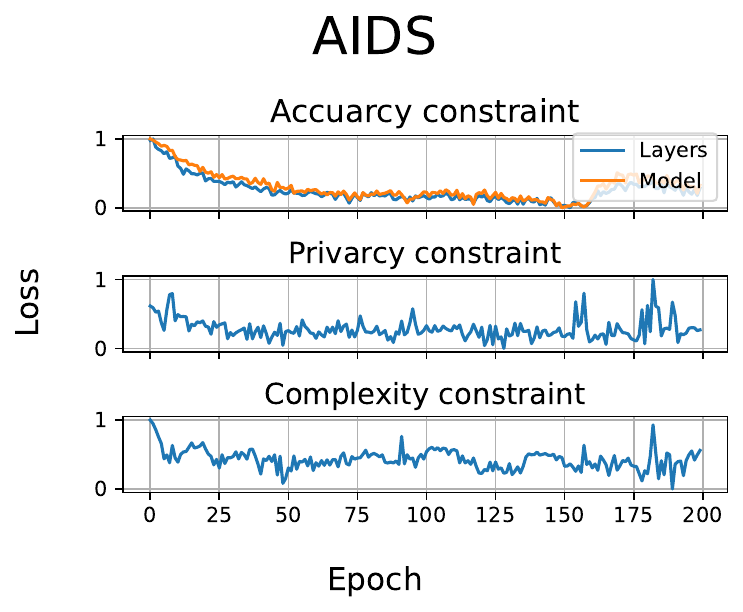}
\hfill
\vspace{-6pt}
\caption{
Training curves of MC-GPB~(+) on each dataset.
}
\label{appd: curve:mc-gpb}
\end{figure}

\textbf{A further analysis with the training dynamics.}
We show the graph information planes with and without MC-GPB~(+) below.
Without MC-GPB~(+), the adjacency-leakage proxy $R_{\mathrm{AUC}}(A;\bm{Z})$ typically increases during early training before gradually declining. Applying MC-GPB~(+) enhances this decline while preventing the model from discarding task-relevant information that would hurt accuracy.

\begin{figure}[htbp]
\centering
\subfloat[GNN with unprotected training]
{\includegraphics[width=0.45\linewidth]{figures/graph-information-plane/plane-standard-cora-3-0.95}}
\subfloat[GNN trained with MC-GPB~(+)]
{\includegraphics[width=0.45\linewidth]{figures/graph-information-plane/plane-defense-cora-3-0.95}}
\vspace{-6pt}
\caption{
Graph information plane on Cora dataset.
}
\label{appd: gip:cora}
\end{figure}

\begin{figure}[htbp]
\centering
\subfloat[GNN with unprotected training]
{\includegraphics[width=0.45\linewidth]{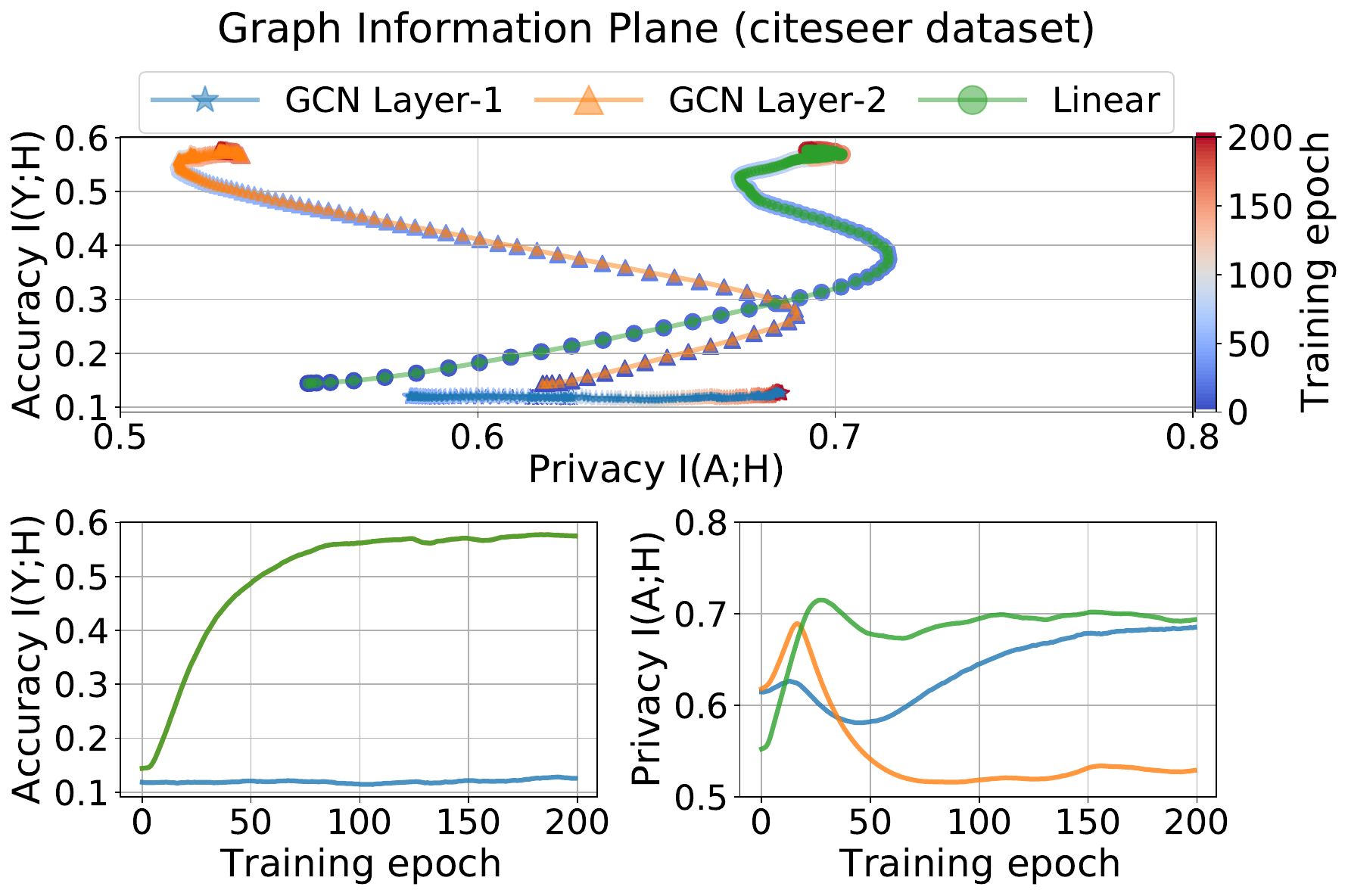}}
\subfloat[GNN trained with MC-GPB~(+)]
{\includegraphics[width=0.45\linewidth]{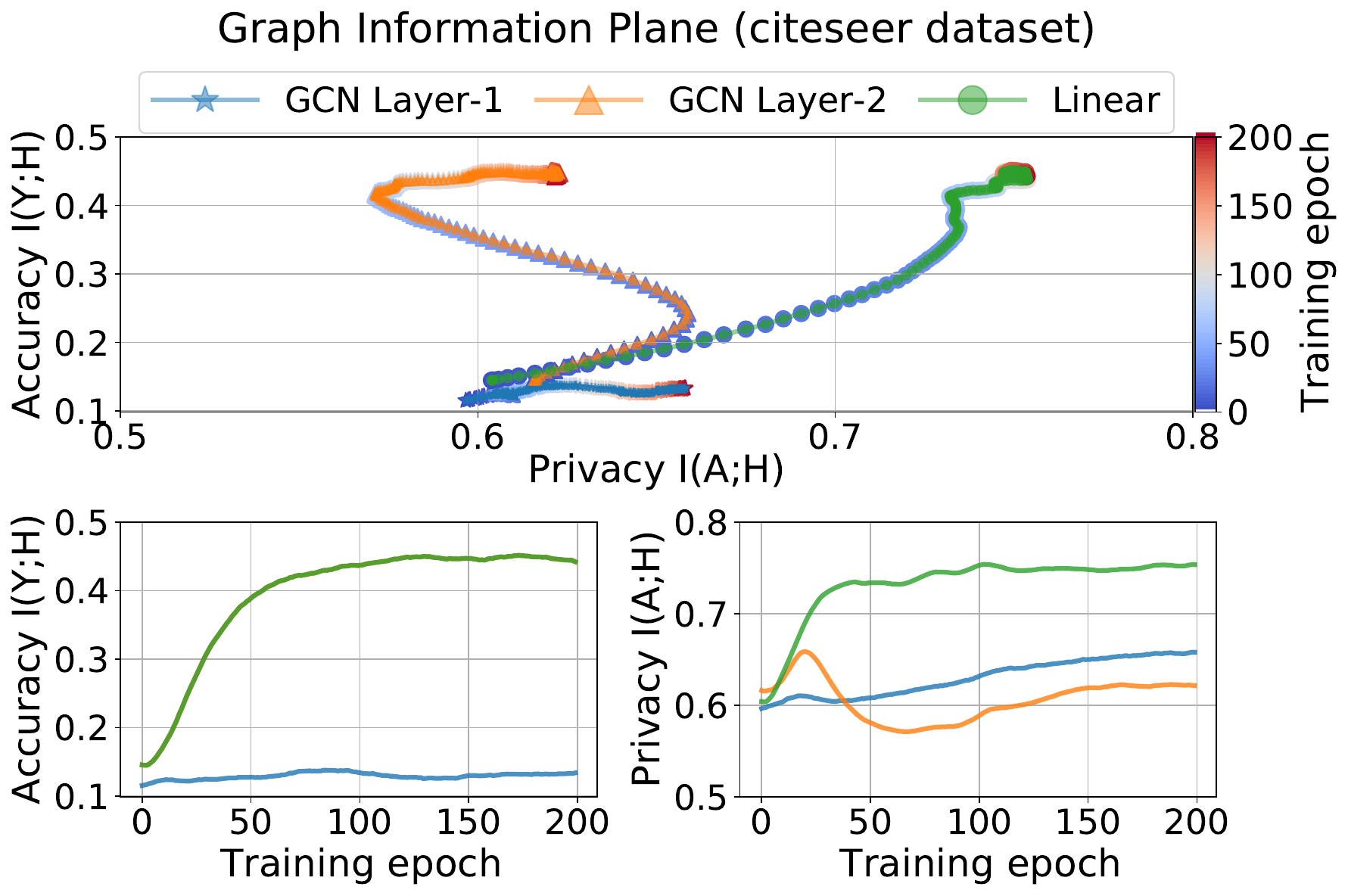}}
\vspace{-6pt}
\caption{
Graph information plane on Citeseer dataset.
}
\end{figure}

\begin{figure}[htbp]
\centering
\subfloat[GNN with unprotected training]
{\includegraphics[width=0.45\linewidth]{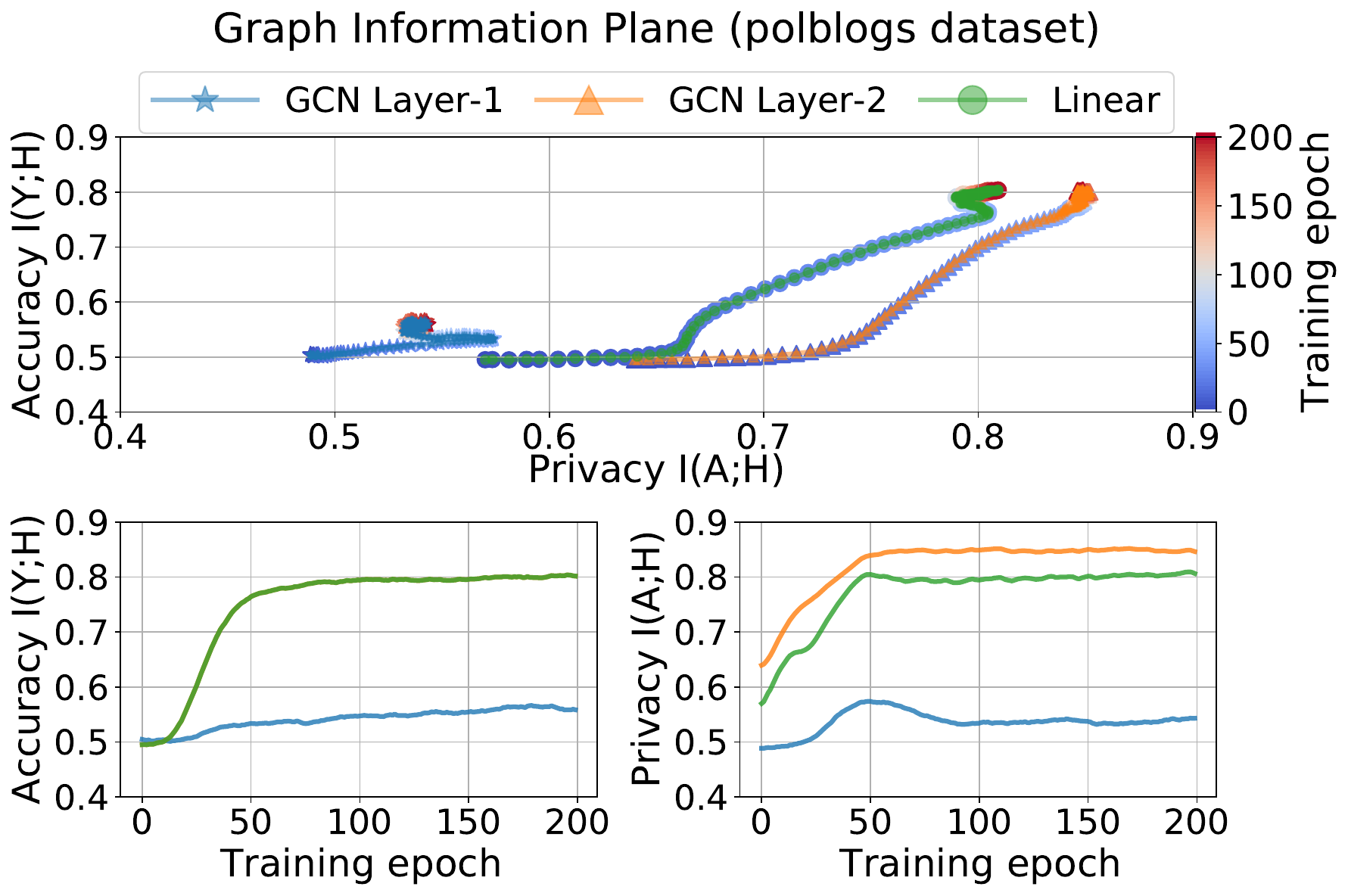}}
\subfloat[GNN trained with MC-GPB~(+)]
{\includegraphics[width=0.45\linewidth]{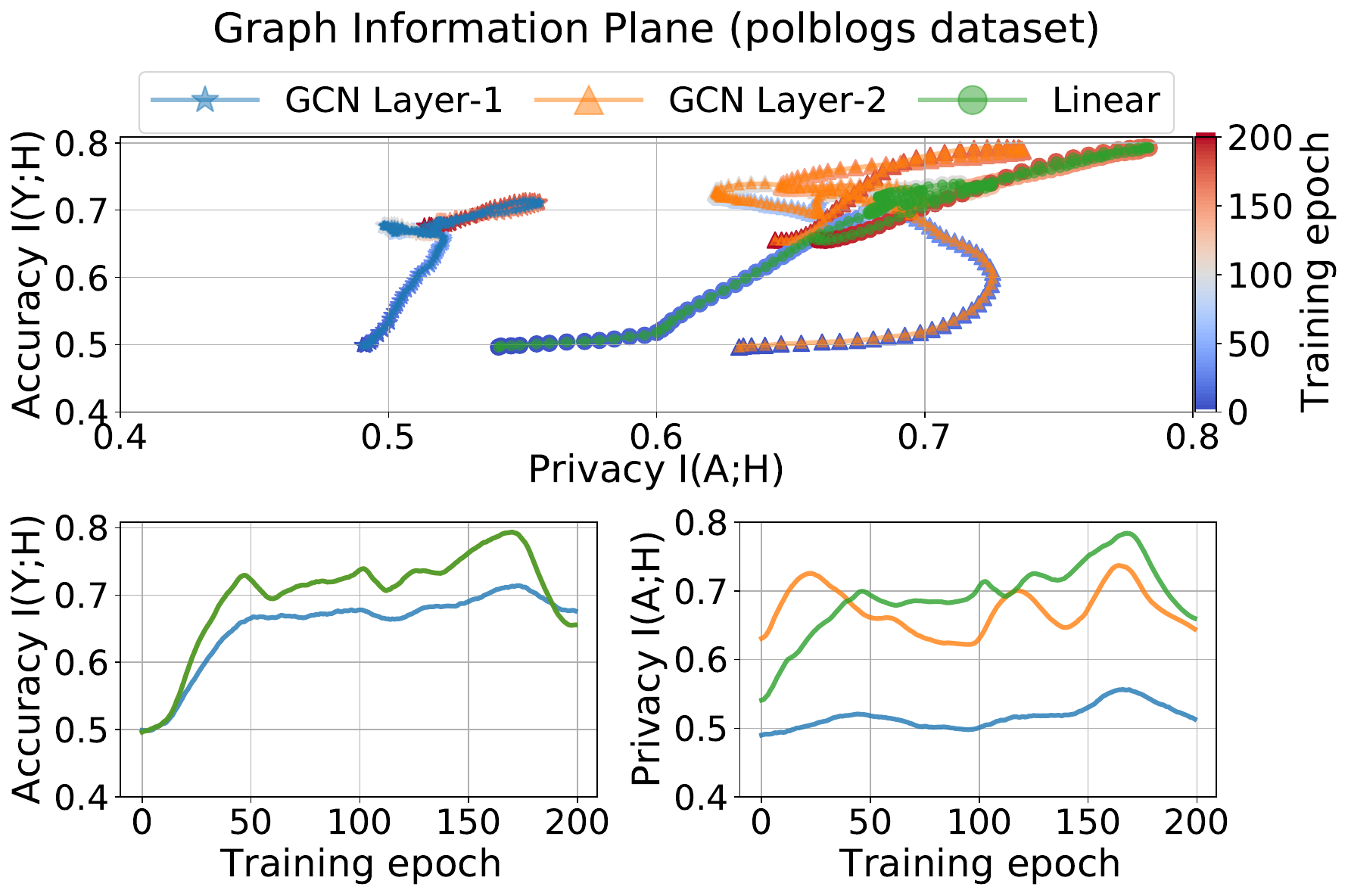}}
\vspace{-6pt}
\caption{
Graph information plane on Polblogs dataset.
}
\end{figure}

\begin{figure}[htbp]
\centering
\subfloat[GNN with unprotected training]
{\includegraphics[width=0.45\linewidth]{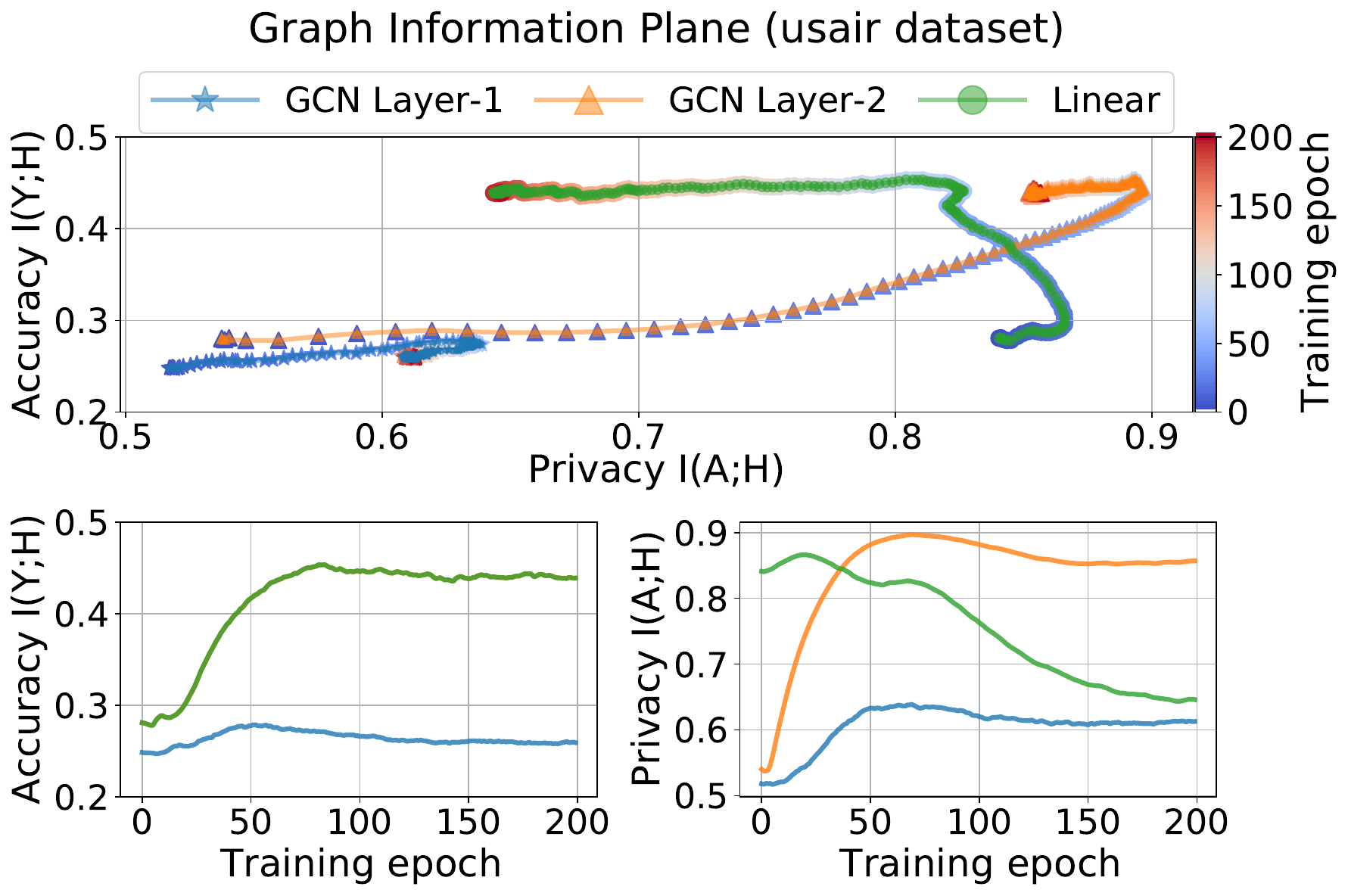}}
\subfloat[GNN trained with MC-GPB~(+)]
{\includegraphics[width=0.45\linewidth]{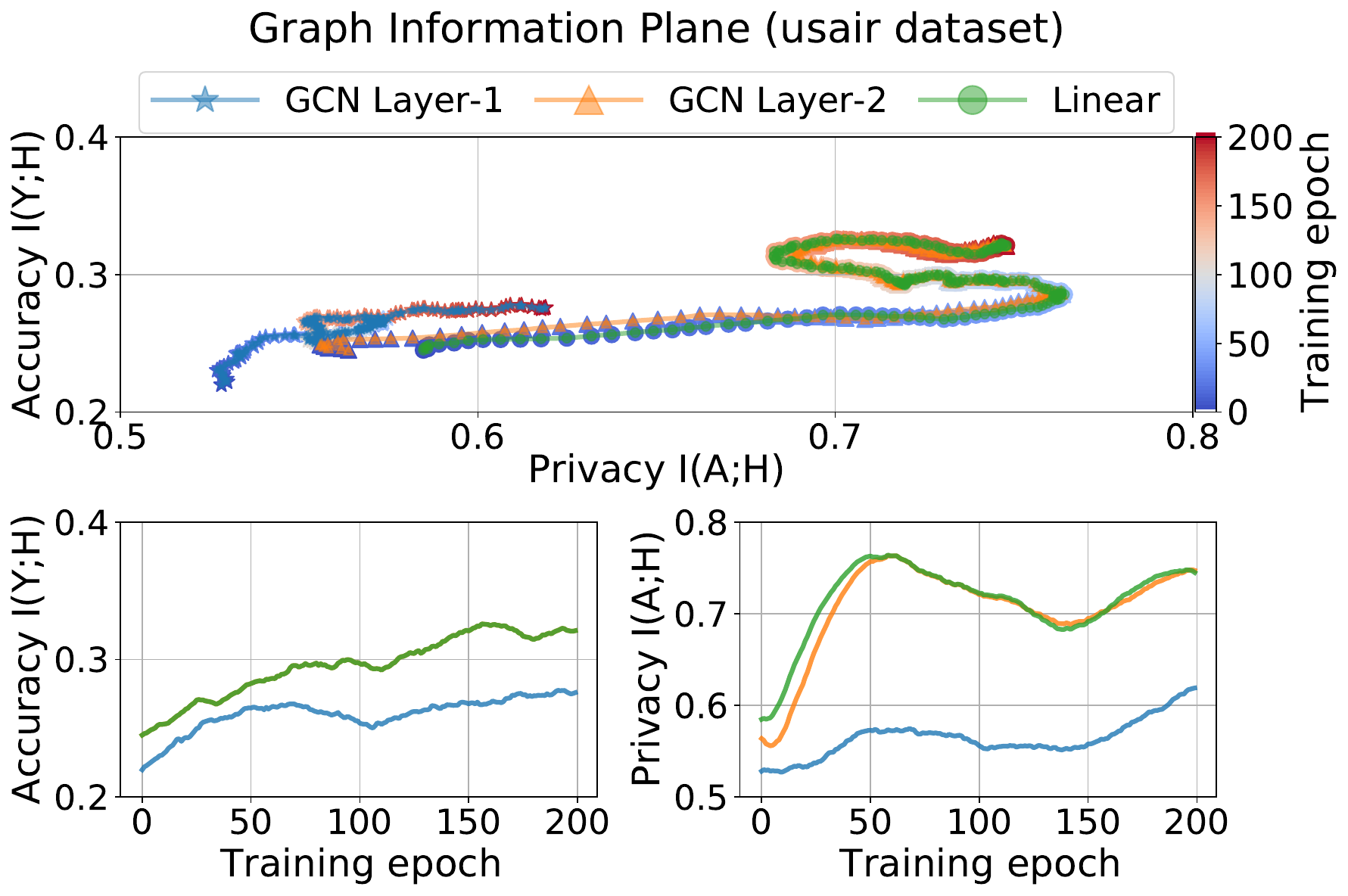}}
\vspace{-6pt}
\caption{
Graph information plane on USA dataset.
}
\end{figure}

\begin{figure}[htbp]
\centering
\subfloat[GNN with unprotected training]
{\includegraphics[width=0.45\linewidth]{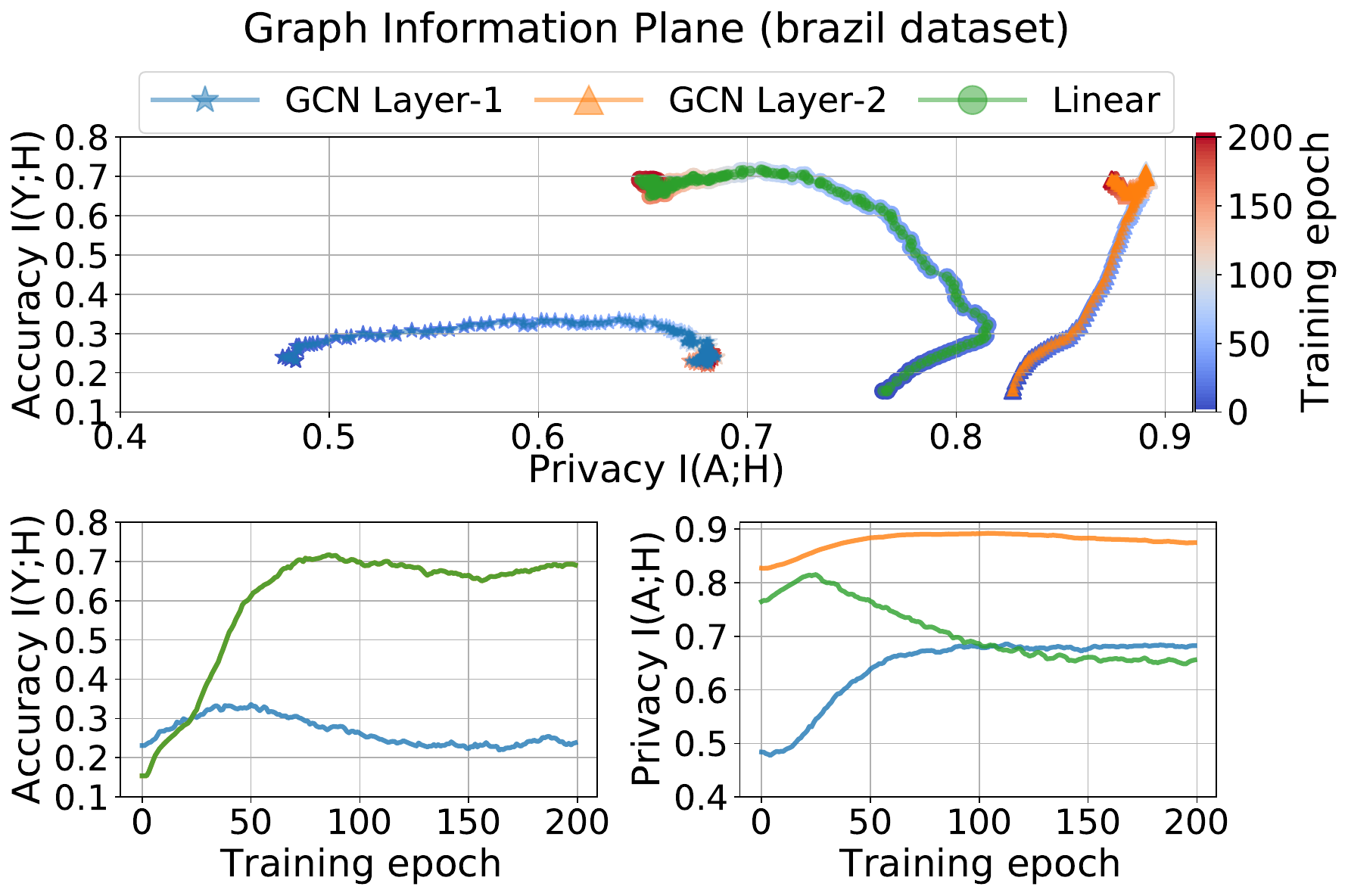}}
\subfloat[GNN trained with MC-GPB~(+)]
{\includegraphics[width=0.45\linewidth]{figures/graph-information-plane/plane-defense-brazil-3-0.95}}
\vspace{-6pt}
\caption{
Graph information plane on Brazil dataset.
}
\end{figure}

\begin{figure}[htbp]
\centering
\subfloat[GNN with unprotected training]
{\includegraphics[width=0.45\linewidth]{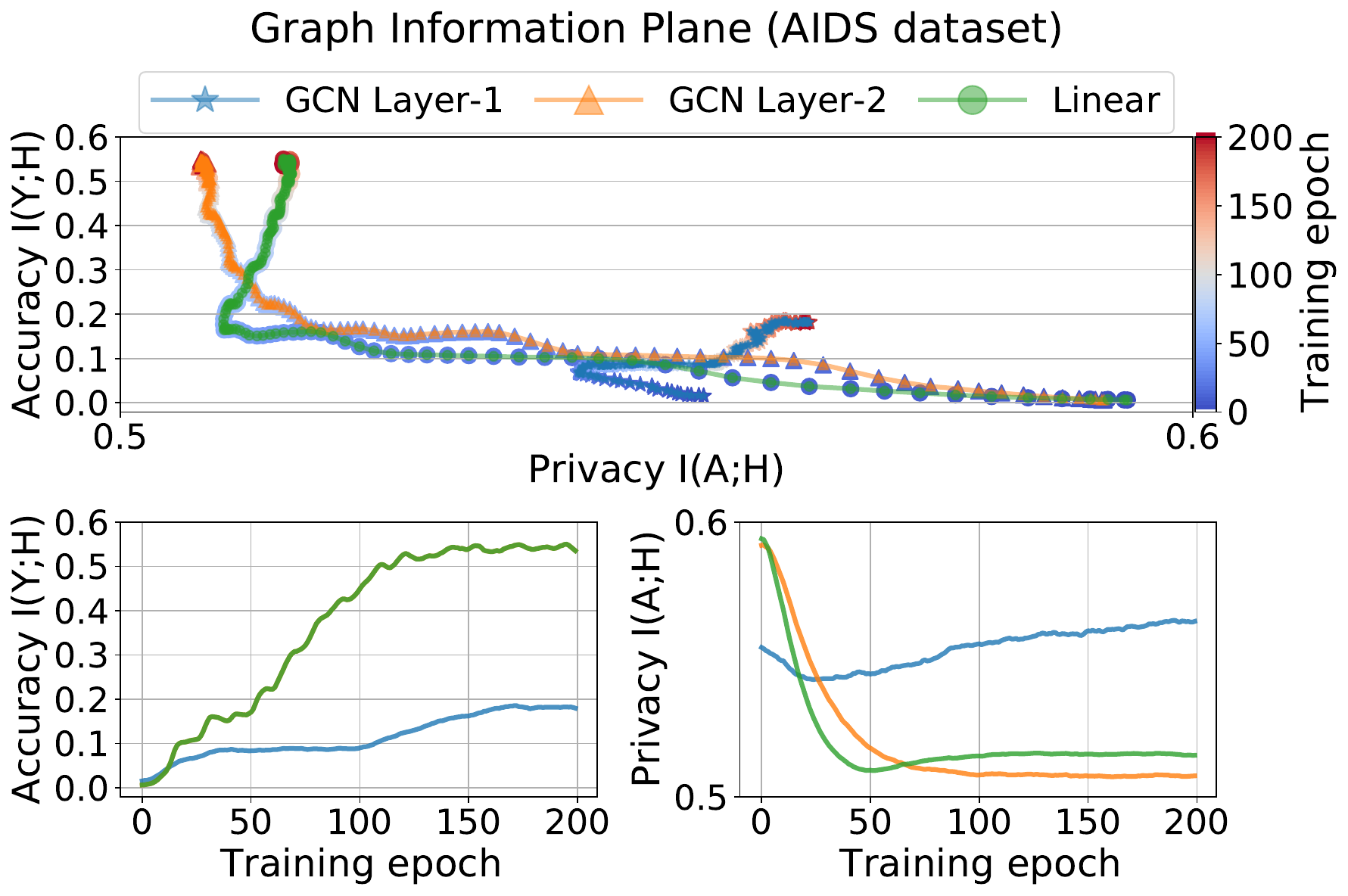}}
\subfloat[GNN trained with MC-GPB~(+)]
{\includegraphics[width=0.45\linewidth]{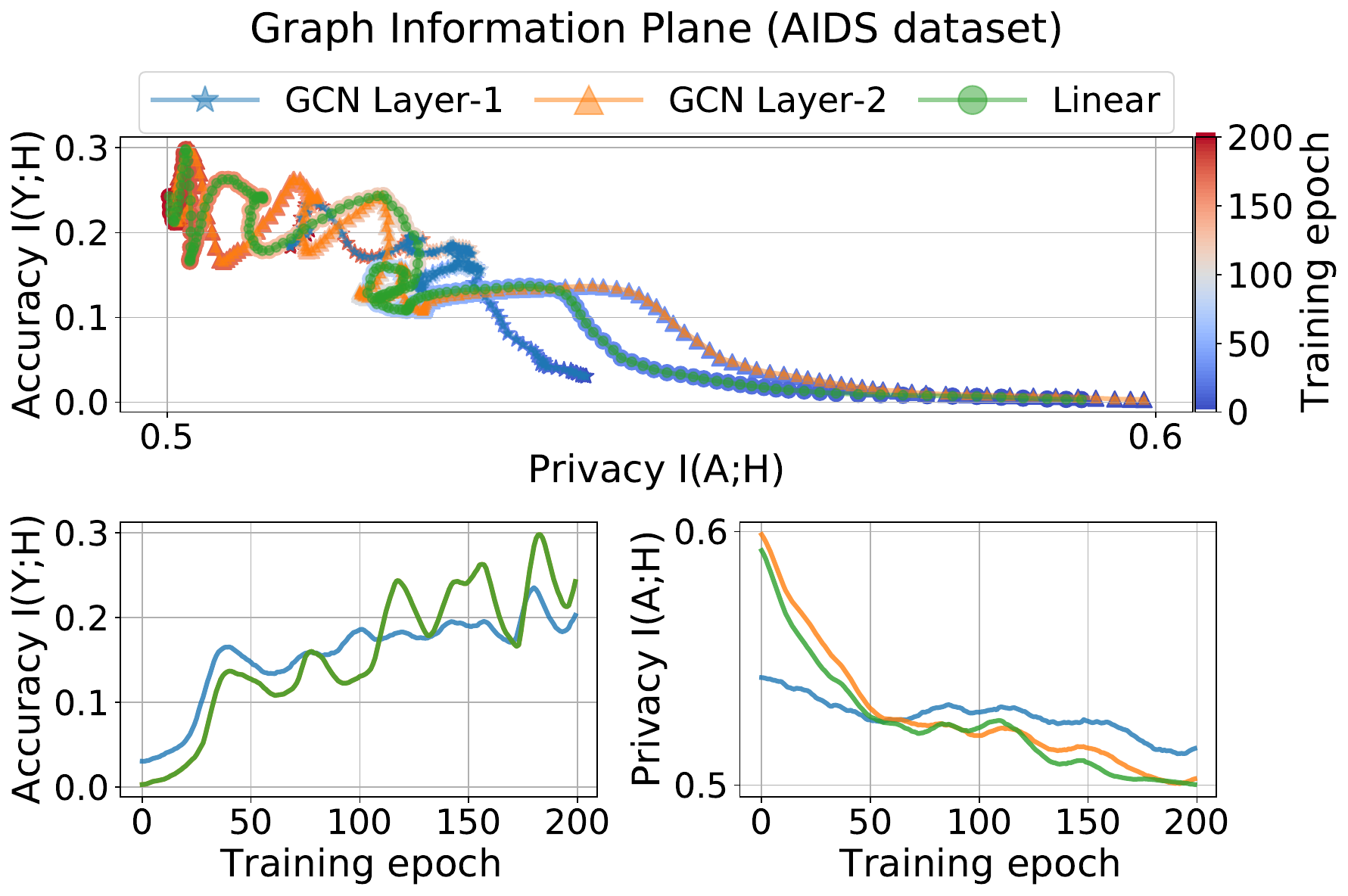}}
\vspace{-6pt}
\caption{
Graph information plane on AIDS dataset.
}
\label{appd: gip:aids}
\end{figure}

%% file: appendix/3-further-related-work.tex
\section{Additional Related Work}
\label{app: related-work}

This appendix surveys the landscape of MIAs across images and text, including the attack and defense strategies proposed for each modality.

\subsection{Model Inversion Attacks and Defenses on Images}
\label{ssec: MIA-image}

\paragraph{Attacks.} 
Pioneering studies~\citep{szegedy2013intriguing,fredrikson2014privacy,fredrikson2015model,hidano2017model} introduce MIA in shallow classifiers and show that an adversary can reconstruct coarse monochrome images from model outputs alone. These early methods, however, do not scale to modern deep architectures: when applied to image-classification networks, their reconstructions have very limited fidelity.

Generative Model Inversion (GMI)~\citep{zhang2020secret} is the first attack that reliably targets convolutional neural networks. By exploiting a generative adversarial network (GAN) as a learned prior, GMI constrains the search to the natural image manifold and reveals private training examples with high visual realism. Variational Model Inversion (VMI)~\citep{wang2021variational} generalizes this idea: it casts MIA as variational inference and optimizes a variational objective with a StyleGAN prior, achieving higher success rates and greater diversity. Projection Predictive Attack (PPA)~\citep{struppek2022ppa} further reduces computational cost and broadens applicability by decoupling the generator from the classifier; a single off-the-shelf GAN suffices to attack multiple targets with minimal fine-tuning, demonstrating that MIA remains practical even under substantial domain shifts.
These methods share a focus on generative priors and gradient-based search in a learned or fixed latent space.

Subsequent work improves both effectiveness and robustness. LOMMA~\citep{LOMMA} observes that the cross-entropy objective adopted in earlier attacks~\citep{zhang2020secret,chen2021knowledge} is prone to overfitting and proposes a margin-based loss that better guides gradient updates. PLG-MI~\citep{PLGMI} adopts a conditional GAN driven by pseudo-labels to stabilize training. Independently, RLBMI~\citep{han2023reinforcement} reformulates the latent-space exploration of MIRROR~\citep{MIRROR} as a Markov decision process optimized by Soft Actor-Critic, enabling more efficient black-box attacks. Finally, C2FMI~\citep{C2FMI} introduces a coarse-to-fine refinement stage that boosts visual quality in strict black-box settings.

\paragraph{Defenses.} 
The most widely studied defense, differential privacy (DP), guarantees protection against membership inference but only marginally impedes inversion~\citep{fredrikson2014privacy,abadi2016deep,zhang2020secret}. More targeted defenses limit the mutual information that the model leaks. MID~\citep{yang2019adversarial} explicitly minimizes $I(X;\hat{Y})$ between inputs and predictions, whereas Bilateral Dependency Optimization (BiDO)~\citep{peng2022bilateral} simultaneously suppresses $I(X;Z)$, removing redundant cues from latent features, and maximizes $I(Y;Z)$ to preserve task relevance. Although both defenses heavily distort the attacker's reconstructions, they also degrade accuracy, underscoring the difficulty of achieving a favorable privacy-utility trade-off. Therefore, principled and practical defenses against MIA remain an open research problem.

\subsection{Model Inversion Attacks and Defenses on Texts}
\label{ssec: MIA-text}

\paragraph{Attacks.}
Model inversion attacks on language models shift the focus from images and CNNs to text processed by Transformer architectures.
\citet{carlini2021extracting} shows that GPT-2 memorizes and leaks entire training sequences through simple black-box queries, motivating a systematic study of textual MIA.
Text Revealer~\citep{zhang2022text} adapts GAN-based inversion to text classification: it first extracts frequent template phrases from public corpora and then fine-tunes a generator to minimize the classifier’s loss, synthesizing text that closely matches the private distribution.
Vec2Text~\citep{morris2023text} treats inversion as controlled embedding optimization, iteratively refining candidate text so its embedding converges to a target.
Information-theoretic attacks instead train an auxiliary decoder to infer words directly from embeddings~\citep{song2020information}, and GEIA~\citep{li2023sentence} improves fluency by training a generative decoder with teacher forcing.
Transferable EI~\citep{huang2024transferable} eliminates query dependence: it first steals a surrogate model and then adversarially trains an inversion decoder against that surrogate, enabling fully query-free attacks.

\paragraph{Defenses.}
Most defenses perturb the embedding space to obscure the mapping between internal representations and plaintext.
\citet{chen2024text} prepend a language-ID mask to token embeddings, while \citet{parikh2022canary} use concatenation of character-level features; both techniques increase inversion uncertainty without degrading accuracy.
Recent work also explores sequence-level leakage through model outputs.
The logit2prompt method~\citep{morris2023language} reconstructs hidden prompts from next-token logits but is query-intensive and relies on logit access.
The output2prompt method~\citep{zhang2024extracting} removes this constraint: a sparse encoder-decoder conditioned solely on generated text recovers system prompts with high cosine similarity to the originals, showing that output-level defenses remain an open problem.

Beyond the centralized setting surveyed above, graph reconstruction attacks have also been studied in federated learning settings, where gradient-based reconstruction can recover graph structure from shared model updates in federated GNN training~\citep{zhang2022federated}. This threat model differs from the centralized post-hoc setting studied in our work but shares the concern of adjacency leakage from learned representations.

In the graph setting (Secs.~\ref{sec: overview}--\ref{sec: GRA defense}), the sensitive object is the adjacency structure rather than pixel or token data, and image/text MIA formulations do not directly apply. Our chain-based attack and defense unify side information across layers (features, hidden representations, predictions) in a single depth-aligned framework, which has no direct analogue in the image or text MIA literature surveyed above.